\definecolor{Green}{rgb}{0.13, 0.65, 0.3}
\definecolor{Amber}{rgb}{0.3, 0.5, 1.0}
\newcommand{\distri}{\Omega}
\newcommand{\opti}{\mathring{i}}
\newcommand{\gapcomp}{\mathbb{G}}
\newcommand{\gapDee}{\sum_{i \in V}  \frac{1}{\Delta_i^2}}
\newcommand{\gapcompdee}{\widetilde{\mathbb{G}}}
\newcommand{\precost}{\textsc{PreCost}}
\newcommand{\regsub}{\textsc{RegSub}}
\newcommand{\regopt}{\textsc{RegOpt}}
\newcommand{\resreg}{\textsc{ResReg}}
\newcommand{\selfone}{\ensuremath{\mathbb{S}}_1}
\newcommand{\selftwo}{\ensuremath{\mathbb{S}}_2}
\newcommand{\lagK}{\lambda_K}
\newcommand{\lagU}{\lambda_U}
\newcommand{\lagV}{\lambda_V}
\newcommand{\cloglog}{{162}}
\newcommand{\clogsha}{{162\log K}}
\newcommand{\clogtsa}{{\frac{162 \beta }{1-\beta}}}
\newcommand{\Wlog}{\frac{K^2}{\sqrt{\log T}}  + K\log T}
\newcommand{\Wsha}{K^2 \log^{\nicefrac{3}{2}} T}
\newcommand{\Wtsa}{\frac{K^2\sqrt{\beta} }{ \rbr{1-\beta}^{\nicefrac{3}{2}}} + \frac{\beta K \log T}{1-\beta}}
\newcommand{\inner}[1]{ \left\langle {#1} \right\rangle }
\newcommand{\Ind}[1]{ \field{I}{\left\{{#1}\right\}} }
\newcommand{\ind}{\field{I}}
\newcommand{\norm}[1]{\left\|{#1}\right\|}
\newcommand{\diag}[1]{\mathrm{diag}\!\left\{{#1}\right\}}
\newcommand{\scI}{\mathcal{I}}
\newcommand{\gap}{\Delta}
\newcommand{\hatl}{\widehat{\ell}}
\newcommand{\hatL}{\widehat{L}}
\newcommand{\barp}{\bar{p}}
\newcommand{\wtilq}{\widetilde{q}}
\newcommand{\wtilp}{\widetilde{p}}
\newcommand{\naturalnum}{\mathbb{N}}
\newcommand{\pbar}{\overline{p}}
\theoremstyle{theorem} 
	\newtheorem{theorem}{Theorem}[subsection]
	\newtheorem{lemma}[theorem]{Lemma}
	\newtheorem{corollary}[theorem]{Corollary}
	\newtheorem{remark}[theorem]{Remark}
\renewcommand{\thetheorem}{%
	\ifnum\value{subsection}>0 
	\thesubsection
	\else
	\thesection
	\fi
	.\arabic{theorem}%
}
\DeclareMathOperator*{\argmin}{\arg\!\min}
\newcommand{\calD}{{\mathcal{D}}}
\newcommand{\one}{\boldsymbol{1}}
\newcommand{\loss}{\ell}
\newcommand{\barz}{\overline{z}}
\newcommand{\gapmin}{\Delta_{\textsc{min}}}
\newcommand{\field}[1]{\mathbb{#1}}
\newcommand{\fR}{\field{R}}
\newcommand{\E}{\field{E}}
\newcommand{\Reg}{{\text{\rm Reg}}}
\newcommand{\order}{\ensuremath{\mathcal{O}}}
\newcommand{\rbr}[1]{\left(#1\right)}
\newcommand{\sbr}[1]{\left[#1\right]}
\newcommand{\cbr}[1]{\left\{#1\right\}}
\newcommand{\abr}[1]{\left|#1\right|}
\DeclareFontFamily{OMX}{MnSymbolE}{}
\DeclareFontShape{OMX}{MnSymbolE}{m}{n}{
    <-6>  MnSymbolE5
   <6-7>  MnSymbolE6
   <7-8>  MnSymbolE7
   <8-9>  MnSymbolE8
   <9-10> MnSymbolE9
  <10-12> MnSymbolE10
  <12->   MnSymbolE12}{}
\DeclareSymbolFont{mnlargesymbols}{OMX}{MnSymbolE}{m}{n}
\DeclareMathDelimiter{\llangle}{\mathopen}{mnlargesymbols}{'164}{mnlargesymbols}{'164}
\DeclareMathDelimiter{\rrangle}{\mathclose}{mnlargesymbols}{'171}{mnlargesymbols}{'171}
\newcommand{\pref}[1]{\prettyref{#1}}
\newcommand{\savehyperref}[2]{\texorpdfstring{\hyperref[#1]{#2}}{#2}}
\newcommand{\clog}{C_{\textsc{log}}}
\newcommand{\cinit}{1}
\newcommand{\ftrl}{\textsc{FTRL}\xspace}
\title{Improved Best-of-Both-Worlds Guarantees for Multi-Armed Bandits: FTRL with General Regularizers and Multiple Optimal Arms}
\author{%
  Tiancheng Jin \thanks{Equal contribution, in alphabetical order.}\\
  University of Southern California \\
  \texttt{tiancheng.jin@usc.edu} \\
  \And
  Junyan Liu \footnotemark[1]\\
  University of California, San Diego \\
  \texttt{jul037@ucsd.edu} \\
  \And
  \hspace*{-25pt}Haipeng Luo \\
  \hspace*{-30pt} University of Southern California \\
  \hspace*{-28pt} \texttt{haipengl@usc.edu} 
}
\begin{document}
\doparttoc
\faketableofcontents

\maketitle

\begin{abstract}
 We study the problem of designing adaptive multi-armed bandit algorithms that perform optimally in both the stochastic setting and the adversarial setting simultaneously (often known as a best-of-both-world guarantee).
A line of recent works shows that when configured and analyzed properly, the Follow-the-Regularized-Leader ($\ftrl$) algorithm, originally designed for the adversarial setting, can in fact optimally adapt to the stochastic setting as well.
Such results, however, critically rely on an assumption that there exists one unique optimal arm.
Recently, \citet{ito21a} took the first step to remove such an undesirable uniqueness assumption for one particular $\ftrl$ algorithm with the $\nicefrac{1}{2}$-Tsallis entropy regularizer.
In this work, we significantly improve and generalize this result, showing that uniqueness is unnecessary for $\ftrl$ with a broad family of regularizers and a new learning rate schedule. 
For some regularizers, our regret bounds also improve upon prior results even when uniqueness holds.
We further provide an application of our results to the decoupled exploration and exploitation problem, demonstrating that our techniques are broadly applicable.
\end{abstract}

\section{Introduction}
\label{sec:intro}


We study the problem of multi-armed bandits (MAB) where a learner sequentially interacts with an environment for $T$ rounds. In each round, the learner selects one of the $K$ arms and observes its loss. 
The goal of the learner is to minimize her regret, which measures the difference between her total loss and that of the best fixed arm in hindsight. 
Depending on how the losses are generated, two settings have been heavily studied in the literature: the \textit{stochastic} setting, where the loss of each arm at each round is an i.i.d. sample of a fixed and unknown distribution, and the \textit{adversarial} setting, where the losses can be arbitrarily decided by an adversary.
In the stochastic setting, the UCB algorithm \citep{lai1985asymptotically,auer2002finite} attains the instance-optimal  regret $\order( \sum_{i:\gap_i>0} \frac{\log T}{\gap_i} )$, where the sub-optimality gap $\gap_i$ is the difference between the expected loss of arm $i$ and that of the optimal arm.
On the other hand, in the adversarial setting, the minimax-optimal regret  is known to be of order $\Theta(\sqrt{KT})$~\citep{auer2002nonstochastic,audibert2009minimax}, achieved via the well-known Follow-the-Regularized-Leader ($\ftrl$) framework.

Given the rather different algorithmic ideas in UCB and $\ftrl$,
it is natural to ask whether there exists an adaptive algorithm that achieves the \textit{best of both worlds} (BOBW), simultaneously enjoying the instance-optimal $(\log T)$-regret in the stochastic setting and minimax-optimal $\sqrt{T}$-regret in the adversarial setting.
This question was first answered affirmatively in~\citep{bubeck2012best}, followed by a sequence of improvements and extensions in the past decade. 
Among these works, a somewhat surprising result by~\citet{wei2018more} shows that, when configured and analyzed properly, $\ftrl$, an algorithm originally designed for the adversarial setting to achieve $\sqrt{T}$-type regret, in fact can also achieve $(\log T)$-type regret in the stochastic setting automatically. 
This result was latter significantly improved to optimal by~\citet{zimmert2019optimal,zimmert2021tsallis} using the $\nicefrac{1}{2}$-Tsallis entropy regularizer and extended to many other problems.
In fact, these algorithms not only achieve BOBW, but also automatically adapt to intermediate settings with a regret bound that interpolates smoothly between the two extremes.

A key drawback of such $\ftrl$-based approaches, however, is that their analysis for the stochastic setting critically relies on a uniqueness assumption, that is, there exists one unique optimal arm (with $\gap_i=0$).
A recent work by~\citet{ito21a} took the first step to address this issue and proposed a novel analysis showing that the exact same Tsallis-INF algorithm of~\citep{zimmert2019optimal} in fact works even without this assumption.
Unfortunately, his analysis is specific to the $\nicefrac{1}{2}$-Tsallis entropy regularizer with an arm-independent learning rate, and it is highly unclear how to extend it to other regularizers which often require an arm-dependent learning rate. For example, extending it to the log-barrier regularizer was explicitly mentioned as an open problem in~\citep{ito21a}.

In this work, we significantly improve and generalize the analysis of~\citep{ito21a}, greatly deepening our understanding on using $\ftrl$ to achieve BOBW.
Our improved analysis allows us to obtain a suite of new results, all achieved \textit{without} the uniqueness assumption.
Specifically, we consider a new and unified arm-dependent learning rate schedule and the following regularizers (plus a small amount of extra log-barrier); see also \pref{tab:main_results} for a summary.
\begin{itemize}[leftmargin=20pt]
    \item Log-barrier: with our new learning rate schedule, we show a regret bound of order $\order\rbr{\gapcomp \log T}$ for the stochastic setting and $\order\rbr{\sqrt{KT\log T}}$ for the adversarial setting. Here, $\gapcomp = \frac{\abr{U}}{\gapmin} + \sum_{i \in V} \frac{1}{\gap_i}$ measures the difficulty of the instance, with $U=\{i: \gap_i =0\}$ being the set of optimal-arms, $V$ being the set of all remaining sub-optimal arms, and $\gapmin = \min_{i \in V} \gap_i$ being the minimum non-zero sub-optimality gap. Our bound for the stochastic setting improves those in~\citep{wei2018more, ito21a}, both of which require the uniqueness assumption.

    \item Shannon entropy: we show a regret bound of order $\order\rbr{\gapcomp (\log T)^2}$ for the stochastic setting and $\order(\sqrt{KT}\log T)$ for the adversarial setting. This improves~\citep{itonearly} (when applying their more general results to MAB) in two ways: first, their result requires the uniqueness assumption while ours does not; second, their $\gapcomp$ is defined as $\frac{K}{\gapmin}$, strictly larger than ours.

    \item $\beta$-Tsallis entropy: we also consider Tsallis entropy with a general parameter $\beta \in (0,1)$, and show a regret bound of order $\order\rbr{\frac{\gapcomp \log T}{\beta(1-\beta)}}$ for the stochastic setting and $\order\Big(\sqrt{\frac{KT(\log T)^{\mathbb{I}\{\beta\neq 1/2\}}}{\beta(1-\beta)}}\Big)$ for the adversarial setting.
    The only prior work that uses $\beta$-Tsallis entropy for BOBW in MAB is~\citep{zimmert2021tsallis}, but their algorithm is \textit{infeasible} (unless $\beta=\nicefrac{1}{2}$) since the learning rate is tuned in terms of the unknown sub-optimality gaps.
    We not only address this issue with our new learning rate schedule, but also remove the uniqueness assumption.
    While $\beta=\nicefrac{1}{2}$ leads to be best bounds in MAB, following~\citep{rouyer20a}, we showcase the importance of other values of $\beta$ (specifically, $\beta=\nicefrac{2}{3}$) in the so-called Decoupled Exploration and Exploitation (DEE-MAB) setting, and again significantly improve their results (see~\pref{tab:dee_results}).
\end{itemize}

It is worth noting that, as it is common for $\ftrl$-based approaches, our algorithms also automatically adapt to more general corrupted settings (or the so-called \textit{adversarial regime with a self-bounding constraint}~\citep{zimmert2021tsallis}).
The complete statements of our results can be found in \pref{sec:algorithm}.

\begin{table*}[t]%
\caption{
Overview of our BOBW results for MAB, all achieved via \pref{alg:main_alg} and a unified learning rate $\gamma^t_i =\theta \sqrt{1 + \sum_{\tau<t} \rbr{\max\cbr{p^\tau_i,\nicefrac{1}{T}}}^{1-2\alpha}}$ for arm $i$ in round $t$,
with $p^\tau_i$ being the probability of picking arm $i$ in round $\tau$ and the values of $\theta$ and $\alpha$  specified in the table.
``Sto." and ``Adv." denote respectively the stochastic and the adversarial setting.
$\gapcomp$ is defined as $\frac{\abr{U}}{\gapmin} + \sum_{i \in V} \frac{1}{\gap_i}$, where $U$ is the set of optimal-arms, $V$ is the set of sub-optimal arms, and $\gapmin = \min_{i \in V} \gap_i$.
}
\vspace{1em}
\begin{minipage}{\textwidth}
  \label{tab:main_results}
    \centering
    \begin{savenotes}
    \renewcommand{\arraystretch}{1.2}
    \begin{tabular}{llcc}
        \toprule
        Regularizer\footnote{\label{fn:smallLB}To be more precise, a small amount of extra log-barrier has been omitted in this table for simplicity.} & $\alpha,\theta$ &  Regret (w/o uniqueness) &  Comments
        \\
        \midrule
        Log-barrier &$\alpha=0$ & Sto.  $\order\rbr{ \gapcomp \log T }$ &  \multirow{2}{*}{\makecell{First to remove uniqueness  \\ for log-barrier}}
        \\
        $-\sum_{i}\gamma^t_i \log p_i$ & $\theta=\sqrt{\frac{1}{\log T}}$ & Adv.  $\order\rbr{\sqrt{KT\log T}}$  & 
        \\
        \midrule
        $\beta$-Tsallis entropy  & $\alpha=\beta$  & Sto.  $\order\rbr{\frac{\gapcomp\log T}{\beta(1-\beta)}}$ & \multirow{2}{*}{\makecell{First BOBW result \\ for $\beta\neq \nicefrac{1}{2}$ (even \\ with uniqueness)} }
        \\
        $- \frac{1}{1-\beta} \sum_{i} \gamma^{t}_i p_i^{\beta}$ & $\theta = \sqrt{ \frac{1-\beta}{\beta} }$ & Adv.  $\order\bigg(\sqrt{\frac{KT\rbr{\log T}^{\mathbb{I}\{\beta\neq 1/2\}}}{\beta\rbr{1-\beta}}}\bigg)$   & 
        \\
        \midrule
        Shannon entropy &$\alpha=1$ & Sto.   $\order\rbr{\gapcomp \rbr{\log T}^2 }$  &    \multirow{2}{*}{\makecell{Improve~\citep{itonearly} \\ which defines $\gapcomp$ as $\nicefrac{K}{\gapmin}$ \\ and requires uniqueness\footnote{To be clear, the setting in~\citep{itonearly} is more general than MAB. Here, the comparison is solely based on their results specified to MAB.}}}      \\
        $\sum_{i} \gamma^{t}_i  p_i \log \rbr{\frac{p_i}{e}}$ & $\theta=\sqrt{\frac{1}{\log T}}$ &Adv.  $\order\rbr{\sqrt{KT}\log T}$        &   \\ 
        \bottomrule
    \end{tabular}
    \end{savenotes}
    \end{minipage}
\end{table*}%

\begin{table*}[t]%
\caption{
Regret bounds of our algorithm using $\nicefrac{2}{3}$-Tsallis entropy for the Decoupled Exploration and Exploitation MAB problem. $V$ is the set of sub-optimal arms and $\gapmin=\min_{i\in V} \gap_i$.
}
  \label{tab:dee_results}
    \centering
    \begin{tabular}{c|c}
        \toprule
        Ours (w/o uniqueness) &  \citep{rouyer20a} (w/ uniqueness)
        \\
        \midrule
        Sto.  $\order\Big(\sqrt{\sum_{i \in V} \frac{K }{\gap_i^2} }\Big)$ & Sto.   $\order\Big(\sqrt{\sum_{i\in V} \frac{K}{\gap_i \cdot \gapmin} }\Big)$ 
        \\
        Adv.  $\order(\sqrt{KT})$  & Adv. $\order(\sqrt{KT})$
        \\ 
        \bottomrule
    \end{tabular}
\end{table*}


\paragraph{Techniques} 
Inspired by \citep{ito21a}, we decompose the regret into three parts: the regret related to the sub-optimal arms, the regret related to the optimal arms, and the residual regret. 
Bounding each of them requires new ideas, as discussed below (see \pref{sec:analysis} for details).

To bound the regret related to the sub-optimal arms by a so-called self-bounding quantity (the key to achieve BOBW using $\ftrl$), we design a novel arm-dependent learning rate schedule (which is also our key algorithmic contribution).
For example, when using $\beta$-Tsallis entropy, this schedule balances the corresponding stability term and penalty term of a sub-optimal arm $i$ to $\order\rbr{\sum_{t=1}^{T} \nicefrac{\rbr{p^t_i}^{1-\beta}}{\gamma^{t+1}_i} }$, which is then bounded by a self-bounding quantity. 
Apart from removing the uniqueness assumption, as mentioned this learning rate schedule also enables us to achieve the first BOBW guarantees for $\beta$-Tsallis entropy with any value of $\beta$, and also to improve the bound of \cite{itonearly} for Shannon entropy, which are notable results on their own. 

Then, to bound the regret related to the optimal arms, we greatly extend the idea of~\citep{ito21a} that is highly specific to the simple form of $\nicefrac{1}{2}$-Tsallis entropy with an arm-independent learning rate.
Specifically, we develop a new analysis based on a key observation of a certain monotonicity of Bregman divergences.
Such monotonicity only requires two mild conditions on the regularizer that are usually satisfied, allowing us to apply it to a broad spectrum of regularizers. 


Our arm-dependent learning rate does make the residual regret much more complicated compared to~\citep{ito21a}.
To handle it, we carefully consider two cases and show that in both cases it can be related to some self-bounding quantities. 

Finally, we note that various places of our analysis require the learner's distribution over arms to be stable in a multiplicative sense between two consecutive rounds.
We achieve this by adding an extra small amount of log-barrier, a technique first proposed in~\citep{bubeck2018sparsity}. 
While we do not know how to prove the same results without this extra tweak, 
we conjecture that it is indeed unnecessary.




\paragraph{Related work} For early results solely for the stochastic setting or solely for the adversarial setting, we refer the readers to the systematic survey in \citep{lattimore2020bandit}. 
The study of BOBW for MAB starts from the pioneering work of \cite{bubeck2012best}, followed by many improvements via different approaches~\citep{seldin2014one,auer2016algorithm,seldin2017improved,wei2018more,lykouris2018stochastic,gupta2019better,zimmert2019optimal,zimmert2021tsallis} and many extensions from MAB to other problems such as semi-bandits~\citep{zimmert2019beating}, linear bandits~\citep{lee2021achieving}, MAB with feedback graphs~\citep{itonearly,erez2021towards,rouyer2022a}, MAB with switching cost~\citep{rouyer2021algorithm, amir2022better}, model-selection~\citep{pacchiano2022best}, partial monitoring~\citep{tsuchiya2023best}, and Markov Decision Process (MDP)~\citep{lykouris2019corruption,jin2020simultaneously,jin2021best,chen2021improved}. 
Among these works, the $\ftrl$-based approach is particularly appealing since it is simple in both the algorithm design and the analysis, and also extends seamlessly to other more general settings (such as the corrupted setting). 
The uniqueness assumption used to be critical for the analysis of this approach, but plays no role in other methods such as~\citep{auer2016algorithm,seldin2017improved}.
Following the first step by~\citep{ito21a}, our work further demonstrates that this was merely due to the lack of a better analysis (and sometimes a better learning rate schedule).
We believe that our techniques shed light on removing the uniqueness assumption for using $\ftrl$ in more complicated problems such as semi-bandits and MDPs. 


\section{Preliminaries}
\label{sec:prelim}

In multi-armed bandits (MAB), a learner is given a fixed set of arms $[K]=\{1,2,\cdots,K\}$ and has to interact with an environment for $T \geq K$ rounds.
In each round $t$, the learner chooses an arm $i^t \in [K]$ while simultaneously the environment decides a loss vector $\ell^t\in [0,1]^K$.
The learner then suffers and observes the loss $\ell^t_{i^t}$ of the selected arm for this round.
The goal of the learner is to minimize her (pseudo) regret, which measures the difference between her expected cumulative loss and that of the best arm in hindsight. Formally, the regret is defined as 
$\Reg^T = \E\big[ \sum_{t=1}^{T} \ell^t_{i^t} - \sum_{t=1}^{T} \ell^t_{i^\star}\big]$,
where $i^\star \in \argmin_{i \in[K]} \E \big[\sum_{t=1}^{T} \ell^t_i\big]$ is one of the best arms in hindsight, and $\E\sbr{\cdot}$ denotes the expectation with respect to the internal randomness of both the algorithm and the environment.



\paragraph{Adversarial setting versus stochastic setting}
We consider two different settings according to how the loss vectors are decided by the environment.
In the adversarial setting, the environment decides the loss vectors in an arbitrary way with the knowledge of the learner's algorithm.
In this case, the minimax optimal regret is known to be $\Theta(\sqrt{KT})$~\citep{audibert2009minimax}. 

In the stochastic setting, following prior work such as~\citep{zimmert2021tsallis}, we consider a situation much more general than the vanilla i.i.d.~case (sometimes called the adversarial regime with a self-bounding constraint).
Formally, we assume that the loss vectors satisfy the following condition: there exists a gap vector $\gap \in [0,1]^{K}$ and a constant $C\geq0$ such that  
\begin{equation}
\Reg^T \geq \E\sbr{ \sum_{t=1}^{T} \sum_{i \in [K]} \Pr\sbr{i^t=i} \gap_i  } - C,\label{eq:main_stoc_condition}
\end{equation}
where $\Pr\sbr{i^t=i}$ denotes the learner's probability of taking arm $i$ in round $t$.
This condition subsumes the well-studied i.i.d.~setting (discussed in \pref{sec:intro}) where the loss vectors are independently sampled from a fixed but unknown distribution and thus Condition~\eqref{eq:main_stoc_condition} holds with equality, $C=0$, and $\gap_i = \E\sbr{\ell^t_i - \ell^t_{i^\star}}$ being the sub-optimality gap of arm $i$ (independent of $t$). 
In this case, the instance-optimal regret is  $\order(\sum_{i:\gap_i>0}\frac{\log T}{\gap_i} )$, achieved by the UCB algorithm~\citep{auer2002finite}.
More generally, Condition~\eqref{eq:main_stoc_condition} covers the corrupted i.i.d.~setting where the loss vectors are first sampled from a fixed distribution and then corrupted by an adversary in an arbitrary way as long as the expected cumulative $\ell_\infty$ distance between the corrupted loss vector and the original one is bounded by $C \in [0, T]$.
While these two examples both involve iidness, note that Condition~\eqref{eq:main_stoc_condition} itself is much more general and does not necessarily require that.

\paragraph{Uniqueness assumption}
Prior work using FTRL to achieve BOBW crucially relies on a uniqueness assumption when analyzing the regret under Condition~\eqref{eq:main_stoc_condition}.
Specifically, it is assumed that there exists one and only one arm $\opti$ with $\gap_{\opti} = 0$.
In the special i.i.d.~case, this simply means that there exists a unique optimal arm ($\opti=i^\star$).
Under this uniqueness assumption, the Tsallis-INF algorithm of~\citep{zimmert2021tsallis} achieves $\Reg^T = \order\rbr{ \sum_{i\neq \opti }\frac{\log T}{\gap_i} + \sqrt{C\sum_{i \neq \opti }\frac{\log T}{\gap_i}}}$.
The recent work \citep{ito21a} takes the first step to remove such an assumption for the Tsallis-INF algorithm and develops a refined analysis with regret bound $\order\rbr{ \sum_{i:\gap_i>0}\frac{\log T}{\gap_i} + \sqrt{C\sum_{i:\gap_i>0}\frac{\log T}{\gap_i}}+D+K}$,
where $D$ is such that
$
 \E\sbr{\sum_{t=1}^{T} \max_{i:\gap_i=0} \E^t\sbr{ \ell^t_i - \ell^t_{i^\star}  } } \leq D,
$
and $\E^t[\cdot]$ is the conditional expectation with respect to the history before round $t$. 
Note that, in the i.i.d.~setting, $D$ is simply $0$, while in the corrupted setting, $D$ is at most $C$. 
We significantly generalize and improve this result to a borad family of algorithms, and all our results hold \textit{without} the uniqueness assumption.

We denote by $U=\cbr{i\in [K]: \gap_i = 0}$ the set of arms with a zero gap, and $V = [K] \backslash U$ the set of arms with a positive gap.
In the i.i.d.~setting, $U$ is simply the set of optimal arms and $V$ is the set of sub-optimal arms. 
We also define $\gapmin = \min_{i\in V} \gap_i$ to be the minimum nonzero gap.

\section{Algorithms and Results}
\label{sec:algorithm}

The pseudocode of our algorithm is presented in \pref{alg:main_alg}. 
It is based on the general $\ftrl$ framework, which finds $p^t$, the distribution of selecting arms in around $t$, via solving the optimization problem $p^t = \argmin_{p \in \distri_K} \big\langle p,\sum_{\tau<t} \hatl^{\tau} \big\rangle + \phi^t(p)$.
Here, $\distri_K$ is the set of all possible distributions over $K$ arms, 
$\hatl^{\tau}$ is an loss estimator for $\ell^\tau$,
and $\phi^t$ is a regularizer. 
The learner then samples arm $i^t$ from the distribution $p^t$ and observes the suffered loss $\ell^t_{i^t}$.
With this feedback, the algorithm constructs the standard unbiased importance-weighted loss estimator: $\hatl^{t}_{i} = \frac{\Ind{i^{t}=i} \loss^{t}_{i}}{p^{t}_{i}}, \; \forall i \in [K]$,
where $\Ind{\cdot}$ denotes the indicator function.


\begin{algorithm}[t]
\DontPrintSemicolon
\caption{$\ftrl$ for BOBW without Uniqueness}
\label{alg:general}
\textbf{Input}: coefficient  $\theta$, learning rate $\alpha$, Tsallis entropy parameter $\beta$, log-barrier coefficient  $\clog$, number of arms $K$, number of rounds $T$ (not necessary if a doubling trick is applied; see~\citep[Section~5.3]{ito21a}) \\
\For{$t=1, 2, \ldots, T$}{
   Define regularizer $\phi^t(p)= \underbrace{-\clog \sum_{i \in [K]} \log p_i}_{\text{extra log-barrier}}  + \begin{cases}
       -\sum_{i \in [K]}\gamma^t_i \log p_i, &\text{(log-barrier)} \\
       - \frac{1}{1-\beta} \sum_{i\in [K]} \gamma^{t}_i p_i^{\beta}, &\text{($\beta$-Tsallis entropy)} \\
       \sum_{i\in [K]} \gamma^{t}_i p_i \log(p_i/e),  &\text{(Shannon entropy)}       
       \end{cases}$ \\
    with learning rate $\gamma^t_i =\theta \sqrt{1 + \sum_{\tau < t} \rbr{\max\cbr{p^\tau_i,\nicefrac{1}{T}}}^{1-2\alpha}}$. \\
   Compute 
    $p^{t} = \argmin_{p \in \Omega_K} \left\{ \inner{p, \sum_{\tau<t} \hatl^{\tau}} + \phi^{t}(p) \right\}$
    where $\Omega_K$ is the simplex. \\ 
   Draw arm $i^t\sim p^t$, suffer and observe loss $\ell^t_{i^t}$. \\
   Construct $\hatl^t$ as an unbiased estimator of $\ell_t$ with $\hatl^{t}_{i} = \frac{\Ind{i^{t}=i} \loss^{t}_{i}}{p^{t}_{i}}, \; \forall i \in [K]$. 
}    
\label{alg:main_alg}
\end{algorithm}

We consider three different reuglarizers $\phi^t$: log-barrier, $\beta$-Tsallis entropy, and Shannon entropy; see \pref{alg:general} for definitions.
By now, they are standard reuglarizers used extensively in the MAB literature, each with different useful properties, but there are some small tweaks in our definitions:
1) a linear term (from $p_i\log(p_i/e) = p_i\log p_i - p_i$) is added to the canonical form of Shannon entropy, which is critical to ensure a certain type of monotonicity of Bregman divergences (see~\pref{sec:analysis});
2) for technical reasons, we also incorporate a small amount of extra log-barrier (with coefficient $\clog$), which ensures multiplicative stability of the algorithm.


More importantly, we propose the following unified arm-dependent learning rate:
\begin{equation} \label{eq:main_def_gamma}
    \gamma^t_i =\theta \sqrt{1 + \sum_{\tau=1}^{t-1} \rbr{\max\cbr{p^\tau_i,\nicefrac{1}{T}}}^{1-2\alpha} },\quad \forall i \in [K], \ t\in[T],
\end{equation}
where $\alpha \in [0,1]$ and $\theta \in \fR_{>0}$ are parameters (set differently for different regularizers). 
The clipping of $p^\tau_i$ to $1/T$ is because $\rbr{p^t_i}^{1-2\alpha}$ itself could be unbounded for $\alpha \in (\nicefrac{1}{2},1]$ when $p^t_i$ is too small.
This learning rate is not only conceptually simpler than those  in~\citep{ito21a, itonearly} and important for removing the uniqueness assumption, but also leads to better bounds in some cases as we discuss below.



\paragraph{Main results}
We now present our main results. 
Our regret bounds in the stochastic setting are expressed in terms of an instance complexity measure $\gapcomp = \frac{\abr{U}}{\gapmin} + \sum_{i \in V} \frac{1}{\gap_i}$, which is of the same order as the standard complexity measure $\sum_{i\in V} \frac{1}{\gap_i}$ when $|U|=\order(1)$ (in particular, this is the case when the uniqueness assumption holds).
Similar to~\citep{ito21a}, our bounds are also in terms of the constant $D$ defined in \pref{sec:prelim}.
We start with the following result for the log-barrier regularizer.
\begin{theorem}\label{thm:main_log_barrier} 
When using the log-barrier regularizer with $\clog = \cloglog$, $\alpha=0$, and $\theta=\sqrt{\nicefrac{1}{\log T}}$, \pref{alg:main_alg} ensures $\Reg^T = \order\rbr{\sqrt{KT\log T}}$ always, and simultaneously the following regret bound when Condition~(\ref{eq:main_stoc_condition}) holds: 
$\Reg^T = \order\rbr{\gapcomp \log T + \sqrt{C \gapcomp \log T} +  \Wlog + D}$.
\end{theorem}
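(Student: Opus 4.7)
\textbf{Proof proposal for \pref{thm:main_log_barrier}.}
The plan is to start from the standard $\ftrl$ regret identity
$\sum_{t} \langle p^t - u, \hatl^t\rangle \leq \big(\phi^{T+1}(u) - \min_{p}\phi^1(p)\big) + \sum_t D_{\phi^{t+1}}(p^t,p^{t+1})$
and to bound the penalty and stability terms on the right separately. For log-barrier augmented by the $\clog$-log-barrier, the Hessian of $\phi^t$ is diagonal with entries $(\gamma^t_i+\clog)/p_i^2$; the extra log-barrier enforces multiplicative stability of the FTRL iterates (in the spirit of~\citep{bubeck2018sparsity}), which lets me replace each Bregman-divergence summand by its local quadratic and bound it in expectation by $\order\rbr{\sum_i p^t_i/\gamma^t_i}$. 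For the adversarial bound, I would pick a smoothed $u$ placing $\nicefrac{1}{T}$ on every arm other than $i^\star$; Cauchy--Schwarz then gives $\sum_i \gamma^{T+1}_i \leq \theta\sqrt{K(K+T)}$ and hence penalty $=\order\rbr{\theta \sqrt{KT}\log T + K\clog\log T}$, while the stability sum telescopes (via $\sum_t p^t_i/\sqrt{1+\sum_{\tau<t}p^\tau_i}\leq 2\sqrt{\sum_t p^t_i}$ and another Cauchy--Schwarz over arms) to $\order\rbr{\sqrt{KT}/\theta}$. The choice $\theta=\sqrt{1/\log T}$ balances both to $\order\rbr{\sqrt{KT\log T}}$.

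For the stochastic bound I would follow the three-way decomposition outlined in the techniques paragraph of \pref{sec:intro}, splitting the contributions into sub-optimal arms $V$, optimal arms $U$, and a residual piece arising from the arm-dependence of $\gamma^t_i$. For $i \in V$, the per-arm stability sum satisfies $\sum_t p^t_i/\gamma^t_i \leq \order\rbr{\sqrt{\sum_t p^t_i}/\theta}$ (since $\alpha=0$ makes $\gamma^t_i \geq \theta\sqrt{\sum_{\tau<t} p^\tau_i}$), and AM-GM with a parameter $\lambda$ gives $\sqrt{\sum_t p^t_i}/\theta \leq \lambda \gap_i \sum_t p^t_i + 1/(4\lambda \gap_i \theta^2)$. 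Summing over $V$, invoking Condition~\eqref{eq:main_stoc_condition} to absorb $\lambda\gap_i \sum_t p^t_i$ back into $\Reg^T/2$, and optimizing $\lambda$ as a function of $C$ produces the $\order\rbr{\gapcomp \log T + \sqrt{C \gapcomp \log T}}$ piece.

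The delicate part, and the main obstacle, is the optimal arms $U$: there $\gap_i=0$ makes the usual self-bounding trick vacuous. My plan is to invoke the monotonicity of Bregman divergences emphasized in the techniques paragraph: because each $\gamma^t_i$ is non-decreasing in $t$ and $\phi^t$ is separable, $D_{\phi^{t+1}}(p,q) \geq D_{\phi^t}(p,q)$ in the relevant sense, which permits a telescoping comparison between $p^t$ and a carefully chosen comparator supported on $U$. The resulting bound on the $U$-regret scales as $\abr{U}/\gapmin$ after a second AM-GM with parameter $\lambda\gapmin$, accounting for the first piece of $\gapcomp$. The residual regret, caused by different comparators within $U$ inducing different $\gamma^t_i$-penalties, I would handle by a case-split on whether $\max\cbr{p^t_{\opti},\nicefrac{1}{T}}$ is clipped; in each case the residual either reduces to an already-bounded self-bounding quantity or contributes at most the lower-order additive $\Wlog$. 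Finally, the term $D$ appears exactly as in~\citep{ito21a}, absorbing the variability of $i^\star$ across rounds.
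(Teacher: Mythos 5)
Your adversarial-setting argument (standard \ftrl\ penalty/stability analysis with a smoothed comparator and Cauchy--Schwarz over the arm-dependent $\gamma^{T+1}_i$) is a legitimate alternative to the paper, which instead gets $\order\rbr{\sqrt{KT\log T}}$ as a special case of one unified bound (\pref{lem:app_reg_log}) by taking $U=\{i^\star\}$; and your treatment of the sub-optimal arms matches the paper's $\regsub$ analysis. The genuine gap is in how you handle the optimal arms and the residual. You claim the $U$-part of the regret ``scales as $\abr{U}/\gapmin$ after a second AM-GM with parameter $\lambda\gapmin$,'' but no AM-GM/self-bounding step can be applied to a quantity expressed in terms of the probabilities $p^t_i$ of arms $i\in U$: Condition~\eqref{eq:main_stoc_condition} assigns zero cost to mass placed on $U$ (those arms have $\gap_i=0$), so a term like $\sum_{i\in U}\sqrt{\sum_t p^t_i}$ cannot be absorbed into $z(\Reg^T+C)$ at any rate involving $\gapmin$ --- the whole difficulty without uniqueness is precisely that such terms can be $\Theta(\sqrt{T})$ while the regret is $0$. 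The paper resolves this in two steps that are absent from your sketch: (i) the regret on optimal arms is shown to be \emph{nonpositive}, $D^{t,t+1}_U(p^t,\barp^{t+1})\le D^{t,t+1}_U(q^t,q^{t+1})$, via the monotonicity \pref{thm:general_mono_thm} applied with the KKT fact $p^t_i\le q^t_i$ and an intermediate point with matched gradient increments --- this is a comparison between two different point pairs under the skewed divergence, not the time-monotonicity $D_{\phi^{t+1}}\ge D_{\phi^t}$ you invoke; and (ii) in the residual regret $D^{t+1}(\barp^{t+1},p^{t+1})$, after bounding the Lagrange multipliers, a case analysis on whether $\frac{1}{p^t_iw^t_i}\cdot\frac{\sum_{j\in V}1/w^t_j}{\sum_{j}1/w^t_j}\le\sum_{j\in V}\frac{1}{p^t_jw^t_j}$ shows that either the term is already a $V$-type quantity or else $p^t_i\le\sum_{j\in V}p^t_j$, which is what converts all $U$-dependence into $\sum_{i\in V}p^t_i$; the Cauchy--Schwarz over $i\in U$ in that second case is the sole source of the $\abr{U}$ factor, yielding $\selfone\rbr{\abr{U}\log T}$ and hence $\frac{\abr{U}\log T}{\gapmin}$ after self-bounding.

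Two smaller points. Your proposed case split ``on whether $\max\{p^t_{\opti},\nicefrac{1}{T}\}$ is clipped'' is not the relevant dichotomy and does nothing here: with $\alpha=0$ the learning rate involves $(p^\tau_i)^{1}$, so clipping is immaterial for log-barrier; the operative split is $p^t_i\lessgtr\sum_{j\in V}p^t_j$ as above. Also, your reduction to the stochastic bound needs the time-varying benchmark $q^t\in\distri_U$ (suitably smoothed onto $V$, as in \pref{eq:def_wtilq}) rather than a fixed comparator; for log-barrier this preprocessing itself costs an additional $\selftwo(\log T)+K\log T$ (\pref{lem:app_precost_log}) because $\log(1/\wtilq^t_i)$ must be controlled on $V$, a contribution your outline does not account for, and it is also where the $D$ term genuinely enters rather than ``exactly as in Ito'' with a fixed $i^\star$.
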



Log-barrier was first used to achieve BOBW in~\citep{wei2018more} and later improved in~\citep{ito21a}, both of which require the uniqueness assumption.
The $\order\rbr{\sqrt{KT\log T}}$ bound for the adversarial setting is almost minimax optimal except for the extra $\sqrt{\log T}$ factor (a common caveat for log-barrier).
On the other hand, the bound under Condition~(\ref{eq:main_stoc_condition}) matches that of~\citep{ito21a} when uniqueness holds and generalizes it otherwise.\footnote{\citet{ito21a} also provides other data-dependent bounds in the adversarial setting, which we do not consider here. Ignoring this part, his algorithm is also slightly different from ours, but we note in passing that our analysis technique also applies if one sets $\nu^t_i=p^t_i$ in his algorithm.}
It is worth noting that this bound (and the same for our other results) suffers an $\order(D)$ term, which unfortunately can be as large as $C$, making the bound weaker than those always with only $\sqrt{C}$ dependence under the uniqueness assumption.
It is unclear to us whether such $\order(D)$ dependence is necessary when we do not make the uniqueness assumption.




Next, we present our results for Shannon entropy. 
\begin{theorem}\label{thm:main_shannon} When using the Shannon entropy regularizer with $\clog = \clogsha$, $\alpha=1$, and $\theta=\sqrt{\nicefrac{1}{\log T}}$, \pref{alg:main_alg} ensures $\Reg^T = \order\big(\sqrt{KT}\log T\big)$ always, and simultaneously the following regret bound under Condition~(\ref{eq:main_stoc_condition}): $\Reg^T = \order  \Big(\gapcomp \rbr{\log T}^2 + \sqrt{C \gapcomp \rbr{\log T}^2} + \Wsha + D \Big).$
\end{theorem}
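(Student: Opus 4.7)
My plan follows the three-part regret decomposition outlined at the end of \pref{sec:intro}, mirroring the strategy used for \pref{thm:main_log_barrier}. I would start from the standard $\ftrl$ regret identity, which bounds the expected regret against a comparator $u$ by the sum of a stability term $\sum_t \skwBreg(p^t,\tilde{p}^{t+1})$ and a penalty term $\phi^{T+1}(u)-\phi^1(p^1)$. For the Shannon regularizer $\phi^t(p)=\sum_i\gamma^t_i p_i\log(p_i/e) - \clog\sum_i\log p_i$ with Hessian $\mathrm{diag}(\gamma^t_i/p_i+\clog/p_i^2)$, a per-round per-arm stability contribution scales as $p^t_i/\gamma^t_i$ after taking expectation of the importance-weighted estimator, while the telescoped penalty is $\order(\gamma^{T+1}_i\log T+\clog\log T)$ per arm. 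The choice $\clog=\clogsha$ ensures the extra log-barrier provides multiplicative stability $p^{t+1}_i=\Theta(p^t_i)$ in the style of~\citep{bubeck2018sparsity}, which is used silently throughout.

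For the adversarial bound I would take $u$ to be the indicator of $i^\star$. With $\alpha=1$ the learning rate becomes $\gamma^t_i\asymp\theta\sqrt{1+\sum_{\tau<t}\min\{1/p^\tau_i,T\}}$, and the standard summation inequality $\sum_t(1/p^t_i)/\gamma^t_i\lesssim\gamma^{T+1}_i/\theta^2$ combined with the penalty produces a per-arm bound of order $\gamma^{T+1}_i\log T/\theta^2$. Substituting $\theta=\sqrt{1/\log T}$ and summing over arms via Cauchy--Schwarz (using $\sum_i p^t_i=1$) yields the claimed $\order(\sqrt{KT}\log T)$.

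For the stochastic bound, I would use the three-part decomposition of \pref{sec:analysis}. The sub-optimal-arm part is bounded per $i\in V$ by a self-bounding quantity of order $(\log T)\sqrt{\sum_t\min\{1/p^t_i,T\}}$, which via the weighted AM--GM inequality $2\sqrt{xy}\le\lambda x+y/\lambda$ applied against Condition~(\ref{eq:main_stoc_condition}) delivers the $\order(\gapcomp(\log T)^2+\sqrt{C\gapcomp(\log T)^2})$ term. The optimal-arm part is controlled through the Bregman-divergence monotonicity observation in \pref{sec:analysis}; its two hypotheses (separability and monotonicity of $p_i\nabla^2_{ii}\phi^t$) are satisfied by the Shannon regularizer precisely because of the added linear correction $-p_i$, which makes $p_i\nabla^2_{ii}\phi^t=\gamma^t_i$. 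This yields $\order(|U|(\log T)^2/\gapmin+\sqrt{C|U|(\log T)^2/\gapmin})$, absorbed into the $\gapcomp(\log T)^2$ term. The residual regret is split into two cases depending on whether $p^t_i\ge 1/T$, each handled analogously to \pref{thm:main_log_barrier}, and contributes $\order(\Wsha+D)$.

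The main obstacle, and the source of the extra $\log T$ factor relative to the log-barrier theorem, is that $\nabla\phi^t_i=\gamma^t_i\log p^t_i+\clog/p^t_i$ makes the telescoped penalty scale with $(\gamma^{t+1}_i-\gamma^t_i)\log(1/p^t_i)$, which can be as large as $(\gamma^{t+1}_i-\gamma^t_i)\log T$ rather than an $\order(1)$ multiple of $(\gamma^{t+1}_i-\gamma^t_i)$. The delicate step will be verifying that, after combining this heavier penalty with the stability term $\sum_t p^t_i/\gamma^t_i$ under the $\alpha=1$ schedule, the pairing between $\gamma^{T+1}_i$ and the self-bounding quantity $\sqrt{\sum_t 1/p^t_i}$ is tight enough for the self-bounding argument to close with the stated choices $\theta=\sqrt{1/\log T}$ and $\clog=\clogsha$, in particular so that the constant in the $\Wsha$ lower-order term aligns with the $K^2(\log T)^{3/2}$ target.
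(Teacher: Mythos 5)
The central gap is in how you convert the stability and penalty sums into regret bounds, and it breaks both halves of the theorem. For Shannon entropy the expected stability per round is $\order\rbr{\sum_i \nicefrac{1}{\gamma^t_i}}$ (the inverse-Hessian weight $p^t_i/\gamma^t_i$ is multiplied by $\E^t[(\hatl^t_i)^2]\le 1/p^t_i$), and the $\alpha=1$ schedule is designed so that the penalty is of the same order; the paper then applies \pref{lem:main_lr_design} to obtain, per arm, $\sum_t \nicefrac{1}{\gamma^{t+1}_i} = \order\big(\log T\sqrt{\sum_t p^t_i}\big)$, i.e.\ a quantity expressed through $\sum_t p^t_i$. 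You instead invoke $\sum_t (1/p^t_i)/\gamma^t_i\lesssim\gamma^{T+1}_i/\theta^2$ and arrive at per-arm bounds of order $\gamma^{T+1}_i\log T/\theta^2 \asymp (\log T)^{\nicefrac{3}{2}}\sqrt{\sum_t\min\cbr{1/p^t_i,T}}$. This cannot close in either regime: in the adversarial case $\gamma^{T+1}_i$ can be as large as order $\theta T$ for arms kept at small probability, so summing over arms does not give $\sqrt{KT}\log T$ (Cauchy--Schwarz with $\sum_i p^t_i=1$ controls $\sum_i\sqrt{\sum_t p^t_i}$, not $\sum_i\sqrt{\sum_t 1/p^t_i}$); and in the stochastic case $\sqrt{\sum_t\min\cbr{1/p^t_i,T}}$ is \emph{not} a self-bounding quantity, because Condition~\eqref{eq:main_stoc_condition} lower-bounds the regret by $\sum_t p^t_i\gap_i$, so the AM--GM step only works against $\sqrt{\sum_t p^t_i}$. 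The conversion from the $1/p$-type accumulation inside $\gamma^t_i$ to a $p$-type self-bounding quantity is exactly the content of \pref{lem:main_lr_design} (used in \pref{lem:app_regsub_shannon}), and your outline is missing it.

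A secondary but consequential issue is the attribution of terms. The regret on optimal arms is shown to be nonpositive via the monotonicity theorem (\pref{lem:app_regopt}); it does not produce an $|U|(\log T)^2/\gapmin$ contribution. That contribution comes from the \emph{residual} regret through $\selfone\rbr{|U|\log^2 T}$ (\pref{lem:app_resreg_shannon}), and the relevant case split there is on whether $p^t_i\le\sum_{j\in V}p^t_j$ for $i\in U$, not on whether $p^t_i\ge 1/T$; meanwhile the $D$ term arises from the preprocessing cost of comparing against the time-varying benchmark $q^t\in\Omega_U$ (\pref{lem:app_precost_shannon}), which your outline does not address -- note also that a fixed comparator $e_{i^\star}$ has infinite log-barrier penalty, so some benchmark smoothing or the paper's $\wtilq^t$ construction is unavoidable. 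Even setting aside the first issue, the pieces as you have assigned them would not assemble into the stated $\gapcomp(\log T)^2 + \sqrt{C\gapcomp(\log T)^2} + \Wsha + D$ bound.
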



The recent work~\citep{itonearly} is the first to discover that Shannon entropy, used in the very first adversarial MAB algorithm EXP3~\citep{auer2002nonstochastic}, in fact also achieves BOBW when configured and analyzed properly (assuming uniqueness).
Their results are for the more general setting of MAB with a feedback graph,
and when specified to standard MAB, their regret bound under Condition~\eqref{eq:main_stoc_condition} is worse than ours with $\gapcomp$ defined as $K/\gapmin$.
The key of our improvement comes from our very different and arm-dependent learning rate schedule.
Note that there are extra $\log T$ factors in the regret for both the adversarial setting and the stochastic setting, which is also the case in~\citep{itonearly}.
While this makes the bounds worse compared to other regularizers, in the more general setting with a feedback graph, Shannon entropy is known to be critical for achieving the right dependence on the independence number of the feedback graph, 
and we believe that our results shed light on how to remove the uniqueness requirement in this more general setting using Shannon entropy.


Finally, we present our results for Tsallis entropy.
\begin{theorem}\label{thm:main_tsallis} For any $\beta \in (0,1)$, when using the $\beta$-Tsallis entropy regularizer with $\clog = \clogtsa$, $\alpha=\beta$, and $\theta=\sqrt{(1-\beta)/\beta}$, \pref{alg:main_alg} ensures $\Reg^T = \order \Big(\sqrt{\frac{1}{\beta\rbr{1-\beta}}KT\rbr{\log T}^{\ind\cbr{\beta \neq \frac{1}{2}}} }\Big)$ always, and simultaneously the following regret bound under Condition~(\ref{eq:main_stoc_condition}): $\Reg^T = \order\rbr{\frac{\gapcomp \log T}{\beta\rbr{1-\beta}} + \sqrt{\frac{C\gapcomp \log T}{\beta\rbr{1-\beta}}} + D + \Wtsa }$.
\end{theorem}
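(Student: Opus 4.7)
The plan is to follow the three-part regret decomposition outlined in Section~\ref{sec:analysis}: split $\Reg^T$ into contributions from sub-optimal arms $i \in V$, from optimal arms $i \in U$, and a residual term. The starting point is the standard $\ftrl$ regret bound against a comparator $u \in \Omega_K$, which upper-bounds the regret by a sum of stability terms governed by the Hessian of $\phi^t$ plus the telescoping penalty $\phi^{T+1}(u) - \phi^1(p^1) + \sum_t (\phi^{t+1}(u) - \phi^t(u))$. The adversarial bound $\order\big(\sqrt{KT(\log T)^{\ind\{\beta\neq 1/2\}}/(\beta(1-\beta))}\big)$ then follows from summing these terms without invoking Condition~\eqref{eq:main_stoc_condition}, while the stochastic bound follows from combining the per-set analyses below with a self-bounding step.

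For $i \in V$, using the Hessian $\nabla^2 \phi^t(p)_{ii} = \gamma^t_i\, \beta\, p_i^{\beta-2}$ coming from the $\beta$-Tsallis part and $\E[(\hatl^t_i)^2] \leq 1/p^t_i$, the expected stability contribution of arm $i$ in round $t$ is of order $(p^t_i)^{1-\beta}/(\gamma^t_i \beta)$; with $\alpha = \beta$ and $\theta = \sqrt{(1-\beta)/\beta}$, the per-step penalty increment $(\gamma^{t+1}_i - \gamma^t_i)(p^t_i)^\beta/(1-\beta)$ is balanced to $\order((p^t_i)^{1-\beta}/(\gamma^t_i \beta(1-\beta)))$ because the learning-rate increment satisfies $\gamma^{t+1}_i - \gamma^t_i = \order(\theta (p^t_i)^{1-2\beta}/\gamma^{t+1}_i)$. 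The key calculation is then
\[
\sum_{t=1}^T \frac{(p^t_i)^{1-\beta}}{\gamma^t_i} \;=\; \order\!\Bigg(\frac{1}{\theta}\sqrt{(\log T)^{\ind\{\beta\neq 1/2\}} \cdot \sum_{t=1}^T p^t_i}\Bigg),
\]
obtained by writing $(p^t_i)^{1-\beta} = (p^t_i)^{1/2} \cdot (p^t_i)^{(1-2\beta)/2}$, Cauchy--Schwarz, and the textbook $\sum_t a_t/(1+\sum_{\tau<t}a_\tau) = \order(\log T)$ bound (the clipping $p^\tau_i \geq 1/T$ accounts for the $\log T$ factor when $\beta \neq 1/2$, and the factor is $\order(1)$ when $\beta = 1/2$ because $(p^t_i)^{1-2\beta} \equiv 1$). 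Summing over $i \in V$ and invoking Condition~\eqref{eq:main_stoc_condition} via AM--GM on $\sum_{i \in V} \gap_i^{-1} \cdot \sqrt{\gap_i \E[\sum_t p^t_i] \cdot \log T/(\beta(1-\beta))}$ yields the $\sum_{i \in V}\log T/(\beta(1-\beta)\gap_i)$ term and the cross-term $\sqrt{C\gapcomp\log T/(\beta(1-\beta))}$.

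For $i \in U$ the obstacle is that a single optimal arm need not carry constant mass, so the naive comparator $u = e_{\opti}$ makes $\phi^{t+1}(u) - \phi^t(u)$ blow up. I would bypass this by exploiting the monotonicity of the skewed Bregman divergence described in Section~\ref{sec:analysis}, which $\beta$-Tsallis entropy satisfies under the two mild regularity conditions stated there. This monotonicity effectively lets one replace the single-arm comparator by a uniform-over-$U$ comparator and redistribute the per-round penalty across all of $U$, producing a sum of the same form $\sum_{i \in U}\sum_t (p^t_i)^{1-\beta}/\gamma^t_i$. The self-bounding step, applied now to the gap-minimum sub-optimal arms ``adjacent'' to $U$, converts this into the $|U|/\gapmin$-type contribution that is already folded into $\gapcomp$, plus the additive $D$ term accounting for the bias of any optimal-arm comparator.

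The residual regret is handled by the case split of Section~\ref{sec:analysis}: in rounds where $\gamma^t_i$ has already grown past a constant multiple of $\theta$, its increment is absorbed into the per-arm stability/penalty budget, while in the remaining rounds the multiplicative stability of $p^t$ across consecutive rounds, enforced by the extra log-barrier with $\clog = \clogtsa$, controls it up to lower-order terms. The constant ``$1$'' inside the square root defining $\gamma^t_i$ together with the initial log-barrier penalty contribute the additive $\Wtsa$ factor. The principal obstacle is the optimal-arm part: establishing the Bregman monotonicity for $\beta$-Tsallis entropy and turning the resulting inequality into a usable self-bounding statement for $|U| > 1$ is the step that makes this proof go through without the uniqueness assumption and that requires the new analytical ingredient of this paper.
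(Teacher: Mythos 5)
Your handling of the sub-optimal arms matches the paper's (balancing the stability and penalty terms through the learning-rate increment and then invoking the summation bound of \pref{lem:main_lr_design}), but the optimal-arm step is where your argument breaks. In the paper's decomposition (\pref{lem:main_decomp}, built on the time-varying benchmark $q^t$ of \pref{eq:def4q} and the intermediate point $\bar p^{t+1}$ of \pref{eq:pbar_def}), the monotonicity theorem \pref{thm:general_mono_thm}, combined with the KKT fact $p^t_i \leq q^t_i$ for $i\in U$, shows that the optimal-arm term $D^{t,t+1}_U(p^t,\bar p^{t+1}) - D^{t,t+1}_U(q^t,q^{t+1})$ is \emph{nonpositive} (\pref{lem:app_regopt}); it is not ``redistributed'' into a sum of the form $\sum_{i\in U}\sum_t (p^t_i)^{1-\beta}/\gamma^t_i$ as you propose. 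Moreover, your plan to control such a sum by self-bounding ``against gap-minimum sub-optimal arms adjacent to $U$'' cannot work as stated: arms in $U$ have $\gap_i=0$, so Condition~\eqref{eq:main_stoc_condition} offers no gap to amortize against, and an optimal arm carrying constant probability already gives $\sum_t (p^t_i)^{1-\beta}/\gamma^t_i=\Theta(\sqrt{T\log T})$, which is incompatible with a $(\log T)$-type stochastic bound. One must either prove this term vanishes (as the paper does) or exhibit a concrete mechanism converting $U$-indexed probability mass into $V$-indexed mass; your proposal supplies neither.

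Relatedly, the $\abr{U}/\gapmin$ part of $\gapcomp$ does not originate from the optimal-arm term at all: in the paper it comes from the residual regret $D^{t+1}(\bar p^{t+1},p^{t+1})$, via two-sided bounds on the Lagrange multipliers $\lambda_U,\lambda_V$ (the generalization of Ito's Lemma~21 in \pref{lem:lemma_21_gen_ext}) and the case split on $\Ind{p^t_i \leq \sum_{j\in V}p^t_j}$ in \pref{lem:app_resreg_main_decomp}, which is exactly what converts the $U$-dependent pieces into $\selfone\rbr{\abr{U}\log T/(\beta(1-\beta))}$ and $\selftwo\rbr{\log T/(\beta(1-\beta))}$ (\pref{lem:app_resreg_tsallis}). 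Your residual-regret sketch (splitting on whether $\gamma^t_i$ has grown past a constant multiple of $\theta$ and otherwise invoking multiplicative stability) never produces this term, and without it the final optimization over $z$ in \pref{lem:app_self_bounding_quan} has nothing to act on for the $U$-part, so the claimed bound $\frac{\gapcomp\log T}{\beta(1-\beta)}+\sqrt{\frac{C\gapcomp\log T}{\beta(1-\beta)}}$ does not follow. A further, smaller issue: you start from the fixed-comparator \ftrl bound, whereas the preprocessing step that introduces the benchmark $q^t$ (and yields the additive $D$ and $\order(\epsilon KT)$ terms, \pref{lem:reg_decomp_step1} and \pref{lem:app_precost_tsallis}) is what makes the three-part split well-defined; the adversarial bound is then obtained in the paper from the same unified \pref{lem:app_reg_tsallis} by taking $U=\{i^\star\}$, rather than by a separate fixed-comparator argument.
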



When $\beta=\nicefrac{1}{2}$, our learning rate $\gamma^t_i$ simply becomes $\sqrt{t}$ (which is arm-independent), and our algorithm exactly recovers Tsallis-INF~\citep{zimmert2021tsallis}.
In this case, our result is essentially the same as what the improved anlaysis of~\citep{ito21a} shows, which does not require uniqueness.
For $\beta\neq\nicefrac{1}{2}$, 
while such regularizers were also analyzed in~\citep{zimmert2021tsallis} (under uniqueness), their algorithm is infeasible since the learning rates are tuned based on the unknown $\gap_i$'s.
On the other hand, our algorithm not only uses a simple and feasible learning rate schedule, but also works without uniqueness.
The bound for the adversarial setting has an extra $\sqrt{\log T}$ factor when $\beta\neq\nicefrac{1}{2}$ though, which we conjecture can be removed (as it is the case when using a fixed learning rate~\citep{audibert2009minimax, abernethy2015fighting}); see \pref{rem:app_sum_lr_conj}.
We find it surprising that our learning rate exhibits totally different behavior when $\beta<\nicefrac{1}{2}$ versus when $\beta>\nicefrac{1}{2}$ (recall that $\alpha$ is set to $\beta$): in the former, $\gamma^t_i$ increases in the previous $p^\tau_i$ ($\tau < t$), while in the latter, it decreases in $p^\tau_i$. 


It might not be clear at this point what the value is to consider $\beta\neq\nicefrac{1}{2}$ --- after all, our bounds are minimized when $\beta=\nicefrac{1}{2}$.
It turns out that, however, other values of $\beta$ play important roles in other problems, as for example demonstrated by~\citet{rouyer20a} in a decoupled exploration and exploitation setting.
Below, we generalize our results to this setting, showcasing the broad applicability of our techniques.

\paragraph{Decoupled exploration and exploitation}
The Decoupled Exploration and Exploitation MAB (DEE-MAB) problem, first considered in~\citep{avner2012decoupling}, is a variant of MAB where in each round $t$, the learner picks an arm $i^t$ to exploit and an arm $j^t$ to explore, and then suffers the loss $\ell^t_{i^t}$ while observing the feedback $\ell^t_{j^t}$. 
The performance of the learner is still measured by the same regret definition in terms of the exploitation arms $i^1, \ldots, i^T$. 
The standard MAB can be seen as a special case where $i^t$ and $j^t$ must be the same.
In DEE-MAB, it turns out that the adversarial setting is as difficult as standard MAB with a lower bound $\Omega(\sqrt{KT})$, but one can do much better in the stochastic setting with a \textit{$T$-independent} regret bound.
For example, \citep{rouyer20a} uses $\ftrl$ with $\nicefrac{2}{3}$-Tsallis entropy to achieve $\order(\sqrt{KT})$ in the adversarial setting and simultaneously $\order\Big(\sqrt{\sum_{i\in V}\frac{K}{\gap_i\cdot\gapmin}}\Big)$ in the i.i.d.~setting assuming a unique optimal arm.

Using our techniques, we not only remove the uniqueness requirement, but also improve their bounds.
Specifically, we consider the exact same algorithm as theirs, which can be described using the framework of \pref{alg:general}:
take $\clog=0$, $\theta=K^{\nicefrac{1}{6}}$, $\alpha=\nicefrac{1}{2}$, and $\beta=\nicefrac{2}{3}$ for the Tsallis entropy regularizer;
sample the exploitation arm $i^t$ according to $p^t$ as before and the exploration arm $j^t$ according to a different distribution $g^t$ with $g^t_i \propto \rbr{p^t_i}^{\nicefrac{2}{3}}$;
finally, construct the importance-weighted estimator using the exploration information: $\hatl^{t}_{i} = \frac{\Ind{j^{t}=i} \loss^{t}_{i}}{g^{t}_{i}}$.
Our results are as follows.

\begin{theorem}\label{thm:main_dee} For the DEE-MAB problem, the algorithm described above ensures $\Reg^T = \order(\sqrt{KT})$ always, and simultaneously the following regret bound under Condition~(\ref{eq:main_stoc_condition}): $\Reg^T = \order \Big(\sqrt{\sum_{i \in V} \frac{K}{\Delta_i^2}} + \sqrt{C} \cdot \rbr{\sum_{i \in V} \frac{K}{\Delta_i^2}}^{\nicefrac{1}{4}}
+ K + D 
\Big)$.
\end{theorem}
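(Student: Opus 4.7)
The plan is to adapt the three-part regret decomposition outlined in the Techniques section to the DEE-MAB setting, specialized to $\beta=\nicefrac{2}{3}$-Tsallis entropy. Since $\alpha=\nicefrac{1}{2}$ in this setup, the learning rate simplifies to the arm-independent $\gamma^t = \theta\sqrt{t}$, so the multiplicative-stability argument (and hence the extra log-barrier) is not needed, explaining why $\clog=0$. The key algorithmic departure from standard MAB is the decoupled exploration distribution $g^t_i\propto (p^t_i)^{\nicefrac{2}{3}}$ used to build $\hatl^t$, which replaces the estimator variance $1/p^t_i$ (for ordinary MAB) with $Z^t/(p^t_i)^{\nicefrac{2}{3}}$, where $Z^t := \sum_j (p^t_j)^{\nicefrac{2}{3}}$.

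\paragraph{FTRL setup and adversarial bound.}
Starting from the standard time-varying FTRL regret bound, the expected per-round stability evaluates to $\E[\text{stab}_t]\le \frac{1}{2\gamma^t\beta}\sum_i (p^t_i)^{2-\beta}/g^t_i = (Z^t)^2/(2\gamma^t\beta)$, while the penalty is controlled by $\gamma^{T+1}(\Psi(e_{i^\star}) - \min_p \Psi(p)) = O(\gamma^{T+1} K^{\nicefrac{1}{3}})$ via the usual Tsallis calculations (where $\Psi(p)=-\frac{1}{1-\beta}\sum_i p_i^\beta$). For the adversarial guarantee, Jensen's inequality applied to $x^{\nicefrac{2}{3}}$ gives $Z^t\le K^{\nicefrac{1}{3}}$, so the stability sum is $O(K^{\nicefrac{2}{3}}\sqrt{T}/\theta)$ and the penalty is $O(K^{\nicefrac{1}{3}}\theta\sqrt{T})$; balancing these with $\theta = K^{\nicefrac{1}{6}}$ yields $\Reg^T = O(\sqrt{KT})$.

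\paragraph{Stochastic bound.}
For the self-bounding setting, follow the three-part decomposition into sub-optimal arms $V$, optimal arms $U$, and a residual. For $V$, expand $(Z^t)^2$ to isolate its $V$-contributions, apply H\"older's inequality in the time index with exponents $(\nicefrac{3}{2},3)$ to obtain $\sum_t (p^t_i)^{\nicefrac{2}{3}}/\sqrt{t}\lesssim (\sum_\tau p^\tau_i)^{\nicefrac{2}{3}}$ (which is $T$-independent since $\sum_t t^{-\nicefrac{3}{2}} = O(1)$), and then aggregate across $i\in V$ via the second H\"older inequality $\sum_i x_i^{\nicefrac{2}{3}} \le (\sum_i x_i \gap_i)^{\nicefrac{2}{3}}(\sum_i 1/\gap_i^2)^{\nicefrac{1}{3}}$. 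Relating the product $\sum_i x_i \gap_i$ to $\Reg^T$ through Condition~\eqref{eq:main_stoc_condition} gives a self-bounding inequality whose solution, with $\theta = K^{\nicefrac{1}{6}}$, produces the target $O(\sqrt{\sum_{i\in V} K/\gap_i^2})$ term. For $U$, the naive stability contribution is $O(|U|^{\nicefrac{2}{3}}\sqrt{T}/\theta)$, which would be $T$-dependent; this term is eliminated by invoking the monotonicity-of-Bregman-divergence argument (adapted to $\beta$-Tsallis with arm-independent learning rate), which shows that the $U$-contribution of the stability is cancelled by a matching negative piece of the penalty, leaving only an additive $O(K)$. Finally, the residual terms (including $\sqrt{C}(\sum_{i\in V}K/\gap_i^2)^{\nicefrac{1}{4}}$ and $+D$) are handled exactly as in the proof of \pref{thm:main_tsallis}, using the standard corruption-robust self-bounding step.

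\paragraph{Main obstacle.}
The principal technical difficulty lies in the $U$-step: the monotonicity-of-Bregman argument must be re-derived for the decoupled setting where the estimator $\hatl^t$ depends on $g^t\neq p^t$, so the two regularity conditions on the regularizer need to be re-verified and one must carefully track how the extra $Z^t$ factor from the exploration distribution interacts with the Bregman-divergence monotonicity. A secondary subtlety is the double H\"older step in (V): the exponents must be aligned exactly so that the self-bounding inequality produces $\sqrt{K\sum_i 1/\gap_i^2}$ rather than the weaker $\sqrt{K}\cdot \sum_i 1/\gap_i^2$ that a naive single H\"older application would give. Once these two steps are in place, combining them with the residual analysis of \pref{thm:main_tsallis} closes the proof.
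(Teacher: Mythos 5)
There is a genuine gap in your stochastic-case argument, and it sits exactly where you thought the proof was safe. Your plan applies H\"older in the time index first, absorbing the $1/\sqrt{t}$ weight into the constant $\sum_t t^{-\nicefrac{3}{2}}=\order(1)$, and only afterwards self-bounds via the second H\"older and Condition~\eqref{eq:main_stoc_condition}. After that first step the strongest inequality available is of the form $\Reg^T \lesssim K^{\nicefrac{1}{6}}\rbr{\Reg^T+C}^{\nicefrac{2}{3}}\rbr{\sum_{i\in V}\Delta_i^{-2}}^{\nicefrac{1}{3}}+\dots$, and solving it (e.g.\ by Young's inequality, $A B^{\nicefrac{2}{3}}\le \epsilon B+\epsilon^{-2}A^3$) yields $\Reg^T=\order\rbr{\sqrt{K}\sum_{i\in V}\Delta_i^{-2}+C^{\nicefrac{2}{3}}(\cdot)^{\nicefrac{1}{3}}}$ --- quadratically worse in the gaps than the claimed $\order\big(\sqrt{\sum_{i\in V}K/\Delta_i^2}\big)$, and with the wrong corruption dependence; no realignment of exponents in the two H\"older steps can repair this, because the information that the per-round coefficient decays like $1/\sqrt{t}$ has already been discarded. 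The paper's proof keeps that decay alive through a $T_0$-split that is absent from your plan: it first proves (\pref{thm:preliminary_dee}, using \pref{lem:first_few_rounds_bound}) that for every $T_0\ge 16K$, $\Reg^T\le\order\big(\sqrt{KT_0}+\E\big[\sum_{t>T_0}\frac{K^{\nicefrac{1}{6}}}{\sqrt{t}}\sum_{i\in V}(p^t_i)^{\nicefrac{2}{3}}\big]+\sqrt{K}\big)+D$, then compares pointwise $\frac{(1+\eta)zK^{\nicefrac{1}{6}}}{\sqrt{t}}(p^t_i)^{\nicefrac{2}{3}}-\eta\Delta_i p^t_i\le \order\big(\frac{(1+\eta)^3\sqrt{K}}{\eta^2\Delta_i^2 t^{\nicefrac{3}{2}}}\big)$ (Lemma~8 of \citet{rouyer20a}), so the remainder summed over $t>T_0$ carries a factor $1/\sqrt{T_0}$, giving $\order\big(\sqrt{K}\,\gapcompdee/(\eta^2\sqrt{T_0})\big)$ with $\gapcompdee=\sum_{i\in V}\Delta_i^{-2}$. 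Balancing this against the $\sqrt{KT_0}$ cost of the early rounds by choosing $T_0\approx\gapcompdee$ (and $T_0\approx\gapcompdee^{\nicefrac{1}{2}}C/\sqrt{K}$ in the large-$C$ regime) is precisely what converts $\sqrt{K}\,\gapcompdee$ into $\sqrt{K\gapcompdee}$ and produces the $\sqrt{C}\cdot(K\gapcompdee)^{\nicefrac{1}{4}}$ term. Without this device your route proves a statement strictly weaker than the theorem.

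Two further points. First, the residual regret is not ``handled exactly as in the proof of \pref{thm:main_tsallis}'': that argument relies on the entrywise multiplicative stability $\tfrac12 p^t_i\le\barp^{t+1}_i\le 2p^t_i$, which comes from the extra log-barrier, not from arm-independence of the learning rate, and with $\clog=0$ it is simply unavailable. The paper instead reworks the decomposition (defining $\barp^{t+1}_V$ through $\phi^t$ rather than $\phi^{t+1}$, and introducing the hybrid regularizer $\wphi^t=\phi^t_V+\phi^{t+1}_U$ with the intermediate point $\wtilp^{t+1}$, \pref{lem:app_dee_decomp}) and proves \emph{group} multiplicative relations $\sum_{i\in\scI}(\barp^{t+1}_i)^{\nicefrac{4}{3}}\le 8\sum_{i\in\scI}(p^t_i)^{\nicefrac{4}{3}}$ (\pref{lem:group_multiplicative_relation}, \pref{col:group_multiplicative_relation}) to control the Lagrange multipliers; your outline needs some substitute for this. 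Second, the step you single out as the principal obstacle is actually benign: the monotonicity theorem (\pref{thm:general_mono_thm}) concerns only the regularizer and the KKT structure of the projections, and is indifferent to whether $\hatl^t$ is built from $g^t$ or $p^t$, so no re-derivation is required there --- the paper verifies the two conditions for $\nicefrac{2}{3}$-Tsallis with $\gamma^t=K^{\nicefrac{1}{6}}\sqrt{t}$ and concludes nonpositivity directly. Your adversarial-case argument is essentially correct and matches the paper's.
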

Note that in the i.i.d.~setting (where $C=D=0$), we improve their bound from $\order\Big(\sqrt{\sum_{i\in V}\frac{K}{\gap_i\cdot\gapmin}}\Big)$ to $\order\Big(\sqrt{\sum_{i\in V}\frac{K}{\gap_i^2}}\Big)$ (in addition to removing the uniqueness assumption).





\section{Analysis}
\label{sec:analysis}

In this section, we take $\beta$-Tsallis entropy as an example to illustrate the key ideas of our analysis in proving the MAB results under Condition~\eqref{eq:main_stoc_condition}.
As in all prior work, the analysis relies on a so-called \textit{self-bounding} technique.
Specifically, our goal is to bound the regret as follows (ignoring all minor terms, including the dependence on $\beta$): 
\begin{equation}
\Reg^T \lesssim \E\sbr{ \sum_{i\in V} \sqrt{(\log T) \sum_{t=1}^{T}p^t_i} + \sqrt{\abr{U}(\log T) \sum_{i\in V}\sum_{t=1}^{T}p^t_i} },
\label{eq:main_desired_bound}
\end{equation}
where the two terms above enjoy a self-bounding property since they can be related back to the regret under Condition~\eqref{eq:main_stoc_condition}.
To see this, we apply AM-GM inequality followed by Condition~(\ref{eq:main_stoc_condition}) to show the following for any $z\geq0$:
\begin{align} 
\E\sbr{ \sum_{i\in V} \sqrt{\log T \sum_{t=1}^{T}p^t_i}} \leq \E\sbr{ \sum_{i\in V} \rbr{ \frac{\log T}{4z\gap_i} + z\sum_{t=1}^{T}p^t_i\gap_i}  } \leq z \rbr{\Reg^T + C } + \sum_{i\in V}\frac{\log T}{4z \gap_i},\notag \\
\E\sbr{ \sqrt{\abr{U}\log T\sum_{i\in V}\sum_{t=1}^{T}p^t_i}} \leq \E\sbr{  \frac{\abr{U}\log T}{4z\gapmin} + z\sum_{i\in V}\sum_{t=1}^{T}p^t_i\gapmin } \leq z \rbr{\Reg^T + C } +  \frac{\abr{U}\log T}{4z\gapmin}. \notag 
\end{align}
Rearranging and picking the optimal $z$ yields the regret bound under Condition~(\ref{eq:main_stoc_condition}) in \pref{thm:main_tsallis}.

The key is thus to prove \pref{eq:main_desired_bound}, which is not that difficult under uniqueness (when $|V|=K-1$) but turns out to be much more complicated without uniqueness.
To proceed, we start with some key concepts and ideas from~\citep{ito21a} which we follow.
First, define the \textit{skewed Bregman divergence} for two time steps $s,t \in \naturalnum$ as 
\begin{equation} \label{eq:def4skewedBreg}
    D^{s,t}(x,y) =  \phi^s (x) -\phi^t (y) - \inner{\nabla  \phi^t(y), x-y},
\end{equation}
and its variant restricted to any subset $\scI \subseteq [K]$ as
$
    D^{s,t}_\scI(x,y) =  \phi^s_\scI (x) -\phi_\scI^t (y) - \inner{\nabla  \phi^t_\scI(y), x-y},
$
where $\phi^t_{\scI} (x)= -\clog\sum_{i\in\scI} \log x_i - \frac{1}{1-\beta} \sum_{i \in \scI} \gamma^{t}_i x_i^{\beta}$ (that is, $\phi^t$ restricted to $\scI$).
The standard Bregman divergence associated with $\phi^t$, which we denote by $D^t(x,y)$, is then a shorthand for $D^{t,t}(x,y)$.
One key idea of~\citep{ito21a} is to carefully choose the right benchmark for the algorithm --- when there is a unique optimal arm, the benchmark basically has to be this  unique optimal arm, but when multiple optimal arms exist, the benchmark can now be any distribution over these arms, and it can even be varying over time.
Indeed, for round $t$, the following benchmark was used in~\citep{ito21a}:
\begin{equation}\label{eq:def4q}
  q^{t} = \argmin_{p \in \distri_U} \left\{ \left \langle p,\sum_{\tau<t} \hatl^{\tau} \right \rangle + \phi^{t}(p) \right\} = \argmin_{p \in \Omega_U } D^{t}(p,p^t),
\end{equation}
which follows the same definition of $p^t$ but is restricted to $\distri_U  = \left\{p \in \distri_K: \sum_{i \in U} p_i = 1 \right\}$, the set of distributions over the zero-gap arms.
As the second equality shows, $q^{t}$ is also the projection of $p^t$ onto $\distri_U$ w.r.t. the Bregman divergence $D^t$.
With these time-varying benchmarks, Ito~\citep{ito21a} proves
\begin{equation} \label{eq:reg_decomp_init}
\Reg^T   \lesssim   \E\sbr{ \sum_{t=1}^{T} D^{t,t+1}(p^t,p^{t+1}) - D^{t,t+1}_U(q^t,q^{t+1}) } +D.
\end{equation}

The rest of the analysis is where we start to deviate from that of~\citep{ito21a} (thought still largely inspired by it), which is critical for our algorithms that use arm-dependent learning rates.
First, we introduce an important intermediate point $\bar{p}^{t+1} = \bar{p}^{t+1}_U + \bar{p}^{t+1}_V $ where $\bar{p}^{t+1}_U$ and $\bar{p}^{t+1}_V$ are defined as
\begin{equation} \label{eq:pbar_def}
\begin{aligned}
\bar{p}^{t+1}_U = &\argmin_{\substack{x \in \fR^K_{\geq 0}, \;\; \sum_{i\in V} x_i = 0, \\ \sum_{i\in U} x_i = \sum_{i \in U}p^t_i}
 } \inner{x,\sum_{\tau \leq t} \hatl^{\tau}} + \phi^{t+1}_U(x) , \\
\bar{p}^{t+1}_V =& \argmin_{ \substack{x \in \fR^K_{\geq 0}, \;\; \sum_{i\in U} x_i = 0, \\ \sum_{i\in V} x_i = \sum_{i \in V}p^t_i}
 } \inner{x,\sum_{\tau \leq t} \hatl^{\tau}} + {\phi^{t+1}_V(x)}.
\end{aligned}
\end{equation}
By definition, $\bar{p}^{t+1}$ is obtained from $p^t$ by redistributing the weights among arms in $U$ and those in $V$, in a way that minimizes an $\ftrl$ objective similar to that of $p^{t+1}$.
We note that \citet{ito21a} also uses the same $\bar{p}^{t+1}_U$ is his analysis, but we introduce $\bar{p}^{t+1}_V$ (and thus $\bar{p}^{t+1}$) as well since it importantly allows us to decompose each Bregman divergence difference term in \pref{eq:reg_decomp_init} as follows.



\begin{lemma}  \label{lem:main_decomp}
For any $t$, $D^{t,{t+1}}(p^t,p^{t+1}) - D_U^{t,{t+1}}(q^t,q^{t+1})$ is bounded by 
\begin{equation}\label{eq:case_2_decomp}
\begin{aligned}
\underbrace{ D^{t,t+1}_V(p^t, \barp^{t+1})}_{\text{regret on sub-optimal arms}}
 + \underbrace{ D^{t,t+1}_U(p^t, \barp^{t+1}) - D^{t,{t+1}}_U(q^t,q^{t+1})}_{\text{regret on optimal arms} }
 + \underbrace{ D^{t+1}(\barp^{t+1},p^{t+1})}_{\text{residual regret}}. 
\end{aligned}
\end{equation}
\end{lemma}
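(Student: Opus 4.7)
The plan is to establish \pref{eq:case_2_decomp} in fact as an \emph{equality} (which immediately implies the stated bound) by combining a three-point identity for the skewed Bregman divergence with the first-order optimality conditions for $p^{t+1}$ and $\barp^{t+1}$.

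A key structural observation is that every regularizer used by \pref{alg:main_alg}, including the small extra log-barrier, is separable across arms. Consequently, for any non-negative vectors $x,y$ supported on the relevant coordinates, the Bregman pieces decompose along the partition $[K] = U \sqcup V$, giving $D^{t,t+1}(x,y) = D^{t,t+1}_U(x,y) + D^{t,t+1}_V(x,y)$, since the gradient of $\phi^{t+1}_U$ vanishes on $V$-coordinates and vice versa. I will apply this split to $D^{t,t+1}(p^t, \barp^{t+1})$ at the end.

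Next, directly expanding both sides of \pref{eq:def4skewedBreg} yields the three-point-style identity
\begin{equation*}
D^{t,t+1}(p^t, p^{t+1}) = D^{t,t+1}(p^t, \barp^{t+1}) + D^{t+1}(\barp^{t+1}, p^{t+1}) + \inner{\nabla \phi^{t+1}(\barp^{t+1}) - \nabla \phi^{t+1}(p^{t+1}),\; p^t - \barp^{t+1}}.
\end{equation*}
This is purely algebraic and does not use convexity. The core of the argument is then to show that the residual inner product above vanishes, which is where the specific definition of $\barp^{t+1}$ in \pref{eq:pbar_def} comes in.

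To kill the residual, I invoke KKT. The extra log-barrier forces both $p^{t+1}$ and $\barp^{t+1}$ to lie in the relative interior of their feasible regions, so the stationarity conditions only involve multipliers for the linear equality constraints. For $p^{t+1}$, a single $\lambda$ satisfies $[\hatL^t + \nabla \phi^{t+1}(p^{t+1})]_i = \lambda$ for all $i \in [K]$. By separability, the two subproblems in \pref{eq:pbar_def} decouple and produce multipliers $\lambda_U,\lambda_V$ with $[\hatL^t + \nabla \phi^{t+1}(\barp^{t+1})]_i = \lambda_U$ for $i \in U$ and $=\lambda_V$ for $i \in V$. Subtracting shows that $\nabla \phi^{t+1}(\barp^{t+1}) - \nabla \phi^{t+1}(p^{t+1})$ equals the constant $\lambda_U - \lambda$ on $U$ and $\lambda_V - \lambda$ on $V$, so the residual collapses to $(\lambda_U - \lambda)\sum_{i \in U}(p^t_i - \barp^{t+1}_i) + (\lambda_V - \lambda)\sum_{i \in V}(p^t_i - \barp^{t+1}_i)$. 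Both sums vanish thanks to the mass-preserving constraints $\sum_{i \in U}\barp^{t+1}_i = \sum_{i \in U} p^t_i$ and $\sum_{i \in V}\barp^{t+1}_i = \sum_{i \in V} p^t_i$ baked into \pref{eq:pbar_def} --- this is precisely why $\barp^{t+1}_V$ (as opposed to Ito's use of $\barp^{t+1}_U$ alone) is introduced. Combining with the separability split and subtracting $D^{t,t+1}_U(q^t,q^{t+1})$ from both sides recovers \pref{eq:case_2_decomp} with equality. The main obstacle is engineering the residual to vanish; once the two equality constraints defining $\barp^{t+1}$ are paired with the separable KKT structure, the rest is routine bookkeeping.
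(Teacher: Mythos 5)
Your proposal is correct and is essentially the paper's own argument: the paper likewise proves the decomposition as an equality by noting that $\nabla\phi^{t+1}_V(\barp^{t+1})-\nabla\phi^{t+1}_V(p^{t+1})$ (and the analogous $U$-difference) is a constant multiple of $\one_V$ (resp.\ $\one_U$) via the KKT conditions, and that the block mass constraints $\sum_{i\in V}\barp^{t+1}_i=\sum_{i\in V}p^t_i$, $\sum_{i\in U}\barp^{t+1}_i=\sum_{i\in U}p^t_i$ make the resulting inner products vanish. The only difference is bookkeeping: the paper performs the three-point expansion separately on the $U$- and $V$-blocks and then sums, whereas you expand globally and kill the single residual inner product afterwards.
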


In the rest of this section, we proceed to bound each of the three terms in \pref{eq:case_2_decomp}
(see also \pref{tab:app_mab_results} for a summary of bounds for each of these terms and each of the three regularizers).



\paragraph{Regret on Sub-Optimal Arms}
The regret related to the sub-optimal arms (or more formally arms in $V$) is the most straightforward to deal with, since our objective is to arrive at the self-bounding terms in \pref{eq:main_desired_bound} which are exactly only in terms of arms in $V$.
Indeed, we can write this term as (with $p^t_V$ being the vector with the same value as $p^t$ for coordinates in $V$ and $0$ for coordinates in $U$)
\begin{equation}
    D^{t,t+1}_V\rbr{p^t, \barp^{t+1}}= \underbrace{\inner{p^t_V - \barp^{t+1}_V, \hatl^t} - D^{t}_V\rbr{ \barp^{t+1},   p^t}}_{\text{stability}} +\underbrace{\phi^t_V(\barp^{t+1}) - \phi^{t+1}_V(\barp^{t+1})}_{\text{penalty}},
\end{equation}
and then apply standard arguments to show that the stability term is of order $
 \sum_{i \in V}  \nicefrac{ \rbr{p^{t}_i}^{1-\beta} }{\beta \gamma^{t+1}_i}$
 while the penalty term is of order $\sum_{i \in V}  \nicefrac{ \rbr{\gamma^{t+1}_i - \gamma^{t}_i}\rbr{p^{t}_i}^{\beta} }{1-\beta}$.
In the Tsallis-INF algorithm, we have $\beta=\nicefrac{1}{2}$ and $\gamma^t_i=\sqrt{t}$, and thus the stability term and the penalty term are of the same order.
This inspires us to design a learning rate for general $\beta$ with the same objective.
Indeed, it can be verified that our particular learning rate schedule makes sure that the penalty term is of the same order as the stability term, meaning $D^{t,t+1}_V(p^t, \barp^{t+1}) = \order\rbr{ \sum_{i \in V}  \nicefrac{ \rbr{p^{t}_i}^{1-\beta} }{\beta \gamma^{t+1}_i}}$.
Further plugging in the learning rate and summing over $t$, we arrive at the following with $z^t_i = \max\cbr{p^t_i,\nicefrac{1}{T}}$:
\begin{align}
    \sum_{t=1}^{T} \sum_{i \in V} \frac{  \rbr{p^t_i}^{1-\beta}   }{\beta \gamma^{t+1}_i} 
\leq  \sqrt{\frac{1}{\beta(1-\beta)}} \sum_{i \in V} \sum_{t=1}^{T} \frac{ \rbr{z^{t}_i}^{1-\beta}  }{ \sqrt{ 1 + \sum_{k=1}^{t} \rbr{z^k_i}^{1-2\beta}   } }.
\end{align}
Finally, applying the following technical lemma shows that the above is of the same order as the first term in our objective \pref{eq:main_desired_bound}.
More details can be found in \pref{sec:app_regsub}.
\begin{lemma}\label{lem:main_lr_design} Let $\cbr{x_t}_{t=1}^{T}$ be a sequence with $x_t >0$ for all $t$. Then, for any $\alpha \in [0,1]$, we have 
\begin{equation}\label{eq:main_lr_design}
\sum_{t=1}^{T} \frac{x_t^{1-\alpha}}{  \sqrt{1 + \sum_{s=1}^{t} x_s^{1-2\alpha}}  } \leq \order\rbr{  \sqrt{  \rbr{\sum_{t=1}^{T}x_t} {\log \rbr{ 1 + \sum_{t=1}^{T} x_t^{1-2\alpha} } }}}. 
\end{equation} 
\end{lemma}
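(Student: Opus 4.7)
The plan is to use a clean factoring trick that reduces the sum to a Cauchy-Schwarz step plus a standard ``logarithmic potential'' inequality. The key observation is the algebraic identity
\[
x_t^{1-\alpha} \;=\; \sqrt{x_t}\cdot\sqrt{x_t^{1-2\alpha}},
\]
which holds for any $x_t>0$ and any $\alpha\in\mathbb{R}$ (in particular, covering the full range $\alpha\in[0,1]$ with its three qualitatively different regimes $\alpha<1/2$, $\alpha=1/2$, $\alpha>1/2$). Using this identity, write, with the shorthand $a_t = x_t^{1-2\alpha}$ and $S_t = 1 + \sum_{s\le t} a_s$,
\[
\frac{x_t^{1-\alpha}}{\sqrt{S_t}} \;=\; \sqrt{x_t}\,\cdot\,\sqrt{\frac{a_t}{S_t}}.
\]

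Next I would apply the Cauchy-Schwarz inequality to the sum over $t$, obtaining
\[
\sum_{t=1}^{T}\frac{x_t^{1-\alpha}}{\sqrt{S_t}} \;\le\; \sqrt{\sum_{t=1}^{T}x_t}\;\cdot\;\sqrt{\sum_{t=1}^{T}\frac{a_t}{S_t}}.
\]
This cleanly separates the ``mass'' factor $\sum_t x_t$ (which appears on the right-hand side of the target bound) from a ``potential'' factor $\sum_t a_t/S_t$ that I expect to collapse to a logarithm.

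For the potential factor, I would invoke the standard telescoping bound: since $S_t = S_{t-1}+a_t$ and $\log(1+u)\ge u/(1+u)$,
\[
\log S_t - \log S_{t-1} \;=\; \log\!\Bigl(1+\tfrac{a_t}{S_{t-1}}\Bigr) \;\ge\; \frac{a_t}{S_{t-1}+a_t} \;=\; \frac{a_t}{S_t}.
\]
Summing over $t$ and using $S_0=1$ telescopes to $\sum_{t=1}^{T} a_t/S_t \le \log S_T = \log\!\bigl(1 + \sum_{t=1}^{T} x_t^{1-2\alpha}\bigr)$. Plugging this back into the Cauchy-Schwarz bound gives exactly the claimed inequality (in fact with constant $1$, so the $\order(\cdot)$ on the right-hand side is not even needed).

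There is no real obstacle: the proof is essentially two lines once the factoring trick is spotted. The only mild subtlety is verifying that the identity $x_t^{1-\alpha}=\sqrt{x_t}\sqrt{x_t^{1-2\alpha}}$ is used in the right direction for each regime of $\alpha$, but since $x_t>0$ this is immediate. I would present the argument as a short self-contained block, remarking that this is precisely the reason the learning-rate schedule $\gamma^t_i \propto \sqrt{1+\sum_{\tau<t}(p^\tau_i)^{1-2\alpha}}$ in \pref{eq:main_def_gamma} yields $\sqrt{\log T}$-type self-bounding terms of the form $\sqrt{(\log T)\sum_t p^t_i}$ needed in \pref{eq:main_desired_bound}.
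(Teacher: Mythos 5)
Your proof is correct and follows essentially the same route as the paper: the paper also splits $x_t^{1-\alpha}$ into $\sqrt{x_t}\cdot x_t^{\nicefrac{1}{2}-\alpha}$, trades the product for the two sums (via AM-GM with a tuned parameter $\eta$, which is equivalent to your direct Cauchy--Schwarz), and bounds $\sum_t x_t^{1-2\alpha}/S_t$ by $\log S_T$ through a telescoping/integral-comparison argument identical in spirit to your $\log(1+u)\ge u/(1+u)$ step. The only cosmetic difference is that your Cauchy--Schwarz version yields constant $1$ rather than the paper's $2$.
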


\paragraph{Regret on Optimal Arms}
Next, we show that the regret on optimal arms (or more formally arms in $U$), $D^{t,t+1}_U(p^t, \barp^{t+1}) - D^{t,t+1}_U(q^t,q^{t+1})$, is nonpositive, which corresponds to the intuition that pulling optimal arms incur no regret. \citet{ito21a} proves something similar for $\nicefrac{1}{2}$-Tsallis entropy via a certain monotonicity property of Bregman divergence, but his proof is highly specific to 
$\nicefrac{1}{2}$-Tsallis entropy.
Instead, we develop the following general monotonicity theorem which applies to a broad  spectrum of regularizers as long as they satisfy two mild conditions.




\begin{theorem}[Monotonicity of Bregman divergence] \label{thm:general_mono_thm}
For any $t\in \mathbb{N}$, let $f^t: \Omega \to \mathbb{R}$ be a continuously-differentiable and strictly-convex function defined on $\Omega \subseteq \mathbb{R}$.
Suppose that the following two conditions hold for all $z \in \Omega$: (i) $(f^t)'(z)$ is differentiable and concave; (ii) $(f^{t+1})'(z) \leq  (f^t)'(z)$.
Then, for any $x, m\in \fR$ with $x\leq m$, and $y, n\in \fR$ such that $(f^{t+1})' (y) - (f^{t})'  (x)  = (f^{t+1})' (n) - (f^{t})' (m) = \xi$ for a fixed scalar $\xi$,
we have $D^{t,{t+1} }(x,y) \leq D^{t,{t+1} }(m,n)$, where $D^{t,{t+1}}(u,v) = f^t(u) - f^{t+1}(v) - (u-v)\cdot \rbr{ f^{t+1} }'(v)$ 
is the skewed Bregman divergence.
\end{theorem}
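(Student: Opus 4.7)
The plan is to connect $(x,y)$ to $(m,n)$ by a smooth one-parameter path along which the defining constraint $(f^{t+1})'(\cdot) - (f^t)'(\cdot) = \xi$ is preserved, and then show that the skewed Bregman divergence is nondecreasing along this path. Define $h$ implicitly by $(f^{t+1})'(h(s)) = (f^t)'(s) + \xi$; strict convexity of $f^{t+1}$ makes $(f^{t+1})'$ strictly increasing and hence invertible, so $h$ is well-defined on $[x,m]$ and satisfies $h(x)=y$, $h(m)=n$ by the assumption on $\xi$. Implicit differentiation of the defining relation gives $h'(s) = (f^t)''(s)/(f^{t+1})''(h(s)) > 0$, so $h$ is strictly increasing.

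\textbf{Derivative along the path.} Let $g(s) = D^{t,t+1}(s, h(s))$. A direct chain-rule computation, followed by the substitutions $(f^{t+1})'(h(s)) = (f^t)'(s) + \xi$ and $(f^{t+1})''(h(s))\, h'(s) = (f^t)''(s)$ (this second identity coming from implicit differentiation), collapses to the clean expression
\[
g'(s) \;=\; (h(s) - s)\,(f^t)''(s) - \xi.
\]
So the entire claim $D^{t,t+1}(x,y) \leq D^{t,t+1}(m,n)$ reduces to proving $g'(s) \geq 0$ on $[x,m]$ and then integrating.

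\textbf{Sign argument and conclusion.} The sign of $g'$ is exactly where both hypotheses enter. Condition (ii) applied at $z = h(s)$ gives $\xi = (f^{t+1})'(h(s)) - (f^t)'(s) \leq (f^t)'(h(s)) - (f^t)'(s)$. By condition (i), $(f^t)'$ is concave, so its graph lies below its tangent line at $s$: $(f^t)'(h(s)) \leq (f^t)'(s) + (h(s) - s)(f^t)''(s)$, valid regardless of the sign of $h(s) - s$. Chaining the two estimates yields $\xi \leq (h(s) - s)(f^t)''(s)$, i.e.\ $g'(s) \geq 0$. Integrating from $x$ to $m$ then gives $D^{t,t+1}(x,y) = g(x) \leq g(m) = D^{t,t+1}(m,n)$.

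\textbf{Main obstacle.} I expect the substantive work to be precisely the two-step chaining above, which uses both hypotheses in an essential way; dropping either (i) or (ii) breaks the inequality, so this is what one really has to cook up. The rest is bookkeeping, with a mild technicality: one must check that $h(s)$ stays in the domain $\Omega$ for all $s \in [x,m]$ so that conditions (i)--(ii) can be invoked at $h(s)$. This is automatic when $\Omega$ is an interval (the natural case for the regularizers $\phi^t$ used in \pref{alg:general}), since then the monotone continuous image $h([x,m])$ is the interval $[y,n] \subseteq \Omega$.
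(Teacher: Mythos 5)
Your proposal is correct and follows essentially the same route as the paper's proof: both parametrize $(y,n)$ implicitly as a function of the first argument via the constraint $(f^{t+1})'(h(s)) = (f^t)'(s) + \xi$, differentiate the skewed Bregman divergence along this path, and establish nonnegativity of the derivative by combining condition (ii) at the point $h(s)$ with the tangent-line inequality for the concave function $(f^t)'$. Your closed-form simplification $g'(s) = (h(s)-s)(f^t)''(s) - \xi$ is just a cleaner repackaging of the same expression the paper bounds, so there is nothing substantive to add.
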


While we state the theorem for the one-dimensional case, it trivially extends to multi-dimensional regularizers as long as they decompose over the coordinates (which is the case for all our regularizers).
Take Tsallis entropy as an example: we only need to apply the theorem with $f^t(z) = -\frac{\gamma_i^t z^\beta}{1-\beta}$ for each $i$ and then sum up the conclusions.
The two conditions stated in the theorem also hold for all regularizers we consider.
In particular, Condition (ii) holds as long as the learning rate $\gamma_i^t$ is non-decreasing in $t$ and the regularizer itself is non-increasing (thus with nonpositive first derivative).
This explains the additional linear term in our definition of Shannon entropy: this way it is strictly decreasing.
Note that Condition (ii) also trivially holds if $f^t$ is independent of $t$, in which case the theorem states the monotonicity for the standard (non-skewed) Bregman divergence.

After verifying the conditions, we can now apply this theorem to show $D^{t,t+1}_U(p^t, \barp^{t+1})\leq D^{t,t+1}_U(q^t,q^{t+1})$.
For each $i\in U$, we take $x=p^t_i$ and $m = q^t_i$.
Since by definition $q^t$ is obtained by projecting $p^t$ onto $\Omega_U$, it can be shown via KKT conditions that $p^t_i \leq q^t_i$ indeed holds for all $i \in U$.
Then, we define an intermediate point $z$ such that $\nabla\phi^{t+1}_U(z) - \nabla\phi^{t}_U(p^t)  = \nabla\phi^{t+1}_U(q^{t+1}) - \nabla\phi^{t}_U(q^t)$
and show $D^{t,t+1}_U(p^t, \barp^{t+1})\leq D^{t,t+1}_U(p^t,z)$.
Finally, taking $y=z_i$ and $n=q^{t+1}_i$  and applying \pref{thm:general_mono_thm} finishes the proof; see \pref{sec:app_regopt} for details.

\paragraph{Residual Regret}
Finally, bounding the residual regret $D^{{t+1}}(\barp^{t+1},p^{t+1})$ by the self-bounding terms in \pref{eq:main_desired_bound} that are only in terms of arms in $V$ is another key challenge in our analysis, especially given the arm-dependent learning rates.
We start by developing a new analysis that leads to tighter bounds compared to~\citep{ito21a} on the Lagrangian multipliers associated with \pref{eq:pbar_def}, which reveals that the key to analyze $D^{{t+1}}(\barp^{t+1},p^{t+1})$ is to bound the following term (or terms of a similar form)
\begin{align} \label{eq:decomp4shiftcost}
 \frac{\rbr{ \sum_{i \in V}  \frac{ \rbr{p^t_i}^{2-\beta} }{\gamma^t_i} } \rbr{ \sum_{i \in U}   \frac{ \rbr{p^t_i}^{3-2\beta} }{\rbr{\gamma^t_i}^2 }  }}{\rbr{ \sum_{i \in [K]}   \frac{ \rbr{p^t_i}^{2-\beta} }{\gamma^t_i} } \rbr{ \sum_{i \in U}   \frac{ \rbr{p^t_i}^{2-\beta} }{\gamma^t_i}  }}.
\end{align}

Again, for the case of $\nicefrac{1}{2}$-Tsallis entropy with an arm-independent learning rate $\gamma^t_i=\sqrt{t}$, removing all dependence on $i\in U$ in \pref{eq:decomp4shiftcost} is relatively straightforward as shown by~\citet{ito21a}.
Indeed, in this case, \pref{eq:decomp4shiftcost} simplifies to $\tfrac{1}{\sqrt{t}}\tfrac{\rbr{ \sum_{i \in V}   \rbr{p^t_i}^{\nicefrac{3}{2}}  } \rbr{ \sum_{i \in U}   \rbr{p^t_i}^{2}  }}{\rbr{ \sum_{i \in [K]}    \rbr{p^t_i}^{\nicefrac{3}{2}}  } \rbr{ \sum_{i \in U}   \rbr{p^t_i}^{\nicefrac{3}{2}}   }}$.
By splitting $\sum_{i \in [K]}\rbr{p^t_i}^{\nicefrac{3}{2}}$ into two summations, one over $i\in V$ and another over $i\in U$, and further applying 
$x + y \geq x^{\nicefrac{2}{3}}y^{\nicefrac{1}{3}}$ for any $x,y>0$,
we have $\sum_{i \in [K]} \rbr{p^t_i}^{\nicefrac{3}{2}}\geq \rbr{ \sum_{i \in V}    \rbr{p^t_i}^{\nicefrac{3}{2}}  }^{\nicefrac{2}{3}} \rbr{ \sum_{i \in U}   \rbr{p^t_i}^{\nicefrac{3}{2}}   }^{\nicefrac{1}{3}}$ and thus \pref{eq:decomp4shiftcost} is bounded by
\[
\begin{aligned}
&\frac{1}{\sqrt{t}}\frac{\rbr{ \sum_{i \in V}   \rbr{p^t_i}^{\nicefrac{3}{2}}  } \rbr{ \sum_{i \in U}   \rbr{p^t_i}^{2}  }}{\rbr{ \sum_{i \in V}    \rbr{p^t_i}^{\nicefrac{3}{2}}  }^{\nicefrac{2}{3}} \rbr{ \sum_{i \in U}   \rbr{p^t_i}^{\nicefrac{3}{2}}   }^{\nicefrac{4}{3}}} 
 \leq \frac{1}{\sqrt{t}} \rbr{\sum_{i \in V}   \rbr{p^t_i}^{\nicefrac{3}{2}}}^{\nicefrac{1}{3}} 
 \leq \frac{\sum_{i\in V}\sqrt{p^t_i  }}{\sqrt{t}} ,
\end{aligned}
\]
where importantly, in the second step we use the fact $\|x\|_2\leq\|x\|_{\nicefrac{3}{2}}$ to drop all dependence on $i\in U$, eventually arriving at the self-bounding term of  \pref{eq:main_desired_bound}.

Unfortunately, with an arm-dependent learning rate, it is unclear to us how to analyze \pref{eq:decomp4shiftcost} in a similar way.
Instead, we propose a different analysis with the following rough idea:
we propose a condition under which \pref{eq:decomp4shiftcost} can be bounded by $\sum_{i \in V}\nicefrac{\rbr{p^t_i}^{2-\beta}}{\gamma^t_i}$ and then further related to a self-bounding term similarly to the analysis of the regret on sub-optimal arms.
If, on the other hand, the condition does not hold, then we show that the probability $p^t_i$ of selecting an optimal arm $i\in U$ must be no more than the total probability of selecting sub-optimal arms $\sum_{j \in V} p^t_j$.
Using this fact again allows us to convert dependence on $i\in U$ to $i\in V$.
Since we apply this technique to all $i\in U$, it leads to the extra $|U|$ factor in the second self-bounding term of \pref{eq:main_desired_bound}, which eventually translates to $\frac{|U|\log T}{\Delta_{\min}}$ in our instance complexity $\gapcomp$.
All details can be found in \pref{sec:app_resreg}.

\section{Conclusions}
In this work, we improve and generalize the analysis of~\citep{ito21a}, showing that many FTRL algorithms can achieve BOBW without the uniqueness assumption. Specifically, we propose a unified arm-dependent learning rate schedule and novel analytical techniques to remove the uniqueness assumption for a broad family of regularizers, including log-barrier, $\beta$-Tsallis entropy, and Shannon entropy. With these new techniques, our regret bounds improve upon prior results even when the uniqueness assumption holds. We further apply our results to the decoupled exploration and exploitation setting, showing that our techniques are broadly applicable.

There are many natural future directions, including 
(1) removing the $\frac{|U|}{\gapmin}$ term in our regret bounds;
(2) improving the dependence on $D$ (which as mentioned could be as large as the corruption level $C$);
(3) understanding whether the extra the log-barrier regularizer is necessary or not;
(4) and finally generalizing our results to other problems such as semi-bandits and Markov Decision Processes.

\acksection
TJ and HL are supported by NSF Award IIS-1943607.

\bibliographystyle{abbrvnat}
\bibliography{ref}

\newpage
\clearpage
\appendix
\begingroup
\part{Appendix}
\let\clearpage\relax
\parttoc[n]
\endgroup

\newpage 
\section{Proofs for MAB Results in \pref{tab:main_results}}
\label{sec:app_mab_results}

In this section, we provide details of our novel analysis framework and take a closer look at our learning rate schedule. 
Importantly, our analysis framework is able to be applied to all configurations of \pref{alg:main_alg} for different regularizers in a unified way. 
As mentioned, the key term in the regret is $\sum_{t=1}^{T} D^{t,t+1}(p^t,p^{t+1}) - D^{t,t+1}_U(q^t,q^{t+1})$ with $q_t$ defined in \pref{eq:def4q}.
We thus define the preprocessing cost ($\precost$) as  
\begin{equation}
\precost = \E\sbr{ \Reg^T - \sum_{t=1}^{T}\rbr{D^{t,t+1}(p^t,p^{t+1}) - D^{t,t+1}_U(q^t,q^{t+1} )}},
\end{equation}
which we later show is small.
Then, we use \pref{lem:main_decomp} to decompose the key term into the regret related to the sub-optimal arms ($\regsub$), the regret related to the optimal arms ($\regopt$) and the residual regret ($\resreg$):
\begin{equation}
\begin{aligned}
&\E\sbr{ \sum_{t=1}^{T} D^{t,t+1}(p^t,p^{t+1}) - D^{t,t+1}_U(q^t,q^{t+1})} = \underbrace{\E\sbr{ \sum_{t=1}^{T} D^{t,t+1}_V(p^t, \barp^{t+1})}}_{\regsub} \\
& \quad 
 + \underbrace{ \E\sbr{\sum_{t=1}^{T} D^{t,t+1}_U(p^t, \barp^{t+1}) - D^{t,{t+1}}_U(q^t,q^{t+1})}}_{\regopt}
 + \underbrace{ \E\sbr{\sum_{t=1}^{T} D^{t+1}(\barp^{t+1},p^{t+1})}}_{\resreg}.
\end{aligned}\notag 
\end{equation}

Our goal is to bound all these four terms in terms of two self-bounding quantities for some $x>0$ (plus other minor terms):
\begin{equation}
\selfone(x) = \E\sbr{ \sqrt{x\cdot \sum_{t=1}^{T} \sum_{i \in V} p^t_i} }, \quad 
\selftwo(x) = \E\sbr{ \sum_{i \in V} \sqrt{x\cdot \sum_{t=1}^{T} p^t_i}}.
\label{eq:app_def_self_bound}
\end{equation}

These two quantities enjoy a certain self-bounding property which is known to be critical to achieve the gap-dependent bound in the stochastic setting.
Specifically, as we illustrate at the beginning of \pref{sec:analysis}, 
under Condition~(\ref{eq:main_stoc_condition}), these self-bounding quantities can be related back to the regret itself when the set $V$ coincides with that of Condition~(\ref{eq:main_stoc_condition}). 
\begin{lemma}[Self-Bounding Quantities] Under Condition~(\ref{eq:main_stoc_condition}), we have for any $z>0$: \label{lem:app_self_bounding_quan}
\begin{align*}
\selfone(x) & \leq z\rbr{\Reg^T + C } + \frac{1}{z} \rbr{\frac{x}{4\gapmin}}, \\
\selftwo(x) &\leq z\rbr{\Reg^T + C} + \frac{1}{z} \rbr{\sum_{i \in V} \frac{x}{4\gap_i}}. 
\end{align*}
\end{lemma}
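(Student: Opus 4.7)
The plan is to prove both inequalities via a single two-step recipe: apply weighted AM-GM to each square root in order to separate a constant term from a term linear in $p^t_i \gap_i$, and then invoke Condition~(\ref{eq:main_stoc_condition}) to control the resulting linear term by $z(\Reg^T + C)$. The particular inequality I have in mind is the standard $\sqrt{ab} \le \tfrac{a}{4z} + zb$, valid for any $a,b \ge 0$ and $z>0$, which follows immediately from expanding $(\sqrt{a/(4z)} - \sqrt{zb})^2 \ge 0$.

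For $\selftwo(x)$, inside each summand over $i\in V$ I would rewrite $\sqrt{x\sum_t p^t_i}$ as $\sqrt{(x/\gap_i)\cdot(\gap_i \sum_t p^t_i)}$, which is well-defined since $\gap_i>0$ on $V$. Applying the AM-GM split with parameter $z$ and summing over $i\in V$ yields $\selftwo(x) \le \tfrac{1}{z}\sum_{i\in V}\tfrac{x}{4\gap_i} + z\,\E\sbr{\sum_{t=1}^T \sum_{i\in V} p^t_i \gap_i}$. Since $\gap_i=0$ on $U$, the inner sum over $V$ coincides with the sum over all of $[K]$, and Condition~(\ref{eq:main_stoc_condition}) then yields $\E\sbr{\sum_t \sum_{i\in [K]} p^t_i \gap_i} \le \Reg^T + C$, closing the bound.

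For $\selfone(x)$, the rewriting happens outside the inner sum: $\sqrt{x\sum_t\sum_{i\in V} p^t_i} = \sqrt{(x/\gapmin)\cdot(\gapmin \sum_t \sum_{i\in V} p^t_i)}$. The same AM-GM step produces $\selfone(x) \le \tfrac{x}{4z\gapmin} + z\,\gapmin\,\E\sbr{\sum_t \sum_{i\in V} p^t_i}$. Since $\gap_i \ge \gapmin$ for every $i\in V$, the crude upper bound $\gapmin\sum_{i\in V} p^t_i \le \sum_{i\in V} p^t_i \gap_i$ reduces the cross term to the same quantity bounded in the previous paragraph, so a final application of Condition~(\ref{eq:main_stoc_condition}) again delivers $z(\Reg^T + C)$.

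I do not expect any real obstacle here: the argument is essentially the same calculation already sketched in \pref{sec:analysis}, simply recorded once in full generality. The only bookkeeping to get right is the choice of weight used to split inside the square root — $\gap_i$ for $\selftwo$ versus $\gapmin$ for $\selfone$ — since this choice is exactly what makes the cross term match the quantity controlled by Condition~(\ref{eq:main_stoc_condition}).
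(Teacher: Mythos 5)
Your proof is correct and follows essentially the same route as the paper's own argument: the same weighted AM-GM split (with weight $\gap_i$ inside the sum for $\selftwo$ and $\gapmin$ outside it for $\selfone$, using $\gap_i \ge \gapmin$ on $V$ and $\gap_i = 0$ on $U$), followed by Condition~(\ref{eq:main_stoc_condition}). No issues.
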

\begin{proof} For any $z>0$, we have 
\begin{align*}
\selfone\rbr{x} & = \E\sbr{ \sqrt{ \frac{x}{2z\gapmin} \cdot \rbr{2z\gapmin  \sum_{t=1}^{T} \sum_{i \in V} p^t_i}} }  \\
& \leq  \E\sbr{z\gapmin  \sum_{t=1}^{T} \sum_{i \in V} p^t_i} 
 + \frac{x}{4z\gapmin}\\
& \leq z\rbr{\Reg^T + C} + \frac{x}{4z\gapmin},
\end{align*}
where the second step follows from the AM-GM inequality: $2\sqrt{xy} \leq {x+y}$ for any $x,y>0$, and the last step follows from Condition~(\ref{eq:main_stoc_condition}). 

By similar arguments, we have $\selftwo(x)$ bounded for any $z>0$ as:
\begin{align*}
\E\sbr{ \sum_{i \in V} \sqrt{\frac{x}{2z\gap_i} \cdot 2z\gap_i\sum_{t=1}^{T} p^t_i}} 
\leq \sum_{i \in V} \frac{x}{4z\gap_i} + z \E\sbr{\sum_{i \in V}  \sum_{t=1}^{T} p^t_i \gap_i} = z\rbr{\Reg^T + C } +\sum_{i \in V} \frac{x}{4z\gap_i}.
\end{align*}
\end{proof}

In \pref{tab:app_mab_results}, we summarize the bounds of $\precost$, $\regsub$, $\regopt$ and $\resreg$ when using \pref{alg:main_alg} with Shannon entropy, $\beta$-Tsallis entropy, and log-barrier, respectively, together with the corresponding lemmas. Importantly, though the bounds are stated in separated lemmas for different regularizers, the proof ideas and techniques are almost the same with only slight modifications to adapt to the specific regularizer and the choice of parameters.

\begin{table*}[t]
\caption{
Bounds of $\precost$, $\regsub$, $\regopt$, and $\resreg$ for \pref{alg:main_alg} with different regularizers. For simplicity, We omit the $\order\rbr{\cdot}$ notation in these bounds. 
}
\label{tab:app_mab_results}
\begin{adjustwidth}{-1.2cm}{}
    \centering
    \begin{tabular}{lccc}
    \toprule 
    Terms & Shannon entropy & $\beta$-Tsallis entropy & log-barrier  \\
    & $\rbr{\clog = \clogsha}$ & $\rbr{\clog=\clogtsa}$ & $\rbr{\clog=\cloglog}$ \\
    \midrule 
    $\precost$ & $K(\log T)(\log K)  + D$ & $\frac{K}{\sqrt{\beta\rbr{1-\beta}}}+\frac{\beta K \log T}{1-\beta}+D$ & 
    $\selftwo(\log T) + K\log T+D$ \\ 
    (\pref{sec:app_precost}) & (\pref{lem:app_precost_shannon}) & (\pref{lem:app_precost_tsallis}) & (\pref{lem:app_precost_log}) \\
    \midrule 
    $\regsub$ & $\selftwo\rbr{\log^2 T}$ & $\selftwo\rbr{\frac{\log T}{\beta\rbr{1-\beta}}}$ & $\selftwo(\log T)$ \\
    (\pref{sec:app_regsub})& (\pref{lem:app_regsub_shannon}) & (\pref{lem:app_regsub_tsallis})& (\pref{lem:app_regsub_log}) \\
    \midrule 
    $\regopt$ & $0$ & $0$ & $0$\\
    (\pref{sec:app_regopt})& (\pref{lem:app_regopt}) & (\pref{lem:app_regopt}) & (\pref{lem:app_regopt})\\
    \midrule 
    $\resreg$ & $\selfone\rbr{\abr{U}\log^2 T}+\selftwo\rbr{\log^2 T}$ & $\selfone\rbr{\frac{\abr{U}\log T}{\beta\rbr{1-\beta}}}+\selftwo\rbr{\frac{\log T}{\beta\rbr{1-\beta}}}$ & $\selfone\rbr{{\abr{U}\log T}}+\selftwo\rbr{{\log T}}$ \\
    & $+ K^2 \log^{\nicefrac{3}{2}} T$ & $+ \frac{K^2\sqrt{\beta} }{ \rbr{1-\beta}^{\nicefrac{3}{2}}}$ & $+ \frac{K^2}{\sqrt{\log T}}$ \\
    (\pref{sec:app_resreg}) & (\pref{lem:app_resreg_shannon}) & (\pref{lem:app_resreg_tsallis}) & (\pref{lem:app_resreg_log}) \\
    \midrule 
    $\Reg^T$ & $\selfone\rbr{\abr{U}\log^2 T}+\selftwo\rbr{\log^2 T}$ & $\selfone\rbr{\frac{\abr{U}\log T}{\beta\rbr{1-\beta}}}+\selftwo\rbr{\frac{\log T}{\beta\rbr{1-\beta}}}$ & $\selfone\rbr{{\abr{U}\log T}}+\selftwo\rbr{{\log T}}$ \\
     & $+ K^2 \log^{\nicefrac{3}{2}} T + D$ & $+ \frac{K^2\sqrt{\beta} }{ \rbr{1-\beta}^{\nicefrac{3}{2}}} + \frac{\beta K\log T}{1-\beta} + D$ & $+ \frac{K^2}{\sqrt{\log T}}  + K\log T +D$ \\
     & (\pref{lem:app_reg_shannon}) & (\pref{lem:app_reg_tsallis}) & (\pref{lem:app_reg_log})\\
    \bottomrule 
    \end{tabular}
\end{adjustwidth}
\end{table*}%

Using the bound on each of there four terms, we now show how to prove our main theorems,
starting with the case for $\beta$-Tsallis entropy.
First, we simply sum up all the bounds to arrive at the following adaptive regret bound.
\begin{lemma} [Regret Bound for $\beta$-Tsallis Entropy]\label{lem:app_reg_tsallis}For any $\beta \in (0,1)$, when using $\beta$-Tsallis entropy  with $\clog = \frac{162\beta}{1-\beta}$ , $\alpha=\beta$, and $\theta = \sqrt{\frac{1-\beta}{\beta}}$, \pref{alg:main_alg} guarantees:
\begin{align*}
\Reg^T = \order\rbr{D+ \frac{K^2\sqrt{\beta} }{ \rbr{1-\beta}^{\nicefrac{3}{2}}} + \frac{\beta K \log T}{1-\beta} + \selfone\rbr{\frac{\abr{U}\log T}{\beta\rbr{1-\beta}}} + \selftwo\rbr{\frac{\log T}{\beta\rbr{1-\beta} } }
},
\end{align*}
 for any subset $U\subseteq [K]$, $V=[K]\backslash U$, and $D =  \E\sbr{\sum_{t=1}^{T} \max_{i\in U} \E^t\sbr{ \ell^t_i - \ell^t_{i^\star}  } }$. 
\end{lemma}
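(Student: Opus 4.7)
The plan is to decompose the expected regret into the four components that appear in the unified analysis framework and then assemble the corresponding per-component bounds already set up in the appendix (and summarized in the $\beta$-Tsallis entropy column of \pref{tab:app_mab_results}).

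Concretely, I would begin from the definition of $\precost$ together with \pref{lem:main_decomp}, which together give the identity
\begin{align*}
\Reg^T \;=\; \precost \;+\; \regsub \;+\; \regopt \;+\; \resreg.
\end{align*}
Here $\regopt$ is controlled by \pref{lem:app_regopt}, whose proof uses the monotonicity statement of \pref{thm:general_mono_thm} applied to $f^t(z) = -\gamma_i^t z^\beta/(1-\beta)$ coordinatewise (after verifying that Condition (i) holds because $(f^t)'$ is differentiable and concave for $\beta\in(0,1)$, and Condition (ii) holds because $\gamma_i^t$ is nondecreasing in $t$ and the derivative is nonpositive); this yields $\regopt=0$. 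The other three terms are handled by the three Tsallis-specific lemmas in \pref{tab:app_mab_results}: \pref{lem:app_precost_tsallis} for $\precost$, \pref{lem:app_regsub_tsallis} for $\regsub$, and \pref{lem:app_resreg_tsallis} for $\resreg$. Each of these uses our choice $\gamma^t_i = \sqrt{(1-\beta)/\beta}\cdot\sqrt{1+\sum_{\tau<t}(\max\{p^\tau_i,1/T\})^{1-2\beta}}$ together with \pref{lem:main_lr_design} to convert the stability-plus-penalty sum for arms in $V$ into $\selftwo(\cdot)$, and similarly a case analysis on the relative mass of $U$ versus $V$ to convert the tricky expression \eqref{eq:decomp4shiftcost} into a combination of $\selfone$ and $\selftwo$.

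Summing the four component bounds and collecting terms gives
\begin{align*}
\Reg^T \;\leq\; \order\!\left( D + \tfrac{K}{\sqrt{\beta(1-\beta)}} + \tfrac{K^2\sqrt{\beta}}{(1-\beta)^{3/2}} + \tfrac{\beta K\log T}{1-\beta} + \selftwo\!\rbr{\tfrac{\log T}{\beta(1-\beta)}} + \selfone\!\rbr{\tfrac{|U|\log T}{\beta(1-\beta)}}\right).
\end{align*}
To match the statement, I would absorb $\tfrac{K}{\sqrt{\beta(1-\beta)}}$ into the other two $T$-independent terms via AM-GM (for instance, $\tfrac{K}{\sqrt{\beta(1-\beta)}} \leq \tfrac{1}{2}\tfrac{K^2\sqrt{\beta}}{(1-\beta)^{3/2}} + \tfrac{1}{2}\tfrac{1-\beta}{\beta^{3/2}\sqrt{1-\beta}}$, with the residual being dominated by $\tfrac{\beta K\log T}{1-\beta}$ after using $T\geq K\geq 2$), which yields the stated bound.

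The only real substance is in the four constituent lemmas; once those are in hand, this lemma is essentially bookkeeping, so the main obstacle is simply carrying the $\beta/(1-\beta)$ factors consistently through $\clog=162\beta/(1-\beta)$, $\theta=\sqrt{(1-\beta)/\beta}$, and $\alpha=\beta$ so that the constants on $\selfone$ and $\selftwo$ match up. I expect no new ideas beyond what is already in place, and the adversarial $\order\!\big(\sqrt{KT(\log T)^{\mathbb{I}\{\beta\neq 1/2\}}/(\beta(1-\beta))}\big)$ bound promised by \pref{thm:main_tsallis} then follows by a separate worst-case bookkeeping of the same decomposition (ignoring the self-bounding interpretation and using a crude $\sum_t\sum_i p_i^t\leq T$).
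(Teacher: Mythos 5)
Your proposal is correct and takes essentially the same route as the paper: the paper's proof of \pref{lem:app_reg_tsallis} is precisely to write $\Reg^T = \precost + \regsub + \regopt + \resreg$ (via the definition of $\precost$ and \pref{lem:main_decomp}) and then sum the component bounds from \pref{lem:app_precost_tsallis}, \pref{lem:app_regsub_tsallis}, \pref{lem:app_regopt} (which gives $\regopt \le 0$ via \pref{thm:general_mono_thm}), and \pref{lem:app_resreg_tsallis}. The one wrinkle you flag --- absorbing the leftover $\frac{K}{\sqrt{\beta(1-\beta)}}$ from $\precost$ into the stated terms, which your AM-GM step only fully achieves when $\beta$ is not too small --- is handled no more carefully in the paper (it simply drops that term when stating the lemma), so this is a shared lower-order bookkeeping issue rather than a gap in your argument.
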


We emphasize that this bound holds for \textit{any} subset $U \subseteq [K]$, not just the $U$ defined with respect to Condition~\eqref{eq:main_stoc_condition} (so a slight abuse of notations here).
This is important for proving the regret bound in the adversarial case, as shown in the following proof for \pref{thm:main_tsallis}. \\

\begin{proof}[Proof of \pref{thm:main_tsallis}]
First, for the adversarial setting, we set $U=\cbr{i^\star}$ so that $D = 0$ by its definition. Therefore, we have the self-bounding quantity $\selfone\rbr{\frac{\abr{U}\log T}{\beta\rbr{1-\beta}}}$ bounded as:
\begin{align*}
&\selfone\rbr{\frac{\abr{U}\log T}{\beta\rbr{1-\beta}}} = { \E\sbr{ \sqrt{\frac{\abr{U}\log T}{\beta\rbr{1-\beta}} \cdot \sum_{t=1}^{T} \sum_{i \in V} p^t_i} }} = \order\rbr{  \sqrt{\frac{T\log T}{\beta\rbr{1-\beta} } }}, 
\end{align*}
where the second step follows from the fact $\abr{U}=1$ and $\sum_{t=1}^{T}\sum_{i \in [K]}p^t_i = T$.
Similarly, we bound the other self-bounding quantity $\selftwo\rbr{\frac{\log T}{\beta\rbr{1-\beta} }}$ as:
\begin{align*}
 \selftwo\rbr{\frac{\log T}{\beta\rbr{1-\beta} }} & = {\E\sbr{ \sum_{i \in V} \sqrt{\frac{\log T}{\beta\rbr{1-\beta} } \sum_{t=1}^{T} p^t_i}}} \leq {\E\sbr{ \sqrt{\frac{\abr{V}\log T}{\beta\rbr{1-\beta} } \sum_{i \in V} \sum_{t=1}^{T} p^t_i}}} = \order\rbr{ \sqrt{\frac{KT\log T}{\beta\rbr{1-\beta} } } },
\end{align*}
where the second step uses the Cauchy-Schwarz inequality. Combining the bounds together concludes that $\Reg^T = \order\rbr{\sqrt{\frac{KT\log T}{\beta\rbr{1-\beta}}}}$ in the adversarial setting. 

Next, suppose that Condition~(\ref{eq:main_stoc_condition}) holds. We now set $U = \cbr{i\in[K]:\gap_i=0}$ as in \pref{sec:prelim}, which makes 
$D$ the same as that defined in \pref{sec:prelim} as well. 
We further denote $W = D+ \frac{K^2\sqrt{\beta} }{ \rbr{1-\beta}^{\nicefrac{3}{2}}} + \frac{\beta K \log T}{1-\beta} $ to simplify notations.
Then, we have $\Reg^T$ bounded for any $z >0$ by 
\begin{align*}
\Reg^T & \leq  \kappa\cdot W + \kappa\cdot \selfone\rbr{\frac{\abr{U}\log T}{\beta\rbr{1-\beta}}} + \kappa\cdot \selftwo\rbr{\frac{\log T}{\beta\rbr{1-\beta} } }\\
& \leq z \rbr{ \Reg^T + C } + \frac{\kappa^2}{z} \cdot \frac{\gapcomp\log T}{\beta\rbr{1-\beta}}  + \kappa\cdot W,
\end{align*}
where $\kappa>0$ is an absolute constant (based on \pref{lem:app_reg_tsallis}), and the second step applies \pref{lem:app_self_bounding_quan} (with the arbirary $z$ there set to $\frac{z}{2\kappa}$) together with our complexity measure $\gapcomp = \frac{\abr{U}}{\gapmin} + \sum_{i\in V} \frac{1}{\gap_i}$. 

For any $z \in (0,1)$, we can further rearrange and arrive at 
\begin{align*}
\Reg^T & \leq \frac{zC}{1-z} + \frac{1}{z\rbr{1-z}} \cdot \frac{\kappa^2\gapcomp\log T}{\beta\rbr{1-\beta}}  + \frac{1}{1-z} \cdot \kappa W \\ 
& = \frac{C}{x} + \frac{\rbr{x+1}^2}{x} \cdot \frac{\kappa^2\gapcomp\log T}{\beta\rbr{1-\beta}} + \frac{x+1}{x}\cdot \kappa W \\
& = \frac{1}{x} \cdot \rbr{ C + \frac{\kappa^2\gapcomp\log T}{\beta\rbr{1-\beta}} + \kappa W } + x \cdot \frac{\kappa^2\gapcomp\log T}{\beta\rbr{1-\beta}} + \rbr{\frac{2\kappa^2\gapcomp\log T}{\beta\rbr{1-\beta}} + \kappa W} ,
\end{align*}
where we define $x = \frac{1-z}{z}>0$ in the second step. Picking up the optimal $x$ to minimize the right hand side gives 
\begin{align*}
\Reg^T & \leq 2\sqrt{ \rbr{ C + \frac{\kappa^2\gapcomp\log T}{\beta\rbr{1-\beta}} + \kappa W } \cdot \frac{\kappa^2\gapcomp\log T}{\beta\rbr{1-\beta}} } + \frac{2\kappa^2\gapcomp\log T}{\beta\rbr{1-\beta}} + \kappa  W \\
& \leq 6\sqrt{ \kappa^2 \cdot \frac{C\gapcomp\log T}{\beta\rbr{1-\beta}}  } + \frac{8\kappa^2\gapcomp\log T}{\beta\rbr{1-\beta}} + \kappa W + 6\sqrt{\kappa W \cdot \frac{\kappa^2\gapcomp\log T}{\beta\rbr{1-\beta}} } \\
& \leq 6\sqrt{ \kappa^2 \cdot \frac{C\gapcomp\log T}{\beta\rbr{1-\beta}}  } + \frac{11\kappa^2\gapcomp\log T}{\beta\rbr{1-\beta}} + 4\kappa W \\
& = \order\rbr{ \sqrt{ \frac{C\gapcomp\log T}{\beta\rbr{1-\beta}}  } + \frac{\gapcomp\log T}{\beta\rbr{1-\beta}} + W },
\end{align*}
where the second step follows from the fact that $\sqrt{x+y+z}\leq 3\rbr{\sqrt{x}+\sqrt{y} + \sqrt{z}}$ for any $x,y,z\geq0$, and the third step uses the AM-GM inequality $2\sqrt{xy}\leq x + y$ for any $x,y\geq 0$.
\end{proof}

For the Shannon entropy regularizer and the log-barrier regularizer, we again summarize the adaptive regret bound in the following two lemmas.
 \pref{thm:main_shannon} and \pref{thm:main_log_barrier} then follow immediately.
 Since the arguements are exactly the same as above, we omit them for simplicity.

\begin{lemma} [Regret Bound for Shannon Entropy]\label{lem:app_reg_shannon} When using the Shannon entropy regularizer  with $\clog = \clogsha$ , $\alpha=1$, and $\theta = \sqrt{\nicefrac{1}{\log T}}$, \pref{alg:main_alg} ensures
\begin{align*}
\Reg^T = \order\rbr{D+ \Wsha + \selfone\rbr{{\abr{U}\log^2 T}} + \selftwo\rbr{{\log^2 T} }
},
\end{align*}
 for any subset $U\subseteq [K]$, $V=[K]\backslash U$, and $D =  \E\sbr{\sum_{t=1}^{T} \max_{i\in U} \E^t\sbr{ \ell^t_i - \ell^t_{i^\star}  } }$. 
\end{lemma}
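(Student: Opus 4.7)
The plan is to derive this adaptive bound by assembling the four-term decomposition that the paper establishes in Section~\ref{sec:analysis} and summarized in \pref{tab:app_mab_results}. Specifically, starting from $\Reg^T \leq \precost + \regsub + \regopt + \resreg$ (which follows from the definition of $\precost$ combined with \pref{lem:main_decomp} applied inside the expectation), I would invoke the Shannon-entropy-specific bounds on each of the four summands and add them together.

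First I would cite the preprocessing cost lemma for Shannon entropy (Lemma~A.1 in the appendix, with parameters $\clog = \clogsha$, $\alpha = 1$, $\theta = \sqrt{1/\log T}$) to get $\precost = \order(K (\log T)(\log K) + D)$. Next, I would apply the sub-optimal arm regret lemma to bound $\regsub = \order(\selftwo(\log^2 T))$, where the key ingredient is \pref{lem:main_lr_design} specialized to $\alpha = 1$ together with our learning rate schedule that balances stability and penalty. Then $\regopt \leq 0$ follows from the monotonicity result (an immediate corollary of \pref{thm:general_mono_thm} combined with the KKT-based fact that $p^t_i \leq q^t_i$ for $i \in U$, together with the fact that Condition (ii) of \pref{thm:general_mono_thm} is satisfied precisely because the linear term in our Shannon entropy makes it strictly decreasing so that $(f^{t+1})' \leq (f^t)'$ when $\gamma^t_i$ is non-decreasing). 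Finally, I would invoke the residual regret lemma, which yields $\resreg = \order(\selfone(|U|\log^2 T) + \selftwo(\log^2 T) + K^2 \log^{3/2} T)$.

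Summing the four bounds produces
\[
\Reg^T = \order\!\rbr{D + K(\log T)(\log K) + K^2 \log^{\nicefrac{3}{2}} T + \selfone(|U|\log^2 T) + \selftwo(\log^2 T)}.
\]
The final step is to absorb $K(\log T)(\log K)$ into $\Wsha = K^2 \log^{\nicefrac{3}{2}} T$, which is legitimate since $K \log K \leq K^2$ and $\log T \leq \log^{\nicefrac{3}{2}} T$, yielding exactly the advertised bound.

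I do not expect any step of this aggregation to be the main obstacle; the difficulty is entirely pushed into the four component lemmas, particularly the bound on $\resreg$, where the arm-dependent learning rate prevents the clean simplification available in the $\nicefrac{1}{2}$-Tsallis case and forces the case analysis sketched in \pref{sec:analysis} to convert dependence on $i \in U$ into dependence on $i \in V$ (which is what ultimately produces the $|U|$ factor inside $\selfone$). Since those lemmas are proved separately in the appendix, the proof of \pref{lem:app_reg_shannon} itself is essentially a one-line summation, and I would present it as such, noting that the statement for arbitrary $U \subseteq [K]$ (rather than the $U$ of Condition~\eqref{eq:main_stoc_condition}) is inherited directly from the corresponding generality in each of the four component bounds, which is what allows the theorem to be used both for the adversarial bound (by taking $U = \{i^\star\}$) and for the stochastic bound in the proof of \pref{thm:main_shannon}.
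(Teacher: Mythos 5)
Your proposal is correct and matches the paper's own treatment: the paper proves this lemma (implicitly, by stating that the arguments are identical to the $\beta$-Tsallis case) exactly by summing the Shannon-entropy bounds on $\precost$, $\regsub$, $\regopt$, and $\resreg$ from the corresponding appendix lemmas, and absorbing the $\order(K(\log T)(\log K))$ preprocessing term into $\Wsha$. Your remark that the statement's generality over arbitrary $U\subseteq[K]$ is inherited from the component lemmas is also exactly how the paper uses it.
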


\begin{lemma} [Regret Bound for Log-barrier]\label{lem:app_reg_log} When using the log-barrier regularizer with $\clog = \cloglog$ , $\alpha=0$, and $\theta = \sqrt{\nicefrac{1}{\log T}}$, \pref{alg:main_alg} ensures
\begin{align*}
\Reg^T = \order\rbr{D+ \Wlog + \frac{\beta K \log T}{1-\beta} + \selfone\rbr{{\abr{U}\log T}} + \selftwo\rbr{{\log T} }
},
\end{align*}
 for any subset $U\subseteq [K]$, $V=[K]\backslash U$, and $D =  \E\sbr{\sum_{t=1}^{T} \max_{i\in U} \E^t\sbr{ \ell^t_i - \ell^t_{i^\star}  } }$. 
\end{lemma}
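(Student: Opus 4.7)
The plan is to assemble this theorem directly from the per-term bounds collected in \pref{tab:app_mab_results} for the log-barrier configuration, since this lemma is just the ``sum up the pieces'' step after the four technical bounds ($\precost$, $\regsub$, $\regopt$, $\resreg$) have been established separately.

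First, I would invoke the master decomposition already derived in \pref{sec:analysis}:
\[
\Reg^T = \precost + \regsub + \regopt + \resreg,
\]
which follows from the definition of $\precost$ together with \pref{lem:main_decomp} applied inside the expectation. This identity is valid for any subset $U \subseteq [K]$ and its complement $V = [K]\setminus U$ (the set $U$ does not have to be the zero-gap set; only the definition of $D$ tracks which $U$ we pick).

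Second, I would plug in the log-barrier-specific bounds on each of the four terms. Concretely: $\precost = \order\bigl(\selftwo(\log T) + K\log T + D\bigr)$ from \pref{lem:app_precost_log}; $\regsub = \order(\selftwo(\log T))$ from \pref{lem:app_regsub_log}; $\regopt \leq 0$ from \pref{lem:app_regopt} (the monotonicity-of-Bregman argument via \pref{thm:general_mono_thm}, which requires only the two mild conditions on the regularizer and thus applies uniformly to log-barrier, Tsallis, and Shannon); and $\resreg = \order\bigl(\selfone(|U|\log T) + \selftwo(\log T) + \tfrac{K^2}{\sqrt{\log T}}\bigr)$ from \pref{lem:app_resreg_log}. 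These are the four entries in the log-barrier column of \pref{tab:app_mab_results}, each established with the specific choices $\clog = \cloglog$, $\alpha = 0$, and $\theta = \sqrt{1/\log T}$.

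Third, I would simply add the four bounds and collect like terms. The two copies of $\selftwo(\log T)$ merge into a single $\order(\selftwo(\log T))$; the $\selfone(|U|\log T)$ stays as is; the additive $K\log T + \tfrac{K^2}{\sqrt{\log T}}$ combines into the $\Wlog = \tfrac{K^2}{\sqrt{\log T}} + K\log T$ promised in the statement; and the $D$ term is carried through unchanged. This gives exactly the claimed bound.

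The main obstacle here is essentially bookkeeping and making sure the choice of $U$ is consistent throughout: the $\regopt$ step relies on $U$ being the set over which the benchmark $q^t$ in \pref{eq:def4q} is defined, but since the decomposition was derived for arbitrary $U$, this is fine. The only subtle point worth flagging is that the ``$\frac{\beta K \log T}{1-\beta}$'' term displayed in the statement has no source in the log-barrier column of \pref{tab:app_mab_results}, and I would treat it as an artifact inherited from the $\beta$-Tsallis statement of \pref{lem:app_reg_tsallis}; it disappears (or is absorbed into the constants) once the log-barrier bounds are substituted, since $\beta$ plays no role in this configuration. All the genuinely hard work lives in the four constituent lemmas; this lemma is the routine aggregation step, exactly analogous to how \pref{lem:app_reg_tsallis} is assembled and then used to derive \pref{thm:main_tsallis} earlier.
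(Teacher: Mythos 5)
Your proposal matches the paper's own (omitted) argument exactly: the paper derives this lemma precisely by summing the log-barrier bounds on $\precost$, $\regsub$, $\regopt$, and $\resreg$ from \pref{lem:app_precost_log}, \pref{lem:app_regsub_log}, \pref{lem:app_regopt}, and \pref{lem:app_resreg_log}, just as \pref{lem:app_reg_tsallis} was assembled, with the decomposition valid for any subset $U$ (note only that \pref{lem:main_decomp} gives an inequality, not an equality, which is all you need). Your flag that the $\frac{\beta K\log T}{1-\beta}$ term is a leftover artifact from the Tsallis statement is also correct, as it is absent from the log-barrier column of \pref{tab:app_mab_results} and from \pref{thm:main_log_barrier}.
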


In the following four subsections, we discuss how to bound $\precost$, $\regsub$, $\regopt$, and $\resreg$ respectively.
We recall the following definitions discussed in \pref{sec:analysis}:
the skewed Bregman divergence for two time steps $s,t \in \naturalnum$ is
\begin{equation*} 
    D^{s,t}(x,y) =  \phi^s (x) -\phi^t (y) - \inner{\nabla  \phi^t(y), x-y},
\end{equation*}
and its variant restricted to any subset $\scI \subseteq [K]$ is
\begin{equation*}
    D^{s,t}_\scI(x,y) =  \phi^s_\scI (x) -\phi_\scI^t (y) - \inner{\nabla  \phi^t_\scI(y), x-y},
\end{equation*}
where $\phi^t_{\scI} (x)= -\clog\sum_{i\in\scI} \log x_i - \frac{1}{1-\beta} \sum_{i \in \scI} \gamma^{t}_i x_i^{\beta}$ (that is, $\phi^t$ restricted to $\scI$).
The standard Bregman divergence associated with $\phi^t$ is denoted by $D^t(x,y)$, a shorthand for $D^{t,t}(x,y)$.
For any subset $\scI \subseteq [K]$ and any vector $x\in \fR^K$, we define $x_\scI \in \fR^K$ such that its entries in $\scI$ are the same as $x$ while those in $[K]\backslash\scI$ are $0$.
Finally, we use $\one  \in \fR^K$ to denote the all-one vector.

\subsection{Preprocessing Cost}
\label{sec:app_precost}

\label{sec:reg_decomp}
Due to the extra log-barrier added to stabilize the algorithm, we consider distribution $\wtilq^t$ defined as 
\begin{align}
\wtilq^t = \rbr{1-\epsilon}\cdot q^t + \frac{\epsilon}{|V|} \cdot \one_{V},\label{eq:def_wtilq}
\end{align}
where $\one_{V} \in \fR^K$, according to our earlier notations, is the vector with all entries in $V$ being $1$ and the remaining being $0$. 
This specific distribution moves a small amount of weights of $q^t$ from $U$ to $V$, which guarantees that our framework can also work for the log-barrier and $\log(1/\wtilq^t_i)$ is bounded for $\forall i \in V$ with a proper $\epsilon$.

\begin{lemma}\label{lem:regret_decomp_init} For any $\epsilon \in (0,1)$, any subset $U \subseteq [K]$ and $V = [K]\backslash U$, $q^t$ defined in \pref{eq:def4q}, and $\wtilq^t$ defined in \pref{eq:def_wtilq}, \pref{alg:main_alg} ensures  
\[
\Reg^T  \leq \mathbb{E}\left[\sum_{t=1}^T \inner{ \hatl^t, p^t-\wtilq^t } \right] +D +\epsilon T,
\]
where $D = \E\sbr{\sum_{t=1}^{T} \max_{i\in U} \E^t\sbr{ \ell^t_i - \ell^t_{i^\star}  } }$.
\end{lemma}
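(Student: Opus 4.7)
The plan is to express the regret in terms of the surrogate benchmark $\wtilq^t$ by adding and subtracting $\inner{\wtilq^t, \ell^t}$, replace $\ell^t$ with the unbiased estimator $\hatl^t$ wherever the coefficient is $\mathcal{F}_{t-1}$-measurable, and then bound the residual ``comparator shift'' term by $D + \epsilon T$. Concretely, I would start from
\[
\Reg^T = \E\sbr{\sum_{t=1}^T \inner{p^t - e_{i^\star}, \ell^t}} = \E\sbr{\sum_{t=1}^T \inner{p^t - \wtilq^t, \ell^t}} + \E\sbr{\sum_{t=1}^T \inner{\wtilq^t - e_{i^\star}, \ell^t}}.
\]

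For the first sum, both $p^t$ and $\wtilq^t$ depend only on information available before round $t$ (through $\hatl^1,\ldots,\hatl^{t-1}$), so conditioning on $\mathcal{F}_{t-1}$ and using $\E^t[\hatl^t_i] = \ell^t_i$ lets me rewrite it as $\E[\sum_t \inner{p^t - \wtilq^t, \hatl^t}]$, matching the first term in the target bound.

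For the second sum, I would plug in $\wtilq^t = (1-\epsilon) q^t + \frac{\epsilon}{|V|}\one_V$ to obtain
\[
\inner{\wtilq^t - e_{i^\star}, \ell^t} = (1-\epsilon)\inner{q^t - e_{i^\star}, \ell^t} + \frac{\epsilon}{|V|}\sum_{i \in V}\ell^t_i - \epsilon\,\ell^t_{i^\star} \le (1-\epsilon)\inner{q^t - e_{i^\star}, \ell^t} + \epsilon,
\]
using $\ell^t \in [0,1]^K$. Summing over $t$ contributes $\epsilon T$. For the remaining piece, since $q^t$ is supported on $U$ and is $\mathcal{F}_{t-1}$-measurable, I have
\[
\E^t\sbr{\inner{q^t - e_{i^\star}, \ell^t}} = \sum_{i \in U} q^t_i\, \E^t\sbr{\ell^t_i - \ell^t_{i^\star}} \le \max_{i \in U} \E^t\sbr{\ell^t_i - \ell^t_{i^\star}},
\]
which sums (in expectation) to at most $D$. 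Combining yields the claimed inequality, noting $(1-\epsilon)D \le D$.

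The argument is essentially bookkeeping, so there is no real obstacle; the only subtlety to be careful about is verifying measurability of $q^t$ and $\wtilq^t$ with respect to $\mathcal{F}_{t-1}$ (which is immediate from their definitions via $\hatl^\tau$ for $\tau < t$) so that the unbiasedness substitution $\ell^t \mapsto \hatl^t$ is valid and so that taking the conditional expectation pushes through in the bound on $\inner{q^t - e_{i^\star}, \ell^t}$.
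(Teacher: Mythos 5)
Your proposal is correct and follows essentially the same route as the paper's proof: decompose the regret through the shifted benchmark $\wtilq^t$, bound the benchmark-versus-$i^\star$ term by $D+\epsilon T$ using that $q^t$ is an $\mathcal{F}_{t-1}$-measurable distribution supported on $U$, and use unbiasedness of $\hatl^t$ to replace $\ell^t$ in the $\inner{p^t-\wtilq^t,\cdot}$ term. The only cosmetic difference is that the paper keeps the $-\epsilon\,\E^t[\ell^t_{i^\star}]$ term and bounds per round via nonnegativity of losses, whereas you drop it and invoke $(1-\epsilon)D\le D$ at the end, which is fine since $D\ge 0$ by the definition of $i^\star$.
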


\begin{proof}
By the definition of $\wtilq^t$ in \pref{eq:def_wtilq}, one can show
\begin{align}
  \sum_{t=1}^T \mathbb{E}\left[ \inner{\loss^t ,\wtilq^t}  -\loss^{t}_{i^\star} \right] =&\sum_{t=1}^T \mathbb{E}\left[ \inner{ \loss^t ,\wtilq_U^t}-\loss^{t}_{i^\star} \right]  +\sum_{t=1}^T \mathbb{E}\left[ \inner{ \loss^t ,\wtilq_V^t} \right]  \notag \\
  =&\sum_{t=1}^T \mathbb{E}\left[ \mathbb{E}^{t} \left[ \inner{ \loss^t ,\wtilq_U^t } -\loss^{t}_{i^\star} \right] \right]   +\sum_{t=1}^T \mathbb{E}\left[  \frac{\epsilon \sum_{i \in V} \loss^{t}_i }{|V|} \right] \notag \\
  \leq &\sum_{t=1}^T \mathbb{E}\left[(1-\epsilon)  \max_{i \in U} \mathbb{E}^{t} \left[  \loss_i^t  \right] -\mathbb{E}^{t} \left[ \loss^{t}_{i^\star} \right] \right]   +\epsilon T \leq D +\epsilon T. \label{eq:tool1}
\end{align}
Therefore, we have 
\begin{align*}
   \Reg^T  =&\sum_{t=1}^T \mathbb{E} \left[ \loss^{t}_{i^t}  - \left\langle \loss^{t},\wtilq^t \right\rangle + \left\langle \loss^{t},\wtilq^t \right\rangle -\loss^{t}_{i^\star}  \right] \\
   \leq  & \sum_{t=1}^T \mathbb{E} \left[\loss^{t}_{i^t}  - \left\langle \loss^{t},\wtilq^t \right\rangle \right] +D+\epsilon T 
   = \mathbb{E} \left[ \sum_{t=1}^T  \left\langle \hatl^{t},p^t-\wtilq^t \right\rangle \right] +D+\epsilon T, 
\end{align*}
where the second step uses \pref{eq:tool1}, and the last step follows from the law of total expectation.
\end{proof}

According to \pref{lem:regret_decomp_init}, our main goal here is to bound $\inner{\hatl^t, p^t-\wtilq^t  }$. To this end, we present the following decomposition lemma which is helpful for our further analysis.

\begin{lemma}[Lemma 16, \cite{ito21a}] \label{lem:ito21_lem16}
For $p^{t}$ in \pref{alg:main_alg} and $\wtilq^{t}$ in \pref{eq:def_wtilq}, we have
\begin{align*}
\inner{\hatl^t, p^t-\wtilq^t  } = D^{t,t+1}(p^t,p^{t+1})+ D^{t}(\wtilq^t,p^{t}) - D^{t,t+1}(\wtilq^t,p^{t+1}).
\end{align*}
\end{lemma}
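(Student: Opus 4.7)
The plan is to verify the identity by direct expansion, using only the first-order optimality conditions for the FTRL iterates $p^t$ and $p^{t+1}$.

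First I would unfold each of the three divergences on the right-hand side according to their definitions. Writing out
\[
D^{t,t+1}(p^t,p^{t+1}),\qquad D^t(\wtilq^t,p^t),\qquad D^{t,t+1}(\wtilq^t,p^{t+1}),
\]
the $\phi^t(p^t)$ pair, the $\phi^{t+1}(p^{t+1})$ pair, and the $\phi^t(\wtilq^t)$ pair all cancel out of the sum. After this cancellation, routine bookkeeping on the linear inner-product terms collapses the right-hand side to the single expression
\[
\big\langle \nabla\phi^{t+1}(p^{t+1}) - \nabla\phi^{t}(p^{t}),\; \wtilq^t - p^t\big\rangle.
\]

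Next I would invoke the FTRL optimality conditions. Since the extra log-barrier in $\phi^t$ forces $p^t$ to lie in the relative interior of $\distri_K$, the only active constraint is $\sum_i p_i = 1$, so there exist scalars $\lambda^t, \lambda^{t+1}$ with
\[
\nabla\phi^{t}(p^{t}) + \sum_{\tau<t}\hatl^\tau = \lambda^t\,\one,\qquad \nabla\phi^{t+1}(p^{t+1}) + \sum_{\tau\le t}\hatl^\tau = \lambda^{t+1}\,\one.
\]
Subtracting gives $\nabla\phi^{t+1}(p^{t+1}) - \nabla\phi^{t}(p^{t}) = -\hatl^t + (\lambda^{t+1}-\lambda^t)\one$.

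Finally I would substitute this into the collapsed form. Because both $\wtilq^t$ and $p^t$ are probability distributions on $[K]$, the difference $\wtilq^t - p^t$ is orthogonal to $\one$, so the Lagrange-multiplier contribution vanishes and what remains is exactly $-\langle \hatl^t, \wtilq^t - p^t\rangle = \langle \hatl^t, p^t-\wtilq^t\rangle$, matching the left-hand side. There is no real obstacle here: the only point requiring a little care is justifying that $p^t$ is interior so the KKT identity takes the clean equality form above, which follows because $\phi^t$ contains the log-barrier term $-\clog\sum_i \log p_i$ that blows up on the boundary of $\distri_K$.
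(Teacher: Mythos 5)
Your proof is correct, and it is essentially the argument the paper delegates to \citet{ito21a} (Lemma 16 there): expand the three skewed Bregman divergences so that all regularizer values cancel and the right-hand side collapses to $\inner{\nabla\phi^{t+1}(p^{t+1})-\nabla\phi^{t}(p^{t}),\, \wtilq^t-p^t}$, then apply the first-order optimality conditions for $p^t$ and $p^{t+1}$ (valid in equality form since the log-barrier keeps the iterates in the relative interior) and use that $\wtilq^t$ and $p^t$ both sum to one so the Lagrange-multiplier term vanishes. Your observation that the identity holds for any point of the simplex in place of $\wtilq^t$ is accurate and is exactly why the lemma applies to this non-FTRL benchmark.
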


Armed with \pref{lem:ito21_lem16}, we are now ready to introduce our main result of the regret preprocessing. It is worth noting that the following lemma does not rely on the specific form of regularizers, and thus, we can use it for all the regularizers we consider.

\begin{lemma} \label{lem:reg_decomp_step1}
    Suppose that there exist $B_1,B_2 \in \mathbb{R}$ such that
\begin{align}
    &\E \sbr{\sum_{t=1}^T \phi^{t+1}_U(\wtilq^{t+1}) - \phi^{t}_U(\wtilq^t) }\leq  B_1+ \E \sbr{\sum_{t=1}^T \phi^{t+1}_U(q^{t+1}) - \phi^{t}_U(q^t) }, \label{eq:condition4U} \\
    &\E \sbr{\sum_{t=1}^T \phi^{t+1}_V(\wtilq^{t+1}) - \phi^{t}_V(\wtilq^t)} \leq B_2. \label{eq:condition4V}
\end{align}
 Let $B=B_1+B_2$, and we have
\begin{align*}
 \E\sbr{ \sum_{t=1}^{T} \inner{\hatl^t, p^t - \wtilq^t}} \leq & \E\sbr{D^{1}(\wtilq^1,p^{1}) + \sum_{t=1}^{T} D^{t, t+1}(p^t,p^{t+1}) - D^{t, t+1}_U(q^t,q^{t+1}) } +\order \rbr{\epsilon KT} +B.
 \end{align*}
 In other words, we have $\precost$ bounded by $\E\sbr{ D^{1}(\wtilq^1,p^{1}) + \order\rbr{\epsilon KT} + B  +D }.$
\end{lemma}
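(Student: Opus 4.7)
The plan is to start from Lemma~\ref{lem:regret_decomp_init} and Lemma~\ref{lem:ito21_lem16}. Summing the identity in Lemma~\ref{lem:ito21_lem16} over $t$ gives
\begin{align*}
\sum_{t=1}^T \inner{\hatl^t, p^t - \wtilq^t} = \sum_{t=1}^T D^{t,t+1}(p^t, p^{t+1}) + \sum_{t=1}^T \rbr{D^{t}(\wtilq^t, p^t) - D^{t,t+1}(\wtilq^t, p^{t+1})},
\end{align*}
so the first sum already matches the desired form and the task reduces to upper bounding the second sum (in expectation) by $\E\sbr{D^1(\wtilq^1,p^1) - \sum_t D^{t,t+1}_U(q^t,q^{t+1})} + \order(\epsilon KT) + B$.

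My approach is to insert $\pm D^{t+1}(\wtilq^{t+1}, p^{t+1})$ and telescope over $t$. The consecutive differences $D^{t}(\wtilq^t, p^t) - D^{t+1}(\wtilq^{t+1}, p^{t+1})$ telescope to $D^1(\wtilq^1, p^1) - D^{T+1}(\wtilq^{T+1}, p^{T+1})$, with the latter a non-negative standard Bregman divergence that I will drop. The remaining piece $D^{t+1}(\wtilq^{t+1}, p^{t+1}) - D^{t,t+1}(\wtilq^t, p^{t+1})$, after direct expansion of the skewed Bregman divergence, equals $\phi^{t+1}(\wtilq^{t+1}) - \phi^t(\wtilq^t) - \inner{\nabla \phi^{t+1}(p^{t+1}), \wtilq^{t+1} - \wtilq^t}$. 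Splitting $\phi = \phi_U + \phi_V$ by the separability of the regularizer and invoking conditions \eqref{eq:condition4U} and \eqref{eq:condition4V}, the expected $\phi$-increment sum is bounded by $B + \E\sbr{\phi^{T+1}_U(q^{T+1}) - \phi^1_U(q^1)}$, i.e.\ a telescoped quantity along the $q$-trajectory.

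The inner product is handled via KKT. Since the $V$-coordinates of $\wtilq^t$ are frozen at $\epsilon/\abr{V}$, we have $\wtilq^{t+1} - \wtilq^t = (1-\epsilon)(q^{t+1} - q^t)$, whose support lies in $U$ and sums to zero over $U$. The log-barrier ensures $q^{t+1}_i > 0$ on $U$, so KKT stationarity for $q^{t+1} = \argmin_{p \in \Omega_U} D^{t+1}(p, p^{t+1})$ says that the $U$-components of $\nabla \phi^{t+1}(p^{t+1})$ and $\nabla \phi^{t+1}_U(q^{t+1})$ coincide up to a common Lagrange scalar, which is annihilated by the zero-sum contraction. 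Combining this with the identity $\inner{\nabla \phi^{t+1}_U(q^{t+1}), q^{t+1} - q^t} = D^{t,t+1}_U(q^t, q^{t+1}) + \phi^{t+1}_U(q^{t+1}) - \phi^t_U(q^t)$ (obtained by a direct expansion of the skewed Bregman definition) shows that the inner product equals $(1-\epsilon) \rbr{D^{t,t+1}_U(q^t, q^{t+1}) + \phi^{t+1}_U(q^{t+1}) - \phi^t_U(q^t)}$.

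Plugging this back, the telescoped $\phi_U$-contributions cancel up to an $\epsilon$-fraction and the $(1-\epsilon) D^{t,t+1}_U$ piece yields the desired $-\sum_t D^{t,t+1}_U(q^t,q^{t+1})$ together with residuals of the form $\epsilon\rbr{\phi^{T+1}_U(q^{T+1}) - \phi^1_U(q^1)} + \epsilon \sum_t D^{t,t+1}_U(q^t,q^{t+1})$. The main obstacle is absorbing these $\epsilon$-residuals into the $\order(\epsilon KT)$ slack; I plan to establish crude per-regularizer a priori bounds showing that on $\Omega_U$ both $\abr{\phi^t_U}$ and $\sum_t \abr{D^{t,t+1}_U(q^t,q^{t+1})}$ grow at most polynomially in $K$ and $T$, so that for $\epsilon$ small enough both residuals fit inside $\order(\epsilon KT)$. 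Combining with the $D + \epsilon T$ term from Lemma~\ref{lem:regret_decomp_init} (noting $\epsilon T \leq \order(\epsilon KT)$) and recognizing the definition of $\precost$ yields the stated bound.
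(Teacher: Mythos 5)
Your skeleton matches the paper's proof up to the crucial last step: summing \pref{lem:ito21_lem16}, adding and subtracting $D^{t+1}(\wtilq^{t+1},p^{t+1})$, telescoping $\sum_t D^{t}(\wtilq^t,p^t)-D^{t+1}(\wtilq^{t+1},p^{t+1})\leq D^1(\wtilq^1,p^1)$, expanding the remaining difference as $\phi^{t+1}(\wtilq^{t+1})-\phi^t(\wtilq^t)-\inner{\nabla\phi^{t+1}(p^{t+1}),\wtilq^{t+1}-\wtilq^t}$, and using $\wtilq^{t+1}-\wtilq^t=(1-\epsilon)(q^{t+1}-q^t)$ together with the KKT relation $\nabla\phi^{t+1}_U(q^{t+1})=\nabla\phi^{t+1}_U(p^{t+1})+c\cdot\one_U$ are all exactly the paper's steps and are correct. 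The gap is in how you distribute the factor $(1-\epsilon)$. You convert the \emph{entire} inner product into $(1-\epsilon)\big[D^{t,t+1}_U(q^t,q^{t+1})+\phi^{t+1}_U(q^{t+1})-\phi^t_U(q^t)\big]$ and pair it with the coefficient-$1$ bound from \eqref{eq:condition4U}; after cancellation you are left with the residuals $\epsilon\,\E\big[\sum_t D^{t,t+1}_U(q^t,q^{t+1})\big]$ and $\epsilon\,\E\big[\phi^{T+1}_U(q^{T+1})-\phi^1_U(q^1)\big]$, which you propose to absorb via ``crude polynomial in $K,T$ bounds, for $\epsilon$ small enough.'' This does not work as stated: (i) taking $\epsilon$ small cannot turn an $\epsilon\cdot\poly(K,T)$ residual into $\order(\epsilon KT)$ --- the lemma is later invoked with the fixed choice $\epsilon=1/T$, so any residual of size $\epsilon\cdot\omega(KT)$ leaks a polynomial-in-$T$ term into $\precost$ and destroys the $\log T$-type bounds; (ii) there are in fact no deterministic per-realization polynomial bounds, since $\hatl^t_i=\Ind{i^t=i}\ell^t_i/p^t_i$ is only controlled in expectation; and (iii) even in expectation, bounding $\E\big[\sum_t D^{t,t+1}_U(q^t,q^{t+1})\big]$ by $\order(KT)$ is not routine: the stability of the $q$-sequence is driven by $\E^t[(\hatl^t_i)^2]=(\ell^t_i)^2/p^t_i$ with $i\in U$, and $q^t_i$ can be of order one while $p^t_i$ is tiny, so crude estimates give something like $KT^{3/2}$ rather than $KT$.

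The paper avoids ever creating this residual. It keeps the \emph{full-weight} inner product $\inner{\nabla\phi^{t+1}_U(p^{t+1}),q^{t+1}-q^t}$ paired with the full-weight $\phi_U$ increments from \eqref{eq:condition4U}, so that these combine exactly into $-D^{t,t+1}_U(q^t,q^{t+1})$ with coefficient $1$, and it pushes the leftover $\epsilon$-fraction onto $\epsilon\inner{\nabla\phi^{t+1}_U(p^{t+1}),q^{t+1}-q^t}$, which by the FTRL first-order condition $\nabla\phi^{t+1}(p^{t+1})=\hatL^{t+1}+c'\cdot\one_K$ equals $\epsilon\inner{\hatL^{t+1}_U,q^{t+1}-q^t}$ up to a zero-sum shift. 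Abel summation, $q^t_i\leq 1$, and $\hatl^t\geq 0$ then give $\epsilon\,\E\big[\sum_t\inner{\hatL^{t+1}_U,q^{t+1}-q^t}\big]\leq\epsilon\,\E\big[\inner{\hatL^{T+1}_U,\one_U}\big]=\order(\epsilon KT)$, using only that $\E[\hatL^{T+1}_i]\leq T$. If you redo your last step with this split --- i.e., write $-(1-\epsilon)\inner{\nabla\phi^{t+1}_U(p^{t+1}),q^{t+1}-q^t}=-\inner{\nabla\phi^{t+1}_U(q^{t+1}),q^{t+1}-q^t}+\epsilon\inner{\hatL^{t+1}_U,q^{t+1}-q^t}$ before invoking \eqref{eq:condition4U} --- your argument goes through and coincides with the paper's proof; without it, the proposal has a genuine hole.
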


\begin{proof}
    Adding and subtracting $D^{t+1}(\wtilq^{t+1},p^{t+1})$ from the bound in \pref{lem:ito21_lem16} , we have 
\begin{align*}
\inner{\hatl^t, p^t-\wtilq^t  } & = D^{t,t+1}(p^t,p^{t+1})+ D^{t+1}(\wtilq^{t+1},p^{t+1}) - D^{t,t+1}(\wtilq^t,p^{t+1}) \\
& \quad +   D^{t}(\wtilq^t,p^{t}) - D^{t+1}(\wtilq^{t+1},p^{t+1}).
\end{align*}

First, we consider the term $D^{t+1}(\wtilq^{t+1},p^{t+1}) - D^{t,t+1}(\wtilq^t,p^{t+1})$ as follows:
\begin{align*}
& D^{t+1}(\wtilq^{t+1},p^{t+1}) - D^{t,t+1}(\wtilq^t,p^{t+1}) \\
& = \phi^{t+1}(\wtilq^{t+1}) - \phi^{t}(\wtilq^t) - \inner{ \nabla \phi^{t+1}(p^{t+1}), \wtilq^{t+1} - \wtilq^{t}} \\
& = \phi^{t+1}(\wtilq^{t+1}) - \phi^{t}(\wtilq^t) - \rbr{1 - \epsilon} \cdot \inner{ \nabla \phi^{t+1}(p^{t+1}), q^{t+1} - q^{t}} \\
& = \phi^{t+1}(\wtilq^{t+1}) - \phi^{t}(\wtilq^t) - \rbr{1 - \epsilon} \cdot \inner{ \nabla \phi_U^{t+1}(p^{t+1}), q^{t+1} - q^{t}} \\
& = \phi^{t+1}(\wtilq^{t+1}) - \phi^{t}(\wtilq^t) - \inner{ \nabla \phi_U^{t+1}(p^{t+1}), q^{t+1} - q^{t}} + \epsilon \inner{\nabla \phi_U^{t+1}(p^{t+1}), q^{t+1} - q^{t}} \\ 
& = \phi^{t+1}(\wtilq^{t+1}) - \phi^{t}(\wtilq^t) - \inner{ \nabla \phi_U^{t+1}(q^{t+1}), q^{t+1} - q^{t}} + \epsilon \inner{\nabla \phi_U^{t+1}(p^{t+1}), q^{t+1} - q^{t}} \\ 
& = \phi^{t+1}(\wtilq^{t+1}) - \phi^{t}(\wtilq^t) - \inner{ \nabla \phi_U^{t+1}(q^{t+1}), q^{t+1} - q^{t}} + \epsilon \inner{\hatL_U^{t+1}, q^{t+1} - q^{t}} ,
\end{align*} 
where the second step follows from the fact that $\wtilq^{t+1} - \wtilq^t = \rbr{1 - \epsilon}\rbr{q^{t+1}-q^t}$ according to \pref{eq:def_wtilq}; the fifth step follows from facts that $\nabla \phi^{t+1}_U(q^{t+1}) = \nabla \phi^{t+1}_U(p^{t+1}) + c\cdot \one_U$ for a Lagrange multiplier $c \in \fR$ and $\inner{c \cdot \one_K,q^{t+1}-q^t}=0$; the last step uses the fact that
$\nabla \phi^{t+1}(p^{t+1}) = \hatL^{t+1} + c'\cdot \one_K$ where $\hatL^{t} \triangleq \sum_{\tau< t} \hatl^{\tau}$ is the cumulative loss vector prior to round $t$ and $c' \in \fR$ is another Lagrange multiplier. 

By \pref{eq:condition4U}, \pref{eq:condition4V}, and the definition $B=B_1+B_2$, we can further show
\begin{align*}
  &\E \sbr{\sum_{t=1}^T  D^{t+1}(\wtilq^{t+1},p^{t+1}) - D^{t,t+1}(\wtilq^t,p^{t+1})}\\
&\leq B+ \E \sbr{\sum_{t=1}^T 
 \phi_U^{t+1}(q^{t+1}) - \phi_U^{t}(q^t) - \inner{ \nabla \phi_U^{t+1}(q^{t+1}), q^{t+1} - q^{t}} + \epsilon \inner{\hatL_U^{t+1}, q^{t+1} - q^{t}} } \\ 
 &= B- \E \sbr{
 \sum_{t=1}^T 
  D^{t, t+1}_U(q^t,q^{t+1}) } + \E \sbr{
 \sum_{t=1}^T \epsilon \inner{\hatL_U^{t+1}, q^{t+1} - q^{t}}
  }.
\end{align*}

Then, we can show the following by reorganizing the summation:
\begin{align*}
\E\sbr{ \epsilon \sum_{t=1}^{T} \inner{ \hatL^{t+1}_U, q^{t+1} - q^{t}} } & = \epsilon \cdot \E\sbr{ \inner{ \hatL^{T+1}_U, q^{T+1}} - \inner{ \hatL^{2}_U, q^{1}} + \sum_{t=2}^{T} \inner{q^t, \hatL^{t}_U - \hatL^{t+1}_U }   } \\
& \leq \epsilon \cdot \E\sbr{ \inner{ \hatL^{T+1}_U, \one_U}     } \leq \order \rbr{\epsilon KT},  
\end{align*}
where the second step follows $\hatL^{t}_U - \hatL^{t+1}_U=-\hatl^{t}_U$ and the last step uses $q^t_i \leq 1$ for $\forall t,i$.

Finally, by  telescoping, we have
 \begin{align*}
  \E\sbr{ \sum_{t=1}^{T} D^{t}(\wtilq^t,p^{t}) - D^{t+1}(\wtilq^{t+1},p^{t+1})   } 
  = \E\sbr{ D^{1}(\wtilq^1,p^{1}) - D^{t+1}(\wtilq^{T+1},p^{T+1})   } 
  \leq \E\sbr{D^{1}(\wtilq^1,p^{1})} ,
\end{align*}
where the second step holds since the Bregman divergence is non-negative. Combining all the inequalities above concludes the proof.
\end{proof}

\begin{lemma}[$\precost$ for Shannon Entropy] \label{lem:app_precost_shannon}
When using the Shannon entropy regularizer with $\alpha=0$ and $\theta = \sqrt{\nicefrac{1}{\log T}}$, \pref{alg:main_alg} ensures
\begin{itemize}
    \item \pref{eq:condition4U} with  $ B_1= 4\epsilon T\log K + 1$;
    \item  \pref{eq:condition4V} with $B_2=0$;
    \item $D^{1}(\wtilq^1,p^{1}) \leq  \frac{\log K}{\sqrt{\log T}}  + \clog \abr{V} \log \rbr{ \frac{ \abr{V} }{K \epsilon}  } + \frac{\epsilon \clog \abr{U}}{1-\epsilon}.$
\end{itemize}
Therefore, according to \pref{lem:reg_decomp_step1}, by picking $\epsilon=\frac{1}{T}$, our algorithm ensures 
\begin{align*}
\precost = \order\rbr{ \clog K\log T + D}.
\end{align*}
\end{lemma}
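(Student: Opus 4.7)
The plan is to verify the three bullets in sequence and then combine them via Lemma \ref{lem:reg_decomp_step1} with $\epsilon = 1/T$. For condition \eqref{eq:condition4V} with $B_2 = 0$: since $\wtilq^t_i = \epsilon/|V|$ is constant in $t$ for $i \in V$, the sum telescopes to $\phi^{T+1}_V(\wtilq^{T+1}) - \phi^1_V(\wtilq^1)$. The log-barrier part depends only on $\wtilq^t_V$ and hence vanishes. The Shannon entropy part equals $(\epsilon/|V|)\log((\epsilon/|V|)/e) \sum_{i \in V}(\gamma^{T+1}_i - \gamma^1_i)$, which is a negative number (for any $\epsilon$ smaller than $|V|/e$) times a nonnegative number (since $\gamma^t_i$ is non-decreasing in $t$), hence nonpositive.

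For condition \eqref{eq:condition4U}, I first telescope the LHS to reduce it to bounding $\E[\phi^{T+1}_U(\wtilq^{T+1}) - \phi^{T+1}_U(q^{T+1})] - \E[\phi^1_U(\wtilq^1) - \phi^1_U(q^1)]$. Using $\wtilq^t_i = (1-\epsilon)q^t_i$ for $i \in U$, the log-barrier contribution to $\phi^t_U(\wtilq^t) - \phi^t_U(q^t)$ reduces to the time-independent constant $-\clog|U|\log(1-\epsilon)$, which cancels across $t=1$ and $t=T+1$. For the Shannon entropy part, I apply the identity $h((1-\epsilon)q) - h(q) = q[(1-\epsilon)\log(1-\epsilon)+\epsilon] - \epsilon q\log q$ with $h(z) = z\log(z/e)$. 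The first piece has coefficient $(1-\epsilon)\log(1-\epsilon)+\epsilon = O(\epsilon^2)$ multiplying $\sum_{i \in U}(\gamma^{T+1}_i q^{T+1}_i - \gamma^1_i q^1_i)$; the second piece is $-\epsilon \sum_{i \in U}(\gamma^{T+1}_i q^{T+1}_i \log q^{T+1}_i - \gamma^1_i q^1_i \log q^1_i)$. Using the crude bounds $\gamma^t_i = O(\sqrt{T/\log T})$ (from the learning rate schedule with $\alpha=0$ and $\theta = 1/\sqrt{\log T}$) and $|\log q^t_i| = O(\log T)$ (ensured by the extra log-barrier lower-bounding $q^t_i$), both pieces together are bounded by $B_1 \leq 4\epsilon T\log K + 1$ for sufficiently small $\epsilon$.

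For $D^1(\wtilq^1, p^1)$: at $t=1$ the learning rate $\gamma^1_i = \theta$ is arm-independent, so the regularizer is symmetric on the simplex, giving $p^1 = \one/K$, and by symmetry of the Bregman projection onto $\Omega_U$ we get $q^1 = \one_U/|U|$, so $\wtilq^1_i = (1-\epsilon)/|U|$ for $i \in U$ and $\wtilq^1_i = \epsilon/|V|$ for $i \in V$. Computing the Bregman divergence component-wise, the Shannon entropy piece evaluates to $O(\theta \log K) = O(\log K/\sqrt{\log T})$; the log-barrier piece on $V$ evaluates exactly to $\clog |V| \log(|V|/(K\epsilon))$ via the closed form of the Bregman divergence of $-\log$, namely $D_{-\log}(x,y) = -\log(x/y) + x/y - 1$; and the log-barrier piece on $U$, where $\wtilq^1_i$ is a small multiplicative perturbation of $p^1_i = 1/K$, is controlled to $O(\epsilon\clog|U|/(1-\epsilon))$ using the same closed form with the argument $(1-\epsilon)K/|U|$ compared to $1$. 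Plugging $\epsilon = 1/T$ into Lemma \ref{lem:reg_decomp_step1}: the $\order(\epsilon KT) = \order(K)$ term, $B_1 = O(\log K)$, $B_2 = 0$, and the three components of $D^1(\wtilq^1,p^1)$ are all dominated by $\clog K\log T$ once $\clog = \clogsha = 162\log K$, yielding $\precost = O(\clog K\log T + D)$.

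The main obstacle is verifying condition \eqref{eq:condition4U}: the $(1-\epsilon)$ rescaling interacts with the time-varying learning rate inside the Shannon entropy term in a way that does not immediately telescope cleanly, and bounding the residual requires both a uniform upper bound on $\gamma^{T+1}_i$ and a lower bound on $q^{T+1}_i$ (the latter supplied precisely by the extra log-barrier in the regularizer, which is one of the reasons that extra term was introduced in the first place).
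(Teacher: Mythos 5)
Your overall route is the same as the paper's (telescoping separately over $U$ and $V$, using $\wtilq^t_i=\epsilon/\abr{V}$ on $V$, the identity $h((1-\epsilon)q)-h(q)=q\sbr{(1-\epsilon)\log(1-\epsilon)+\epsilon}-\epsilon q\log q$ on $U$, and the exact first-round computation for $D^1(\wtilq^1,p^1)$), and the bullets for \pref{eq:condition4V} and for $D^1(\wtilq^1,p^1)$ are handled correctly. The gap is in your verification of \pref{eq:condition4U}: you bound the boundary term $-\epsilon\sum_{i\in U}\gamma^{T+1}_i q^{T+1}_i\log q^{T+1}_i$ by invoking ``$\abr{\log q^t_i}=\order(\log T)$, ensured by the extra log-barrier lower-bounding $q^t_i$.'' No such lower bound is established anywhere in the paper, and it does not follow from the log-barrier in any simple way: the log-barrier only yields $q^t_i\gtrsim \clog/(\wh{L}^t_{\max}+K\clog)$, and since the importance-weighted losses satisfy $\hatl^t_{i^t}\le 1/p^t_{i^t}$, the crude recursion only gives $\wh{L}^T_{\max}\lesssim K\clog(1+1/\clog)^T$, i.e.\ a bound on $\abr{\log q^t_i}$ of order $T/\clog$, not $\order(\log T)$ (the multiplicative-stability lemmas likewise only prevent halving per round). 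So as written this step is unsupported, and your closing paragraph doubles down on it by asserting that a lower bound on $q^{T+1}_i$ is what the extra log-barrier is for.

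The good news is that the step is also unnecessary, which is how the paper proceeds: after telescoping, the $t=1$ boundary term is dropped using $x\log x-x\le 0$ for $x\in[0,1]$, and the $t=T+1$ boundary term is bounded as $\epsilon\sum_{i\in U}\gamma^{T+1}_i q^{T+1}_i\rbr{\log(1/q^{T+1}_i)+1}\le \epsilon\,\gamma^{T+1}_{\max}\rbr{1+\sum_{i\in U}q^{T+1}_i\log(1/q^{T+1}_i)}\le \epsilon\,\gamma^{T+1}_{\max}\rbr{1+\log\abr{U}}$ by Jensen's inequality, which requires no lower bound on $q^{T+1}_i$ at all (the function $x\log(1/x)$ is harmless as $x\to 0$). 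Combined with a uniform bound on $\gamma^{T+1}_i$ (the paper uses $\gamma^{T+1}_i\le 2T$; under the $\alpha=0,\ \theta=\sqrt{1/\log T}$ configuration stated in the lemma your $\order(\sqrt{T/\log T})$ bound also suffices) and with the $\order(\epsilon^2)\cdot\sum_{i\in U}\gamma^{T+1}_i q^{T+1}_i$ piece you already identified, this yields $B_1\le 4\epsilon T\log K+1$ and the rest of your argument goes through. One further minor imprecision: in the $D^1(\wtilq^1,p^1)$ computation, the linear parts of the per-coordinate log-barrier Bregman divergences do not cancel within $V$ alone (they cancel only after summing over all of $[K]$, or equivalently because $\nabla\phi^1(p^1)$ is a multiple of $\one_K$ and both points lie in the simplex), so the $V$-piece is an upper bound rather than an exact identity; this does not affect the conclusion.
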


\begin{proof}
We first show \pref{eq:condition4V}. For the entries in $V$, we have 
\begin{align*}
 \phi^{t+1}_V(\wtilq^{t+1}) - \phi^{t}_V(\wtilq^t) 
& =  \sum_{i \in V} \rbr{ \gamma^{t+1}_i {\wtilq^{t+1}_i\log\rbr{ \frac{\wtilq^{t+1}_i}{e}  }} - \gamma^t_i {\wtilq^{t}_i\log\rbr{ \frac{\wtilq^{t}_i}{e}  } } } \\
& \quad - \clog \cdot \sum_{i \in V} \rbr{ \log{\wtilq^{t+1}_i} - \log{\wtilq^{t}_i} } \\
& = \sum_{i \in V} \rbr{ \gamma^{t+1}_i - \gamma^t_i} \cdot \frac{\epsilon}{\abr{V}}\log\rbr{ \frac{\epsilon}{e|V|} } \leq 0 ,
\end{align*}
where the second step follows from the definition of $\wtilq$ in \pref{eq:def_wtilq} which states $\wtilq^t_i = \frac{\epsilon}{|V|}$ for any $i\in V$. 

For the entries in $U$, by direct calculation, we have 
\begin{align}
& \phi^{t+1}_U(\wtilq^{t+1}) - \phi^{t}_U(\wtilq^t)\notag \\
& = \sum_{i \in U} \rbr{ \gamma^{t+1}_i \wtilq^{t+1}_i{ \log \rbr{\frac{\wtilq^{t+1}_i}{e}} } -\gamma^{t}_i \wtilq^{t}_i{ \log \rbr{\frac{\wtilq^{t}_i}{e}} } 
}  - \clog \sum_{i \in U} \rbr{ \log \wtilq^{t+1}_i - \log \wtilq^{t}_i  }\notag \\
& = \rbr{1-\epsilon}\sum_{i \in U} \rbr{ \gamma^{t+1}_i q^{t+1}_i{ \log \rbr{\frac{\wtilq^{t+1}_i}{e}} } -\gamma^{t}_i q^{t}_i{ \log \rbr{\frac{\wtilq^{t}_i}{e}} }
}  - \clog \sum_{i \in U} \rbr{ \log q^{t+1}_i - \log q^{t}_i  }\notag  \\
& =\rbr{1-\epsilon} \sum_{i \in U} \rbr{ \gamma^{t+1}_i q^{t+1}_i\log \rbr{\frac{q^{t+1}_i}{e}} -\gamma^{t}_i q^{t}_i \log \rbr{\frac{q^{t}_i}{e}} 
+\log\rbr{1-\epsilon}\rbr{\gamma^{t+1}_iq^{t+1}_i  -\gamma^{t}_i q^{t}_i} 
} \notag \\
& \quad - \clog \sum_{i \in U} \rbr{ \log q^{t+1}_i - \log q^{t}_i  } \notag \\
&= {\phi_U^{t+1} (q^{t+1}) - \phi_U^{t} (q^t)} \notag \\
&\quad +\rbr{1-\epsilon} \log \rbr{1-\epsilon} \sum_{i \in U} \rbr{\gamma^{t+1}_iq^{t+1}_i  -\gamma^{t}_i q^{t}_i} \label{eq:app_precost_Tsallis_term1} \\
& \quad - \epsilon \sum_{i \in U} \rbr{ \gamma^{t+1}_i q^{t+1}_i{ \log \rbr{\frac{q^{t+1}_i}{e}} } - \gamma^{t}_i q^{t}_i{ \log \rbr{\frac{q^{t}_i}{e}}  } }.  \label{eq:app_precost_Tsallis_term2} 
\end{align}
Note that, taking the summation of \pref{eq:app_precost_Tsallis_term1} over all rounds yields  
\begin{align*}
& \sum_{t=1}^{T} \rbr{1-\epsilon} \log \rbr{1-\epsilon} \sum_{i \in U} \rbr{\gamma^{t+1}_iq^{t+1}_i  -\gamma^{t}_i q^{t}_i} \\
& =  \rbr{1-\epsilon} \log \rbr{1-\epsilon} \sum_{i \in U} \rbr{ \gamma^{T+1}_iq^{T+1}_i  -\gamma^{1}_i q^{1}_i  } \\
& \leq - \rbr{1-\epsilon} \log \rbr{1-\epsilon} \sum_{i \in U}\gamma^{1}_i q^{1}_i  \leq \sum_{i \in U}\gamma^{1}_i q^{1}_i \leq 1,
\end{align*}
where the second and third step follow from the fact $-1\leq \rbr{1-\epsilon} \log \rbr{1-\epsilon} \leq 0$, and the last step uses the definition of the learning rates. 

On the other hand, by summing \pref{eq:app_precost_Tsallis_term2} over all rounds, we have  
\begin{align*}
& - \epsilon \sum_{t=1}^{T} \sum_{i \in U} \rbr{ \gamma^{t+1}_i q^{t+1}_i\rbr{ \log \rbr{q^{t+1}_i} - 1 } - \gamma^{t}_i q^{t}_i\rbr{ \log \rbr{q^{t}_i} - 1 } } \\
& = - \epsilon \sum_{i \in U}  \rbr{ \gamma^{T+1}_i q^{T+1}_i\rbr{ \log \rbr{q^{T+1}_i} - 1 } - \gamma^{1}_i q^{1}_i\rbr{ \log \rbr{q^{1}_i} - 1 } } \\ 
& \leq \epsilon \sum_{i \in U} \gamma^{T+1}_i q^{T+1}_i\rbr{ \log \rbr{\frac{1}{q^{T+1}_i}} + 1 } \\
& \leq 2\epsilon T \rbr{ 1 + \sum_{i \in U} q^{T+1}_i \log \rbr{\frac{1}{q^{T+1}_i}}  } \\
& \leq 2\epsilon T \rbr{1 + \log \abr{U}}, \end{align*}
where the second step uses the fact that $x\log x - x \leq 0$ for any $x\in [0,1]$; the third step follows from the definition of $\gamma^t_i$ which ensures $\gamma^{T+1}_i \leq \sqrt{1+T\cdot T} \leq 2T$; the last step follows from Jensen's inequality.

Finally, we bound $D^{1}(\wtilq^1,p^1)$ as:
\begin{align*}
D^{1}(\wtilq^1,p^1) & = \sum_{i \in [K]} \gamma^1_i \rbr{ \wtilq^1_i \log \wtilq^1_i - p^1_i \log p^1_i} + \sum_{i \in [K]} \gamma^1_i \rbr{ p^1_i -\wtilq^1_i } - \clog \sum_{i \in [K]} \rbr{\log \rbr{\wtilq^1_i} - \log \rbr{p^1_i}} \\
& \leq \sum_{i \in [K]} \gamma^1_i \rbr{ - p^1_i \log p^1_i  } + \clog \abr{V} \log \rbr{ \frac{ \abr{V} }{K \epsilon}  } + \clog \abr{U} \log \rbr{\frac{1}{1-\epsilon}} \\
& \leq \frac{\log K}{\sqrt{\log T}}   + \clog \abr{V} \log \rbr{ \frac{ \abr{V} }{K \epsilon}  }  + \clog \abr{U} \log \rbr{\frac{1}{1-\epsilon}} \\
& \leq \frac{\log K}{\sqrt{\log T}}   + \clog \abr{V} \log \rbr{ \frac{ \abr{V} }{K \epsilon}  } + \frac{\epsilon \clog \abr{U}}{1-\epsilon},
\end{align*}
where the second step follows from the fact that $x\log x \leq 0$ for $x\in[0,1]$; the third step follows from the facts that $p^1_i = \frac{1}{K}$ and $\gamma^1_i = \sqrt{\nicefrac{1}{\log T}}$ for any $i\in [K]$; the forth step uses the fact that $\log\rbr{ \frac{1}{1-\epsilon} } = \log\rbr{ 1 + \frac{\epsilon}{1-\epsilon} } \leq \frac{\epsilon}{1-\epsilon}$ for any $\epsilon \in (0,1)$.
\end{proof}

\begin{lemma}[$\precost$ for Log-barrier] \label{lem:app_precost_log} 
When using the log-barrier regularizer with $\alpha=1$ and $\theta = \sqrt{\nicefrac{1}{\log T} }$, \pref{alg:main_alg} ensures
\begin{itemize}
    \item   \pref{eq:condition4U} with $ B_1=\frac{\epsilon\abr{U}T}{1-\epsilon}$;
    \item \pref{eq:condition4V} with $B_2=\selftwo\rbr{ \frac{\log^2\rbr{ \frac{|V|}{\epsilon} }}{\log T}  }$;
    \item $    D^{1}(\wtilq^1,p^{1}) \leq \frac{|V|}{\sqrt{\log T}} \log \rbr{\frac{|V|}{K \epsilon }}  +  \frac{|U|}{\sqrt{\log T}}  \log \left(\frac{|U|}{(1-\epsilon)K}\right) + \clog |V| \log \rbr{ \frac{|V|}{K \epsilon }} + \frac{\epsilon \clog \abr{U}}{1-\epsilon}.$
\end{itemize}
Therefore, according to  \pref{lem:reg_decomp_step1}, by picking $\epsilon=\frac{1}{T}$, our algorithm ensures 
\begin{align*}
\precost = \order\rbr{ \selftwo\rbr{\log T} + \clog K\log T + D}.
\end{align*}
\end{lemma}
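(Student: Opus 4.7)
The strategy is to verify each of the three bullet items separately by direct computation, and then plug into Lemma A.2 with $\epsilon = 1/T$. Throughout, I exploit the simple closed form of the log-barrier regularizer $\phi^t(p) = -\sum_{i \in [K]} (\gamma^t_i + \clog) \log p_i$, together with the fact that $\wtilq^t_i = (1-\epsilon) q^t_i$ for $i \in U$ and $\wtilq^t_i = \epsilon/|V|$ for $i \in V$.

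\textbf{Step 1: condition \eqref{eq:condition4U}.} For $i \in U$, I use $\log \wtilq^t_i = \log(1-\epsilon) + \log q^t_i$ to write
\[
\phi^{t+1}_U(\wtilq^{t+1}) - \phi^t_U(\wtilq^t) = \big(\phi^{t+1}_U(q^{t+1}) - \phi^t_U(q^t)\big) - \log(1-\epsilon)\sum_{i \in U}(\gamma^{t+1}_i - \gamma^t_i).
\]
Telescoping in $t$ gives residual $-\log(1-\epsilon)\sum_{i\in U}(\gamma^{T+1}_i - \gamma^1_i) \leq \tfrac{\epsilon}{1-\epsilon}\sum_{i \in U}\gamma^{T+1}_i$. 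Using the crude bound $\gamma^{T+1}_i \leq \theta\sqrt{T+1} \leq T$ yields $B_1 = \tfrac{\epsilon |U| T}{1-\epsilon}$ as stated.

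\textbf{Step 2: condition \eqref{eq:condition4V}.} On $V$, $\wtilq^t_i = \epsilon/|V|$ is constant in $t$, so
\[
\phi^{t+1}_V(\wtilq^{t+1}) - \phi^t_V(\wtilq^t) = \log(|V|/\epsilon)\sum_{i \in V}(\gamma^{t+1}_i - \gamma^t_i),
\]
and telescoping gives $\log(|V|/\epsilon)\sum_{i \in V}\gamma^{T+1}_i$. Plugging in $\gamma^{T+1}_i = \theta\sqrt{1 + \sum_{\tau\leq T}\max(p^\tau_i,1/T)}$ with $\theta = 1/\sqrt{\log T}$ and $\alpha = 0$, together with $\sqrt{a+b}\leq\sqrt{a}+\sqrt{b}$, produces exactly a term of the form $\selftwo(\log^2(|V|/\epsilon)/\log T)$ after taking expectation, plus a lower-order additive $O(|V|\log(|V|/\epsilon)/\sqrt{\log T})$ term which can be absorbed.

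\textbf{Step 3: initial Bregman divergence.} Since $p^1_i = 1/K$ and $\gamma^1_i = \theta$ is the same across $i$, the linear term $\langle \nabla\phi^1(p^1), \wtilq^1 - p^1\rangle$ collapses to $0$, giving
\[
D^1(\wtilq^1, p^1) = (\theta + \clog)\big[|U|\log(|U|/((1-\epsilon)K)) + |V|\log(|V|/(K\epsilon))\big].
\]
I then split this into the $\theta$ part (which gives the two $1/\sqrt{\log T}$ terms in the claim) and the $\clog$ part, where $|U|\log(|U|/((1-\epsilon)K)) \leq |U|\log(1/(1-\epsilon)) \leq |U|\epsilon/(1-\epsilon)$ using $|U|\log(|U|/K)\leq 0$ and $\log(1+x)\leq x$.

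\textbf{Step 4: conclusion.} Plugging $\epsilon = 1/T$ into Lemma A.2, the preprocessing cost becomes $O\big(D^1(\wtilq^1,p^1) + \epsilon K T + B_1 + B_2 + D\big)$. With the chosen $\epsilon$, $B_1 = O(K)$, $\epsilon KT = O(K)$, $\log(|V|/\epsilon) = O(\log T)$ so $B_2 = O(\selftwo(\log T))$, and $D^1(\wtilq^1,p^1) = O(\clog K \log T)$, yielding the claimed $\order(\selftwo(\log T) + \clog K \log T + D)$.

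\textbf{Main obstacle.} Each individual step is mechanical, but the delicate point is step 2: the upper bound on $B_2$ must be packaged as a self-bounding quantity $\selftwo(\cdot)$ rather than a naive $O(K\sqrt{T})$. This is where the choice $\alpha = 0$ for log-barrier is essential, because it makes $\gamma^{T+1}_i$ scale like $\sqrt{\sum_\tau p^\tau_i}$, which is exactly what is needed to match the structure of $\selftwo$. Without this alignment between the learning-rate exponent and the self-bounding target, the preprocessing term would not be absorbable into a BOBW bound.
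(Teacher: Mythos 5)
Your proposal is correct and follows essentially the same route as the paper's own proof: the same telescoping of the $\log(1-\epsilon)$ shift on $U$ and of the constant $\wtilq^t_V=\tfrac{\epsilon}{|V|}\one_V$ contribution on $V$, the same packaging of $\sum_{i\in V}\gamma^{T+1}_i\log(|V|/\epsilon)$ as $\selftwo(\cdot)$ via the $\alpha=0$ learning rate, the same computation of $D^1(\wtilq^1,p^1)$ using the vanishing linear term at the uniform $p^1$, and the same conclusion via \pref{lem:reg_decomp_step1} with $\epsilon=1/T$. Your use of $\alpha=0$ (rather than the $\alpha=1$ appearing in the lemma statement, which is a typo) matches what the paper's proof actually relies on, and your explicit handling of the lower-order additive term from the ``$+1$'' inside $\gamma^{T+1}_i$ is if anything slightly more careful than the paper's.
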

\begin{proof}
We first show that \pref{eq:condition4V} holds. For the entries in $V$, we have 
\begin{align*}
 \phi^{t+1}_V(\wtilq^{t+1}) - \phi^{t}_V(\wtilq^t) 
& =  - \sum_{i \in V} \rbr{ \gamma^{t+1}_i\log\rbr{\wtilq^{t+1}_i} - \gamma^t_i \log\rbr{\wtilq^{t}_i} } - \clog \cdot \sum_{i \in V} \rbr{ \log{\wtilq^{t+1}_i} - \log{\wtilq^{t}_i} } \\
& = \sum_{i \in V} \rbr{ \gamma^{t+1}_i - \gamma^t_i}  \log\rbr{ \frac{|V|}{\epsilon} },
\end{align*}
where the second step follows from the definition of $\wtilq^t$ in \pref{eq:def_wtilq} that $\wtilq^t_i = \frac{\epsilon}{|V|}$ for all $i\in V$.
Taking the summation over all rounds, we have 
\begin{align*}
\sum_{t=1}^{T} \sum_{i \in V} \rbr{ \gamma^{t+1}_i - \gamma^t_i}  \log\rbr{ \frac{|V|}{\epsilon} } \leq \sum_{i \in V} \gamma^{T+1}_i  \log\rbr{ \frac{|V|}{\epsilon} } ,
\end{align*}
which could be bounded with $\selftwo(\cdot)$ with our specific learning rate schedule as: 
\begin{align*}
\E\sbr{ \sum_{i \in V} \gamma^{T+1}_i  \log\rbr{ \frac{|V|}{\epsilon} }} = \order\rbr{\sum_{i \in V}\sqrt{ \frac{\log^2\rbr{ \frac{|V|}{\epsilon} }}{\log T}\cdot \rbr{ \sum_{t=1}^{T} p^t_i  }  }} = \order\rbr{ \selftwo\rbr{ \frac{\log^2\rbr{ \frac{|V|}{\epsilon} }}{\log T}  } }.
\end{align*}

For the entries in $U$, we have 
\begin{align*}
& \phi^{t+1}_U(\wtilq^{t+1}) - \phi^{t}_U(\wtilq^t) \\
& = - \sum_{i \in U} \rbr{ \gamma^{t+1}_i \log\rbr{\wtilq^{t+1}_i} - \gamma^t_i \log\rbr{\wtilq^{t}_i}} - \clog \cdot \sum_{i \in U} \rbr{ \log{\wtilq^{t+1}_i} - \log{\wtilq^{t}_i} } \\
& = - \sum_{i \in U}  \rbr{ \gamma^{t+1}_i \log\rbr{q^{t+1}_i} - \gamma^t_i \log\rbr{q^{t}_i}}   - \clog \cdot \sum_{i \in U} \rbr{ \log{q^{t+1}_i}  -\log{q^{t}_i}   } \\
& \quad +  \log\rbr{ \frac{1}{1-\epsilon} }\sum_{i \in U}\rbr{\gamma^{t+1}_i - \gamma^t_i  } \\
& \leq \phi^{t+1}_U(q^{t+1}) - \phi^{t}_U(q^{t}) +  { \frac{\epsilon}{1-\epsilon} }\sum_{i \in U}  \rbr{ \gamma^{t+1}_i - \gamma^t_i }.
\end{align*}

By summing over all rounds and direct calculation, we have 
\begin{align*} 
\frac{\epsilon}{1-\epsilon} \sum_{t=1}^{T} \sum_{i \in U}  \rbr{ \gamma^{t+1}_i - \gamma^t_i } \leq \frac{\epsilon}{1-\epsilon} \sum_{i \in U} \gamma^{T+1}_i \leq \frac{\epsilon}{1-\epsilon} \abr{U}T.
\end{align*}

Finally, we bound $D^{1}(\wtilq^1,p^1)=
\phi^{1}(\wtilq^1) - \phi^{1}(p^1) - \inner{ \nabla \phi^{1}(p^1) ,\wtilq^1-p^1 } = \phi^{1}(\wtilq^1) - \phi^{1}(p^1)$, because $\sum_{i \in [K]} \wtilq^1_i =\sum_{i \in [K]} p^1_i$ and $p^1=\frac{1}{K}\cdot \one_K$ which implies that all entries of $\nabla \phi^{1}(p^1)$ are equal.
Note that $p^1=\frac{1}{K}\cdot \one_K$, $\wtilq^1_U=\frac{1-\epsilon}{|U|}\cdot \one_{U}$, and $\wtilq^1_V=\frac{\epsilon}{|V|}\cdot \one_{V}$. For $\phi^{1}_V(\wtilq^1) - \phi^{1}_V(p^1)$, we have
\begin{align*}
\phi^{1}_V(\wtilq^1) - \phi^{1}_V(p^1)&=  \sum_{i \in V} \gamma^{1}_i  \log \left(\frac{p^{1}_i}{\wtilq^{1}_i}\right) - \clog \sum_{i \in V} \rbr{\log \rbr{\wtilq^1_i} - \log \rbr{p^1_i}}\\
   &=\sum_{i \in V} \gamma^{1}_i  \log \left(\frac{1/K}{\epsilon/|V|}\right)  + \clog \abr{V} \log \rbr{ \frac{ \abr{V} }{K \epsilon}  }\\
   &\leq \frac{|V|}{ \sqrt{\log T} } \log \rbr{\frac{|V|}{K \epsilon }}  + \clog \abr{V} \log \rbr{ \frac{ \abr{V} }{K \epsilon}  } ,
\end{align*}
where $\clog \sum_{i \in V} \rbr{\log \rbr{\wtilq^1_i} - \log \rbr{p^1_i}}$ 
is bounded via a similar way used to bound the Shannon entropy. Similarly, for $ \phi^{1}_U(\wtilq^1) - \phi^{1}_U(p^1)$, we have 
\begin{align*}
    \phi^{1}_U(\wtilq^1) - \phi^{1}_U(p^1)&= \sum_{i \in U}  \gamma^{1}_i \log \left(\frac{p^{1}_i}{\wtilq^{1}_i}\right) - \clog \sum_{i \in U} \rbr{\log \rbr{\wtilq^1_i} - \log \rbr{p^1_i}} \\
    &\leq  \sum_{i \in U} \gamma^{1}_i \log \left(\frac{1/K}{(1-\epsilon)/|U|}\right) + \frac{\epsilon \clog \abr{U}}{1-\epsilon} \\
    &\leq \frac{|U|}{ \sqrt{\log T}  }  \log \left(\frac{|U|}{(1-\epsilon)K}\right) + \frac{\epsilon \clog \abr{U}}{1-\epsilon} .
\end{align*}
Combining bounds for $\phi^{1}_V(\wtilq^1) - \phi^{1}_V(p^1)$ and $\phi^{1}_U(\wtilq^1) - \phi^{1}_U(p^1)$, we have $D^{1}(\wtilq^1,p^1) $ bounded by
\begin{align*}
\frac{|V|}{ \sqrt{\log T} } \log \rbr{\frac{|V|}{K \epsilon }}  + \frac{|U|}{ \sqrt{\log T}  }  \log \left(\frac{|U|}{(1-\epsilon)K}\right)  + \clog |V| \log \rbr{ \frac{|V|}{K \epsilon }} + \frac{\epsilon \clog \abr{U}}{1-\epsilon}.
\end{align*}
\end{proof}

\begin{lemma} [$\precost$ for $\beta$-Tsallis Entropy] \label{lem:app_precost_tsallis}For any $\beta\in(0,1)$, when using the $\beta$-Tsallis entropy regularizer with $\alpha=\beta$ and $\theta = \sqrt{\frac{1-\beta}{\beta} }$, \pref{alg:main_alg} ensures 
\begin{itemize}
    \item  \pref{eq:condition4U} with $ B_1= \frac{2\epsilon KT}{\sqrt{\beta\rbr{1-\beta}}}$;
    \item \pref{eq:condition4V} with $B_2=0$;
    \item $ D^{1}(\wtilq^1,p^{1}) \leq  \frac{K}{\sqrt{\beta\rbr{1-\beta}}}+ \clog |V| \log \rbr{ \frac{|V|}{K \epsilon }  } +\frac{\epsilon \clog \abr{U}}{1-\epsilon}.$
\end{itemize}
Therefore, according to \pref{lem:reg_decomp_step1}, by picking $\epsilon=\frac{1}{T}$, our algorithm further ensures  
\begin{align*}
\precost = \order\rbr{ \frac{K}{\sqrt{\beta\rbr{1-\beta}}} + \clog K\log T  + D }.
\end{align*}
\end{lemma}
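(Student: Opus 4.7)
The plan is to verify each of the three bulleted claims separately and then invoke \pref{lem:reg_decomp_step1} with the choice $\epsilon=1/T$. The overall structure mimics the Shannon-entropy proof in \pref{lem:app_precost_shannon}, but the $p_i^\beta$ term (rather than $p_i \log p_i$) forces me to exploit the multiplicative form $\wtilq^t_U=(1-\epsilon)q^t$ through the factor $(1-\epsilon)^\beta$ instead of an additive $\log(1-\epsilon)$ shift.

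I will start with the $V$-condition, which is the easiest: since $\wtilq^t_i=\epsilon/|V|$ for every $i\in V$ is constant in $t$, the log-barrier contributions telescope to zero, and the Tsallis contribution is $\sum_{i\in V}-\frac{1}{1-\beta}(\gamma^{t+1}_i-\gamma^t_i)(\epsilon/|V|)^\beta\leq 0$ because the learning rate is non-decreasing and $\frac{1}{1-\beta}>0$. Hence $B_2=0$ works.

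For the $U$-condition, I will use $\wtilq^t_i=(1-\epsilon)q^t_i$ for $i\in U$ to get
\[
\phi^{t+1}_U(\wtilq^{t+1})-\phi^t_U(\wtilq^t)
=\phi^{t+1}_U(q^{t+1})-\phi^t_U(q^t)+\bigl((1-\epsilon)^\beta-1\bigr)\bigl(\phi^{t+1}_{U,\mathrm{Ts}}(q^{t+1})-\phi^t_{U,\mathrm{Ts}}(q^t)\bigr),
\]
where the log-barrier difference matches exactly (the additive $\log(1-\epsilon)$ cancels between consecutive $t$), and $\phi^{t}_{U,\mathrm{Ts}}$ denotes the Tsallis part of $\phi^t_U$. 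Telescoping the error term over $t=1,\dots,T$ leaves $((1-\epsilon)^\beta-1)(\phi^{T+1}_{U,\mathrm{Ts}}(q^{T+1})-\phi^1_{U,\mathrm{Ts}}(q^1))$. Using $1-(1-\epsilon)^\beta\leq \beta\epsilon$ (concavity of $x\mapsto x^\beta$), the bound $|\phi^{T+1}_{U,\mathrm{Ts}}(q^{T+1})|\leq \frac{|U|}{1-\beta}\max_i \gamma^{T+1}_i$ since $q^{T+1}_i\leq 1$, and the crude estimate $\max_i \gamma^{T+1}_i \leq \sqrt{(1-\beta)/\beta}\cdot (T+1)$ for $\beta\leq 1/2$ and $\leq 2\sqrt{(1-\beta)/\beta}\,T^\beta$ for $\beta>1/2$ (the latter coming from $\sum_\tau(z^\tau_i)^{1-2\beta}\leq T\cdot T^{2\beta-1}$), I will check that in both regimes $\beta T^{\max(1/2,\beta)}\leq T$ so that the product is at most $2\epsilon KT/\sqrt{\beta(1-\beta)}$, yielding the claimed $B_1$. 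The main subtlety here is that the learning-rate growth is regime-dependent, so I need to split on $\beta\lessgtr 1/2$ when bounding $\gamma^{T+1}_i$, but the target $B_1$ is loose enough to absorb both cases.

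For the third bullet, because $p^1=\one_K/K$ and $\gamma^1_i=\theta$ is arm-independent, $\nabla\phi^1(p^1)$ is a scalar multiple of $\one_K$ and $\sum_i(\wtilq^1_i-p^1_i)=0$, so $D^1(\wtilq^1,p^1)=\phi^1(\wtilq^1)-\phi^1(p^1)$. A direct computation separates into a Tsallis piece $\sqrt{1/(\beta(1-\beta))}\bigl[K^{1-\beta}-|U|^{1-\beta}(1-\epsilon)^\beta-|V|^{1-\beta}\epsilon^\beta\bigr]\leq K/\sqrt{\beta(1-\beta)}$ and a log-barrier piece $\clog\bigl[|U|\log(|U|/((1-\epsilon)K))+|V|\log(|V|/(K\epsilon))\bigr]$, the first summand of which is at most $\epsilon\clog|U|/(1-\epsilon)$ using $|U|\leq K$ and $\log(1/(1-\epsilon))\leq \epsilon/(1-\epsilon)$. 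Finally, plugging $B=B_1+B_2$, the bound on $D^1(\wtilq^1,p^1)$, and $\epsilon=1/T$ into \pref{lem:reg_decomp_step1} and simplifying (noting that $\epsilon KT=K$, $\epsilon \clog |U|/(1-\epsilon)=O(\clog)$, and $\clog|V|\log(|V|/(K\epsilon))=O(\clog K \log T)$) gives the advertised $\precost$ bound. I expect the only genuinely delicate step is the case analysis on $\beta$ when bounding $\gamma^{T+1}_i$; everything else is routine algebra.
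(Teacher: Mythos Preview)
Your proposal is correct and follows essentially the same approach as the paper: the same decomposition for the $V$-part, the same factorization $\wtilq^t_U=(1-\epsilon)q^t_U$ leading to the $(1-(1-\epsilon)^\beta)$ correction term for the $U$-part, and the same observation that $\nabla\phi^1(p^1)\propto\one_K$ for the $D^1$-part.

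The only notable difference is your case split on $\beta\lessgtr 1/2$ when bounding $\gamma^{T+1}_i$, which the paper sidesteps with a single uniform estimate: since $\max\{p^t_i,1/T\}\in[1/T,1]$ and $1-2\beta\geq -1$, one has $(\max\{p^t_i,1/T\})^{1-2\beta}\leq (\max\{p^t_i,1/T\})^{-1}\leq T$ for every $\beta\in(0,1)$, hence $\gamma^{T+1}_i\leq 2T\sqrt{(1-\beta)/\beta}$ directly. Paired with the cruder inequality $1-(1-\epsilon)^\beta\leq\epsilon$ (from $(1-\epsilon)^\beta\geq 1-\epsilon$) instead of your sharper $\beta\epsilon$, this gives the target $B_1$ in one line. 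Your tighter $\beta\epsilon$ factor is precisely what compensates for your regime-dependent (and in the $\beta\leq 1/2$ case unnecessarily loose) bounds on $\gamma^{T+1}_i$, so both routes land on the same $B_1$; the paper's version is just shorter and avoids the delicacy you flagged.
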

\begin{proof}
Let us first show that \pref{eq:condition4V} holds. For the entries in $V$, we have 
\begin{align*}
 \phi^{t+1}_V(\wtilq^{t+1}) - \phi^{t}_V(\wtilq^t) 
& =  - \frac{1}{1-\beta}\sum_{i \in V} \rbr{ \gamma^{t+1}_i \rbr{\wtilq^{t+1}_i}^\beta - \gamma^t_i \rbr{\wtilq^{t}_i}^\beta } - \clog \cdot \sum_{i \in V} \rbr{ \log{\wtilq^{t+1}_i} - \log{\wtilq^{t}_i} } \\
& = - \frac{1}{1-\beta}\sum_{i \in V} \rbr{ \gamma^{t+1}_i - \gamma^t_i} \cdot \rbr{ \frac{\epsilon}{|V|} }^{\beta} \leq 0 ,
\end{align*}
where the second step follows from the definition of $\wtilq^t$ in \pref{eq:def_wtilq} that $\wtilq^t_i = \frac{\epsilon}{|V|}$ for any $i\in V$. Since $\phi^{t+1}_V(\wtilq^{t+1}) - \phi^{t}_V(\wtilq^t) \leq 0$, we have $\sum_{t=1}^T \phi^{t+1}_V(\wtilq^{t+1}) - \phi^{t}_V(\wtilq^t) \leq 0$, which gives $B_2 =0$ for \pref{eq:condition4V}.

For the entries in $U$, we have 
\begin{align*}
& \phi^{t+1}_U(\wtilq^{t+1}) - \phi^{t}_U(\wtilq^t) \\
& = - \frac{1}{1-\beta}\sum_{i \in U} \rbr{ \gamma^{t+1}_i \rbr{\wtilq^{t+1}_i}^\beta - \gamma^t_i \rbr{\wtilq^{t}_i}^\beta } - \clog \cdot \sum_{i \in U} \rbr{ \log{\wtilq^{t+1}_i} - \log{\wtilq^{t}_i} } \\
& = - \frac{\rbr{1-\epsilon}^{\beta}}{1-\beta}\sum_{i \in U}  \rbr{ \gamma^{t+1}_i \rbr{q^{t+1}_i}^\beta - \gamma^t_i \rbr{q^{t}_i}^\beta }   - \clog \cdot \sum_{i \in U} \rbr{  \log\rbr{1-\epsilon} + \log{q^{t+1}_i} - \log\rbr{1-\epsilon} -\log{q^{t}_i}   } \\
& = - \frac{\rbr{1-\epsilon}^{\beta}}{1-\beta}\sum_{i \in U}  \rbr{ \gamma^{t+1}_i \rbr{q^{t+1}_i}^\beta - \gamma^t_i \rbr{q^{t}_i}^\beta } - \clog \cdot \sum_{i \in U} \rbr{  \log{q^{t+1}_i} -\log{q^{t}_i}} \\
& = \phi^{t+1}_U(q^{t+1}) - \phi^{t}_U(q^{t}) + \frac{1-\rbr{1-\epsilon}^{\beta}}{1-\beta}\sum_{i \in U}  \rbr{ \gamma^{t+1}_i \rbr{q^{t+1}_i}^\beta - \gamma^t_i \rbr{q^{t}_i}^\beta }.
\end{align*}

By summing over all rounds and direct calculation, we have 
\begin{align*}
& \E\sbr{\sum_{t=1}^{T} \frac{1-\rbr{1-\epsilon}^{\beta}}{1-\beta}\sum_{i \in U}  \rbr{ \gamma^{t+1}_i \rbr{q^{t+1}_i}^\beta - \gamma^t_i \rbr{q^{t}_i}^\beta }} \\
& = \E\sbr{ \frac{1-\rbr{1-\epsilon}^{\beta}}{1-\beta}\sum_{i \in U}  \rbr{ \gamma^{T+1}_i \rbr{q^{T+1}_i}^\beta - \gamma^1_i \rbr{q^{1}_i}^\beta }} \\
& \leq \E\sbr{ \frac{1-\rbr{1-\epsilon}^{\beta}}{1-\beta}\sum_{i \in U} \gamma^{T+1}_i }  \\
& \leq  \frac{1-\rbr{1-\epsilon}^{\beta}}{1-\beta} \cdot 2KT\sqrt{ \frac{1-\beta}{\beta}   }   \\
& \leq   \frac{2\epsilon KT}{\sqrt{\beta\rbr{1-\beta}}} ,
\end{align*}
where the third step uses the fact that $\rbr{\max\cbr{p^t_i,\nicefrac{1}{T}}}^{1-2\beta} \leq \rbr{\max\cbr{p^t_i,\nicefrac{1}{T}}}^{-1} \leq T$,
\[
\gamma^{T+1}_i = \sqrt{\frac{1-\beta}{\beta}} \cdot \sqrt{ \cinit + \sum_{t=1}^{T} \rbr{\max\cbr{p^t_i,\nicefrac{1}{T}}}^{1-2\beta} } \leq \sqrt{\frac{1-\beta}{\beta}} \cdot \sqrt{ \cinit + \sum_{t=1}^{T} T } \leq 2 T\sqrt{\frac{{1-\beta}}{\beta}}  ,
\]
and the last step uses the fact that $\rbr{1-\epsilon}^\beta \geq 1-\epsilon$ for any $\beta \in (0,1)$.

Finally, we have 
by direct calculation:
 \begin{align*}
 D^{1}(\wtilq^1,p^{1}) 
 & =  -\frac{1}{1-\beta}\sum_{i \in [K]} \gamma^1_i \rbr{ \rbr{\wtilq^1_i}^\beta - \rbr{p^1_i}^\beta  } - \clog
 \sum_{i \in [K]} \rbr{ \log\rbr{\wtilq^1_i} - \log\rbr{p^1_i}  } \\
 & \leq \frac{1}{1-\beta}\sum_{i \in [K]} \gamma^1_i + \clog \sum_{i\in [K]} \log \rbr{ \frac{1}{K \wtilq^1_i}  } + \frac{\epsilon \clog \abr{U}}{1-\epsilon}\\
 & \leq \frac{1}{1-\beta}\sum_{i \in [K]} \gamma^1_i + \clog |V| \log \rbr{ \frac{|V|}{K \epsilon }  } + \frac{\epsilon \clog \abr{U}}{1-\epsilon} \\
& =\frac{K}{\sqrt{\beta\rbr{1-\beta}}}+ \clog |V| \log \rbr{ \frac{|V|}{K \epsilon }  } + \frac{\epsilon \clog \abr{U}}{1-\epsilon},
\end{align*}
where the first step uses the fact that $\inner{\phi^1(p^1),\wtilq^1-p^{1}}=0$, since $\nabla \phi^1(p^1) = c \cdot \one_K$ for some constant $c\in \fR$ and $\wtilq^1,p^{1} \in \distri_K$; the third and forth steps follow the definitions that $p^1 = \frac{1}{K} \cdot \one_K$, $\wtilq^1_i = \frac{\epsilon}{|V|}$ for any $i \in V$, and $\gamma^1_i = \sqrt{\frac{1-\beta}{\beta}} \cdot \one_K$.
\end{proof}

\subsection{Regret on Sub-Optimal Arms}
\label{sec:app_regsub}
One important property of our learning rate schedule is that:
\begin{align}
\gamma^{t+1}_i - \gamma^t_i & = \theta \cdot \frac{  \rbr{\max\cbr{p^t_i,\nicefrac{1}{T}}}^{1-2\beta} }{ \sqrt{ \cinit + \sum_{k=1}^{t-1} \rbr{\max\cbr{ p^k_i, \nicefrac{1}{T} }}^{1-2\beta} } + \sqrt{ \cinit + \sum_{k=1}^{t} \rbr{\max\cbr{ p^k_i, \nicefrac{1}{T} }}^{1-2\beta} }} \notag \\
& \leq \theta^2 \cdot \frac{  \rbr{\max\cbr{p^t_i,\nicefrac{1}{T}}}^{1-2\beta} }{ \theta \cdot \sqrt{ \cinit + \sum_{k=1}^{t} \rbr{\max\cbr{ p^k_i, \nicefrac{1}{T} }}^{1-2\beta} }  } \notag \\
& = \theta^2  \cdot \frac{\rbr{\max\cbr{p^t_i,\nicefrac{1}{T}}}^{1-2\beta} }{\gamma^{t+1}_i}, \label{eq:app_lr_prop_1}
\end{align} 
holds for any arm $i\in[K]$ and $t\in[T]$, which controls the increase of learning rates from round $t$ to $t+1$. 

On the other hand, our algorithm is somehow stabilized by the extra log-barrier, i.e., it ensures the following \textit{multiplicative relation} between $p^t$, $\barp^{t+1}$, and $p^{t+1}$: 
\begin{equation}\label{eq:app_multi_use}
\frac{1}{2} p^t_i \leq \barp^{t+1}_i \leq 2 p^t_i, \quad \frac{1}{2} p^t_i \leq  p^{t+1}_i \leq 2 p^t_i, \quad \forall t \in[T], \forall i \in[K],
\end{equation}
according to \pref{lem:app_multi_tsallis}, \pref{lem:app_multi_shannon}, and \pref{lem:app_multi_log}.

Then, we are ready to analyze the regret related to the sub-optimal arms ($\regsub$). For the skewed Bregman divergence $D^{t,t+1}_V\rbr{p^t, \barp^{t+1}}$, we can decompose it into the stability and penalty terms as 
\begin{align}
D^{t,t+1}_V\rbr{p^t, \barp^{t+1}} & = \inner{p^t_V - \barp^{t+1}_V, \nabla \phi^{t+1}_V(\barp^{t+1}) - \nabla \phi^t_V(p^t)} - D^{t+1,t}_V\rbr{ \barp^{t+1},   p^t} \notag \\
& = \underbrace{\inner{p^t_V - \barp^{t+1}_V, \hatl^t} - D^{t}_V\rbr{ \barp^{t+1},   p^t}}_{\text{stability}} +\underbrace{\phi^t_V(\barp^{t+1}) - \phi^{t+1}_V(\barp^{t+1})}_{\text{penalty}}, \label{eq:subopt_regret_decomp}
\end{align}
where the first step follows from \pref{lem:opt_breg_property}, and the second step follows from the definition of skewed Bregman divergence. 

\begin{lemma}[Bound on Stability]\label{lem:app_stability} With $\clog\geq 162$, \pref{alg:main_alg} ensures 
\begin{align*}
\inner{p^t_V - \barp^{t+1}_V, \hatl^t} - D^{t}_V\rbr{ \barp^{t+1},   p^t} \leq \order\rbr{ \norm{\hatl^t}^2_{\nabla^{-2} \phi^t_V(p^t)}},
\end{align*}
where $\norm{x}_{M} \triangleq \sqrt{x^\top M x}$ is the quadratic norm of $x\in \fR^K$ with respect to some positive semi-definite matrix $M \in \fR^{K \times K}$, and $\nabla^{-2} \phi^t_V(p^t)$ denotes the Moore–Penrose inverse of the Hessian $\nabla^2 \phi^t_V(\cdot)$.
\end{lemma}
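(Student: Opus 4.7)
The plan is to reduce the stability bound to a standard FTRL second-order argument, with the key input being the multiplicative stability \eqref{eq:app_multi_use} enabled by the extra log-barrier with $\clog \geq 162$. At a high level, I would first lower bound $D^t_V(\barp^{t+1}, p^t)$ by a quadratic form in $\barp^{t+1}_V - p^t_V$ with respect to the Hessian $\nabla^2 \phi^t_V(p^t)$, then split the linear term $\inner{p^t_V - \barp^{t+1}_V, \hatl^t}$ via a weighted AM-GM inequality into that same quadratic form plus the desired dual norm $\norm{\hatl^t}^2_{\nabla^{-2}\phi^t_V(p^t)}$, and finally tune the AM-GM parameter so that the two quadratic forms cancel.

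For the lower bound on $D^t_V(\barp^{t+1}, p^t)$, Taylor's theorem applied coordinate-wise to the separable regularizer $\phi^t_V$ gives
\[
D^t_V(\barp^{t+1}, p^t) = \tfrac{1}{2}\norm{\barp^{t+1}_V - p^t_V}^2_{\nabla^2 \phi^t_V(\zeta)}
\]
for some $\zeta$ with $\zeta_i$ between $p^t_i$ and $\barp^{t+1}_i$ entry-wise. The diagonal entries of $\nabla^2 \phi^t_V(x)$ are monomials in the $x_i$'s --- of the form $\clog/x_i^2 + \gamma^t_i \beta x_i^{\beta-2}$ for $\beta$-Tsallis entropy, $\clog/x_i^2 + \gamma^t_i/x_i$ for Shannon entropy, and $(\clog+\gamma^t_i)/x_i^2$ for log-barrier --- so the multiplicative stability $\zeta_i \in [\tfrac{1}{2}p^t_i, 2p^t_i]$ from \eqref{eq:app_multi_use} yields $\nabla^2 \phi^t_V(\zeta) \succeq c_1 \nabla^2 \phi^t_V(p^t)$ for an absolute constant $c_1 > 0$, and hence $D^t_V(\barp^{t+1}, p^t) \geq \tfrac{c_1}{2}\norm{\barp^{t+1}_V - p^t_V}^2_{\nabla^2 \phi^t_V(p^t)}$. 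Then the weighted AM-GM inequality $\inner{a,b} \leq \tfrac{1}{2\eta}\norm{a}^2_M + \tfrac{\eta}{2}\norm{b}^2_{M^{-1}}$ applied with $M = \nabla^2 \phi^t_V(p^t)$, $a = p^t_V - \barp^{t+1}_V$, and $b = \hatl^t$ gives
\[
\inner{p^t_V - \barp^{t+1}_V, \hatl^t} \leq \tfrac{1}{2\eta}\norm{p^t_V - \barp^{t+1}_V}^2_{\nabla^2 \phi^t_V(p^t)} + \tfrac{\eta}{2}\norm{\hatl^t}^2_{\nabla^{-2}\phi^t_V(p^t)},
\]
and choosing $\eta = 1/c_1$ makes the first quadratic term cancel with the lower bound on $D^t_V(\barp^{t+1}, p^t)$ above, leaving $\tfrac{1}{2c_1}\norm{\hatl^t}^2_{\nabla^{-2}\phi^t_V(p^t)} = \order\rbr{\norm{\hatl^t}^2_{\nabla^{-2}\phi^t_V(p^t)}}$, as claimed.

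The main obstacle is not in this argument itself but in establishing the multiplicative stability \eqref{eq:app_multi_use} --- precisely the step requiring $\clog \geq 162$, and proved separately in \pref{lem:app_multi_tsallis}, \pref{lem:app_multi_shannon}, and \pref{lem:app_multi_log}. Once stability is granted, the Hessian comparison reduces to the routine observation that a factor-of-two change in the argument of a monomial in $p_i$ rescales its value by at most a universal constant, so the stability lemma itself is short and essentially mechanical.
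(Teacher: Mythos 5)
Your core chain of inequalities is sound: since $\phi^t_V$ is separable, Taylor's theorem gives $D^t_V(\barp^{t+1},p^t)=\tfrac12\|\barp^{t+1}_V-p^t_V\|^2_{\nabla^2\phi^t_V(\zeta)}$ with $\zeta_i$ between $p^t_i$ and $\barp^{t+1}_i$, the entry-wise factor-of-two comparison of the diagonal Hessian entries is exactly the paper's \pref{eq:app_xi_pt_hessian}, and the Cauchy--Schwarz/AM-GM step with the tuned $\eta$ (restricted to the $V$-supported subspace, which is all that matters since $\inner{p^t_V-\barp^{t+1}_V,\hatl^t}$ only sees the $V$-coordinates of $\hatl^t$) correctly leaves $\order(\|\hatl^t\|^2_{\nabla^{-2}\phi^t_V(p^t)})$. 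This is a legitimate and somewhat more direct route than the paper's, which instead compares $\barp^{t+1}$ to an auxiliary point $z$ --- the FTRL minimizer over the same constraint set but with the \emph{unchanged} regularizer $\phi^t_V$ and the loss $\hatl^t$ included --- via the optimality/intermediate-point argument of \pref{lem:app_breg_analysis_opt_and_interm}, and then transfers the Hessian at the intermediate point back to $p^t$ using the $p^t$--$z$ multiplicative relation of \pref{lem:app_multi_nolrchange}.

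The genuine problem is the hypothesis you rely on. You invoke \pref{eq:app_multi_use}, i.e.\ $\tfrac12 p^t_i\le\barp^{t+1}_i\le 2p^t_i$, and assert it is ``enabled by $\clog\ge 162$.'' It is not: that relation is proved in \pref{lem:app_multi_tsallis}, \pref{lem:app_multi_shannon}, \pref{lem:app_multi_log} under the regularizer-dependent thresholds $\clog\ge\clogtsa$, $\clogsha$, $\cloglog$, because the stability condition \pref{eq:lower_bound_Clog} must absorb the learning-rate increment $\gamma^{t+1}_i-\gamma^t_i$; for Shannon entropy this is $162\log K>162$, and for $\beta$-Tsallis with $\beta>\nicefrac12$ it is $\tfrac{162\beta}{1-\beta}>162$. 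So as written your argument proves the lemma only under a strictly stronger assumption than the stated ``$\clog\ge 162$.'' This is precisely why the paper's proof detours through $z$: since $z$ is defined with the same regularizer $\phi^t_V$ as $p^t$, no learning-rate change enters \pref{eq:lower_bound_Clog}, and \pref{lem:app_multi_nolrchange} delivers the needed multiplicative relation from $\clog\ge162$ alone, uniformly over all three regularizers. Your proof is repairable --- either by switching to the comparison with $z$, or by noting that at every call site of the lemma $\clog$ is set to the regularizer-specific value so \pref{eq:app_multi_use} does hold --- but as a proof of the lemma under its stated hypothesis it has a gap.
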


Specifically, the Hessian $\nabla^2 \phi^t_V(x)$ is a diagonal matrix: 
\[
\nabla^2 \phi^t_V(x) = \mathrm{diag}\cbr{  \frac{\partial^2 \phi^t_V(x) }{ \partial \rbr{x_i}^2} : \forall i \in [K]  }  ,
\]
where $\frac{\partial^2 \phi^t_V(x) }{ \partial \rbr{x_i}^2}$ is the second order derivative of $\phi^t_V(x)$ with respect to the variable $x_i$. According to the definition of $\phi^t_V(\cdot)$, $\frac{\partial^2 \phi^t_V(x) }{ \partial \rbr{x_i}^2} = 0 $ for any arm $i\in U$. Therefore, the Moore–Penrose inverse of $\nabla^2 \phi^t_V(x)$ is also a diagonal matrix, and it holds that 
\[
\rbr{\nabla^{-2} \phi^t_V(x)}_{i,i} = \begin{cases} \rbr{\frac{\partial^2 \phi^t_V(x) }{ \partial \rbr{x_i}^2} }^{-1}, & i \in V, \\ 0 , & i \in U, \end{cases}
\]
where $\rbr{\nabla^{-2} \phi^t_V(x)}_{i,i}$ denotes the $i$-th diagonal element of the matrix $\nabla^{-2} \phi^t_V(x)$.

\begin{proof} To prove this result, we introduce the minimizer $z\in \fR^{K}_{\geq 0}$ defined as: 
\begin{align}
z = \argmin_{\substack{ x_i = 0, \forall i\in U \\ \sum_{i\in V} x_i = \sum_{i \in V}p^t_i  }} \inner{\sum_{\tau\leq t} \hatl^\tau_V, x } + \phi^t_V(x). \label{eq:app_interm_pt_z}
\end{align}
By \pref{lem:app_multi_nolrchange}, the extra log-barrier also guarantees the multiplicative relation between $p^t$ and $z$, that is, $\nicefrac{z_i}{2} \leq p^t_i\leq 2z_i$ holds for any arm $i\in V$.

According to \pref{lem:app_breg_analysis_opt_and_interm}, we then have 
\begin{align*}
\inner{p^t_V - \barp^{t+1}_V, \hatl^t} - D^{t}_V\rbr{ \barp^{t+1},   p^t} \leq \inner{p^t_V - z, \hatl^t} - D^{t}_V\rbr{z, p^t} = \order\rbr{   \norm{\hatl^t_V}^2_{\nabla^{-2} \phi^t_V(\xi)}  },
\end{align*}
where $\xi$ is some intermediate point between $z$ and $p^t$, that is, $\xi = \rho z + \rbr{1-\rho}p^t$ for some $\rho\in[0,1]$. With the multiplicative relation between $z$ and $p^t$, we have $\frac{1}{2} p^t_i \leq \xi_i \leq 2 p^t_i$ for any arm $i\in V$. 

Therefore, one can verify  
\begin{align}
 \frac{1}{4} \cdot \frac{\partial^2 \phi^t_V(p^t_i) }{ \partial \rbr{p^t_i}^2 } \leq \frac{\partial^2 \phi^t_V(\xi) }{ \partial \rbr{\xi_i}^2 } \leq 4\cdot \frac{\partial^2 \phi^t_V(p^t_i) }{ \partial \rbr{p^t_i}^2 }, \quad \forall i\in V,\label{eq:app_xi_pt_hessian}
\end{align}
for all the regularizers of \pref{alg:main_alg}. Thus, we can further bound $\norm{\hatl^t_V}^2_{\nabla^{-2} \phi^t_V(\xi)}$ as:
\begin{align*}
\norm{\hatl^t}^2_{\nabla^{-2} \phi^t_V(\xi)} & = \sum_{i \in V} \rbr{\hatl^t_i}^2  \rbr{ \frac{\partial^2 \phi^t_V(\xi) }{ \partial \rbr{\xi_i}^2 }   }^{-1} \leq 4\sum_{i \in V} \rbr{\hatl^t_i}^2  \rbr{ \frac{\partial^2 \phi^t_V(p^t_i) }{ \partial \rbr{p^t_i}^2 }   }^{-1} = 4\norm{\hatl^t_V}^2_{\nabla^{-2} \phi^t_V(p^t_V)} ,
\end{align*}
where the first step follows from the definition of quadratic norm, and the second step follows from \pref{eq:app_xi_pt_hessian}.
\end{proof}

\begin{lemma}[$\regsub$ for Log-barrier]
\label{lem:app_regsub_log} When using the log-barrier regularizer with  $\clog=\cloglog$, $\alpha = 0$, and $\theta = \sqrt{\nicefrac{1}{\log T}}$, \pref{alg:main_alg} ensures $\regsub = \order\rbr{ \selftwo\rbr{ {\log T}  } }$.
\end{lemma}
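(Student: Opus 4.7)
I would start from \pref{eq:subopt_regret_decomp}, which expresses $D^{t,t+1}_V(p^t,\bar p^{t+1})$ as a stability term plus a penalty term. For the stability, \pref{lem:app_stability} bounds it by $O(\|\hat\ell^t\|^2_{\nabla^{-2}\phi^t_V(p^t)})$. Since the log-barrier regularizer is separable with second derivative $(\clog+\gamma^t_i)/(p^t_i)^2$ on coordinates in $V$, and since $\E_t[(\hat\ell^t_i)^2]\le 1/p^t_i$, the conditional expectation of the stability at round $t$ is at most $O\!\left(\sum_{i\in V}\tfrac{p^t_i}{\clog+\gamma^t_i}\right)$.

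For the penalty $\sum_{i\in V}(\gamma^{t+1}_i-\gamma^t_i)\log(1/\bar p^{t+1}_i)$, I would combine the multiplicative stability \pref{eq:app_multi_use} with the fact (ensured by the extra log-barrier with $\clog=162$) that $p^t_i\gtrsim 1/T$ to conclude $\log(1/\bar p^{t+1}_i)=O(\log T)$. Plugging this into the learning-rate increment bound \pref{eq:app_lr_prop_1} specialized to $\alpha=0$ and $\theta^2=1/\log T$, each per-round penalty is of order $\sum_{i\in V}\max\{p^t_i,1/T\}/\gamma^{t+1}_i$. Because $\clog+\gamma^t_i$ and $\gamma^{t+1}_i$ differ by at most a constant factor (using $\clog=162$ and the bounded single-step growth $\gamma^{t+1}_i\le\gamma^t_i+\theta$), the stability bound matches the penalty up to constants, which is precisely the balancing achieved by our learning rate schedule.

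It remains to sum over $t$. For each arm $i\in V$, writing $x_t=\max\{p^t_i,1/T\}$ so that $\gamma^{t+1}_i=\theta\sqrt{1+\sum_{s\le t}x_s}$ with $1/\theta=\sqrt{\log T}$, the standard telescoping inequality $\sum_t x_t/\sqrt{1+\sum_{s\le t}x_s}\le 2\sqrt{1+\sum_t x_t}$ yields
\begin{equation*}
\sum_{t=1}^T \frac{\max\{p^t_i,1/T\}}{\gamma^{t+1}_i}\;\le\; 2\sqrt{\log T \cdot \Big(2+{\textstyle\sum_{t=1}^T} p^t_i\Big)}\;=\;O\!\left(\sqrt{\log T \cdot {\textstyle\sum_{t=1}^T} p^t_i}\right)+O(\sqrt{\log T}).
\end{equation*}
Summing over $i\in V$ and taking expectation gives $\regsub=O(\selftwo(\log T))$, after absorbing the $O(|V|\sqrt{\log T})$ residual into the lower-order terms of the overall regret bound (the $K\log T$ term in \pref{lem:app_reg_log}).

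The main obstacle is controlling $\log(1/\bar p^{t+1}_i)$ in the penalty: without the extra log-barrier this quantity could blow up and destroy the matching with the stability, so justifying $p^t_i\gtrsim 1/T$ and the resulting multiplicative stability is essential. A secondary subtlety is that the generic \pref{lem:main_lr_design}, when applied at $\alpha=0$, would be loose by a $\sqrt{\log T}$ factor and would only yield $\selftwo(\log^2 T)$; one must therefore use the sharper direct telescoping bound above to recover the claimed tight $\selftwo(\log T)$.
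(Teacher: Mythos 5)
Your stability bound and your final summation (including the correct observation that invoking \pref{lem:main_lr_design} at $\alpha=0$ would cost an extra $\sqrt{\log T}$ and yield only $\selftwo(\log^2 T)$, so one must telescope $\sum_t x_t/\sqrt{1+\sum_{s\le t}x_s}\le 2\sqrt{1+\sum_t x_t}$ directly) coincide with the paper's proof, as does the shift from $\gamma^t_i$ to $\gamma^{t+1}_i$. The gap is in your treatment of the penalty. First, a sign issue: for the log-barrier regularizer the penalty in \pref{eq:subopt_regret_decomp} is $\phi^t_V(\barp^{t+1})-\phi^{t+1}_V(\barp^{t+1})=\sum_{i\in V}\rbr{\gamma^t_i-\gamma^{t+1}_i}\log\rbr{1/\barp^{t+1}_i}$, not $\sum_{i\in V}\rbr{\gamma^{t+1}_i-\gamma^t_i}\log\rbr{1/\barp^{t+1}_i}$; since $\gamma^{t+1}_i\ge\gamma^t_i$ and $\barp^{t+1}_i\le 1$, this term is nonpositive and the paper simply discards it --- no stability/penalty balancing is needed for log-barrier (in contrast to Tsallis and Shannon entropy, where the penalty really is positive). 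Second, and more seriously, the ingredient you use to control $\log(1/\barp^{t+1}_i)$ --- that $p^t_i\gtrsim 1/T$ is ``ensured by the extra log-barrier'' --- is established nowhere in the paper and does not follow from \pref{eq:app_multi_use}: the multiplicative relation only controls one-step ratios, and iterating it gives merely the exponentially small bound $p^t_i\ge 2^{-t}/K$; indeed the clipping $\max\{p^\tau_i,\nicefrac{1}{T}\}$ in the learning rate exists precisely because $p^t_i$ can fall below $1/T$. So the step bounding the penalty by $\order\rbr{\log T}\rbr{\gamma^{t+1}_i-\gamma^t_i}$ rests on an unproven claim.

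The error is repairable without any new idea: once you note the penalty is nonpositive, your remaining chain --- stability $\order\rbr{\sum_{i\in V} p^t_i/(\clog+\gamma^t_i)}$ in conditional expectation, the factor-$2$ comparison $\gamma^{t+1}_i\le\gamma^t_i+\theta\le 2\gamma^t_i$, and the direct telescoping bound $\sum_{t}\max\{p^t_i,\nicefrac{1}{T}\}/\gamma^{t+1}_i=\order\big(\sqrt{\log T\,(1+\sum_t p^t_i)}\big)$ --- reproduces the paper's conclusion $\regsub=\order\rbr{\selftwo\rbr{\log T}}$ (the $\order(|V|\sqrt{\log T})$ remainder is indeed lower order). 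But as written, the proposal both misstates the sign of the penalty and leans on an unjustified per-round lower bound on the probabilities.
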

\begin{proof} 
For the log-barrier regularizer, the multiplicative relation between $p^t,\barp^{t+1}$, and $p^{t+1}$ in \pref{eq:app_multi_use} is guaranteed by \pref{lem:app_multi_tsallis}.

We first consider the stability term. By \pref{lem:app_stability}, we have 
\begin{align*}
\E^t\sbr{\inner{p^t_V - \barp^{t+1}_V, \hatl^t} - D^{t}_V\rbr{ \barp^{t+1},   p^t}} \leq \order\rbr{ \E^t \sbr{\sum_{i \in V} \frac{ \rbr{p^t_i}^{2} \rbr{\hatl^t_i}^2 }{\gamma^t_i}}  } =\order\rbr{\sum_{i\in V} \frac{ {p^t_i}  }{\gamma^t_i} },
\end{align*}
where the first step follows from the facts that the {Hessian $\nabla^2 \phi^t_V(p^t)$} is a diagonal matrix and its diagonal element for arm $i\in V$ is $\frac{\clog + \gamma^t_i}{\rbr{p^t_i}^2}$. 

On the other hand, we bound the penalty term $\phi^t_V(\barp^{t+1}) - \phi^{t+1}_V(\barp^{t+1}_V)$ as 
\begin{align*}
\phi^t_V(\barp^{t+1}) - \phi^{t+1}_V(\barp^{t+1}) & = \sum_{i \in V} \gamma^t_i \log \rbr{\frac{1}{\barp^{t+1}_i }} - \sum_{i \in V} \gamma^{t+1}_i \log \rbr{\frac{1}{\barp^{t+1}_i }}  \\
& = \sum_{i \in V} \rbr{\gamma^t_i - \gamma^{t+1}_i}\log \rbr{\frac{1}{\barp^{t+1}_i }} \\
& \leq 0 ,
\end{align*}
where the third step follows from the fact that $\gamma^t_i \leq \gamma^{t+1}_i$ for any arm $i$. 
Therefore, we have $\regsub$ bounded as 
\begin{align*}
\regsub & =  \E\sbr{ \sum_{t=1}^{T} \inner{p^t_V - \barp^{t+1}_V, \hatl^t} - D^{t}_V\rbr{ \barp^{t+1},   p^t} + \phi^t_V(\barp^{t+1}) - \phi^{t+1}_V(\barp^{t+1})}  \\
&\leq \order\rbr{ \E\sbr{ \sum_{t=1}^{T}\sum_{i \in V} { \frac{{p^t_i}}{ \gamma^{t}_i}}}}
= \order\rbr{ \E\sbr{ \sum_{t=1}^{T}\sum_{i \in V} { \frac{{p^t_i}}{ \gamma^{t+1}_i}}}},
\end{align*}
where the last step follows from the multiplicative relation $p^{t+1}_i \leq 2p^{t}_i$ for any arm $i$. 

By our learning rate schedule, we further have 
\begin{align*}
\E\sbr{ \sum_{t=1}^{T}\sum_{i \in V} { \frac{{p^t_i}}{ \gamma^{t+1}_i}}} & \leq \E\sbr{ \sqrt{\log T}\sum_{t=1}^{T}\sum_{i \in V} \frac{{p^t_i}}{ \sqrt{1 + \sum_{s=1}^{t} p^s_i } }} \\
& \leq \E\sbr{ \sqrt{\log T}\sum_{i \in V} \sum_{t=1}^{T} \int_{\sum_{s=1}^{t-1} p^s_i  }^{ \sum_{s=1}^{t} p^s_i }\frac{du}{\sqrt{1+u}}} \\ 
& \leq 2\E\sbr{ \sqrt{\log T}\sum_{i \in V} { \sqrt{1 + \sum_{t=1}^{T} p^t_i } }} \\
& = \order\rbr{ \selftwo\rbr{\log T} } ,
\end{align*}
where the first step follows from the fact $p^t_i \leq \max\cbr{p^t_i,\nicefrac{1}{T}}$; the third step uses the Newton-Leibniz formula  $\int_{a}^{b} \frac{du}{\sqrt{1+u}} = \left. 2\sqrt{1+u}\right\rvert_{a}^{b}$. 
\end{proof}

\begin{lemma}[$\regsub$ for $\beta$-Tsallis Entropy]
\label{lem:app_regsub_tsallis} 
For any $\beta \in(0,1)$, when using the $\beta$-Tsallis entropy regularizer with  $\clog = \clogtsa$, $\alpha = \beta$, and $\theta = \sqrt{\frac{1-\beta}{\beta}}$, \pref{alg:main_alg} ensures 
\begin{align*}
\regsub = \order\rbr{ \selftwo\rbr{ \frac{\log T}{\beta\rbr{1-\beta}}  } }.
\end{align*}
\end{lemma}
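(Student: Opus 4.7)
The plan is to mirror the structure used for log-barrier (\pref{lem:app_regsub_log}), adapting the calculations to $\beta$-Tsallis entropy and then invoking \pref{lem:main_lr_design} to get the final self-bounding form. First, I would start from the decomposition \pref{eq:subopt_regret_decomp}, which splits $D^{t,t+1}_V(p^t,\bar p^{t+1})$ into a stability term and a penalty term, so that it suffices to control each separately.

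For the stability term, \pref{lem:app_stability} bounds it by $\|\hat\ell^t\|^2_{\nabla^{-2}\phi^t_V(p^t)}$. The Hessian of $\phi^t_V$ restricted to coordinate $i\in V$ is $\beta\gamma^t_i (p^t_i)^{\beta-2}$ (ignoring the small $\clog$ log-barrier contribution, which is dominated), hence its pseudo-inverse is $\tfrac{(p^t_i)^{2-\beta}}{\beta\gamma^t_i}$. Taking conditional expectation and using $\E^t[(\hat\ell^t_i)^2]\leq 1/p^t_i$ gives stability $\leq O\!\left(\frac{1}{\beta}\sum_{i\in V}\frac{(p^t_i)^{1-\beta}}{\gamma^t_i}\right)$. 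For the penalty, direct calculation gives
\[
\phi^t_V(\bar p^{t+1}) - \phi^{t+1}_V(\bar p^{t+1}) = \frac{1}{1-\beta}\sum_{i\in V}(\gamma^{t+1}_i - \gamma^t_i)(\bar p^{t+1}_i)^\beta.
\]
Applying the key learning-rate increment bound \pref{eq:app_lr_prop_1} with $\theta^2 = (1-\beta)/\beta$ replaces $\gamma^{t+1}_i-\gamma^t_i$ by $\tfrac{1-\beta}{\beta}\tfrac{(\max\{p^t_i,1/T\})^{1-2\beta}}{\gamma^{t+1}_i}$, and then the multiplicative stability $\bar p^{t+1}_i \leq 2p^t_i$ (equation \pref{eq:app_multi_use}) together with $p^t_i\leq \max\{p^t_i,1/T\}$ yields penalty $\leq O\!\left(\frac{1}{\beta}\sum_{i\in V}\frac{(\max\{p^t_i,1/T\})^{1-\beta}}{\gamma^{t+1}_i}\right)$, which is of the same order as the stability term (after using $\gamma^{t+1}_i\geq \gamma^t_i/2$ or simply noting $\gamma^t_i\leq\gamma^{t+1}_i$ on the stability side).

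Summing over $t$, it remains to bound
\[
\sum_{t=1}^T \frac{(\max\{p^t_i,1/T\})^{1-\beta}}{\gamma^{t+1}_i} \leq \frac{1}{\theta}\sum_{t=1}^T \frac{(\max\{p^t_i,1/T\})^{1-\beta}}{\sqrt{1+\sum_{\tau\leq t}(\max\{p^\tau_i,1/T\})^{1-2\beta}}}.
\]
Now \pref{lem:main_lr_design}, applied with $x_t = \max\{p^t_i,1/T\}$ and $\alpha=\beta$, gives a bound of order $\sqrt{(\sum_t \max\{p^t_i,1/T\})\cdot \log(1+\sum_t (\max\{p^t_i,1/T\})^{1-2\beta})}$. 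The first factor is at most $\sum_t p^t_i + 1$, and the second is $O(\log T)$ (since $\max\{p^t_i,1/T\}^{1-2\beta}\leq T^{|1-2\beta|}$ uniformly, so the log-argument is polynomial in $T$). Substituting $\theta=\sqrt{(1-\beta)/\beta}$ and absorbing the prefactor $1/\beta$ from the stability/penalty bounds gives, per arm,
\[
O\!\left(\sqrt{\frac{\log T}{\beta(1-\beta)}\sum_{t=1}^T p^t_i}\right),
\]
whose sum over $i\in V$ is precisely $\selftwo\!\left(\frac{\log T}{\beta(1-\beta)}\right)$.

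The main obstacle I expect is bookkeeping rather than a substantive difficulty: making sure the stability and penalty bounds match in order (so that our specific choice $\theta^2 = (1-\beta)/\beta$ really balances them), and handling the $\max\{p^t_i,1/T\}$ clipping cleanly when $\beta\neq 1/2$ so that the logarithmic factor comes out as $O(\log T)$ uniformly in $\beta$. A secondary technicality is showing that the $\clog$ contribution to the Hessian does not inflate the stability bound — this follows because $\clog$ is $O\!\left(\tfrac{\beta}{1-\beta}\right)$ and thus comparable to $\beta\gamma^t_i$ only when $\gamma^t_i$ is at its initial value, in which case the contribution is absorbed into lower-order terms.
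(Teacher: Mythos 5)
Your proposal is correct and follows essentially the same route as the paper: decompose via \pref{eq:subopt_regret_decomp}, bound stability through \pref{lem:app_stability}, control the penalty with \pref{eq:app_lr_prop_1} and the multiplicative relation \pref{eq:app_multi_use}, and finish with \pref{lem:main_lr_design}. One bookkeeping remark: to align the stability term with the denominator in \pref{lem:main_lr_design} you need $\gamma^{t+1}_i = \order(\gamma^t_i)$ (not the trivial direction $\gamma^t_i \leq \gamma^{t+1}_i$ you cite), which indeed follows from \pref{eq:app_lr_prop_1} together with the multiplicative relation; the paper instead sidesteps this by writing $p^t_i \leq 2p^{t-1}_i$ so that the numerator's index matches $\gamma^t_i$.
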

\begin{proof} First, the multiplicative relation between $p^t,\barp^{t+1}$, and $p^{t+1}$ in \pref{eq:app_multi_use} is ensured by \pref{lem:app_multi_tsallis}.
According to \pref{lem:app_stability}, we have 
\begin{align*}
\E^t\sbr{\inner{p^t_V - \barp^{t+1}_V, \hatl^t} - D^{t}_V\rbr{ \barp^{t+1},   p^t}} \leq \order\rbr{ \E\sbr{\sum_{i \in V} \frac{ \rbr{p^t_i}^{2-\beta} \rbr{\hatl^t_i}^2 }{\beta \gamma^t_i}}  } = \order\rbr{\sum_{i\in V} \frac{ \rbr{p^t_i}^{1-\beta}  }{\beta\gamma^t_i} },
\end{align*}
since the diagonal of $\nabla^{2} \phi^t_V(p^t)$ is $\frac{\beta\gamma^t_i}{ 
 \rbr{p^t_i}^{2-\beta}} + \frac{\clog}{\rbr{p^t_i}^{2}}$ for any arm $i\in V$. 

 Therefore, we have $\regsub$ bounded as 
\begin{align*}
\regsub & = \E\sbr{ \sum_{t=1}^{T} \E^t\sbr{ \frac{ \rbr{p^t_i}^{1-\beta}  }{\beta\gamma^t_i} + \phi^{t}_V(\barp^{t+1}) - \phi^{t+1}_V(\barp^{t+1}) }} \\
& \leq \order\rbr{ \E\sbr{ \sum_{t=1}^{T}\sum_{i \in V} \rbr{\frac{ \rbr{p^{t}_i}^{1-\beta}  }{\beta\gamma^t_i}  + \frac{ \rbr{\gamma^{t+1}_i - \gamma^t_i} \rbr{\barp^{t+1}_i}^{\beta} }{1-\beta} }}} \\
& \leq \order\rbr{ \E\sbr{ \sum_{t=1}^{T}\sum_{i \in V} \rbr{\frac{ \rbr{p^{t}_i}^{1-\beta}  }{\beta\gamma^t_i}  + \frac{ \rbr{\gamma^{t+1}_i - \gamma^t_i} \rbr{p^{t}_i}^{\beta} }{1-\beta} }}} ,
\end{align*}
where the last step follows from the multiplicative relation between $p^t$ and $\barp^{t+1}$ in \pref{eq:app_multi_use}.

Let $p^0_i = \frac{1}{K}$ for any arm $i$ for notational convenience. For the stability term, we have
\begin{align*}
\sum_{i \in V} { \frac{\rbr{p^t_i}^{1-\beta}}{\beta \gamma^t_i}} & \leq \sum_{i \in V}  \frac{\rbr{2p^{t-1}_i}^{1-\beta}}{\beta \gamma^t_i}
 \leq 2\sum_{i \in V}\frac{{\max\cbr{p^{t-1}_i,\nicefrac{1}{T}}}^{1-\beta}}{\beta \gamma^t_i} 
\end{align*}
where the first step follows from the multiplicative relation between $p^{t+1}_i$ and $p^t_i$. 
On the other hand, we can bound the penalty term as: 
\begin{align*}
\sum_{i \in V} \frac{ \rbr{\gamma^{t+1}_i - \gamma^t_i} \rbr{p^{t}_i}^{\beta} }{1-\beta}  \leq &\sum_{i \in V} \frac{ \rbr{p^{t}_i}^{\beta} }{1-\beta} \cdot \frac{1-\beta}{\beta} \cdot \frac{ \rbr{\max\cbr{p^t_i,\nicefrac{1}{T}}}^{1-2\beta} }{\gamma^{t+1}_i}\\
=& \sum_{i \in V} \frac{ \rbr{p^{t}_i}^{\beta}\rbr{\max\cbr{p^t_i,\nicefrac{1}{T}}}^{1-2\beta}   }{\beta \gamma^{t+1}_i} 
\leq 
\sum_{i \in V} \frac{ \rbr{\max\cbr{p^t_i,\nicefrac{1}{T}}}^{1-\beta}   }{\beta \gamma^{t+1}_i} , 
\end{align*}
where the first step uses \pref{eq:app_lr_prop_1}.
This shows that the stability term and the penalty term are of the same order,
and the rest of the proof boils down to the following final calculation:
\begin{align*}
\sum_{t=1}^{T} \sum_{i \in V} \frac{\rbr{\max\cbr{p^t_i,\nicefrac{1}{T}}}^{1-\beta}   }{\beta \gamma^{t+1}_i} 
& \leq \frac{1}{\beta} \cdot \sqrt{\frac{\beta}{1-\beta}} \sum_{t=1}^{T} \sum_{i \in V} \frac{ \rbr{\max\cbr{ p^t_i, \nicefrac{1}{T} }}^{1-\beta}  }{ \sqrt{ 1 + \sum_{k=1}^{t} \rbr{\max\cbr{ p^k_i, \nicefrac{1}{T} }}^{1-2\beta}   } } \\
& = \order\rbr{ \sqrt{\frac{1}{\beta\rbr{1-\beta}}} \sum_{i \in V}\sqrt{\log{T}  { \sum_{t=1}^{T} \max\cbr{p^t_i,\nicefrac{1}{T}} } }  } \\
& = \order\rbr{ \sqrt{\frac{\log{T}}{\beta\rbr{1-\beta}}} \sum_{i \in V}{\sqrt{ { \sum_{t=1}^{T} p^t_i } }}  }
,
\end{align*}
where the first step follows from the definition of $\gamma^t_i$; the second step applies \pref{lem:main_lr_design} ; the last step follows from the fact that $\max\cbr{p^t_i,\nicefrac{1}{T}} \leq p^t_i + \frac{1}{T}$. 
\end{proof}

\begin{lemma}[$\regsub$ for Shannon Entropy]\label{lem:app_regsub_shannon}When using the Shannon entropy regularizer with $\clog=\clogsha$, $\alpha = 1$, and $\theta = \sqrt{\nicefrac{1}{\log T}}$, \pref{alg:main_alg} ensures $\regsub = \order\rbr{ \selftwo\rbr{ {\log^2 T}  } }$.
\end{lemma}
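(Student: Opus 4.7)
The plan is to follow essentially the same template as the proofs of \pref{lem:app_regsub_log} and \pref{lem:app_regsub_tsallis}, decomposing $D^{t,t+1}_V(p^t,\barp^{t+1})$ via \pref{eq:subopt_regret_decomp} into stability and penalty terms, bounding each separately, and then using our learning rate schedule together with \pref{lem:main_lr_design} to obtain the self-bounding quantity $\selftwo(\log^2 T)$. Multiplicative stability (\pref{eq:app_multi_use}) for the Shannon case is provided by the yet-to-be-invoked \pref{lem:app_multi_shannon}, and the extra log-barrier guarantees $p^t_i \geq 1/T$ (up to constants), which will let us upper-bound $\log(1/\barp^{t+1}_i)$ by $O(\log T)$.

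First I would bound the stability term. The Hessian of $\phi^t_V$ is diagonal with $(i,i)$-entry $\frac{\gamma^t_i}{p^t_i} + \frac{\clog}{(p^t_i)^2}$ for $i\in V$. Applying \pref{lem:app_stability} and taking the conditional expectation $\E^t[(\hatl^t_i)^2] = (\ell^t_i)^2/p^t_i \leq 1/p^t_i$, the stability term is bounded by $O\rbr{\sum_{i\in V} 1/\gamma^t_i}$.

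Next I would bound the penalty term
\[
\phi^t_V(\barp^{t+1}) - \phi^{t+1}_V(\barp^{t+1}) = \sum_{i\in V}(\gamma^{t+1}_i-\gamma^t_i)\,\barp^{t+1}_i\,(1-\log\barp^{t+1}_i).
\]
The log-barrier coefficient $\clog$ cancels out because it is $t$-independent. Using the key learning-rate identity \pref{eq:app_lr_prop_1} with $\alpha=1$, which gives $\gamma^{t+1}_i - \gamma^t_i \leq \theta^2 (\max\{p^t_i,1/T\})^{-1}/\gamma^{t+1}_i$, together with multiplicative stability $\barp^{t+1}_i \leq 2p^t_i$ and $p^t_i \leq \max\{p^t_i,1/T\}$, the factor $\barp^{t+1}_i \cdot (\gamma^{t+1}_i-\gamma^t_i)$ collapses to $O(\theta^2/\gamma^{t+1}_i)$. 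The remaining logarithmic factor $1-\log\barp^{t+1}_i$ is controlled by $O(\log T)$ thanks to the log-barrier-induced lower bound on $\barp^{t+1}_i$. Combined with $\theta^2 = 1/\log T$, this yields a penalty of order $O\rbr{\sum_{i \in V} 1/\gamma^{t+1}_i}$, matching the stability term up to a shift of index (absorbed by multiplicative stability and the monotonicity $\gamma^{t+1}_i \geq \gamma^t_i$).

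Finally I would sum over $t$ and apply \pref{lem:main_lr_design} with $\alpha=1$ and $x_s=\max\{p^s_i,1/T\}$. Noting that $\sum_{s\leq t}(x_s)^{-1} \leq T^2$, the logarithmic factor inside \pref{lem:main_lr_design} is $O(\log T)$. This gives $\sum_{t=1}^T 1/\gamma^t_i = O(\sqrt{\log T}\cdot\sqrt{(\sum_t p^t_i)\log T}+\log T) = O\rbr{\sqrt{\log^2 T\cdot\sum_t p^t_i}}$ up to an $O(\log T)$ additive remainder (using $\max\{p^t_i,1/T\} \leq p^t_i + 1/T$). Summing over $i \in V$ and taking expectation produces exactly $\selftwo(\log^2 T)$, completing the proof. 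The main obstacle I expect is the interaction between the Shannon-specific $(1-\log\barp^{t+1}_i)$ factor in the penalty and the arm-dependent learning rate: it is precisely the combination of (i) the $(\max\{p^t_i,1/T\})^{-1}$ in the learning rate (the $\alpha=1$ choice), (ii) the $\theta^2 = 1/\log T$ scaling, and (iii) the log-barrier lower bound on $p^t_i$ that must conspire to cancel the extra $\log T$ arising from $\log(1/\barp^{t+1}_i)$ while still matching the stability term, and any mismatch between these three ingredients would break the clean $\selftwo(\log^2 T)$ bound.
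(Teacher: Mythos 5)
Your stability bound, your use of \pref{eq:app_lr_prop_1}, the multiplicative relation from \pref{lem:app_multi_shannon}, and the final summation via \pref{lem:main_lr_design} with $\alpha=1$ all match the paper's proof. The gap is in the penalty term: you split $\barp^{t+1}_i\rbr{1-\log\barp^{t+1}_i}$ into two factors and control the logarithmic one by claiming the extra log-barrier forces $p^t_i$ (hence $\barp^{t+1}_i$) to be at least of order $\nicefrac{1}{T}$, so that $1-\log\barp^{t+1}_i = \order(\log T)$. No such floor is proved in the paper, and it is not true in general: a constant-coefficient log-barrier only yields the per-round \emph{multiplicative} stability of \pref{eq:app_multi_use}, not a global $\Omega(\nicefrac{1}{T})$ lower bound on the probabilities --- indeed the whole reason the learning rate in \pref{eq:main_def_gamma} clips $p^\tau_i$ at $\nicefrac{1}{T}$ is that $p^t_i$ can drop far below $\nicefrac{1}{T}$. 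Since your bound $\barp^{t+1}_i\rbr{\gamma^{t+1}_i-\gamma^t_i} \le \order\rbr{\theta^2/\gamma^{t+1}_i}$ already discards the smallness of $\barp^{t+1}_i$ (it only uses $p^t_i \le \max\cbr{p^t_i,\nicefrac{1}{T}}$), multiplying back the factor $1-\log\barp^{t+1}_i$, which is of order $\log(1/\barp^{t+1}_i)$ and can greatly exceed $\log T$, does not give the claimed $\order\rbr{\sum_{i\in V} 1/\gamma^{t+1}_i}$ penalty bound.

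The repair is the step the paper actually uses: keep the product together and exploit the monotonicity of $x\mapsto x\log\rbr{e/x}$ on $[0,1]$ to write $\barp^{t+1}_i\log\rbr{e/\barp^{t+1}_i} \le \max\cbr{\barp^{t+1}_i,\nicefrac{1}{T}}\log\rbr{e/\max\cbr{\barp^{t+1}_i,\nicefrac{1}{T}}} \le \rbr{1+\log T}\max\cbr{\barp^{t+1}_i,\nicefrac{1}{T}}$; then \pref{eq:app_lr_prop_1} combined with $\max\cbr{p^t_i,\nicefrac{1}{T}}\ge\tfrac12\max\cbr{\barp^{t+1}_i,\nicefrac{1}{T}}$ (from the multiplicative relation) and $\theta^2=\nicefrac{1}{\log T}$ yields the $\order\rbr{\sum_{i\in V}1/\gamma^{t+1}_i}$ bound with no lower bound on the probabilities needed. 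With that substitution the rest of your argument goes through essentially as written.
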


\begin{proof} Similarly, we apply \pref{lem:app_stability} and have 
\begin{align*}
\E^t\sbr{\inner{p^t_V - \barp^{t+1}_V, \hatl^t} - D^{t}_V\rbr{ \barp^{t+1},   p^t}} \leq \order\rbr{ \E^t\sbr{\sum_{i \in V} \frac{ {p^t_i} \rbr{\hatl^t_i}^2 }{ \gamma^t_i}}  } = \order\rbr{\sum_{i\in V} \frac{ 1  }{\gamma^t_i} },
\end{align*}
since the diagonal of $\nabla^{2} \phi^t_V(p^t)$ is $\frac{\gamma^t_i}{ 
 {p^t_i}} + \frac{\clog}{\rbr{p^t_i}^{2}}$ for arm $i$.

For the penalty term, we have $\phi^{t}_V(\barp^{t+1}) - \phi^{t+1}_V(\barp^{t+1}_V)$ bounded by
\begin{align*}
& \sum_{i \in V} \rbr{\gamma^t_i - \gamma^{t+1}_i } \barp^{t+1}_i\log\rbr{\frac{\barp^{t+1}_i}{e}} \\
& = \sum_{i \in V} \rbr{\gamma^{t+1}_i - \gamma^t_i  } \barp^{t+1}_i\log\rbr{\frac{e}{\barp^{t+1}_i}} \\
& \leq \sum_{i \in V} \rbr{\gamma^{t+1}_i - \gamma^t_i  } \max\cbr{\barp^{t+1}_i,\nicefrac{1}{T}}\log\rbr{\frac{e}{\max\cbr{\barp^{t+1}_i,\nicefrac{1}{T}}}} \\
& \leq \sum_{i \in V} \frac{\rbr{\max\cbr{p^t_i,\nicefrac{1}{T}}}^{-1}\max\cbr{\barp^{t+1},\nicefrac{1}{T}} }{\rbr{\log T}\gamma^{t+1}_i}\log\rbr{\frac{e}{\max\cbr{\barp^{t+1}_i,\nicefrac{1}{T}}}} \\
& \leq \order\rbr{\sum_{i \in V} \frac{1}{\gamma^{t+1}_i} },
\end{align*}
where the second step uses the fact that $x\log\rbr{\frac{e}{x}}$ is monotonically increasing for $x\in[0,\nicefrac{1}{T}]$; the third step follows from \pref{eq:app_lr_prop_1}; the last step follows from the multiplicative relation between $p^t$ and $\barp^{t+1}$ so that $\max\cbr{p^t_i,\nicefrac{1}{T}} \geq \frac{1}{2} \max\cbr{\barp^{t+1}_i,\nicefrac{1}{T}}$.

Therefore, we have $\regsub$ bounded as 
\begin{align*}
\regsub & = \E\sbr{ \sum_{t=1}^{T} \E^t\sbr{ \inner{p^t_V - \barp^{t+1}_V, \hatl^t} - D^{t}_V\rbr{ \barp^{t+1},   p^t} + \phi^{t}_V(\barp^{t+1}) - \phi^{t+1}_V(\barp^{t+1}) }}  \\
& \leq \order\rbr{ \E\sbr{ \sum_{t=1}^{T}\sum_{i \in V} \frac{1}{\gamma^{t}_i} + \frac{1}{\gamma^{t+1}_i}}} = \order\rbr{ \E\sbr{ \sum_{t=1}^{T}\sum_{i \in V} \frac{1}{\gamma^{t+1}_i}}},
\end{align*}
by combining the bounds for the stability and penalty terms.

According to our learning rate schedule, we have 
\begin{align*}
\E\sbr{\sum_{t=1}^{T}\sum_{i \in V} \frac{1}{\gamma^{t+1}_i}} &= \E\sbr{ \sqrt{\log T}\sum_{i \in V}\sum_{t=1}^{T} \frac{1}{\sqrt{1+\sum_{k=1}^{t}\max\cbr{p^k_i,\nicefrac{1}{T}}^{-1} }}} \\
& \leq \order\rbr{\E\sbr{\sqrt{\log T}\sum_{i \in V}\sqrt{\log\rbr{1+\sum_{t=1}^{T} \frac{1}{\max\cbr{p^t_i,\nicefrac{1}{T}}} } \sum_{t=1}^{T}\max\cbr{p^t_i,\nicefrac{1}{T}}   } }  } \\
& \leq \order\rbr{\E\sbr{\sqrt{\log T}\sum_{i \in V}\sqrt{\log\rbr{1+T\cdot T} \rbr{1+\sum_{t=1}^{T}p^t_i}   }  } } \\
& = \order\rbr{ \selftwo\rbr{ \log^2 T}},
\end{align*}
where the second step applies \pref{lem:main_lr_design}; the third step uses the fact that $\max\cbr{p^t_i, \nicefrac{1}{T}}\geq \nicefrac{1}{T}$. 
\end{proof}

\subsection{Regret on Optimal Arms}
\label{sec:app_regopt}
We show that $\regopt \leq 0$ holds for all regularizers used in this paper with a specific technique. 

\begin{lemma}[$\regopt$ for All Regularizers]\label{lem:app_regopt}
For any of the three regularizers, \pref{alg:main_alg} ensures $\regopt \leq 0$.
\end{lemma}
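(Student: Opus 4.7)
The plan is to prove the stronger per-round inequality $D^{t,t+1}_U(p^t,\barp^{t+1}) \leq D^{t,t+1}_U(q^t,q^{t+1})$, which immediately yields $\regopt \leq 0$ after summing. Since $\phi^t_U$ is separable across coordinates for all three regularizers, the strategy is to introduce an intermediate point $z \in \fR^K_{\geq 0}$ supported on $U$, defined coordinate-wise by the gradient-matching equation
\[
(\phi^{t+1}_U)'(z_i) - (\phi^t_U)'(p^t_i) = (\phi^{t+1}_U)'(q^{t+1}_i) - (\phi^t_U)'(q^t_i), \qquad i \in U,
\]
(which admits a unique solution by strict monotonicity of $(\phi^{t+1}_U)'$), and to split the target as $D^{t,t+1}_U(p^t,\barp^{t+1}) \leq D^{t,t+1}_U(p^t,z) \leq D^{t,t+1}_U(q^t,q^{t+1})$, handling the second inequality via \pref{thm:general_mono_thm} applied coordinate-wise.

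For the second inequality I first verify the two hypotheses of \pref{thm:general_mono_thm} per coordinate. Writing the restriction of $\phi^t_U$ to the $i$-th coordinate as $f^t(z) = -\clog \log z + g^t(z)$, a direct computation gives $(f^t)'''(z) < 0$ on $(0,1]$ in every case, so $(f^t)'$ is concave; the ordering $(f^{t+1})'(z) \leq (f^t)'(z)$ is immediate for log-barrier and $\beta$-Tsallis entropy since those are non-increasing and $\gamma^{t+1}_i \geq \gamma^t_i$, and for Shannon entropy it reduces to $(\gamma^{t+1}_i - \gamma^t_i)\log z \leq 0$, for which $z \leq 1$ is crucial --- this is precisely the role of the linear correction in $p_i\log(p_i/e)$. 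The hypothesis $p^t_i \leq q^t_i$ for $i \in U$ follows from the interior KKT conditions (made available by the extra log-barrier): $(\phi^t_U)'(q^t_i) - (\phi^t_U)'(p^t_i) = \lambda_p - \lambda_q$ is a single constant across $i \in U$, and since $\sum_{i \in U} q^t_i = 1 > \sum_{i \in U} p^t_i$ while $(\phi^t_U)'$ is strictly increasing, this constant must be positive. The same KKT identities applied to $p^t$, $q^t$, $q^{t+1}$ also translate the definition of $z$ into $\nabla\phi^{t+1}_U(z) + \hatL^{t+1}_U = c_0 \one_U$ for some scalar $c_0$. Applying \pref{thm:general_mono_thm} coordinate-wise with $(x,m,y,n) = (p^t_i,q^t_i,z_i,q^{t+1}_i)$ and summing over $U$ delivers $D^{t,t+1}_U(p^t,z) \leq D^{t,t+1}_U(q^t,q^{t+1})$.

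The main obstacle will be the first inequality, $D^{t,t+1}_U(p^t,\barp^{t+1}) \leq D^{t,t+1}_U(p^t,z)$. The plan is to exploit that $D^{t,t+1}_U(p^t,\cdot)$ and $D^{t+1}_U(p^t,\cdot)$ differ only by a constant in the second argument, and that $\barp^{t+1}$ and $z$ are both \emph{unconstrained} minimizers of linearly shifted versions of the FTRL objective $F(x) := \inner{x,\hatL^{t+1}_U} + \phi^{t+1}_U(x)$: the KKT of $\barp^{t+1}$ gives $\nabla\phi^{t+1}_U(\barp^{t+1}) + \hatL^{t+1}_U = -\lambda_B \one_U$ (so $\barp^{t+1}$ is the unconstrained minimizer of $F(x) + \lambda_B\sum_{i\in U} x_i$), while $z$ satisfies the analogous equation with $c_0$ in place of $-\lambda_B$ (so $z$ is the unconstrained minimizer of $F(x) - c_0\sum_{i\in U} x_i$). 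Comparing objective values at each minimizer and using $\sum_{i\in U}\barp^{t+1}_i = \sum_{i\in U} p^t_i$ yields the key inequality $F(\barp^{t+1}) - F(z) \geq c_0 \rbr{\sum_{i\in U} p^t_i - \sum_{i\in U} z_i}$. Substituting the two KKT identities into the definitions of $D^{t+1}_U(p^t,\barp^{t+1})$ and $D^{t+1}_U(p^t,z)$ shows that their difference simplifies to exactly $F(\barp^{t+1}) - F(z) - c_0 \rbr{\sum_{i\in U} p^t_i - \sum_{i\in U} z_i}$, which is nonnegative by the preceding inequality. Chaining the two Bregman bounds completes the argument.
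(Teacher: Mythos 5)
Your proposal is correct and, in substance, is the paper's own proof: you introduce the same intermediate point $z$ defined by gradient matching, use the same chain $D^{t,t+1}_U(p^t,\barp^{t+1})\leq D^{t,t+1}_U(p^t,z)\leq D^{t,t+1}_U(q^t,q^{t+1})$, prove $p^t_i\leq q^t_i$ on $U$ by the identical KKT/monotonicity argument, and apply \pref{thm:general_mono_thm} with the same quadruple $(x,m,y,n)=(p^t_i,q^t_i,z_i,q^{t+1}_i)$. The only methodological difference is the first inequality: the paper cites Lemma~18 of \citet{ito21a}, which gives the identity $D^{t,t+1}_U(p^t,\barp^{t+1})=D^{t,t+1}_U(p^t,z)-D^{t+1}_U(\barp^{t+1},z)$, whereas you re-derive it by comparing the FTRL objective values at the two stationary points; this is an equivalent, self-contained substitute, since the quantity $F(\barp^{t+1})-F(z)-c_0\big(\sum_{i\in U}p^t_i-\sum_{i\in U}z_i\big)$ you show is nonnegative is exactly $D^{t+1}_U(\barp^{t+1},z)$, and its nonnegativity is just nonnegativity of a Bregman divergence (equivalently, optimality of $z$ for the tilted objective). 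One loose end: for Shannon entropy, condition (ii) of \pref{thm:general_mono_thm} holds only on $(0,1]$, so you must actually verify $z_i\leq 1$; you flag this as ``crucial'' but never establish it. The fix is one line, using facts you already have: since $p^t_i\leq q^t_i$ and $\nabla\phi^t_U$ is entrywise increasing, the defining equation of $z$ gives $\nabla\phi^{t+1}_U(z)\leq\nabla\phi^{t+1}_U(q^{t+1})$ entrywise, and monotonicity of $\nabla\phi^{t+1}_U$ then yields $z_i\leq q^{t+1}_i\leq 1$. With that line added, your argument is complete.
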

\begin{proof}
 According to the KKT conditions of the optimization problem for $q^t$, we have \[
\nabla \phi^{t+1}_U (q^{t+1}) =\nabla \phi^{t}_U (q^{t})   - \hatl^t_U + \xi' \cdot \one_U,
\]
where $\xi' \in \mathbb{R}$ is a Lagrange multiplier for $q^{t+1}$.
Let $z \in \fR^K_{>0}$ be a vector that satisfies
\begin{align*}
\nabla \phi^{t+1}_U (z) =\nabla \phi^{t}_U (p^{t})   - \hatl^t_U + \xi' \cdot \one_U.
\end{align*}
By definition, we have 
\begin{equation} \label{eq:equation4Monotone}
    \nabla \phi^{t+1}_U (z) -\nabla \phi^{t}_U (p^{t})=\nabla \phi^{t+1}_U (q^{t+1}) - \nabla \phi^{t}_U (q^{t}) = - \hatl^t_U + \xi' \cdot \one_U.
\end{equation}
Also, by~\cite[Lemma 18]{ito21a}, we have
\begin{align*}
D^{t,t+1}_U(p^t, \barp^{t+1}) & = D^{t,t+1}_U(p^t, z) -  D^{t+1}_U(\barp^{t+1}, z)  \leq D^{t,t+1}_U(p^t, z), 
\end{align*}
where the last step uses the fact that the Bregman divergence is non-negative. Hence, to show $D^{t,t+1}_U(p^t, \barp^{t+1}) \leq D^{t,t+1}_U(q^t, q^{t+1})$, it suffices to show $D^{t,t+1}_U(p^t, z) \leq D^{t,t+1}_U(q^t, q^{t+1})$, which is done below by applying $x=p^t_i$, $y=z$, $m=q^t_i$, $n=q^{t+1}_i$, and $\xi = -\hatl^t_i+\xi'$ in \pref{thm:general_mono_thm}. 

First, we show that $p^t_i \leq q^t_i$ for any arm $i\in U$. By the KKT conditions for $q^t$ and $p^t$, we know that $\nabla \phi^t_U\rbr{q^t} = \nabla \phi^t_U(p^t_U) + \eta \cdot \one_U$ for a Lagrange multiplier $\eta \in \fR$. Due to the facts that $\sum_{i \in U} q^t_i = 1 \geq \sum_{i \in U} p^t_i$ and $\nabla \phi^t(\cdot)$ is monotone-increasing, $\eta$ is non-negative and $q^t_i \geq p^t_i$ holds for any arm $i\in U$. 

It is clear that all our regularizers can be written in the form of $\phi^t(p) = \sum_{i\in[K]} f^t(p_i)$, and we will apply \pref{thm:general_mono_thm} with such $f^t: [0,1] \to \fR$. 
It suffices to check the two required conditions on $f^t$.  First, all regularizers used in \pref{alg:main_alg} satisfy condition (i): $(f^t)'(z)$ is differentiable and concave. Second, as mentioned in \pref{sec:analysis}, we only need to show that the learning rate $\gamma^t_i$ is non-decreasing in $t$ (which is true in our case) and the regularizer itself is non-increasing, the latter of which can be verifed for all our regularizers as long as we have $p^t,z,q^t,q^{t+1} \in (0,1]^K$. As $z \in \fR^K_{>0}$, we still need to check $z_i \leq 1$. Indeed, from \pref{eq:equation4Monotone}, we have $\nabla \phi^{t+1}(z) \leq \nabla \phi^{t+1}(q^{t+1})$ (entry-wise inequality), and since $q^{t+1}_i \leq 1$ for $\forall i \in U$ and $\nabla \phi^{t+1}(\cdot)$ is monotone-increasing entry-wise, we have $z_i \leq 1$. 
To sum up, \pref{thm:general_mono_thm} indeed implies
\begin{align*}
D^{t,t+1}_U(p^t, z) \leq D^{t,t+1}_U(q^t, q^{t+1}).
\end{align*}
which further implies $D^{t,t+1}_U(p^t, \barp^{t+1}) \leq D^{t,t+1}_U(q^t, q^{t+1})$. Taking the summation over all the rounds concludes the proof. 
\end{proof}




\begin{theorem}[Restatement of \pref{thm:general_mono_thm}]
For any $t\in \mathbb{N}$, let $f^t: \Omega \to \mathbb{R}$ be a continuously-differentiable and strictly-convex function defined on $\Omega \subseteq \mathbb{R}$.
Suppose that the following two conditions hold for all $z \in \Omega$: (i) $(f^t)'(z)$ is differentiable and concave; (ii) $(f^{t+1})'(z) \leq  (f^t)'(z)$.
Then, for any $x, m\in \fR$ with $x\leq m$, and $y, n\in \fR$ such that $(f^{t+1})' (y) - (f^{t})'  (x)  = (f^{t+1})' (n) - (f^{t})' (m) = \xi$ for a fixed scalar $\xi$,
we have $D^{t,{t+1} }(x,y) \leq D^{t,{t+1} }(m,n)$, where $D^{t,{t+1}}(u,v) = f^t(u) - f^{t+1}(v) - (u-v)\cdot \rbr{ f^{t+1} }'(v)$ 
is the skewed Bregman divergence.
\end{theorem}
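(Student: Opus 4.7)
The plan is to parametrize the skewed Bregman divergence along the level set of the coupling $(f^{t+1})'(y) - (f^t)'(x) = \xi$ and show the resulting one-variable function is non-decreasing in $x$; since $(x,y)$ and $(m,n)$ lie on the same level set with $x \leq m$, this immediately yields $D^{t,t+1}(x,y) \leq D^{t,t+1}(m,n)$. Write $a = (f^t)'$ and $b = (f^{t+1})'$ for brevity. Strict convexity of $f^{t+1}$ makes $b$ strictly increasing, so for each $u$ there is a unique $y(u) \in \Omega$ with $b(y(u)) = a(u) + \xi$, with $y(x)=y$ and $y(m)=n$; implicit differentiation then gives $b'(y(u))\,y'(u) = a'(u)$. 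Define $g(u) = D^{t,t+1}(u, y(u)) = f^t(u) - f^{t+1}(y(u)) - (u - y(u))\,b(y(u))$.

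Next I would differentiate $g$. By the chain rule, $g'(u) = a(u) - b(y(u))\,y'(u) - (1 - y'(u))\,b(y(u)) - (u - y(u))\,b'(y(u))\,y'(u)$, and the two middle terms telescope. Substituting the implicit-derivative identity $b'(y(u))\,y'(u) = a'(u)$ and the defining relation $a(u) - b(y(u)) = -\xi$, this collapses to the clean form $g'(u) = (y(u) - u)\,a'(u) - \xi$. So it suffices to prove the pointwise inequality $(y(u) - u)\,a'(u) \geq \xi$.

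For this last step, condition (i) (concavity of $a = (f^t)'$) supplies the tangent-line upper bound $a(y) \leq a(u) + a'(u)(y - u)$ for all $y, u \in \Omega$. Combining with condition (ii), $b(y) \leq a(y)$, gives $\xi = b(y(u)) - a(u) \leq a(y(u)) - a(u) \leq a'(u)\,(y(u) - u)$, which is exactly what is needed. Crucially, this argument is insensitive to the sign of $\xi$ and to whether $y(u) \geq u$ or $y(u) \leq u$, so no case split is required. Hence $g'(u) \geq 0$ on $[x,m]$, and integrating yields $D^{t,t+1}(x,y) = g(x) \leq g(m) = D^{t,t+1}(m,n)$.

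The main obstacle is the derivative computation: choosing the right parametrization, namely along the level curve of the coupling rather than (say) along the straight segment from $(x,y)$ to $(m,n)$, and then recognizing that the implicit-derivative identity $b'(y(u))\,y'(u) = a'(u)$ causes every $b'$ term to collapse into $a'$ so that condition (i) (which is about $a'$, not $b'$) becomes usable. Once that simplification is visible, the invocation of concavity of $a$ together with condition (ii) is essentially forced, and concavity of $b$, while granted by the hypothesis for the companion step $t \mapsto t{+}1$, is not needed here.
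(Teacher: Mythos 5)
Your proposal is correct and follows essentially the same route as the paper's proof: both parametrize $y$ as an implicit function of $x$ along the level set of the coupling, differentiate the skewed Bregman divergence, use the identity $(f^{t+1})''(y)\,\frac{dy}{dx}=(f^t)''(x)$ together with condition (ii) to reduce everything to $(f^t)'$, and finish with the concavity (tangent-line) bound on $(f^t)'$. Your rewriting of the derivative as $(y(u)-u)(f^t)''(u)-\xi$ is just a cosmetic reorganization of the paper's chain of inequalities.
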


\begin{proof}[Proof of \pref{thm:general_mono_thm}] For shorthand, we denote $F^t(z) =  (f^{t})'(z)$ and use $(F^t)'(z)$ to denote the first-order derivative of $F^t$ with respect to $z$. Taking the derivative with respect to $x$ on both sides of $(f^{t+1})' (y) - (f^{t})'  (x)  = \xi$, we have
\begin{equation}\label{eq:hybridRelation2}
(F^{t})'(x)=(F^{t+1})'(y)\frac{dy}{dx} .
\end{equation}

We now see $y$ as a function of $x$ and $D^{t,t+1} (x, y) =  f^t(x) -f^{t+1}(y) - \left(x -y \right) F^{t+1}(y) $ as a function of $x$ as well. Taking the derivative with respect to $x$, we have
\begin{align*}
    &(D^{t,t+1})' (x, y)\\
    &= F^t(x) -F^{t+1}(y) \frac{dy}{dx} - F^{t+1}(y) - x (F^{t+1})'(y)\frac{dy}{dx}+ F^{t+1}(y)\frac{dy}{dx} +y (F^{t+1})'(y)\frac{dy}{dx} \\
    &= F^t(x)  - F^{t+1}(y) - x (F^{t+1})'(y)\frac{dy}{dx} +y (F^{t+1})'(y)\frac{dy}{dx}\\
   & \geq   F^t(x)  - F^{t}(y) - x (F^{t+1})'(y)\frac{dy}{dx} +y (F^{t+1})'(y)\frac{dy}{dx} \tag{$F^{t+1}(y) \leq F^{t}(y)$}\\
   & = F^t(x)  - F^{t}(y) + (y-x) (F^{t})'(x) \tag{by \pref{eq:hybridRelation2}}\\
  &\geq 0 \tag{concavity of $F^t(\cdot)$}.
\end{align*}
The implies $D^{t,t+1}(x, y) \leq D^{t,t+1}(m, n)$ given the condition $x\leq m$.
\end{proof}

\subsection{Residual Regret}
\label{sec:app_resreg}

Throughout this section, we write all our regularizers in the form $\phi^t(x) = -\clog \sum_{i\in [K]}\log x_i + \sum_{i \in [K]}\gamma^t_i \psi(x_i)$ for a proper choice of $\psi(\cdot)$  (e.g., $\psi(x_i) = x_i\log\rbr{\frac{x_i}{e}}$ for the Shannon entropy regularizer). 
Also, we will present the Hessian $\nabla^2 \phi^t(p^t)$ as a diagonal matrix whose diagonal element for arm $i$ is denoted as $w^t_i$, that is, $\nabla^{2}\phi^t(p^t) = \diag{w^t_i:i \in [K]}$.
Clearly, it always holds that $w^t_i = \frac{\clog}{\rbr{p^t_i}^2} + \gamma^t_i \psi''(p^t_i)$. 

With these notations, we are now ready to introduce our decomposition of $\E^t\sbr{D^{{t+1}}(\barp^{t+1},p^{t+1})}$ into the term we discussed in \pref{sec:analysis} plus  other terms. 
\begin{lemma}\label{lem:app_resreg_decomp} Under \pref{eq:app_multi_use}, \pref{alg:main_alg} ensures that $\E^t\sbr{D^{{t+1}}(\barp^{t+1},p^{t+1})}$ is bounded by 
\begin{align*}
& \order\rbr{ \frac{\rbr{ \sum_{i \in V}  \frac{1}{w^t_i} } \rbr{ \sum_{i \in U}  \frac{1}{\rbr{w^t_i}^2p^t_i}  }}{\rbr{ \sum_{i \in [K]}  \frac{1}{w^t_i} } \rbr{ \sum_{i \in U}  \frac{1}{w^t_i}  }} + \sum_{i \in V}  \frac{1}{p^t_i w^t_i}  + K \sum_{i\in [K]}\frac{ \rbr{ \gamma^{t}_i - \gamma^{t+1}_i }^2\rbr{\psi'(p^t_i)}^2}{w^t_i}   }.
\end{align*}
\end{lemma}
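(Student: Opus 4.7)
My plan is to analyze $D^{t+1}(\barp^{t+1}, p^{t+1})$ via a second-order Taylor expansion and characterize the two iterates through their KKT conditions. Since every regularizer we use is separable and has diagonal Hessian $\nabla^2\phi^{t+1}(x) = \mathrm{diag}\{w^{t+1}_i\}$, and since \pref{eq:app_multi_use} guarantees that $p^{t+1}_i$, $\barp^{t+1}_i$ and $p^t_i$ are within constant multiplicative factors of one another, I can replace the Hessian along the segment by $\nabla^2\phi^t(p^t)$ at the cost of a constant, yielding
\[
D^{t+1}(\barp^{t+1},p^{t+1}) \;\lesssim\; \sum_{i\in[K]} w^t_i \bigl(\barp^{t+1}_i - p^{t+1}_i\bigr)^2.
\]

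\textbf{KKT characterization.} The next step is to write $p^{t+1}$ and $\barp^{t+1}$ through Lagrange multipliers: $\nabla\phi^{t+1}(p^{t+1}) = -\hatL^{t+1} + \mu\,\one$ for some scalar $\mu$, while $\nabla\phi^{t+1}(\barp^{t+1}) = -\hatL^{t+1} + \lambda_U\,\one_U + \lambda_V\,\one_V$ by \pref{eq:pbar_def}. Inverting the Hessian (again with the stability approximation) gives $\barp^{t+1}_i - p^{t+1}_i \approx (\lambda_j - \mu)/w^t_i$ where $j\in\{U,V\}$ denotes the set containing $i$. The budget constraint $\sum_i(\barp^{t+1}_i - p^{t+1}_i)=0$ determines $\mu$ as the weighted average of $\lambda_U,\lambda_V$ with weights $\sum_U 1/w^t_i$ and $\sum_V 1/w^t_i$. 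Plugging this back, a short computation yields
\[
\sum_{i\in[K]} w^t_i (\barp^{t+1}_i - p^{t+1}_i)^2 \;\lesssim\; (\lambda_U - \lambda_V)^2 \cdot \frac{\bigl(\sum_{i\in U}\tfrac{1}{w^t_i}\bigr)\bigl(\sum_{i\in V}\tfrac{1}{w^t_i}\bigr)}{\sum_{i\in[K]}\tfrac{1}{w^t_i}}.
\]

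\textbf{Solving for the multipliers.} To bound $(\lambda_U - \lambda_V)^2$ I will relate $\barp^{t+1}$ back to $p^t$ by Taylor-expanding $\nabla\phi^{t+1}(\barp^{t+1})_i - \nabla\phi^t(p^t)_i = w^t_i(\barp^{t+1}_i - p^t_i) + (\gamma^{t+1}_i - \gamma^t_i)\psi'(p^t_i)$. Since $p^t$ satisfies $\nabla\phi^t(p^t) = -\hatL^t + \mu^t\,\one$ and the constraints in \pref{eq:pbar_def} force $\sum_{i\in U}(\barp^{t+1}_i-p^t_i)=\sum_{i\in V}(\barp^{t+1}_i-p^t_i)=0$, summing the identity weighted by $1/w^t_i$ over $U$ (resp.\ $V$) gives
\[
\lambda_U - \mu^t \;\approx\; \frac{\sum_{i\in U}\tfrac{\hatl^t_i}{w^t_i} + \sum_{i\in U}\tfrac{(\gamma^{t+1}_i-\gamma^t_i)\psi'(p^t_i)}{w^t_i}}{\sum_{i\in U} 1/w^t_i},
\]
and similarly for $\lambda_V-\mu^t$. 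Taking $(\,\cdot\,)^2$, conditional expectation, and using $\E^t[(\hatl^t_i)^2/(w^t_i)^2] \leq 1/((w^t_i)^2 p^t_i)$ together with Cauchy--Schwarz on the learning-rate perturbation (this is where the factor $K$ enters), I obtain
\[
\E^t[(\lambda_U-\lambda_V)^2] \;\lesssim\; \frac{\sum_{i\in U}\tfrac{1}{(w^t_i)^2 p^t_i}}{\bigl(\sum_{i\in U}\tfrac{1}{w^t_i}\bigr)^2} + \frac{\sum_{i\in V}\tfrac{1}{(w^t_i)^2 p^t_i}}{\bigl(\sum_{i\in V}\tfrac{1}{w^t_i}\bigr)^2} + K\sum_{i\in[K]}\frac{(\gamma^t_i-\gamma^{t+1}_i)^2\psi'(p^t_i)^2}{w^t_i}\cdot(\text{normalizer}).
\]

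\textbf{Final assembly.} Multiplying this bound by the coefficient $\frac{(\sum_U 1/w^t_i)(\sum_V 1/w^t_i)}{\sum_{[K]} 1/w^t_i}$ from the Bregman step, the first term reproduces the leading $U$-asymmetric expression in the lemma verbatim, while the second (symmetric) term is bounded by $\sum_{i\in V}\frac{1}{w^t_i p^t_i}$ using the elementary estimate $\frac{\sum_{i\in V}a_i}{\sum_{i\in V}b_i}\leq \max_{i\in V}\frac{a_i}{b_i}\leq \sum_{i\in V}\frac{a_i}{b_i}$ with $a_i = 1/((w^t_i)^2 p^t_i), b_i = 1/w^t_i$, combined with $\frac{\sum_U 1/w^t_i}{\sum_{[K]} 1/w^t_i}\leq 1$. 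The learning-rate perturbation contributes exactly the third term. The main obstacle is the last point: unlike the arm-independent learning rate in~\citep{ito21a} where these perturbations simply vanish, here I must carefully propagate the $(\gamma^{t+1}_i-\gamma^t_i)\psi'(p^t_i)$ contributions through the whole chain of Lagrange-multiplier bounds without losing more than a factor $K$, which is what the crude Cauchy--Schwarz step supplies.
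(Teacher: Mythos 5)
Your proposal follows essentially the same route as the paper's proof: reduce $D^{t+1}(\barp^{t+1},p^{t+1})$ to a weighted quadratic in the Lagrange multipliers, namely $\frac{(\sum_{i\in U}1/w^t_i)(\sum_{i\in V}1/w^t_i)}{\sum_{i\in[K]}1/w^t_i}(\lagU-\lagV)^2$ after optimizing the centering constant; bound $\lagU,\lagV$ by weighted averages of $\hatl^t_i/w^t_i$ plus $(\gamma^{t+1}_i-\gamma^t_i)\psi'(p^t_i)/w^t_i$; then take $\E^t$ using $\E^t[(\hatl^t_i)^2]\le 1/p^t_i$ for the loss parts (giving the asymmetric leading term and, via the ratio estimate, $\sum_{i\in V}\frac{1}{p^t_iw^t_i}$) and Cauchy--Schwarz for the learning-rate parts (giving the factor $K$). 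This is exactly the paper's decomposition in \pref{lem:app_resreg_decomp}.

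The gap is that the two middle steps are asserted with ``$\approx$'' where the paper supplies actual arguments, and one of your intermediate claims is false as stated. First, $D^{t+1}(\barp^{t+1},p^{t+1})\lesssim\sum_i w^t_i(\barp^{t+1}_i-p^{t+1}_i)^2$ requires $\nabla^2\phi^{t+1}$ along the segment to be $\order(\nabla^2\phi^t(p^t))$, i.e.\ $\gamma^{t+1}_i=\order(\gamma^t_i)$; \pref{eq:app_multi_use} controls the points but not the learning-rate growth, and e.g.\ for Shannon entropy at $t=1$ one has $\gamma^2_i/\gamma^1_i=\sqrt{1+K}$. If you keep $w^{t+1}$ consistently in both the quadratic form and the Hessian inversion the mismatched factors cancel and you recover the correct dual expression $\sum_i(\lambda-c)^2/w^t_i$, but the cleaner fix is the paper's: go through \pref{lem:app_breg_analysis_opt_and_interm} to get $\norm{\nabla\phi^{t+1}(\barp^{t+1})-\nabla\phi^{t+1}(p^{t+1})-c\cdot\one_K}^2_{\nabla^{-2}\phi^{t}(p^t)}$ directly, which only needs $\gamma^{t+1}_i\ge\gamma^t_i$. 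Second, your linearized KKT identity for the multipliers drops the second-order remainder and evaluates $\psi''$ at $p^t_i$; making this rigorous for general regularizers with arm-dependent rates (where no closed form for $\barp^{t+1}$ exists) is precisely the content of \pref{lem:lemma_21_raw}/\pref{lem:lemma_21_gen_ext}, which exploit concavity of $\psi'$ to produce genuine one-sided bounds: the upper bounds on $\lagU,\lagV$ contain only the $\hatl^t_i$ terms and the lower bounds only the $(\gamma^t_i-\gamma^{t+1}_i)\psi'(p^t_i)$ terms (using $\hatl^t_i\ge0$ and $(\gamma^{t+1}_i-\gamma^t_i)\psi'(p^t_i)\le0$), after which the split $(\lagU-\lagV)^2\le2\lagU^2+2\lagV^2$ and Cauchy--Schwarz give the three terms of the lemma. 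So your skeleton and final assembly are right, but the ``short computation'' and ``$\approx$'' steps are exactly where the new technical work of this lemma lives and need to be replaced by these arguments.
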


\begin{proof} 
By \pref{lem:app_breg_analysis_opt_and_interm}, for any $c\in \fR$, we have $D^{{t+1}}(\barp^{t+1},p^{t+1})$ bounded as
\begin{align*}
& \inner{ \barp^{t+1} - p^{t+1}, \nabla \phi^{t+1}(\barp^{t+1})-\nabla \phi^{t+1}(p^{t+1})} -  D^{{t+1}}(p^{t+1},\barp^{t+1}) \\
& = \inner{ \barp^{t+1} - p^{t+1}, \nabla \phi^{t+1}(\barp^{t+1})-\nabla \phi^{t+1}(p^{t+1}) - c \cdot \one_K } -  D^{{t+1}}(p^{t+1},\barp^{t+1})\\
& = 2\norm{\nabla\phi^{t+1}(\barp^{t+1})-\nabla \phi^{t+1}(p^{t+1}) - c \cdot \one_K}_{\nabla^{-2}\phi^{t+1}\rbr{\xi}} \\
& = \order\rbr{\norm{\nabla\phi^{t+1}(\barp^{t+1})-\nabla \phi^{t+1}(p^{t+1}) - c \cdot \one_K}_{\nabla^{-2}\phi^{t}\rbr{p^t}}} ,
\end{align*}
where the first step follows from the fact that $\barp^{t+1},p^{t+1}\in \distri_K$, thus $\inner{\barp^{t+1}-p^{t+1},c\cdot \one_K}=0$ holds for any $c\in \fR$; the second step follows from \pref{lem:app_breg_analysis_opt_and_interm}, where
$\xi$ is some intermediate point between $\barp^{t+1}$ and $p^{t+1}$; the thirds step follows from the multiplicative relation between $\barp^{t+1}$ and $p^{t+1}$, and the fact that $\gamma^{t+1}_i \geq \gamma^t_i$ for all $t,i$. 

Then, we analyze the difference  $\nabla \phi^{t+1}(\barp^{t+1})-\nabla \phi^{t+1}(p^{t+1})$. According to the KKT conditions of the optimization problems in the $\ftrl$ framework, we have 
\begin{align*}
\nabla \phi^{t+1}_U (\barp^{t+1}) = \nabla \phi^{t}_U(p^t) - \hatl^t_U + \lagU \cdot \one_U, \\
\nabla \phi^{t+1}_V (\barp^{t+1}) = \nabla \phi^{t}_V(p^t) - \hatl^t_V + \lagV \cdot \one_V,\\
\nabla \phi^{t+1}_U (p^{t+1}) = \nabla \phi^{t}_U(p^t) - \hatl^t_U + \lagK \cdot \one_U,\\
\nabla \phi^{t+1}_V (p^{t+1}) = \nabla \phi^{t}_V(p^t) - \hatl^t_V + \lagK \cdot \one_V,
\end{align*}
where $\lagU, \lagV, \lagK$ are corresponding Lagrange multipliers for the $\barp^{t+1}$ and $p^{t+1}$.

To give a tighter bound of the Lagrange multipliers $\lagU$ and $\lagV$, we greatly extend the approach of \cite{ito21a} in \pref{lem:lemma_21_gen_ext}. Together with the multiplicative relation between $p^t$,$\barp^{t+1}$ and $p^{t+1}$, we obtain $\lagU$'s upper and lower bounds:
\begin{align*}
\lagU &\leq \order\rbr{ \rbr{ \sum_{i \in U}   \frac{1}{w^t_i}   }^{-1} \rbr{ \sum_{i \in U} \frac{\hatl^t_i}{w^t_i}  } },\\
\lagU &\geq \order\rbr{ - \rbr{ \sum_{i \in U}   \frac{1}{w^t_i}   }^{-1} \rbr{\sum_{i \in U}    \frac{\rbr{ \gamma^t_i - \gamma^{t+1}_i  }\psi'(p^t_i)}{w^t_i}  }}.
\end{align*}

Similarly, we have the following upper and lower bounds of $\lagV$:
\begin{align*}
\lagV &\leq \order\rbr{ \rbr{ \sum_{i \in V}   \frac{1}{w^t_i}   }^{-1} \rbr{ \sum_{i \in V} \frac{\hatl^t_i}{w^t_i}  } } \\
\lagV &\geq \order\rbr{ - \rbr{ \sum_{i \in V}   \frac{1}{w^t_i}   }^{-1} \rbr{\sum_{i \in V}   \frac{\rbr{ \gamma^t_i - \gamma^{t+1}_i  }\psi'(p^t_i)}{w^t_i}  }}.
\end{align*}

With the help of these notations, for any $c\in \fR$, we have 
\begin{align} 
& \norm{\nabla \phi^{t+1}(\barp^{t+1})-\nabla \phi^{t+1}(p^{t+1}) - c \cdot \one_K}_{\nabla^{-2}\phi^{t}\rbr{p^t}} \notag \\
& = \norm{ \lagU \cdot \one_U + \lagV \cdot \one_V - \lagK \cdot \one_K - c \cdot \one_K }^2_{\nabla^{-2} \phi^{t}(p^t)} \notag\\
& = \norm{ \lagU \cdot \one_U + \lagV \cdot \one_V - c' \cdot \one_K }^2_{\nabla^{-2} \phi^{t}(p^t)} \notag\\
& = \sum_{i \in U} \frac{1}{w^t_i}\rbr{\lagU - c'}^2  +  \sum_{i \in V} \frac{1}{w^t_i}\rbr{\lagV - c'}^2, \notag 
\end{align}
where in the second step we define $c'=c + \lagK$.
Picking $c'$ to minimize this term yields
\begin{align}
& \E^t\sbr{D^{{t+1}}(\barp^{t+1},p^{t+1})} \notag \\
& \leq \order\rbr{ \E^t\sbr {\frac{ \rbr{ \sum_{i \in V}  \frac{1}{w^t_i} } \rbr{ \sum_{i \in U}  \frac{1}{w^t_i}  }}{ \sum_{i\in [K]}  \frac{1}{w^t_i} }  \rbr{\lagU- \lagV}^2}} \notag \\
&= \order\rbr{ \E^t\sbr{  \frac{ \rbr{ \sum_{i \in V}  \frac{1}{w^t_i} } \rbr{ \sum_{i \in U}  \frac{1}{w^t_i}  }}{ \sum_{i\in [K]}  \frac{1}{w^t_i} }  \rbr{\lagU^2 + \lagV^2}}} \notag \\
& =\order\rbr{ \E^t\sbr{ \frac{ \rbr{ \sum_{i \in V}  \frac{1}{w^t_i} } \rbr{ \sum_{i \in U}  \frac{1}{w^t_i}  }}{ \sum_{i\in [K]}  \frac{1}{w^t_i} } \lagU^2 } + \E^t\sbr{ \frac{ \rbr{ \sum_{i \in V}  \frac{1}{w^t_i} } \rbr{ \sum_{i \in U}  \frac{1}{w^t_i}  }}{ \sum_{i\in [K]}  \frac{1}{w^t_i} } \lagV^2}  } ,\label{eq:shift_cost_decomp2}
\end{align}
where the second step uses $(x-y)^2 \leq 2(x^2+y^2)$ for any $x,y \in \fR$. In \pref{eq:shift_cost_decomp2}, the first term is only related to $\lagU$, and the second term depends on $\lagV$. 
By upper and lower bounds of $\lagU$ and $\lagV$, we have \pref{eq:shift_cost_decomp2} bounded as
\begin{equation}\label{eq:app_resreg_decomp_main_eq}
\begin{aligned}
& \order\rbr{ \E^t\sbr{ \frac{\rbr{ \sum_{i \in V}  \frac{1}{w^t_i} } \rbr{ \sum_{i \in U}  \frac{\hatl^t_i}{w^t_i}  }^2}{\rbr{ \sum_{i \in [K]}  \frac{1}{w^t_i} } \rbr{ \sum_{i \in U}  \frac{1}{w^t_i}  }}  }  + { \frac{ \rbr{ \sum_{i \in U}  {\frac{ \rbr{ \gamma^{t}_i - \gamma^{t+1}_i }\psi'(p^t_i) }{w^t_i}}  }^2}{ \rbr{ \sum_{i \in U}  \frac{1}{w^t_i}  }}  } } \\
& + \order\rbr{ \E^t\sbr{ \frac{\rbr{ \sum_{i \in U}  \frac{1}{w^t_i} } \rbr{ \sum_{i \in V}  \frac{\hatl^t_i}{w^t_i}  }^2}{\rbr{ \sum_{i \in [K]}  \frac{1}{w^t_i} } \rbr{ \sum_{i \in V}  \frac{1}{w^t_i}  }}  }  
+ 
{ \frac{ \rbr{ \sum_{i \in V}  {\frac{ \rbr{ \gamma^{t}_i - \gamma^{t+1}_i }\psi'(p^t_i) }{w^t_i}}  }^2}{ \rbr{ \sum_{i \in V}  \frac{1}{w^t_i}  }}  } }.
\end{aligned}
\end{equation}

Then, we further bound the first and the third term in \pref{eq:app_resreg_decomp_main_eq}. For the first term, we have  
\begin{align}
\E^t\sbr{ \frac{\rbr{ \sum_{i \in V}  \frac{1}{w^t_i} } \rbr{ \sum_{i \in U}  \frac{\hatl^t_i}{w^t_i}  }^2}{\rbr{ \sum_{i \in [K]}  \frac{1}{w^t_i} } \rbr{ \sum_{i \in U}  \frac{1}{w^t_i}  }}} & =  \E^t\sbr{\frac{\rbr{ \sum_{i \in V}  \frac{1}{w^t_i} } \rbr{ \sum_{i \in U}  \rbr{\frac{\hatl^t_i}{w^t_i}}^2  }}{\rbr{ \sum_{i \in [K]}  \frac{1}{w^t_i} } \rbr{ \sum_{i \in U}  \frac{1}{w^t_i}  }}} \notag \\
& \leq \frac{\rbr{ \sum_{i \in V}  \frac{1}{w^t_i} } \rbr{ \sum_{i \in U}  \frac{1}{\rbr{w^t_i}^2p^t_i}  }}{\rbr{ \sum_{i \in [K]}  \frac{1}{w^t_i} } \rbr{ \sum_{i \in U}  \frac{1}{w^t_i}  }}, \label{eq:app_resreg_decomp_main_eq_term1}
\end{align}
where the first step uses the fact that $\hatl^t_i \cdot \hatl^t_j = 0$ for $i\neq j$, and the second step follows from the definition of importance-weighted loss estimator. Similarly, we have the third term bounded as
\begin{align}
\E^t\sbr{ \frac{\rbr{ \sum_{i \in U}  \frac{1}{w^t_i} } \rbr{ \sum_{i \in V}  \frac{\hatl^t_i}{w^t_i}  }^2}{\rbr{ \sum_{i \in [K]}  \frac{1}{w^t_i} } \rbr{ \sum_{i \in V}  \frac{1}{w^t_i}  }}  }  
\leq 
{  \frac{\rbr{ \sum_{i \in V}  \frac{1}{p^t_i \rbr{ w^t_i}^2 } } \rbr{ \sum_{i \in U}  \frac{1}{w^t_i}  }}{ \rbr{ \sum_{i\in [K]}  \frac{1}{w^t_i}}\rbr{ \sum_{i \in V}  \frac{1}{w^t_i} } } } & \leq \sum_{i \in V}  \frac{1}{p^t_i w^t_i}. \label{eq:app_resreg_decomp_main_eq_term2}
\end{align}
For the second term in \pref{eq:app_resreg_decomp_main_eq}, we have 
\begin{align}
{ \frac{ \rbr{ \sum_{i \in U}  {\frac{ \rbr{ \gamma^{t}_i - \gamma^{t+1}_i }\psi'(p^t_i) }{w^t_i}}  }^2}{ \rbr{ \sum_{i \in U}  \frac{1}{w^t_i}  }}  } & \leq \frac{\abr{U}{ \sum_{i \in U}  \rbr{\frac{ \rbr{ \gamma^{t}_i - \gamma^{t+1}_i }\psi'(p^t_i) }{w^t_i}}^2  }}{ \rbr{ \sum_{i \in U}  \frac{1}{w^t_i}  }} \leq K \sum_{i\in U}\frac{ \rbr{ \gamma^{t}_i - \gamma^{t+1}_i }^2\rbr{\psi'(p^t_i)}^2}{w^t_i}, \label{eq:app_resreg_decomp_main_eq_term3}
\end{align}
where the first step follows from the Cauchy-Schwarz inequality. Similarly, we have the following bound for the forth term 
\begin{align}
{ \frac{ \rbr{ \sum_{i \in V}  {\frac{ \rbr{ \gamma^{t}_i - \gamma^{t+1}_i }\psi'(p^t_i) }{w^t_i}}  }^2}{ \rbr{ \sum_{i \in V}  \frac{1}{w^t_i}  }}  } \leq K \sum_{i\in V}\frac{ \rbr{ \gamma^{t}_i - \gamma^{t+1}_i }^2\rbr{\psi'(p^t_i)}^2}{w^t_i}. \label{eq:app_resreg_decomp_main_eq_term4}
\end{align}

Plugging \pref{eq:app_resreg_decomp_main_eq_term1}, \pref{eq:app_resreg_decomp_main_eq_term2}, \pref{eq:app_resreg_decomp_main_eq_term3}, and \pref{eq:app_resreg_decomp_main_eq_term4} into \pref{eq:app_resreg_decomp_main_eq} finishes the proof. 
\end{proof}

As mentioned earlier in \pref{sec:analysis}, the first term in \pref{lem:app_resreg_decomp} is the leading term in the stochastic setting which will yield a term of the order $\order\rbr{ \frac{|U|\log T}{\gapmin} }$ in the regret bound.
In the following lemma, we further decompose this term into two parts where the first part is related to $\selftwo$ and the second part is related to $\selfone$.

\begin{lemma} \label{lem:app_resreg_main_decomp} 
The following holds:
\begin{align*}
\frac{\rbr{ \sum_{i \in V}  \frac{1}{w^t_i} } \rbr{ \sum_{i \in U}  \frac{1}{\rbr{w^t_i}^2p^t_i}  }}{\rbr{ \sum_{i \in [K]}  \frac{1}{w^t_i} } \rbr{ \sum_{i \in U}  \frac{1}{w^t_i}  }} \leq \sum_{i \in V}  \frac{1}{p^t_i w^t_i} +  \sum_{i \in U}\frac{\Ind{p^t_i\leq  \sum_{j\in V}p^t_j} }{p^t_i w^t_i}\rbr{ \frac{\frac{1}{w^t_i}}{\sum_{i \in U}  \frac{1}{w^t_i}} }.
\end{align*}
\end{lemma}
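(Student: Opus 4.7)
The plan is to view the left-hand side as a scaled weighted average over $U$, then split $U$ according to the indicator on the right-hand side. Introduce the shorthand $\alpha = \frac{\sum_{i\in V} 1/w^t_i}{\sum_{i\in [K]} 1/w^t_i} \in [0,1]$ and the probability weights $\lambda_i = \frac{1/w^t_i}{\sum_{j\in U} 1/w^t_j}$ for $i\in U$ (so $\sum_{i\in U}\lambda_i=1$). A direct rewriting gives
\[
\text{LHS} \;=\; \alpha \sum_{i\in U} \lambda_i \cdot \frac{1}{w^t_i p^t_i}.
\]
Now partition $U = U_1 \sqcup U_2$ where $U_1 = \{i\in U : p^t_i \leq \sum_{j\in V} p^t_j\}$ and $U_2 = U\setminus U_1$, and handle the two pieces separately.

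For the $U_1$ part, simply use $\alpha \leq 1$ to obtain $\alpha \sum_{U_1} \lambda_i/(w^t_i p^t_i) \leq \sum_{i\in U} \Ind{p^t_i \leq \sum_{j\in V} p^t_j} \cdot \lambda_i/(w^t_i p^t_i)$, which is exactly the second term on the right-hand side.

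The harder part is the $U_2$ piece, which must be absorbed into the sum $\sum_{j\in V} \frac{1}{p^t_j w^t_j}$ over $V$ only. The plan is a chain of three inequalities. First, for each $i\in U_2$, use the defining condition $p^t_i > \sum_{j\in V} p^t_j$ to replace $1/p^t_i$ by $1/\sum_{j\in V} p^t_j$, obtaining
\[
\alpha \sum_{i\in U_2} \lambda_i \cdot \frac{1}{w^t_i p^t_i} \;\leq\; \frac{\alpha}{\sum_{j\in V} p^t_j}\sum_{i\in U} \frac{\lambda_i}{w^t_i}.
\]
Second, bound $\sum_{i\in U} \lambda_i/w^t_i = \frac{\sum_{i\in U}(1/w^t_i)^2}{\sum_{j\in U}1/w^t_j} \leq \sum_{i\in U} 1/w^t_i$ (using $\sum x_i^2 \leq (\sum x_i)^2$ for nonnegatives), and then note $\alpha \sum_{i\in U} 1/w^t_i = \frac{(\sum_V 1/w^t_i)(\sum_U 1/w^t_i)}{\sum_{[K]} 1/w^t_i} \leq \sum_{j\in V} 1/w^t_j$. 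Third, apply Jensen's inequality to the convex function $x\mapsto 1/x$ with weights $\mu_j = \frac{1/w^t_j}{\sum_{k\in V} 1/w^t_k}$ on $V$ to obtain
\[
\sum_{j\in V} \frac{1}{p^t_j w^t_j}
\;=\; \Bigl(\sum_{k\in V} \tfrac{1}{w^t_k}\Bigr) \sum_{j\in V} \frac{\mu_j}{p^t_j}
\;\geq\; \frac{\sum_{k\in V} 1/w^t_k}{\sum_{j\in V} \mu_j p^t_j}
\;\geq\; \frac{\sum_{k\in V} 1/w^t_k}{\sum_{j\in V} p^t_j},
\]
where the last step uses $\sum_{j\in V}\mu_j p^t_j \leq \max_{j\in V} p^t_j \leq \sum_{j\in V} p^t_j$. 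Chaining these three bounds yields $\alpha \sum_{i\in U_2}\lambda_i/(w^t_i p^t_i) \leq \sum_{j\in V} 1/(p^t_j w^t_j)$, which is the first term on the right-hand side.

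The main obstacle is the $U_2$ case, since it requires converting a sum over optimal arms into one over sub-optimal arms. The key insight that makes this work is that the condition $p^t_i > \sum_{j\in V} p^t_j$ plus the reciprocal-convexity trick (Jensen on $1/x$) are exactly sharp enough to compensate for the factor $\alpha = \sum_V 1/w^t_i \,/\,\sum_{[K]} 1/w^t_i$ — essentially, the $V$-weighted portion of $\sum_{[K]} 1/w^t_i$ in the numerator $\alpha$ provides just enough ``mass'' to dominate the $U$-averaged term.
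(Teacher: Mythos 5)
Your proposal is correct. It uses the same starting point as the paper's proof — rewriting the left-hand side as $\alpha\sum_{i\in U}\lambda_i\frac{1}{p^t_i w^t_i}$ with $\alpha=\frac{\sum_{i\in V}1/w^t_i}{\sum_{i\in[K]}1/w^t_i}$ and weights $\lambda_i$ summing to one over $U$, followed by a two-case split with one case landing in each term of the bound — but the split is organized in the contrapositive direction. The paper defines its indicator $A^t_i$ by comparing the $i$-th term itself against $\sum_{j\in V}\frac{1}{p^t_jw^t_j}$, handles that case trivially, and then must prove an implication showing that $A^t_i=0$ forces $p^t_i\leq\sum_{j\in V}p^t_j$; you instead split directly on the probability condition $p^t_i\leq\sum_{j\in V}p^t_j$ (so the indicator in the final bound is exactly your split criterion, and the $U_1$ case is immediate from $\alpha\leq 1$), and in the complementary case you show the whole $U_2$ contribution is absorbed by $\sum_{j\in V}\frac{1}{p^t_jw^t_j}$. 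The analytic core is identical in both arguments: the inequality $\sum_{j\in V}\frac{1}{w^t_j}\leq\bigl(\sum_{j\in V}p^t_j\bigr)\bigl(\sum_{j\in V}\frac{1}{p^t_jw^t_j}\bigr)$, which the paper invokes inside its implication chain and you obtain via Jensen on $x\mapsto 1/x$, together with the elementary bounds $\sum_{i\in U}\frac{\lambda_i}{w^t_i}\leq\sum_{i\in U}\frac{1}{w^t_i}$ and $\alpha\sum_{i\in U}\frac{1}{w^t_i}\leq\sum_{j\in V}\frac{1}{w^t_j}$. Your organization is marginally more streamlined since it avoids the separate indicator-domination step $1-A^t_i\leq\Ind{p^t_i\leq\sum_{j\in V}p^t_j}$; the paper's organization makes the ``Case 1'' bound per-arm and thus slightly more local. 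Either way the constants and the final statement are the same, so the proposal stands as a valid alternative proof.
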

\begin{proof} We first rewrite the fraction on left hand side as:
\begin{align} \label{eq:residual_reg_rewritten}
\frac{\rbr{ \sum_{i \in V}  \frac{1}{w^t_i} } \rbr{ \sum_{i \in U}  \frac{1}{\rbr{w^t_i}^2p^t_i}  }}{\rbr{ \sum_{i \in [K]}  \frac{1}{w^t_i} } \rbr{ \sum_{i \in U}  \frac{1}{w^t_i}  }} = \sum_{i\in U} \frac{1}{p^t_i w^t_i}  \frac{{ \sum_{i \in V}  \frac{1}{w^t_i} } }{{ \sum_{i \in [K]}  \frac{1}{w^t_i} }} \rbr{ \frac{ \frac{1}{w^t_i} }{ \sum_{i \in U} \frac{1}{w^t_i} } }.
\end{align}
To bound the right-hand side of \pref{eq:residual_reg_rewritten}, we introduce an indicator function for any $t$ and any $i\in U$:
\begin{align*}
A^t_i = \ind\cbr{ \frac{1}{p^t_i w^t_i}  \frac{{ \sum_{j \in V}  \frac{1}{w^t_j} } }{{ \sum_{j \in [K]}  \frac{1}{w^t_j} }} \leq \sum_{j\in V} \frac{1}{p^t_j w^t_j}   },
\end{align*}
to separate the term into two parts as:
\begin{align*}
\underbrace{ \sum_{i\in U} \frac{A^t_i}{p^t_i w^t_i}  \frac{{ \sum_{i \in V}  \frac{1}{w^t_i} } }{{ \sum_{i \in [K]}  \frac{1}{w^t_i} }} \rbr{ \frac{ \frac{1}{w^t_i} }{ \sum_{i \in U} \frac{1}{w^t_i} } } }_{\text{Case 1}}, \quad  \underbrace{ \sum_{i\in U} \frac{1-A^t_i}{p^t_i w^t_i}  \frac{{ \sum_{i \in V}  \frac{1}{w^t_i} } }{{ \sum_{i \in [K]}  \frac{1}{w^t_i} }} \rbr{ \frac{ \frac{1}{w^t_i} }{ \sum_{i \in U} \frac{1}{w^t_i} } } }_{\text{Case 2}}.
\end{align*}

\textbf{Case 1.} By direct calculation, we have 
\begin{align*}
 \sum_{i\in U} \frac{A^t_i}{p^t_i w^t_i}  \frac{{ \sum_{i \in V}  \frac{1}{w^t_i} } }{{ \sum_{i \in [K]}  \frac{1}{w^t_i} }} \rbr{ \frac{ \frac{1}{w^t_i} }{ \sum_{i \in U} \frac{1}{w^t_i} } }  & \leq  \sum_{i\in U} \rbr{  \sum_{j\in V} \frac{1}{p^t_j w^t_j}   }  \rbr{ \frac{ \frac{1}{w^t_i} }{ \sum_{i \in U} \frac{1}{w^t_i} } } =  \sum_{j\in V} \frac{1}{p^t_j w^t_j}.
\end{align*} 

\textbf{Case 2.} For arm $i$ that $A^t_i = 0$, we have 
\begin{align}
\frac{1}{p^t_i w^t_i}  \frac{{ \sum_{j \in V}  \frac{1}{w^t_j} } }{{ \sum_{j \in [K]}  \frac{1}{w^t_j} }} > \sum_{j\in V} \frac{1}{p^t_j w^t_j} & \Rightarrow \frac{1}{p^t_i w^t_i}  \rbr{\frac{{ \sum_{j \in V}  \frac{1}{w^t_j} } }{\sum_{j\in V} \frac{1}{p^t_j w^t_j}}}> { \sum_{j \in [K]}  \frac{1}{w^t_j} } \notag \\ 
&\Rightarrow \frac{1}{p^t_i w^t_i}  \rbr{ \sum_{j\in V}p^t_j } > \sum_{j\in [K]} \frac{1}{w^t_j} \notag \\
& \Rightarrow \frac{1}{p^t_i w^t_i}  \rbr{ \sum_{j\in V}p^t_j } > \frac{1}{w^t_i} \notag \\
& \Rightarrow \sum_{j\in V}p^t_j > p^t_i , \notag
\end{align}
where the second step follows from the fact that $\rbr{\sum_{j \in V}p^t_j} \rbr{\sum_{j\in V} \frac{1}{p^t_j w^t_j} } \geq \sum_{j\in V} \frac{1}{w^t_j}$.

Therefore, we have 
\begin{align*}
 \sum_{i\in U} \frac{1-A^t_i}{p^t_i w^t_i}  \frac{{ \sum_{i \in V}  \frac{1}{w^t_i} } }{{ \sum_{i \in [K]}  \frac{1}{w^t_i} }} \rbr{ \frac{ \frac{1}{w^t_i} }{ \sum_{i \in U} \frac{1}{w^t_i} } } \leq \sum_{i\in U} \frac{\Ind{ p^t_i \leq \sum_{j\in V}p^t_j} }{p^t_i w^t_i} \rbr{ \frac{ \frac{1}{w^t_i} }{ \sum_{i \in U} \frac{1}{w^t_i} } },
\end{align*}
where $1-A^t_i = \Ind{A^t_i = 0} \leq \Ind{p^t_i \leq \sum_{j\in V}p^t_j}$ holds due to the fact that $\Ind{A} \leq \Ind{B}$ for any events $A,B$ with $A \subseteq B$. Finally, combining bounds for these two cases finishes the proof. 
\end{proof}

Based on \pref{lem:app_resreg_decomp} and the careful decomposition in \pref{lem:app_resreg_main_decomp}, we now bound $\resreg$ by 
\begin{align*}
&\order\rbr{ \E\sbr{  \sum_{t=1}^{T}\sum_{i\in V} \frac{1}{p^t_i w^t_i}  + \sum_{t=1}^{T}\sum_{i \in U}\frac{\Ind{p^t_i\leq  \sum_{j\in V}p^t_j} }{p^t_i w^t_i}\rbr{ \frac{\frac{1}{w^t_i}}{\sum_{i \in U}  \frac{1}{w^t_i}} } }   } \\
& + \order\rbr{\E\sbr{K \sum_{t=1}^{T}\sum_{i\in [K]}\frac{ \rbr{ \gamma^{t}_i - \gamma^{t+1}_i }^2\rbr{\psi'(p^t_i)}^2}{w^t_i} }}.
\end{align*}

Then, we are ready to bound these three terms with different configurations of \pref{alg:main_alg} 
in \pref{lem:app_resreg_tsallis}, \pref{lem:app_resreg_log}, and \pref{lem:app_resreg_shannon}. The corresponding $\psi'(x)$'s are $-\frac{\beta x^{\beta -1 }}{1-\beta}$ for the $\beta$-Tsallis entropy regularizer, $-\frac{1}{x}$ for the log-barrier regularizer, and $\log(x)$ for the Shannon entropy regularizer.

\begin{lemma}[$\resreg$ for $\beta$-Tsallis Entropy]\label{lem:app_resreg_tsallis} For any $\beta \in (0,1)$, when using the $\beta$-Tsallis entropy regularizer with $\clog = \clogtsa$, $\alpha=\beta$, and $\theta = \sqrt{\frac{1-\beta}{\beta}}$, \pref{alg:main_alg} ensures:
\begin{align*} 
\E\sbr{\sum_{t=1}^{T}\sum_{i\in V} \frac{1}{p^t_i w^t_i}}  = \order\rbr{ \selftwo\rbr{ \frac{\log T}{\beta\rbr{1-\beta}} } }, \\
 \E\sbr{\sum_{t=1}^{T}\sum_{i \in U}\frac{\Ind{p^t_i\leq  \sum_{j\in V}p^t_j} }{p^t_i w^t_i}\rbr{ \frac{\frac{1}{w^t_i}}{\sum_{i \in U}  \frac{1}{w^t_i}} }} = \order\rbr{ \selfone\rbr{ \frac{\abr{U}\log T}{\beta\rbr{1-\beta}} } },
\\
K \sum_{t=1}^{T} \sum_{i\in [K]}\frac{ \rbr{ \gamma^{t}_i - \gamma^{t+1}_i }^2\rbr{\frac{\beta \rbr{p^t_i}^{\beta-1}}{1-\beta}}^2}{w^t_i}= \order\rbr{ \frac{\sqrt{\beta} K^2}{ \rbr{1-\beta}^{\nicefrac{3}{2}}} },
\end{align*}
which leads to the following bound of $\resreg$: 
\begin{align*}
\resreg = \order\rbr{  \selftwo\rbr{ \frac{\log T}{\beta\rbr{1-\beta}} } +  \selfone\rbr{ \frac{\abr{U}\log T}{\beta\rbr{1-\beta}} }  +  \frac{\sqrt{\beta} K^2}{ \rbr{1-\beta}^{\nicefrac{3}{2}}}  }.
\end{align*}
\end{lemma}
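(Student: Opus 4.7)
My plan is to bound each of the three summands in the statement separately and then assemble them via Lemma for the residual regret decomposition (which is already established earlier in the appendix). For $\beta$-Tsallis entropy, write the regularizer in the generic form $\phi^t(x) = -\clog\sum_i \log x_i + \sum_i \gamma^t_i \psi(x_i)$ with $\psi(x) = -x^\beta/(1-\beta)$; then $\psi'(x) = -\beta x^{\beta-1}/(1-\beta)$, $\psi''(x) = \beta x^{\beta-2}$, and the diagonal Hessian entries satisfy the key inequality $w^t_i \geq \beta\gamma^t_i (p^t_i)^{\beta-2}$. The multiplicative stability required to even apply the decomposition lemma is guaranteed by the choice $\clog = 162\beta/(1-\beta)$ via the multiplicative-stability lemma for Tsallis.

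\emph{First sum.} Using $w^t_i \geq \beta\gamma^t_i(p^t_i)^{\beta-2}$ I get the pointwise bound $\frac{1}{p^t_i w^t_i} \leq \frac{(p^t_i)^{1-\beta}}{\beta\gamma^t_i}$, which is \emph{exactly} the summand bounded in the proof of the sub-optimal regret lemma for Tsallis. Hence I can invoke the learning-rate design Lemma (with $\alpha=\beta$) verbatim to conclude $\E[\sum_{t}\sum_{i\in V}\frac{1}{p^t_iw^t_i}] = \order(\selftwo(\log T/(\beta(1-\beta))))$.

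\emph{Second sum (the main obstacle).} Here the indicator $\Ind{p^t_i \leq \sum_{j\in V}p^t_j}$ produced by the case-split lemma is essential. Applying $\frac{1}{p^t_iw^t_i} \leq \frac{(p^t_i)^{1-\beta}}{\beta\gamma^t_i}$ and then using the indicator to replace $(p^t_i)^{1-\beta}$ by $(\sum_{j\in V}p^t_j)^{1-\beta}$, the inner sum over $i\in U$ becomes $(\sum_{j\in V}p^t_j)^{1-\beta}$ times a weighted average of $1/(\beta\gamma^t_i)$ under the probability weights $\rho^t_i = (1/w^t_i)/(\sum_{k\in U}1/w^t_k)$. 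The delicate step is to convert this per-round weighted average, together with the summation over $t$, into the single square root required by $\selfone$. The plan is to apply Cauchy-Schwarz in the form $\sum_t a_t (S^t)^{1-\beta} \leq \sqrt{\sum_t a_t^2 (S^t)^{1-2\beta}} \cdot \sqrt{\sum_t S^t}$ with $S^t = \sum_{j\in V}p^t_j$ and $a_t$ equal to the weighted average, then argue that $\sum_t a_t^2 (S^t)^{1-2\beta}$ is $\order(|U|\log T/(\beta(1-\beta)))$ by expanding the weighted average through $1/w^t_i \leq (p^t_i)^{2-\beta}/(\beta\gamma^t_i)$ and re-invoking the learning-rate design Lemma once per arm in $U$. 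This is exactly where the extra factor $|U|$ enters and ultimately produces the $|U|/\gapmin$ term in $\gapcomp$.

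\emph{Third sum and assembly.} For the telescoping-style term I use the learning-rate increment bound $\gamma^{t+1}_i - \gamma^t_i \leq \theta^2 (\max\{p^t_i, 1/T\})^{1-2\beta}/\gamma^{t+1}_i$ already derived in the section, together with $(\psi'(p^t_i))^2 = \beta^2(p^t_i)^{2\beta-2}/(1-\beta)^2$ and $1/w^t_i \leq (p^t_i)^{2-\beta}/(\beta\gamma^t_i)$. After substitution, the summand simplifies to a ratio whose $t$-sum is controlled by a telescoping argument over $1/\gamma^t_i$; combined with the leading $K$ and the outer sum over $i\in [K]$, one gets the $\order(K^2\sqrt{\beta}/(1-\beta)^{3/2})$ bound (no $\log T$, because the exponent structure makes the sum converge geometrically). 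Finally, plugging the three bounds into the residual-regret decomposition yields the stated $\resreg$ bound; the hardest and most delicate part of the entire argument is the Cauchy-Schwarz plus learning-rate repackaging in the second sum, since this is where the key $\selfone(|U|\log T/(\beta(1-\beta)))$ structure is manufactured without uniqueness.
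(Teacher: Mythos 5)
Your treatment of the first and third sums matches the paper's proof: the bound $w^t_i \geq \beta\gamma^t_i (p^t_i)^{\beta-2}$ reduces the first sum to the stability-type quantity already handled in the sub-optimal-arm analysis, and the third sum follows from \pref{eq:app_lr_prop_1} plus the telescoping of $\sum_t (1/\gamma^t_i - 1/\gamma^{t+1}_i) \leq 1/\gamma^1_i = \sqrt{\beta/(1-\beta)}$ (it is a telescoping bound, not a geometric one, but that is cosmetic). The problem is the second sum, which you correctly flag as the crux: your plan there has a genuine gap. You first use the indicator to replace $(p^t_i)^{1-\beta}$ by $(S^t)^{1-\beta}$ with $S^t=\sum_{j\in V}p^t_j$, and only then try to control $\sum_t a_t^2 (S^t)^{1-2\beta}$ via the learning-rate design lemma. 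But $\gamma^t_i$ for $i\in U$ is built from the history of $\max\{p^s_i,1/T\}$, not of $S^s$, so the integral/telescoping mechanism behind \pref{lem:main_lr_design} (which needs the \emph{same} sequence in numerator and denominator) does not apply. Concretely, take $|U|=\{i\}$ with $p^t_i\approx 1/T$ for all $t$ (so $S^t\approx 1$ and the indicator is always on): for $\beta<\nicefrac{1}{2}$ one gets $a_t^2(S^t)^{1-2\beta}\approx \tfrac{1}{\beta(1-\beta)}\cdot\tfrac{1}{1+tT^{2\beta-1}}$, whose sum is of order $T^{1-2\beta}\log T/(\beta(1-\beta))$, not $\order(|U|\log T/(\beta(1-\beta)))$; your final bound would then be of order $T^{1-\beta}\sqrt{\log T}$, strictly worse than the claimed $\selfone\rbr{\abr{U}\log T/(\beta(1-\beta))}\approx\sqrt{T\log T}$. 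Since the lemma must hold for an arbitrary subset $U$ and all $\beta\in(0,1)$, this is not a fixable detail within your ordering of steps.

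The paper's proof avoids this by reversing the order of operations. For each $i\in U$ it applies the \emph{weighted} variant of the learning-rate lemma (\pref{lem:app_lr_design_gen}) with $x_t=\max\{p^t_i,1/T\}$ in both numerator and denominator (after using the multiplicative relation to shift $p^t_i$ to $p^{t-1}_i$ so the index matches $\gamma^t_i$) and weights $a_t=\Ind{p^t_i\leq S^t}\cdot\frac{1/w^t_i}{\sum_{k\in U}1/w^t_k}\in[0,1]$. The indicator and the weights thus only survive inside $\sum_t x_t a_t^2$, and it is only at that point that $p^t_i\Ind{p^t_i\leq S^t}\leq S^t$ is invoked, the weights are summed to one over $U$, and Cauchy--Schwarz over $i\in U$ produces the $\abr{U}$ factor, yielding $\selfone\rbr{\abr{U}\log T/(\beta(1-\beta))}$. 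If you restructure your argument in this order (learning-rate lemma per arm with matched sequences first, indicator replacement and the over-$U$ Cauchy--Schwarz second), the rest of your proposal goes through.
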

\begin{proof}
Clearly, the first inequality relates to that of the stability term since 
\begin{align*}
\E\sbr{ \sum_{t=1}^{T}\sum_{i\in V} \frac{1}{p^t_i w^t_i}} \leq \E\sbr{\sum_{t=1}^{T} \sum_{i\in V} \frac{ \rbr{p^t_i}^{1-\beta} }{\beta \gamma^t_i}} = \order\rbr{ \selftwo\rbr{ \frac{\log T}{\beta\rbr{1-\beta}} } },
\end{align*}
where the first step uses the fact that $w^t_i \geq \frac{\gamma^t_i \beta}{\rbr{p^t_i}^{2-\beta}}$, and the second step follows from the similar argument of the analysis in \pref{lem:app_reg_tsallis}.

Next, by direct calculation, we have 
\begin{align*}
& \sum_{t=1}^{T}\sum_{i \in U}\frac{\Ind{p^t_i\leq  \sum_{j\in V}p^t_j} }{p^t_i w^t_i}\rbr{ \frac{\frac{1}{w^t_i}}{\sum_{i \in U}  \frac{1}{w^t_i}} } \\
& \leq  \sum_{t=1}^{T}\sum_{i \in U}\frac{ \rbr{p^t_i}^{1-\beta} }{\beta \gamma^t_i}\rbr{ \Ind{p^t_i\leq  \sum_{j\in V}p^t_j} \cdot \frac{\frac{1}{w^t_i}}{\sum_{i \in U}  \frac{1}{w^t_i}} } \\
& \leq  \frac{2}{\sqrt{\beta\rbr{1-\beta}}}\sum_{i \in U}\sum_{t=1}^{T}\frac{ \rbr{\max\cbr{p^{t-1}_i,\nicefrac{1}{T}}}^{1-\beta} }{ \sqrt{1 + \sum_{s<t}\max\cbr{p^s_i,\nicefrac{1}{T}}^{1-2\beta}   } }\rbr{ \Ind{p^t_i\leq  \sum_{j\in V}p^t_j} \cdot \frac{\frac{1}{w^t_i}}{\sum_{i \in U}  \frac{1}{w^t_i}} } \\
& \leq \order\rbr{ \sum_{i \in U}\sqrt{ \frac{\log\rbr{1 + \sum_{t=1}^{T} \max\cbr{p^t_i,\nicefrac{1}{T}}^{1-2\beta} }}{\beta\rbr{1-\beta}} \sum_{t=1}^{T}\max\cbr{p^t_i,\nicefrac{1}{T}}\cdot \Ind{p^t_i\leq  \sum_{j\in V}p^t_j} \cdot \frac{\frac{1}{w^t_i}}{\sum_{i \in U}  \frac{1}{w^t_i}}   }     } \\
& = \order\rbr{ \sum_{i \in U}\sqrt{ \frac{\log T}{\beta\rbr{1-\beta}} \sum_{t=1}^{T}\rbr{ p^{t}_i\Ind{ p^{t}_i \leq \sum_{j \in V} p^t_j }    \frac{\frac{1}{w^t_i}}{\sum_{i \in U}  \frac{1}{w^t_i}}   } } },
\end{align*}
where the first step uses the fact that $w^t_i \geq \frac{\beta \gamma^t_i }{\rbr{p^t_i}^{2-\beta}}$; the second step follows from the multiplicative relation between $p^t_i$ and $p^{t-1}_i$ (we set $p^0_i = \nicefrac{1}{K} = p^1_i$ for $\forall i\in[K]$ for convenience);
the third step applies \pref{lem:app_lr_design_gen}, a weighted variant of \pref{lem:main_lr_design}, and again the multiplicative relation; the last step follows from 
$\max \cbr{p^t_i,\nicefrac{1}{T}}^{1-2\beta} \leq T$ for $\beta \in (0,1)$ and $\max \cbr{p^t_i,\nicefrac{1}{T}}\leq p^t_i + \nicefrac{1}{T}$.

To further bound this term, we here have to use the Cauchy-Schwarz inequality which leads to the extra $\frac{\abr{U}}{\gapmin}$ dependency in the final regret bound:
\begin{align*}
&  \order\rbr{ \sqrt{ \frac{|U|\log T}{\beta\rbr{1-\beta}} \sum_{t=1}^{T} \sum_{i\in U} \rbr{ p^{t}_i\Ind{ p^{t}_i \leq \sum_{j \in V} p^t_j } + \frac{1}{T}}   \frac{ \frac{1}{w^t_i} }{ \sum_{i \in U} \frac{1}{w^t_i} } }    } \\
& \leq \order\rbr{ \sqrt{ \frac{|U|\log T}{\beta\rbr{1-\beta}} \sum_{t=1}^{T} \rbr{  \rbr{\sum_{j\in V}p^t_j}  + \frac{1}{T}} \sum_{i\in U}  \frac{ \frac{1}{w^t_i} }{ \sum_{i \in U} \frac{1}{w^t_i} } }    } \\
& =  \order\rbr{  \sqrt{ \frac{|U|\log T}{\beta\rbr{1-\beta}} \sum_{t=1}^{T} \rbr{\frac{1}{T} + \sum_{j\in V}p^t_j} }},
\end{align*}
where the first step uses the property of indicator that $p^t_i \Ind{p^t_i \leq \sum_{j\in V}p^t_j} \leq \sum_{j\in V}p^t_j$. Taking the expectation finishes the proof for the second inequality. 

For the last inequality in the statement, for any round $t$, we have 
\begin{align*}
& K \sum_{i\in [K]}\frac{ \rbr{ \gamma^{t}_i - \gamma^{t+1}_i }^2\rbr{\frac{\beta \rbr{p^t_i}^{\beta-1}}{1-\beta}}^2}{w^t_i} \\
& \leq \frac{K \beta^2  }{ \rbr{1-\beta}^2  } \sum_{i\in [K]} \frac{ \rbr{ \gamma^{t+1}_i - \gamma^t_i }^2 \rbr{p^t_i}^{2-\beta} }{\beta \gamma^t_i \rbr{ p^t_i}^{2-2\beta} } \\
& = \frac{K \beta  }{ \rbr{1-\beta}^2  } \sum_{i \in [K]} \frac{ \rbr{ \gamma^{t+1}_i - \gamma^t_i }^2 \rbr{p^t_i}^{\beta} }{\gamma^t_i  } \\
& { \leq \order\rbr{ \frac{K   }{ \rbr{1-\beta}  } \sum_{i \in [K]} \frac{ \rbr{\gamma^{t+1}_i - \gamma^t_i} \rbr{\max\cbr{p^t_i,\nicefrac{1}{T}}}^{1-2\beta} \rbr{p^t_i}^{\beta} }{\gamma^t_i \gamma^{t+1}_i  } }   }\\
& \leq \order\rbr{ \frac{K   }{ \rbr{1-\beta}  } \sum_{i\in [K]} \frac{ \rbr{\gamma^{t+1}_i - \gamma^t_i} \rbr{\max\cbr{p^t_i,\nicefrac{1}{T}}}^{1-\beta} }{\gamma^t_i \gamma^{t+1}_i  } } && (p^t_i \leq \max\cbr{p^t_i,\nicefrac{1}{T}})\\
& \leq  \order\rbr{ \frac{K  }{ \rbr{1-\beta}  } \sum_{i\in [K]}\rbr{ \frac{1}{\gamma^t_i} - \frac{1}{\gamma^{t+1}_i} } }, && (\max\cbr{p^t_i,\nicefrac{1}{T}}\leq 1 )
\end{align*}
where the first step applies $w^t_i \geq \frac{\beta \gamma^t_i}{\rbr{p^t_i}^{2-\beta}}$ and \pref{eq:app_lr_prop_1}.

Taking the summation over all the rounds, we have  
\begin{align*}
\sum_{t=1}^{T} \frac{K   }{ \rbr{1-\beta}  } \sum_{i \in [K]} \rbr{ \frac{1}{\gamma^t_i} - \frac{1}{\gamma^{t+1}_i} } = \frac{K  }{ \rbr{1-\beta}  } \sum_{i \in [K]}\sum_{t=1}^{T} \rbr{ \frac{1}{\gamma^t_i} - \frac{1}{\gamma^{t+1}_i} } 
= {  \order\rbr{ \frac{K^2 \beta^{\nicefrac{1}{2}}  }{ \rbr{1-\beta}^{\nicefrac{3}{2}}  }  }},
\end{align*}
which finishes the proof.  
\end{proof}

\begin{lemma}[$\resreg$ for Log-barrier]\label{lem:app_resreg_log} When using the log-barrier regularizer with $\clog = \cloglog$, $\alpha=0$, and $\theta = \sqrt{\nicefrac{1}{\log T}}$, \pref{alg:main_alg} ensures:
\begin{align*} 
\E\sbr{\sum_{t=1}^{T}\sum_{i\in V} \frac{1}{p^t_i w^t_i}}  = \order\rbr{ \selftwo\rbr{ {\log T} } }, \\
\E\sbr{\sum_{t=1}^{T}\sum_{i \in U}\frac{\Ind{p^t_i\leq  \sum_{j\in V}p^t_j} }{p^t_i w^t_i}\rbr{ \frac{\frac{1}{w^t_i}}{\sum_{i \in U}  \frac{1}{w^t_i}} }} = \order\rbr{ \selfone\rbr{ {\abr{U}\log T} } },
\\
K \sum_{t=1}^{T} \sum_{i\in [K]}\frac{ \rbr{ \gamma^{t}_i - \gamma^{t+1}_i }^2\rbr{\frac{1}{ {p^t_i}}}^2 }{w^t_i} = \order\rbr{ \frac{K^2}{\sqrt{\log T}} },
\end{align*}
which leads to the following bound of $\resreg$:
\begin{align*}
\resreg = \order\rbr{  \selftwo\rbr{ {\log T} } +  \selfone\rbr{ {\abr{U}\log T} }  +   \frac{ K^2}{\sqrt{\log T}}}.
\end{align*}
\end{lemma}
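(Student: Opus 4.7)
My plan is to follow the template of \pref{lem:app_resreg_tsallis} almost verbatim, since the two lemmas share the same three-part structure (the three summands are then glued together by the regularizer-agnostic \pref{lem:app_resreg_decomp} and \pref{lem:app_resreg_main_decomp}). Only the regularizer-specific quantities change: for log-barrier, $\psi'(p^t_i) = -1/p^t_i$, the exponent in the learning rate is $1-2\alpha = 1$, the step-size scaling is $\theta^2 = 1/\log T$, and the Hessian entry satisfies $w^t_i = (\clog + \gamma^t_i)/(p^t_i)^2 \ge \gamma^t_i/(p^t_i)^2$. The multiplicative-stability relation \pref{eq:app_multi_use} holds for $\clog = \cloglog$ via \pref{lem:app_multi_log}, so I may freely use it below.

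For the first bound, $\E[\sum_t \sum_{i\in V} 1/(p^t_i w^t_i)]$, I would use $1/(p^t_i w^t_i) \le p^t_i/\gamma^t_i$, pass to $\gamma^{t+1}_i$ via the multiplicative bound $p^t_i \le 2\,\max\{p^{t-1}_i,1/T\}$, and then recycle the integral / Newton--Leibniz-style bound already performed inside the proof of \pref{lem:app_regsub_log} for $\sum_t p^t_i/\gamma^{t+1}_i$. With $\theta = \sqrt{1/\log T}$ this yields $\order(\selftwo(\log T))$ directly.

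For the second bound, the same substitution $1/(p^t_i w^t_i) \le p^t_i/\gamma^t_i$ reduces the integrand to the weighted form required by \pref{lem:app_lr_design_gen}. Applying that lemma with weight $W^t_i = \Ind{p^t_i \le \sum_{j\in V} p^t_j} \cdot (1/w^t_i)/\sum_{k\in U}(1/w^t_k)$ and exponent $\alpha = 0$ produces $\sum_{i\in U}\sqrt{\log T \cdot \sum_t p^t_i W^t_i}$ (the $\log T$ factor coming from $\log(1+\sum_t \max\{p^t_i,1/T\}) = \order(\log T)$). A Cauchy--Schwarz across $i\in U$ followed by the indicator bound $p^t_i \Ind{p^t_i \le \sum_{j\in V} p^t_j} \le \sum_{j\in V} p^t_j$ and the normalization $\sum_{i\in U}(1/w^t_i)/\sum_{k\in U}(1/w^t_k) = 1$ then gives $\order(\sqrt{|U|\log T \sum_t \sum_{j\in V} p^t_j}) = \selfone(|U|\log T)$, exactly mirroring the Tsallis proof.

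For the third bound, I would plug $\psi'(p^t_i) = -1/p^t_i$ and $w^t_i \ge \gamma^t_i/(p^t_i)^2$ into the summand to collapse it to $K \sum_{t,i}(\gamma^{t+1}_i - \gamma^t_i)^2/\gamma^t_i$. Combining \pref{eq:app_lr_prop_1} (with $\theta^2 = 1/\log T$ and $\max\{p^t_i,1/T\} \le 1$) with the telescoping identity $(\gamma^{t+1}_i - \gamma^t_i)/(\gamma^t_i \gamma^{t+1}_i) = 1/\gamma^t_i - 1/\gamma^{t+1}_i$ upper-bounds each term by $\frac{1}{\log T}(1/\gamma^t_i - 1/\gamma^{t+1}_i)$. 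Summing and noting $1/\gamma^1_i = \sqrt{\log T}$ delivers $K \cdot (K\sqrt{\log T})/\log T = K^2/\sqrt{\log T}$. Combining the three summands via \pref{lem:app_resreg_decomp} and \pref{lem:app_resreg_main_decomp} yields the final $\resreg$ bound. The only step that requires genuine care, and hence the main potential obstacle, is the second one: verifying that \pref{lem:app_lr_design_gen} tolerates the data-dependent weights $W^t_i$ at $\alpha = 0$ so that $\sqrt{\log T}$ (rather than $\log T$) appears outside the square root, exactly as in the Tsallis derivation. Once this is checked, the remaining steps are bookkeeping.
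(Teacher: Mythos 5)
Your first and third bounds, and the final assembly through \pref{lem:app_resreg_decomp} and \pref{lem:app_resreg_main_decomp}, follow the paper's proof essentially verbatim and are fine. The genuine gap is exactly the step you flagged, and as you propose it, it does not go through. With $\alpha=0$ and $\theta=\sqrt{\nicefrac{1}{\log T}}$ you have $\frac{1}{\gamma^t_i}=\frac{\sqrt{\log T}}{\sqrt{1+\sum_{\tau<t}\max\cbr{p^\tau_i,\nicefrac{1}{T}}}}$, so the reduction $\frac{1}{p^t_iw^t_i}\le\frac{p^t_i}{\gamma^t_i}$ already carries a $\sqrt{\log T}$ factor. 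Applying \pref{lem:app_lr_design_gen} on top of this contributes a second factor $\sqrt{\log\rbr{1+\sum_t\max\cbr{p^t_i,\nicefrac{1}{T}}}}$, and for $i\in U$ this is genuinely $\Theta(\sqrt{\log T})$ (optimal arms typically have $\sum_t p^t_i=\Theta(T)$). The two factors compound, so your route yields $\sum_{i\in U}\sqrt{\rbr{\log T}^2\sum_t p^t_i W^t_i}$ and, after Cauchy--Schwarz, only $\selfone\rbr{\abr{U}\rbr{\log T}^2}$ --- one $\log T$ too weak, which would degrade the $\frac{\abr{U}\log T}{\gapmin}$ contribution in \pref{thm:main_log_barrier} to $\frac{\abr{U}\rbr{\log T}^2}{\gapmin}$. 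The data-dependence of $W^t_i$ is a non-issue (the lemma is a deterministic inequality for arbitrary positive sequences); the problem is that its logarithmic factor is not $\order(1)$ here, so no amount of ``tolerating the weights'' rescues the generic lemma.

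The missing idea is that for $\alpha=0$ you should bypass \pref{lem:app_lr_design_gen} entirely and reuse the Newton--Leibniz argument of \pref{lem:app_regsub_log}, which accommodates the weights with no logarithmic loss: since $W^\tau_i\in[0,1]$, we have $\max\cbr{p^\tau_i,\nicefrac{1}{T}}\ge p^\tau_i W^\tau_i$, hence
\begin{align*}
\sum_{t=1}^{T}\frac{p^t_i W^t_i}{\gamma^t_i}
\le \sqrt{\log T}\sum_{t=1}^{T}\frac{p^t_i W^t_i}{\sqrt{1+\sum_{\tau<t}p^\tau_i W^\tau_i}}
= \order\rbr{\sqrt{\log T\rbr{1+\sum_{t=1}^{T}p^t_i W^t_i}}},
\end{align*}
by integrating $\nicefrac{1}{\sqrt{1+u}}$ along the reweighted sequence $\cbr{p^t_i W^t_i}_t$. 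Feeding this per-arm bound into your Cauchy--Schwarz over $i\in U$, the indicator bound $p^t_i\Ind{p^t_i\le\sum_{j\in V}p^t_j}\le\sum_{j\in V}p^t_j$, and the normalization $\sum_{i\in U}\frac{\nicefrac{1}{w^t_i}}{\sum_{k\in U}\nicefrac{1}{w^t_k}}=1$ gives $\selfone\rbr{\abr{U}\log T}$ as required; this is in effect what the paper's proof does (its citation of \pref{lem:main_lr_design} at that step should be read as this $\alpha=0$ integral bound). The generic weighted lemma is the right tool only when it does not compound with $\theta$, i.e., for Tsallis (where $\theta$ is $\log T$-free) or Shannon (where the target already has $\rbr{\log T}^2$); this is the one regularizer-specific point where the ``mirror the Tsallis proof'' plan breaks.
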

\begin{proof}  By direct calculation, we have
\begin{align*}
\E\sbr{\sum_{t=1}^{T} \sum_{i \in V} \frac{1}{p^t_i w^t_i }} & \leq  \E\sbr{\sum_{i \in V} \sum_{t=1}^{T}\frac{p^t_i}{\gamma^t_i }}  = \order\rbr{ \E\sbr{ \sqrt{\log T} \sum_{i \in V}\sum_{t=1}^{T} \frac{p^{t}_i}{ \gamma^{t+1}_i  }   }} = \order\rbr{\selftwo\rbr{\log T }},
\end{align*}
where the second step use the same arguments used in \pref{lem:app_regsub_log}.

Following the same idea used for the $\beta$-Tsallis entropy regularizer, we have 
\begin{align*}
 & \E\sbr{\sum_{t=1}^{T}\sum_{i \in U}\frac{\Ind{p^t_i\leq  \sum_{j\in V}p^t_j} }{p^t_i w^t_i}\rbr{ \frac{\frac{1}{w^t_i}}{\sum_{i \in U}  \frac{1}{w^t_i}} }} \\
 & = \order\rbr{ \E\sbr{ \sum_{i \in U} \sqrt{ \log T \cdot \sum_{t=1}^{T}  p^t_i \Ind{p^t_i \leq \sum_{j\in V}p^t_j}\rbr{ \frac{\frac{1}{w^t_i}}{\sum_{i \in U}  \frac{1}{w^t_i}} }  } }  } \\
 & = \order\rbr{ \E\sbr{ \sqrt{ \abr{U}\log T \cdot \sum_{t=1}^{T} \sum_{i \in U} \rbr{ \sum_{i \in V}p^t_i}\rbr{ \frac{\frac{1}{w^t_i}}{\sum_{i \in U}  \frac{1}{w^t_i}} }  } }  } \\
 & = \order\rbr{\selfone\rbr{ \abr{U}{\log T}   }},
\end{align*}
where the second step follows from \pref{lem:main_lr_design} and $w^t_i \geq \nicefrac{\gamma^t_i}{ \rbr{p^t_i}^2 }$; the second step uses the Cauchy-Schwarz inequality; the last follows from the definition of the self-bounding quantity $\selfone$ in \pref{eq:app_def_self_bound}.

Finally, by direct calculation, for any round $t$, we have 
\begin{align*}
 K  \sum_{i\in [K]}\frac{ \rbr{ \gamma^{t}_i - \gamma^{t+1}_i }^2\rbr{\frac{1}{ {p^t_i}}}^2 }{w^t_i} 
& \leq K  \sum_{i\in [K]}  \frac{ \rbr{ \gamma^{t+1}_i - \gamma^t_i }^2 {  \rbr{p^t_i}^2 }  }{\gamma^t_i \rbr{p^t_i}^2 }  \tag{$w^t_i \geq \nicefrac{\gamma^t_i}{ \rbr{p^t_i}^2}$}\\
& \leq \frac{K}{\log T}   \sum_{i\in [K]} \frac{ \rbr{ \gamma^{t+1}_i - \gamma^t_i } \max\cbr{ p^t_i, \nicefrac{1}{T} } }{\gamma^t_i \gamma^{t+1}_i }   \tag{\text{by \pref{eq:app_lr_prop_1} }}\\
& \leq \frac{K}{\log T}   \sum_{i\in [K]} \rbr{ \frac{1}{\gamma^t_i} - \frac{1}{\gamma^{t+1}_i} } \tag{$\max\cbr{ p^t_i, \nicefrac{1}{T} } \leq 1$}.
\end{align*}
Taking the summation over all rounds yields that 
\begin{align*}
  \frac{K}{\log T}   \sum_{i\in [K]}\sum_{t=1}^{T}    \rbr{ \frac{1}{\gamma^t_i} - \frac{1}{\gamma^{t+1}_i} }  = \order\rbr{\frac{K}{\log T}  \sum_{i\in [K]} \frac{1}{\gamma^1_i}  } & = \order\rbr{  \frac{K^2}{\sqrt{\log T}} },
\end{align*}
which finishes the proof. 
\end{proof}

\begin{lemma}[$\resreg$ for Shannon Entropy]\label{lem:app_resreg_shannon} When using the Shannon entropy regularizer with $\clog = \clogsha$, $\alpha=1$, and $\theta = \sqrt{\nicefrac{1}{\log T}}$, \pref{alg:main_alg} ensures: 
\begin{align*} 
\E\sbr{\sum_{t=1}^{T}\sum_{i\in V} \frac{1}{p^t_i w^t_i}}  = \order\rbr{ \selftwo\rbr{ {\log^2 T} } }, \\
\E\sbr{\sum_{t=1}^{T}\sum_{i \in U}\frac{\Ind{p^t_i\leq  \sum_{j\in V}p^t_j} }{p^t_i w^t_i}\rbr{ \frac{\frac{1}{w^t_i}}{\sum_{i \in U}  \frac{1}{w^t_i}} }} = \order\rbr{ \selfone\rbr{ {\abr{U}\log^2 T} } },
\\
K \sum_{t=1}^{T} \sum_{i\in [K]}\frac{ \rbr{ \gamma^{t}_i - \gamma^{t+1}_i }^2\rbr{\log\rbr{p^t_i} }^2 }{w^t_i} = \order\rbr{K^2 \log^{\nicefrac{3}{2}} T},
\end{align*}
which leads to the following bound of $\resreg$: 
\begin{align*}
\resreg = \order\rbr{  \selftwo\rbr{ {\log^2 T} } +  \selfone\rbr{ \abr{U}\log^2 T }  +   \abr{K}^2\log^{\nicefrac{3}{2}} T}.
\end{align*}
\end{lemma}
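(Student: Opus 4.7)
}
The plan is to apply the general decomposition of \pref{lem:app_resreg_decomp} (which uses the further splitting of \pref{lem:app_resreg_main_decomp}) with Shannon-entropy specific substitutions $\psi(x)=x\log(x/e)$, $\psi'(x)=\log x$, $\psi''(x)=1/x$, so that $w^t_i = \frac{\clog}{(p^t_i)^2} + \frac{\gamma^t_i}{p^t_i} \geq \frac{\gamma^t_i}{p^t_i}$, which implies the crucial inequality $\frac{1}{p^t_i w^t_i}\leq \frac{1}{\gamma^t_i}$. The multiplicative stability \pref{eq:app_multi_use}, valid for the Shannon-entropy configuration by the relevant multiplicative-stability lemma, also lets us freely swap $\gamma^t_i$ with $\gamma^{t+1}_i$ up to constants. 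With $\theta=\sqrt{1/\log T}$ and $\alpha=1$, our learning rate satisfies $1/\gamma^{t+1}_i=\sqrt{\log T}/\sqrt{1+\sum_{s\leq t}\max\{p^s_i,1/T\}^{-1}}$, which is exactly the form to which \pref{lem:main_lr_design} (with $\alpha=1$) and its weighted variant \pref{lem:app_lr_design_gen} apply.

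For the first bullet, I would bound $\sum_{t,i\in V}\frac{1}{p^t_i w^t_i}\leq \sum_{t,i\in V}\frac{1}{\gamma^t_i}$ and then apply \pref{lem:main_lr_design} in exactly the same way as in the $\regsub$ analysis for Shannon entropy (\pref{lem:app_regsub_shannon}), where the $\log^2 T$ factor arises from multiplying $\theta^{-1}=\sqrt{\log T}$ with the $\sqrt{\log(1+\sum \max\{p^t_i,1/T\}^{-1})}=O(\sqrt{\log T})$ coming from the lemma. For the second bullet, the approach mirrors the corresponding step in the log-barrier proof \pref{lem:app_resreg_log}: using $\frac{1}{p^t_i w^t_i}\leq \frac{1}{\gamma^t_i}$, apply the weighted \pref{lem:app_lr_design_gen} with weights $h^t_i=\mathbb{I}\{p^t_i\leq\sum_{j\in V}p^t_j\}\cdot\frac{1/w^t_i}{\sum_{i\in U}1/w^t_i}\leq 1$, then apply Cauchy-Schwarz over $i\in U$ to pull out the $|U|$ factor, and finally use $\mathbb{I}\{p^t_i\leq\sum_{j\in V}p^t_j\}\cdot p^t_i\leq\sum_{j\in V}p^t_j$ together with $\sum_{i\in U}h^t_i\leq 1$ to convert the sum over $U$ into the self-bounding quantity $\selfone(|U|\log^2 T)$.

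The step I expect to be the main technical obstacle is the third bullet, which is where the Shannon-specific $(\log p^t_i)^2$ term threatens to blow up for tiny $p^t_i$. Using $1/w^t_i\leq p^t_i/\gamma^t_i$ together with \pref{eq:app_lr_prop_1} to write $(\gamma^{t+1}_i-\gamma^t_i)^2\leq (\gamma^{t+1}_i-\gamma^t_i)\cdot\frac{\max\{p^t_i,1/T\}^{-1}}{(\log T)\gamma^{t+1}_i}$, each summand reduces to
\[
\frac{(\gamma^{t+1}_i-\gamma^t_i)^2(\log p^t_i)^2}{w^t_i}\;\leq\;\frac{(\gamma^{t+1}_i-\gamma^t_i)}{(\log T)\gamma^{t+1}_i\gamma^t_i}\cdot\frac{p^t_i(\log p^t_i)^2}{\max\{p^t_i,1/T\}}.
\]
The key observation to make the last factor manageable is that $\frac{p^t_i(\log p^t_i)^2}{\max\{p^t_i,1/T\}}\leq (\log T)^2$ uniformly on $p^t_i\in(0,1]$: when $p^t_i\geq 1/T$ this is just $|\log p^t_i|^2\leq (\log T)^2$, while when $p^t_i< 1/T$ the ratio equals $T\cdot p^t_i(\log p^t_i)^2$, and since $x(\log x)^2$ is increasing on $(0,e^{-2}]$ (which contains $(0,1/T]$ for large $T$), we get the same bound $\leq (\log T)^2$. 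Combining, each summand is at most $\log T\cdot(1/\gamma^t_i-1/\gamma^{t+1}_i)$, so summing over $t$ telescopes to $\log T/\gamma^1_i=\log^{3/2}T$, summing over $i\in[K]$ gives $K\log^{3/2}T$, and the outer $K$ factor yields $K^2\log^{3/2}T$.

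Finally, combining these three bullets through \pref{lem:app_resreg_decomp} and \pref{lem:app_resreg_main_decomp} gives the claimed $\resreg=O\big(\selftwo(\log^2 T)+\selfone(|U|\log^2 T)+K^2\log^{3/2}T\big)$.
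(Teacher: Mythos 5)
Your proposal is correct and follows essentially the same route as the paper's proof: bound $\frac{1}{p^t_i w^t_i}\leq\frac{1}{\gamma^t_i}$ and reuse the $\regsub$-style application of \pref{lem:main_lr_design} for the first term, the Tsallis-style weighted argument (\pref{lem:app_lr_design_gen} plus Cauchy--Schwarz and the indicator property) for the second, and the reduction via \pref{eq:app_lr_prop_1} with a uniform $O(\log^2 T)$ bound on $\frac{p^t_i\log^2(p^t_i)}{\max\{p^t_i,\nicefrac{1}{T}\}}$ followed by telescoping for the third. Your monotonicity argument for that last uniform bound is a valid (and equally simple) alternative to the paper's splitting of $\log p^t_i$ via $(x+y)^2\leq 2(x^2+y^2)$ and $x\log^2 x\leq 1$, but the overall structure and constants are the same.
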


\begin{proof} Similar to the analysis for $\beta$-Tsallis entropy, we also have the first inequality relates to the bound of the stability term as:
\begin{align*}
\E\sbr{\sum_{t=1}^{T} \sum_{i \in V} \frac{1}{p^t_i w^t_i }} \leq  \E\sbr{\sum_{i \in V} \sum_{t=1}^{T}\frac{1}{\gamma^t_i }} = \order\rbr{\E\sbr{ \sum_{i \in V}  \sqrt{ \log^2 T \cdot \sum_{t=1}^{T} p^t_i  } } } = \order\rbr{\selftwo\rbr{\log^2 T }},
\end{align*}
where the second step follows form similar analysis of the stability term in \pref{lem:app_regsub_shannon}.

By the same arguments in the proof of \pref{lem:app_resreg_tsallis}, we have 
\begin{align*}
 \E\sbr{\sum_{t=1}^{T}\sum_{i \in U}\frac{\Ind{p^t_i\leq  \sum_{j\in V}p^t_j} }{p^t_i w^t_i}\rbr{ \frac{\frac{1}{w^t_i}}{\sum_{i \in U}  \frac{1}{w^t_i}} }} & = \order\rbr{ \E\sbr{ \sqrt{ \abr{U}\rbr{\log T}^2 \cdot \sum_{t=1}^{T} \sum_{i \in V}p^t_i  }}  } \\
 & = \order\rbr{\selfone\rbr{ \abr{U}{\log^2 T}   }},
\end{align*}
according to our learning rate schedule, where the extra $\sqrt{\log T}$ factor is casued by $\theta$.

By direct calculation, for any round $t$, we have 
\begin{align*}
& K \sum_{i\in [K]}\frac{ \rbr{ \gamma^{t}_i - \gamma^{t+1}_i }^2\rbr{\log\rbr{p^t_i} }^2 }{w^t_i} \\
& \leq K  \cdot \sum_{i\in [K]}  \frac{ \rbr{ \gamma^{t+1}_i - \gamma^t_i }^2 { \log^2 \rbr{p^t_i}} p^t_i }{\gamma^t_i }  \tag{$w^t_i \geq \nicefrac{\gamma^t_i}{p^t_i}$}\\
& \leq \frac{K}{\log T}  \cdot \sum_{i\in [K]} \frac{ \rbr{ \gamma^{t+1}_i - \gamma^t_i } }{\gamma^t_i \gamma^{t+1}_i } \cdot \frac{ \log^2\rbr{p^t_i} p^t_i}{\max\cbr{ p^t_i, \nicefrac{1}{T} }} \tag{\text{by \pref{eq:app_lr_prop_1}}} \\
& = \frac{K}{\log T}\sum_{i\in [K]} \rbr{ \frac{1}{\gamma^t_i} - \frac{1}{\gamma^{t+1}_i} }  \frac{ \log^2\rbr{p^t_i} p^t_i}{\max\cbr{ p^t_i, \nicefrac{1}{T} }}.
\end{align*}
Note that, for any arm $i$, the term $\frac{ \log^2\rbr{p^t_i} p^t_i}{\max\cbr{ p^t_i, \nicefrac{1}{T} }}$ can be bounded as : 
\begin{align*}
& \frac{ \log^2\rbr{p^t_i} p^t_i}{\max\cbr{ p^t_i, \nicefrac{1}{T} }} \\
& = \frac{\rbr{\log \rbr{z^t_i} + \log \rbr{\frac{p^t_i}{z^t_i}}}^2 p^t_i}{z^t_i} \tag{$\text{let }z^t_i =\max\cbr{ p^t_i, \nicefrac{1}{T} }\text{ for simplicity}$}\\
& \leq  2\rbr{  \rbr{ \log T }^2 + \frac{p^t_i}{z^t_i} \cdot \log^2 \rbr{\frac{p^t_i}{z^t_i}}  } \tag{$\rbr{x+y}^2 \leq 2\rbr{x^2+y^2} \text{ and } z^t_i \in[\nicefrac{1}{T},1]$}\\
& \leq 2\rbr{  \rbr{ \log T }^2 + 1 }. \tag{$\text{since $x\rbr{\log x}^2 \leq 1$ for $x \in [0,1]$ }$}
\end{align*}
Finally, taking the summation over all rounds yields 
\begin{align*}
\frac{K}{\log T}\sum_{t=1}^{T}\sum_{i\in [K]} \rbr{ \frac{1}{\gamma^t_i} - \frac{1}{\gamma^{t+1}_i} }  \frac{ \log^2\rbr{p^t_i} p^t_i}{\max\cbr{ p^t_i, \nicefrac{1}{T} }} 
& \leq \frac{2K\rbr{  \rbr{ \log T }^2 + 1 }}{\log T}\sum_{t=1}^{T}     \sum_{i\in [K]}\rbr{ \frac{1}{\gamma^t_i} - \frac{1}{\gamma^{t+1}_i}}   \\
& = \order\rbr{ K{ \log T }\sum_{i\in [K]}\frac{1}{\gamma^1_i}   }  = \order\rbr{  K^2 \log^{\nicefrac{3}{2}} T   },
\end{align*}
which finishes the proof. 
\end{proof}

\newpage 
\newcommand{\wD}{\widetilde{D}}
\newcommand{\wphi}{{\widetilde{\phi}}}
\section{Proof of \pref{thm:main_dee}: Decoupled-Tsallis-INF}
\label{sec:app_deemab_results}
For the Decoupled-Tsallis-INF algorithm \citep{rouyer20a}, the learning rate schedule is $\gamma^t_i= K^{\nicefrac{1}{6}}\sqrt{t}$ for all $i \in [K]$ at round $t$ (henceforth denoted by $\gamma^t$ for conciseness), and the loss estimator is
\begin{equation}
 \hatl^{t}_{i} = \frac{\Ind{j^{t}=i} \loss^{t}_{i}}{g^{t}_{i}}, \quad \forall i \in [K], \quad \text{where}\quad g^t_i = \frac{\rbr{p^t_i}^{\nicefrac{2}{3}}}{\sum_{j \in [K]}\rbr{p^t_j}^{\nicefrac{2}{3}} } \;\text{ and }\; j^t \sim g^t.
\label{eq:main_def_dee_g_ell}
\end{equation}

Although our analysis of Decoupled-Tsallis-INF follows similar ideas as the case for MAB, we need a finer decomposition of $D^{t,t+1}(p^t,p^{t+1})$ since we do not add an extra log-barrier regularizer to ensure the multiplicative relation between $\pbar^{t+1}$ and $p^t$ (which is important to avoid any $T$-dependence). To present this new decomposition, we first introduce some definitions. For $\barp^{t+1}$,
we maintain the same definition of $\barp^{t+1}_U$ in \pref{eq:pbar_def}, but slightly adjust the definition of $\barp^{t+1}_V$ as:
\begin{equation}
   \barp^{t+1}_V  = \argmin_{ \substack{x \in \fR^K_{\geq 0}, \;\; \sum_{i\in U} x_i = 0, \\ \sum_{i\in V} x_i = \sum_{i \in V}p^t_i}
 } \inner{x,\sum_{\tau \leq t} \hatl^{\tau}} + {\phi^{t}_V(x)}. \label{eq:app_dee_def_barp} 
\end{equation}

Compared to the definition in \pref{eq:pbar_def} used for the analysis of MAB, the new definition here changes $\phi^{t+1}$ to $\phi^{t}$. Such a skewness of the regularizer eventually allows us to get $\sum_{i \in V} (p^t)^{\nicefrac{2}{3}}$ rather than $\sum_{i \in V} (\barp^{t+1})^{\nicefrac{2}{3}}$ in the regret bound,
which is important since we cannot easily convert between these two anymore as we did for the MAB analysis due to the aforementioned lack of multiplicative relation.
Furthermore, as $\phi^{t+1}$ is used for $U$ and $\phi^{t}$ is used for $V$, we introduce another intermediate point $\wtilp^{t+1}$ to bridge this skewness, which is defined as:
\begin{align}
 \wtilp^{t+1} & = \argmin_{x \in \Omega_K} \inner{x,\sum_{\tau \leq t} \hatl^{\tau}} + {\wphi^{t}(x)}, \label{eq:app_dee_def_wtilp} 
\end{align}
where $\wphi^t(x) = \phi^t_V(x) + \phi^{t+1}_U(x)$ can be regarded as a specific intermediate regularizer between $\phi^t_V$ and $\phi^{t+1}_U$. To distinguish the Bregman divergence $D^t(x,y)$ defined on regularizer $\phi^t$, we define $D^{\widetilde{t}}(x,y)$ on regularizer $\wphi^t$ as:
\begin{align}
     D^{\widetilde{t}}(x,y) & = \wphi^{t}(x) - \wphi^{t}(y) - \inner{ \nabla \wphi^{t}(y), x - y }. \label{eq:app_dee_def_wD}
\end{align}

Similarly, we define the following two notions of skewed Bregman divergence for $s,t\in \mathbb{N}$:
\begin{align*}
D^{t,\widetilde{s}}(x,y) & = \phi^t(x) - \wphi^s(y) - \inner{\nabla \wphi^s(y), x - y}, \\
D^{\widetilde{t},s}(x,y) & = \wphi^t(y) - \phi^s(x)  - \inner{\nabla \phi^s(y), x - y}.
\end{align*}

With the help of definitions above, our new decomposition for $D^{t,t+1}(p^t,p^{t+1}) - D^{t,t+1}_U(q^t,q^{t+1})$ is as follows (see \pref{lem:app_dee_decomp})
\begin{equation}
\begin{aligned}
&\underbrace{D^{t}_V(p^t, \barp^{t+1}) + \phi^{t}_V(p^{t+1}) - \phi^{t+1}_V(p^{t+1})}_{\text{regret on sub-optimal arms}} \\
&  + \underbrace{D^{t,t+1}_U(p^t, \barp^{t+1}) - D^{t,{t+1}}_U(q^t,q^{t+1})}_{\text{regret on optimal arms}} + \underbrace{{D}^{\widetilde{t}}(\barp^{t+1},\wtilp^{t+1})}_{\text{residual regret}}.
\end{aligned}
\label{eq:app_dee_decomp}
\end{equation}




Now, we are ready to use this decomposition to analyze the regret bound for Decoupled-Tsallis-INF. As we do not add an extra log-barrier, we just set $\epsilon=0$, i.e., $q^t=\wtilq^t$ where $\wtilq^t$ is given in \pref{eq:def_wtilq}. As a result, the only preprocessing cost for Decoupled-Tsallis-INF is $D$.
To show \pref{thm:main_dee}, we present the following theorem.

\begin{theorem} \label{thm:preliminary_dee}
Decoupled-Tsallis-INF ensures
\begin{align*}
    \Reg^T  \leq \order \rbr{ \sqrt{K T_0}+\E \sbr{ \sum_{t=T_0+1}^T\rbr{ \frac{K^{\nicefrac{1}{6}}}{\sqrt{t}}
\sum_{i \in V} {\rbr{p^t_i}^{\nicefrac{2}{3}} }  } } + \sqrt{K}   }+D,
\end{align*}
for any $T_0 \geq 16K$, any subset $U\subseteq [K]$, $V=[K]\backslash U$, and $D =  \E\sbr{\sum_{t=1}^{T} \max_{i\in U} \E^t\sbr{ \ell^t_i - \ell^t_{i^\star}  } }$. 
\end{theorem}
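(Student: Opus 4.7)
I split the horizon into a warmup phase $[1,T_0]$ and a main phase $[T_0+1,T]$. For the warmup, I bound the regret trivially by $\order(\sqrt{KT_0})$ via the standard adversarial $\ftrl$ analysis applied to $\nicefrac{2}{3}$-Tsallis entropy with $\gamma^t = K^{\nicefrac{1}{6}}\sqrt{t}$: the per-round stability term, using $g^t$ for the importance-weighted estimator, is of order $\rbr{\sum_j \rbr{p^t_j}^{\nicefrac{2}{3}}}^2/\gamma^t \leq K^{\nicefrac{1}{6}}/\sqrt{t}$, and when summed over $[1,T_0]$ together with the telescoping penalty gives $\order(\sqrt{KT_0})$. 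The condition $T_0 \geq 16K$ will additionally be used to guarantee the multiplicative closeness of $p^t$, $\barp^{t+1}$ and $\wtilp^{t+1}$ during the main phase---an analogue of \pref{eq:app_multi_use} which is normally secured by the extra log-barrier but here must be ensured by letting $\gamma^t$ grow large enough before the main analysis kicks in.

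For $t > T_0$, I apply \pref{lem:regret_decomp_init} with $\epsilon=0$ (so $\wtilq^t = q^t$ and the only preprocessing cost is $D$), then invoke the new decomposition \pref{eq:app_dee_decomp} to split $D^{t,t+1}(p^t,p^{t+1}) - D^{t,t+1}_U(q^t,q^{t+1})$ into regret on sub-optimal arms, regret on optimal arms, and residual regret. The optimal-arm piece is non-positive by exactly the argument of \pref{lem:app_regopt}: \pref{thm:general_mono_thm} applies since $\gamma^t$ is non-decreasing and $\nicefrac{2}{3}$-Tsallis entropy satisfies both convexity/concavity conditions of the monotonicity theorem.

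The sub-optimal-arm piece $D^t_V(p^t,\barp^{t+1}) + \phi^t_V(p^{t+1}) - \phi^{t+1}_V(p^{t+1})$ contributes the main term in the statement. Using \pref{lem:app_breg_analysis_opt_and_interm} together with the multiplicative stability from the warmup bounds the stability by $\order\rbr{\norm{\hatl^t_V}^2_{\nabla^{-2}\phi^t_V(p^t)}}$; plugging in $\hatl^t_i = \ind\cbr{j^t=i}\loss^t_i/g^t_i$ with $g^t_i \propto \rbr{p^t_i}^{\nicefrac{2}{3}}$ and taking conditional expectation yields
\begin{equation*}
\E^t\sbr{\norm{\hatl^t_V}^2_{\nabla^{-2}\phi^t_V(p^t)}} = \frac{1}{\beta\gamma^t}\Biggl(\sum_{j\in[K]}\rbr{p^t_j}^{\nicefrac{2}{3}}\Biggr)\Biggl(\sum_{i\in V}\rbr{p^t_i}^{\nicefrac{2}{3}}\Biggr) \leq \frac{K^{\nicefrac{1}{6}}}{\sqrt{t}}\sum_{i\in V}\rbr{p^t_i}^{\nicefrac{2}{3}},
\end{equation*}
where I used $\sum_j \rbr{p^t_j}^{\nicefrac{2}{3}}\leq K^{\nicefrac{1}{3}}$ (power-mean inequality) and $\gamma^t = K^{\nicefrac{1}{6}}\sqrt{t}$. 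The penalty equals $\frac{\gamma^{t+1}-\gamma^t}{1-\beta}\sum_{i\in V}\rbr{p^{t+1}_i}^{\nicefrac{2}{3}}$, which after an index shift (using $1/\sqrt{t-1}\leq 2/\sqrt{t}$ for $t\geq 2$) is of the same order; summing over $[T_0+1,T]$ produces the main term.

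The hardest part will be bounding the residual regret $D^{\widetilde{t}}(\barp^{t+1}, \wtilp^{t+1})$, where we cannot rely on any log-barrier stabilization. My plan is to apply \pref{lem:app_breg_analysis_opt_and_interm} once more so that the residual is controlled by the dual norm of $\nabla \wphi^t(\barp^{t+1}) - \nabla \wphi^t(\wtilp^{t+1})$. Since $\barp^{t+1}$ is an $\ftrl$ solution with two separate simplex-mass constraints while $\wtilp^{t+1}$ is an $\ftrl$ solution on the full simplex with the same $\wphi^t$, the KKT conditions give that the difference of their Lagrange multipliers is driven by the discrepancy $\phi^{t+1}_U - \phi^t_U$ on $U$, of magnitude $\gamma^{t+1}-\gamma^t = \Theta(K^{\nicefrac{1}{6}}/\sqrt{t})$. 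Via an analogue of \pref{lem:lemma_21_gen_ext}, I expect the residual to be of order $\rbr{\gamma^{t+1}-\gamma^t}^2/\gamma^t$, which sums over the main phase to $\order(\sqrt{K})$. The principal obstacle is verifying the required multiplicative closeness between $\barp^{t+1}$ and $\wtilp^{t+1}$ using only $T_0\geq 16K$ and the arm-independent $\gamma^t$, rather than the log-barrier trick used in the MAB analysis. Combining the warmup bound $\order(\sqrt{KT_0})$, $\precost = D$, the non-positive optimal-arm piece, the sub-optimal-arm main term, and the $\order(\sqrt{K})$ residual yields the claimed bound.
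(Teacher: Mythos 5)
Your overall skeleton matches the paper's route: warmup phase bounded by $\order(\sqrt{KT_0})$, the decomposition \pref{eq:app_dee_decomp} for $t>T_0$, non-positivity of the optimal-arm piece via \pref{thm:general_mono_thm}, and the sub-optimal-arm piece producing the main term (your stability/penalty magnitudes agree with \pref{lem:decouple_reg_subopt}, though note the paper bounds the stability via \pref{lem:bergman_divergence_tsallis} rather than \pref{lem:app_breg_analysis_opt_and_interm} precisely because no entry-wise multiplicative relation is available without the extra log-barrier). The genuine gap is your treatment of the residual regret $D^{\widetilde{t}}(\barp^{t+1},\wtilp^{t+1})$.

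Both $\barp^{t+1}$ and $\wtilp^{t+1}$ are computed from the \emph{same} cumulative loss $\sum_{\tau\le t}\hatl^\tau$ and (essentially) the same hybrid regularizer $\wphi^t$; they differ only in the constraints (group masses frozen at those of $p^t$ versus the full simplex). Hence $D^{\widetilde{t}}(\barp^{t+1},\wtilp^{t+1})$ measures the mass exchanged between $U$ and $V$ in a single update, and this exchange is driven by the one-round loss estimate $\hatl^t$ — whose magnitude is up to $1/g^t_{i}$ — not by the learning-rate increment. Concretely, the KKT analysis gives $\lagU \lesssim \sum_{i\in U}(p^t_i)^{\nicefrac{4}{3}}\hatl^t_i/\sum_{i\in U}(p^t_i)^{\nicefrac{4}{3}}$ and similarly for $\lagV$, so the residual contains $(\lagU-\lagV)^2$ variance-type terms; only the $\lagU<0$ branch scales with $\gamma^{t+1}-\gamma^t$. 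Taking conditional expectations, these terms are of order $\frac{K^{\nicefrac{1}{6}}}{\sqrt{t}}\sum_{i\in V}(p^t_i)^{\nicefrac{2}{3}}$ (this is exactly what \pref{lem:decouple_reg_residual} proves), i.e.\ the residual must itself be related back to the self-bounding quantity; it is not $\order\rbr{(\gamma^{t+1}-\gamma^t)^2/\gamma^t}$ and cannot be summed to $\order(\sqrt{K})$. Relatedly, your plan to secure entry-wise multiplicative closeness of $p^t,\barp^{t+1},\wtilp^{t+1}$ from $T_0\ge 16K$ alone does not work: a single importance-weighted loss of size $1/g^t_i$ can shrink a tiny coordinate by an unbounded factor no matter how large $\gamma^t$ is. The paper only establishes the group-level relation $\sum_{i\in\scI}(\barp^{t+1}_i)^{\nicefrac{4}{3}}\le 8\sum_{i\in\scI}(p^t_i)^{\nicefrac{4}{3}}$ (\pref{lem:group_multiplicative_relation}, \pref{col:group_multiplicative_relation}), and uses $T_0\ge 16K$ for the Lagrange-multiplier sanity check needed by \pref{lem:bergman_divergence_tsallis}, not for coordinate-wise stability. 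Without these ingredients your residual-regret step fails, and the proof is incomplete.
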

\begin{proof}
With the decomposition in \pref{lem:app_dee_decomp}, we have the following upper bound for $\Reg^T$:
\begin{align*}
    &
  \E\sbr{ D^{1}(q^1,p^{1}) + \sum_{t=1}^{T} D^{t,t+1}(p^t,p^{t+1}) - D^{t,t+1}_U(q^t,q^{t+1}) }+D\\
    &
    \leq  \E\sbr{ \sum_{t=T_0+1}^{T} {D^{t}_V(p^t, \barp^{t+1}) + \phi^{t}_V(p^{t+1}) - \phi^{t+1}_V(p^{t+1})}  } + \E\sbr{ \sum_{t=T_0+1}^{T} {D^{t,t+1}_U(p^t, \barp^{t+1}) - D^{t,{t+1}}_U(q^t,q^{t+1})}  }\\
  & \quad + \E\sbr{ \sum_{t=T_0+1}^{T} {D^{ \widetilde{t} }(\barp^{t+1},\wtilp^{t+1}) }  }  +\order \rbr{ \sqrt{KT_0} +\sqrt{K}}+D,
\end{align*}
where we bound the regret of the first $T_0$ rounds by $\order \rbr{\sqrt{KT_0}}$ via \pref{lem:first_few_rounds_bound}, and $D^{1}(q^1,p^{1})$ by
\[
D^{1}(q^1,p^{1}) 
  =  -3\gamma^1 \sum_{i \in [K]}  \rbr{ \rbr{q^1_i}^{\nicefrac{2}{3}} - \rbr{p^1_i}^{\nicefrac{2}{3}} }   \leq 3K^{\nicefrac{1}{6}} \sum_{i \in [K]}   \rbr{p^1_i}^{\nicefrac{2}{3}}  = 3\sqrt{K}.
\]

First, we bound the regret on sub-optimal arms using \pref{lem:decouple_reg_subopt}:
\begin{equation} \label{eq:decouple_cumu_reg_subopt}
    \begin{split}
        &\E\sbr{\sum_{t=T_0+1}^T D^{t}_V(p^t, \barp^{t+1}) + \phi^{t}_V(p^{t+1}) - \phi^{t+1}_V(p^{t+1})  } \leq  \order \rbr{\E \sbr{  \sum_{t=T_0+1}^T \sum_{i \in V} {\frac{K^{\nicefrac{1}{6}} \rbr{p^t_i}^{\nicefrac{2}{3}}}{\sqrt{t}}  }  } }. 
    \end{split}
\end{equation}

The regret related to the optimal arms for any given $t$ is again non-positive (see \pref{app:decouple_reg_opt_arm}):
\[
D^{t,t+1}_U(p^t, \barp^{t+1}) - D^{t,{t+1}}_U(q^t,q^{t+1}) \leq 0.
\]


Finally, we turn to bound
the residual regret (see \pref{app:decouple_residual_reg}).
By \pref{lem:decouple_reg_residual}, the residual regret is bounded as
\begin{equation} \label{eq:decouple_residual_bound}
    \E \sbr{ \sum_{t=T_0+1}^T D^{\widetilde{t}} \rbr{\barp^{t+1},\wtilp^{t+1}} }\leq \order \rbr{\E \sbr{  \sum_{t=T_0+1}^T \sum_{i \in V} {\frac{K^{\nicefrac{1}{6}} \rbr{p^t_i}^{\nicefrac{2}{3}}}{\sqrt{t}}  }  } } .
\end{equation}

Combining all the above, we complete the proof.
\end{proof}

Armed with the result in \pref{thm:preliminary_dee}, we are now ready to prove \pref{thm:main_dee}. \\

\begin{proof}[Proof of \pref{thm:main_dee}]
\pref{thm:preliminary_dee} shows that there exists a constant $z \in \fR_{>0}$ such that
\[
\Reg^T  \leq z \rbr{ \sqrt{K T_0}+\E \sbr{ \sum_{t=T_0+1}^T \frac{K^{\nicefrac{1}{6}}}{\sqrt{t}}
\sum_{i \in V}   {\rbr{p^t_i}^{ \nicefrac{2}{3} }}    } + \sqrt{K}     }+D
\]
holds for any $T_0\geq 16K$.
For the adversarial setting, we simply pick $T_0=T$ and $V=[K]\backslash\{i^\star\}$ (so that $D=0$), and then the claimed bound $\order(\sqrt{KT})$ follows.


For the stochastic setting, let $V=\{i:\gap_i\neq 0\}$ as in \pref{sec:prelim} and define 
\begin{equation} \label{eq:def4T0_W}
   \gapcompdee = \gapDee,
\end{equation}
as an instance complexity measure for DEE-MAB. 
We have the following bound of  $\Reg^T$ for any $\eta>0$ and $T_0\geq 16K$:
\begin{align*}
\Reg^T  &=  \rbr{1 + \eta } \Reg^T  -   \eta \Reg^T \\
& \leq \rbr{1+\eta} \rbr{ z\sqrt{KT_0}+
\E \sbr{ \sum_{t=T_0+1}^T\frac{z K^{\nicefrac{1}{6}}}{\sqrt{t}}
\sum_{i \in V} {\rbr{p^t_i}^{ \nicefrac{2}{3} } }   }  +D + z\sqrt{K}
}  
\\&\quad - 
\eta \rbr{\E\sbr{ \sum_{t=T_0+1}^T \sum_{i \in V} \Delta_i {p^t_i} -C }}\\
&= \E\sbr{ \sum_{t=T_0+1}^T
\sum_{i \in V} \rbr{ 
    \frac{\rbr{1+\eta}zK^{\nicefrac{1}{6}}}{\sqrt{t}} \rbr{p^t_i}^{\nicefrac{2}{3} }  - \eta \Delta_i {p^t_i} 
    }} \\
&\quad + \rbr{1+\eta}\rbr{z\sqrt{KT_0}+D + z\sqrt{K}} + \eta C \\
&\leq \sum_{t=T_0+1}^T
\sum_{i \in V}
\frac{\rbr{1+\eta}^3z^3 \sqrt{K}}{ \eta^2\Delta_i^2 t^{\nicefrac{3}{2}} }
+ \rbr{1+\eta} \rbr{z\sqrt{KT_0}+D + z\sqrt{K}
} + \eta C \\
&\leq
\frac{2z^3\rbr{1+\eta}^3 \gapcompdee}{\eta^2} \sqrt{\frac{K}{T_0}} 
+ \rbr{1+\eta} \rbr{z\sqrt{KT_0}+D + z\sqrt{K}
}  + \eta C \\
& = \order\rbr{ \frac{\rbr{1+\eta^3}\gapcompdee}{\eta^2}\sqrt{\frac{K}{T_0}} + \eta \rbr{ C + \sqrt{KT_0}  }  + \sqrt{KT_0}  + (1+\eta)\rbr{D + \sqrt{K}} } \\
& = \order\rbr{ \frac{\gapcompdee}{\eta^2}\sqrt{\frac{K}{T_0}} + \eta \rbr{ \gapcompdee\sqrt{\frac{K}{T_0}} + C + \sqrt{KT_0} + D }  + \rbr{\sqrt{KT_0}  + D } }, 
\end{align*} 
where the second step uses Condition \eqref{eq:main_stoc_condition}; the forth step uses \cite[Lemma 8]{rouyer20a}; the fifth step follows from the fact $\sum_{t=T_0+1}^T  t^{-3/2}  \leq \int_{T_0+1}^T t^{-3/2} dt \leq  \frac{2}{\sqrt{T_0+1}} \leq  \frac{2}{\sqrt{T_0}}$; the sixth step uses $(1+\eta)^3 \leq 4(1+\eta^3)$ for any $\eta >0$; the last step uses $\sqrt{K} \leq \sqrt{KT_0}$ as $T_0 \geq 16K$.


Picking the optimal $\eta$ yields  
\begin{align}
\Reg^T & \leq \order\rbr{  \rbr{{\gapcompdee}\sqrt{\frac{K}{T_0}}}^{\nicefrac{1}{3}}\rbr{ \gapcompdee\sqrt{\frac{K}{T_0}} + C + \sqrt{KT_0} + D }^{\nicefrac{2}{3}}  + \sqrt{KT_0}  + D     } \notag  \\
 & \leq \order\rbr{  \rbr{{\gapcompdee}\sqrt{\frac{K}{T_0}}}^{\nicefrac{1}{3}}\rbr{ \gapcompdee\sqrt{\frac{K}{T_0}}}^{\nicefrac{2}{3}} + \rbr{{\gapcompdee}\sqrt{\frac{K}{T_0}}}^{\nicefrac{1}{3}} C^{\nicefrac{2}{3} }  + \rbr{{\gapcompdee}\sqrt{\frac{K}{T_0}}}^{\nicefrac{1}{3}} \rbr{   \sqrt{KT_0} }^{\nicefrac{2}{3}}} \notag \\
 & \quad + \order\rbr{  \rbr{{\gapcompdee}\sqrt{\frac{K}{T_0}}}^{\nicefrac{1}{3}} { D }^{\nicefrac{2}{3}}  + \sqrt{KT_0}  + D     } \notag \\
& \leq \order\rbr{ \frac{ {\gapcompdee}K^{\nicefrac{1}{2} } }{ {T_0}^{\nicefrac{1}{2} } } + { \frac{{ \gapcompdee}^{\nicefrac{1}{3}} K^{\nicefrac{1}{6}}C^{\nicefrac{2}{3}} }{T_0^{\nicefrac{1}{6}}}  + { \gapcompdee}^{\nicefrac{1}{3}} K^{\nicefrac{1}{2}}T_0^{\nicefrac{1}{6}}   } + K^{\nicefrac{1}{2}}T_0^{\nicefrac{1}{2}}  +D     }, \label{eq:app_dee_reg_bound_main}
\end{align}
where the second step uses the fact $\rbr{x+y}^{\nicefrac{2}{3}} \leq 3\rbr{x^{\nicefrac{2}{3}}+y^{\nicefrac{2}{3}}}$ for any $x,y\geq0$, and the third step follows from the fact $x^{\nicefrac{1}{3}}y^{\nicefrac{2}{3} }\leq x + y$ for any $x,y\geq 0$.

For $C\leq {\gapcompdee}^{\nicefrac{1}{2}}{K}^{\nicefrac{1}{2}}$, by picking $T_0 = \max\cbr{\gapcompdee,16K}$, we have 
\begin{align*}
\Reg^T & \leq \order\rbr{ \frac{ {\gapcompdee}K^{\nicefrac{1}{2} } }{ {T_0}^{\nicefrac{1}{2} } } + { \frac{{ \gapcompdee}^{\nicefrac{2}{3}} K^{\nicefrac{1}{2}} }{T_0^{\nicefrac{1}{6}}}  + { \gapcompdee}^{\nicefrac{1}{3}} K^{\nicefrac{1}{2}}T_0^{\nicefrac{1}{6}}   } + K^{\nicefrac{1}{2}}T_0^{\nicefrac{1}{2}}  +D  } \\
& \leq \order\rbr{ \frac{ {\gapcompdee}K^{\nicefrac{1}{2} } }{ {\gapcompdee}^{\nicefrac{1}{2} } } + { \frac{{ \gapcompdee}^{\nicefrac{2}{3}} K^{\nicefrac{1}{2}} }{\gapcompdee^{\nicefrac{1}{6}}} } + K^{\nicefrac{1}{2}}\rbr{\gapcompdee + K }^{\nicefrac{1}{2}}  +D  } \\
& \leq \order\rbr{ { {\gapcompdee}^{\nicefrac{1}{2} }K^{\nicefrac{1}{2} } }  + K + D  },
\end{align*}
where the first step uses $C\leq {\gapcompdee}^{\nicefrac{1}{2}}{K}^{\nicefrac{1}{2}}$; the second step follows from $ \gapcompdee \leq T_0 \leq \gapcompdee + 16K$; the last step follows from the fact $\sqrt{x+y}\leq \sqrt{x} + \sqrt{y}$ for $x,y\geq 0$.

For $C\geq {\gapcompdee}^{\nicefrac{1}{2}}{K}^{\nicefrac{1}{2}}$, we set $T_0=\max\cbr{ \frac{ {{ \gapcompdee}^{\nicefrac{1}{2}}C }  }{ {K }^{\nicefrac{1}{2}} }, 16K}$ and have \pref{eq:app_dee_reg_bound_main} further bounded by 
\begin{align*}
& \order\rbr{ \frac{ {\gapcompdee}K^{\nicefrac{1}{2} } }{ {T_0}^{\nicefrac{1}{2} } } + {  { \frac{{ \gapcompdee}^{\nicefrac{1}{3}} K^{\nicefrac{1}{6}}C^{\nicefrac{2}{3}} }{T_0^{\nicefrac{1}{6}}}  + { \gapcompdee}^{\nicefrac{1}{3}} K^{\nicefrac{1}{2}}T_0^{\nicefrac{1}{6}}   } + K^{\nicefrac{1}{2}}T_0^{\nicefrac{1}{2}}  } + D} \\
& \leq \order\rbr{  { {\gapcompdee}K^{\nicefrac{1}{2} } }\rbr{ \frac{ {K }^{\nicefrac{1}{2}} }{ {{ \gapcompdee}^{\nicefrac{1}{2}}C }  } }^{\nicefrac{1}{2} }   + { \gapcompdee}^{\nicefrac{1}{3}} K^{\nicefrac{1}{6}}C^{\nicefrac{2}{3}}\rbr{ \frac{ {K }^{\nicefrac{1}{2}} }{ {{ \gapcompdee}^{\nicefrac{1}{2}}C }  } }^{\nicefrac{1}{6} } + { \gapcompdee}^{\nicefrac{1}{3}} K^{\nicefrac{1}{2}}\rbr{ \frac{ {{ \gapcompdee}^{\nicefrac{1}{2}}C }  }{ {K }^{\nicefrac{1}{2}} } }^{\nicefrac{1}{6} } } \\
& \quad + \order\rbr{ { \gapcompdee}^{\nicefrac{1}{3}} K^{\nicefrac{1}{2}} \cdot K^{\nicefrac{1}{6}} + K^{\nicefrac{1}{2}}\rbr{ \frac{ {{ \gapcompdee}^{\nicefrac{1}{2}}C }  }{ {K }^{\nicefrac{1}{2}} } }^{\nicefrac{1}{2} } + K^{\nicefrac{1}{2}}\cdot K^{\nicefrac{1}{2}} + D } \\
& = \order\rbr{ { \gapcompdee}^{\nicefrac{3}{4}}K^{\nicefrac{3}{4}}C^{-\nicefrac{1}{2}} + { \gapcompdee}^{\nicefrac{1}{4}} K^{\nicefrac{1}{4}}C^{\nicefrac{1}{2}} + { \gapcompdee}^{\nicefrac{5}{12}} K^{\nicefrac{5}{12}}C^{\nicefrac{1}{6}}  +  { \gapcompdee}^{\nicefrac{1}{4}} K^{\nicefrac{1}{4}}C^{\nicefrac{1}{2}} }\\
& \quad + \order\rbr{ { \gapcompdee}^{\nicefrac{1}{2}} K^{\nicefrac{1}{2}} + K + D } \\
& \leq \order\rbr{ { \gapcompdee}^{\nicefrac{1}{4}} K^{\nicefrac{1}{4}}C^{\nicefrac{1}{2}} + { \gapcompdee}^{\nicefrac{1}{2}} K^{\nicefrac{1}{2}} + K + D  },
\end{align*}
where the first step uses $\frac{ {{ \gapcompdee}^{\nicefrac{1}{2}}C }  }{ {K }^{\nicefrac{1}{2}} } \leq T_0 \leq \frac{ {{ \gapcompdee}^{\nicefrac{1}{2}}C }  }{ {K }^{\nicefrac{1}{2}} } + 16K$; the second step follows from the facts that $x^{\nicefrac{1}{3}}y^{\nicefrac{2}{3}} \leq x + y$ for any $x,y\geq 0$, and that ${ \gapcompdee}^{\nicefrac{1}{3}} K^{\nicefrac{2}{3}} = \rbr{ { \gapcompdee}^{\nicefrac{1}{2}} K^{\nicefrac{1}{2}} }^{\nicefrac{2}{3}} \cdot \rbr{ K }^{\nicefrac{1}{3}} $; the third step uses $C\geq{\gapcompdee}^{\nicefrac{1}{2}}{K}^{\nicefrac{1}{2}}$.

Combining these two cases together yields the claimed bound:
\begin{align*}
\Reg^T & = \order\rbr{ { \gapcompdee}^{\nicefrac{1}{4}} K^{\nicefrac{1}{4}}C^{\nicefrac{1}{2}} + { \gapcompdee}^{\nicefrac{1}{2}} K^{\nicefrac{1}{2}} + K + D   } \\
& = \order\rbr{ \sqrt{\sum_{i \in V}\frac{K}{\gap_i^2} }  +  \sqrt{C} \cdot \rbr{\sum_{i \in V}\frac{K}{\gap_i^2} }^{\nicefrac{1}{4} } +   K + D}. 
\end{align*}
\end{proof}




\subsection{Regret Decomposition for Decouple-Tsallis-INF}\label{app:reg_decomp_decoupled_tsallis_inf}
In this subsection, we present the proposed regret decomposition using $\barp^{t+1}$ (with the new definition) and $\wtilp^t$.
\begin{lemma}
For any $t$, $D^{t,{t+1}}(p^t,p^{t+1}) - D_U^{t,{t+1}}(q^t,q^{t+1})$ is bounded by 
\begin{equation}\label{eq:case_2_decomp}
\begin{aligned}
&\underbrace{D^{t}_V(p^t, \barp^{t+1}) + \phi^{t}_V(p^{t+1}) - \phi^{t+1}_V(p^{t+1})}_{\text{regret on sub-optimal arms}} + \underbrace{D^{t,t+1}_U(p^t, \barp^{t+1}) - D^{t,{t+1}}_U(q^t,q^{t+1})}_{\text{regret on optimal arms}} + \underbrace{D^{ \widetilde{t} }(\barp^{t+1},\wtilp^{t+1})}_{\text{residual regret}}.
\end{aligned} \notag 
\end{equation}
\label{lem:app_dee_decomp}
\end{lemma}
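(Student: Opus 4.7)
My plan is to prove the stated inequality as an equality-plus-slack, mirroring the structure used in the proof of \pref{lem:main_decomp} but accounting for the two twists in the decoupled setting: (i) $\barp^{t+1}_V$ is now defined with $\phi^t_V$ rather than $\phi^{t+1}_V$, and (ii) the intermediate regularizer $\wphi^t = \phi^t_V + \phi^{t+1}_U$ together with its minimizer $\wtilp^{t+1}$ bridges the mismatch between the "time-$t$" regularizer on $V$ and the "time-$(t{+}1)$" regularizer on $U$. Concretely, I will establish the identity
\[
D^{t}_V(p^t,\barp^{t+1}) + \sbr{\phi^{t}_V(p^{t+1}) - \phi^{t+1}_V(p^{t+1})} + D^{t,t+1}_U(p^t,\barp^{t+1}) + D^{\widetilde{t}}(\barp^{t+1},\wtilp^{t+1}) = D^{t,t+1}(p^t,p^{t+1}) + D^{\widetilde{t}}(p^{t+1},\wtilp^{t+1}),
\]
and then subtract $D^{t,t+1}_U(q^t,q^{t+1})$ from both sides; since $D^{\widetilde{t}}(p^{t+1},\wtilp^{t+1})\ge 0$ by strict convexity of $\wphi^t$, the claim follows.

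To establish the identity, I will expand each of the four terms on the left-hand side using the definitions of the (skewed) Bregman divergences and of $\wphi^t$. Crucially, the regularizer-at-$\barp^{t+1}$ pieces appearing in $D^{t}_V(p^t,\barp^{t+1})$ and $D^{t,t+1}_U(p^t,\barp^{t+1})$ recombine as exactly $\wphi^t(\barp^{t+1})$, which then cancels against the $\wphi^t(\barp^{t+1})$ term inside $D^{\widetilde{t}}(\barp^{t+1},\wtilp^{t+1})$. What remains on the nonlinear side is $\phi^t(p^t) - \wphi^t(\wtilp^{t+1}) + \phi^t_V(p^{t+1}) - \phi^{t+1}_V(p^{t+1})$, together with two inner-product contributions $-\inner{\nabla \wphi^t(\barp^{t+1}), p^t - \barp^{t+1}}$ and $-\inner{\nabla \wphi^t(\wtilp^{t+1}), \barp^{t+1} - \wtilp^{t+1}}$.

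The key step is collapsing these inner-products. I will use the KKT conditions associated with \pref{eq:pbar_def}, \pref{eq:app_dee_def_barp}, and \pref{eq:app_dee_def_wtilp}: $\nabla \wphi^t(\barp^{t+1}) = -\hatL^{t+1} + \lambda_U\one_U + \lambda_V\one_V$ and $\nabla \wphi^t(\wtilp^{t+1}) = -\hatL^{t+1} + \lambda\one$ for scalars $\lambda_U,\lambda_V,\lambda$. The mass constraints $\sum_{i\in U}(p^t_i - \barp^{t+1}_i) = \sum_{i\in V}(p^t_i - \barp^{t+1}_i) = \sum_i(\barp^{t+1}_i - \wtilp^{t+1}_i) = 0$ kill every Lagrange-multiplier contribution, and what survives telescopes cleanly to $\inner{\hatL^{t+1}, p^t - \wtilp^{t+1}}$. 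Invoking the KKT condition for $p^{t+1}$ similarly lets me rewrite $D^{t,t+1}(p^t,p^{t+1}) = \phi^t(p^t) - \phi^{t+1}(p^{t+1}) + \inner{\hatL^{t+1}, p^t - p^{t+1}}$, so the difference of the two sides of the claimed identity reduces, after a one-line manipulation, to exactly $D^{\widetilde{t}}(p^{t+1},\wtilp^{t+1})$.

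The main obstacle I anticipate is purely bookkeeping: three distinct Lagrange multipliers on partially overlapping supports must be tracked carefully so that the right inner products vanish, and the nonlinear pieces must be grouped so that $\wphi^t(\barp^{t+1})$ cancels and the residual term $\phi^t_V(p^{t+1}) - \phi^{t+1}_V(p^{t+1})$—which is precisely the extra "skew-correction" that appears only in the decoupled analysis because $\barp^{t+1}_V$ is defined with $\phi^t_V$—emerges naturally. No other non-trivial inequality is needed, since all slack in the bound is captured by the single nonnegative Bregman term $D^{\widetilde{t}}(p^{t+1},\wtilp^{t+1})$.
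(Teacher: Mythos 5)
Your proposal is correct. The identity you claim does hold: expanding $D^{t}_V(p^t,\barp^{t+1})$, $D^{t,t+1}_U(p^t,\barp^{t+1})$ and $D^{\widetilde{t}}(\barp^{t+1},\wtilp^{t+1})$ with the KKT conditions $\nabla\phi^{t}_V(\barp^{t+1})=-\hatL^{t+1}_V+\lagV\one_V$, $\nabla\phi^{t+1}_U(\barp^{t+1})=-\hatL^{t+1}_U+\lagU\one_U$, $\nabla\wphi^t(\wtilp^{t+1})=-\hatL^{t+1}+\lambda\one_K$ and the mass constraints, the $\wphi^t(\barp^{t+1})$ terms cancel and both sides reduce to $\phi^t(p^t)+\phi^t_V(p^{t+1})-\phi^{t+1}_V(p^{t+1})-\wphi^t(\wtilp^{t+1})+\inner{\hatL^{t+1},p^t-\wtilp^{t+1}}$, so the only slack is the nonnegative term $D^{\widetilde{t}}(p^{t+1},\wtilp^{t+1})$. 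The paper reaches the same bound by a slightly different route: it first rewrites $D^{t,t+1}(p^t,p^{t+1})$ in stability form via \pref{lem:opt_breg_property}, inserts $\wphi^t(p^{t+1})$, invokes the generic FTRL optimality lemma (\pref{lem:app_breg_analysis_opt_and_interm}) to replace $p^{t+1}$ by $\wtilp^{t+1}$ — this is the single inequality step — and then splits $D^{t,\widetilde{t}}(p^t,\wtilp^{t+1})$ at the intermediate point $\barp^{t+1}$ using the same KKT facts. The two arguments use identical ingredients and the slack in the paper's inequality step is exactly your $D^{\widetilde{t}}(p^{t+1},\wtilp^{t+1})$; what your version buys is that the lemma is exhibited as an exact identity plus nonnegativity of one Bregman divergence, making the lost quantity explicit, whereas the paper's version reuses an off-the-shelf lemma and avoids the multiplier bookkeeping being done all at once. (The only implicit point in both treatments is that the nonnegativity constraints in \pref{eq:pbar_def}, \pref{eq:app_dee_def_barp}, and \pref{eq:app_dee_def_wtilp} are inactive, which holds since the Tsallis gradient blows up at zero, so the stationarity conditions you use are valid.)
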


\begin{proof}
We proceed as follows:
\begin{align}
&D^{t,t+1}(p^t,p^{t+1}) \notag \\
& = \inner{p^t - p^{t+1}, \hatl^t } - D^{t+1,t}(p^{t+1},p^t) \notag  \\
& = \inner{p^t - p^{t+1}, \hatl^t } - \rbr{\wphi^t(p^{t+1}) - \phi^t (p^t) - \inner{ \nabla \phi^t(p^t), p^{t+1} - p^t }  }  + \wphi^t(p^{t+1}) - \phi^{t+1}(p^{t+1}) \notag  \\
& = \inner{p^t - p^{t+1}, \hatl^t } - D^{\widetilde{t},t}(p^{t+1},p^t) + \wphi^t(p^{t+1}) - \phi^{t+1}(p^{t+1}) \notag  \\
& \leq \inner{p^t - \wtilp^{t+1}, \hatl^t } - D^{\widetilde{t},t}(\wtilp^{t+1},p^t) + \wphi^t(p^{t+1}) - \phi^{t+1}(p^{t+1}) \notag 
 \\
& = D^{t,\widetilde{t}}(p^t,\wtilp^{t+1}) + \wphi^t(p^{t+1}) - 
\phi^{t+1}(p^{t+1}) \notag  \\
& = D^{t,\widetilde{t}}(p^t,\wtilp^{t+1}) + \phi^t_V(p^{t+1}) - \phi^{t+1}_V(p^{t+1}), \label{eq:app_dee_decomp_bound_1}
\end{align}
where the first step follows from \pref{lem:opt_breg_property} according to the definitions of $p^t$; the second step adds and subtracts $\wphi^t(p^{t+1})$; the third step follows from the definition of $D^{\widetilde{t},t}$; the forth step uses \pref{lem:app_breg_analysis_opt_and_interm}; the last step follows from the definition of $\wphi^t$ which implies $\wphi^t_U(p^{t+1})=\phi^{t+1}_U(p^{t+1})$.

Then, with the help of $\barp^{t+1}$, we can further decompose the term $D^{t,\widetilde{t}}(p^t,\wtilp^{t+1})$ as:
\begin{align}
& D^{t,\widetilde{t}}(p^t,\wtilp^{t+1}) \notag \\
& = \phi^t(p^t) - \wphi^{t}(\wtilp^{t+1}) - \inner{ \nabla \wphi^{t}(\wtilp^{t+1}), p^t - \wtilp^{t+1} } \notag \\
& = \phi^t(p^t) - \wphi^{t}(\barp^{t+1}) - \inner{ \nabla \wphi^{t}(\wtilp^{t+1}), p^t - \barp^{t+1} } \notag  \\
& \quad + \wphi^{t}(\barp^{t+1}) - \wphi^{t}(\wtilp^{t+1}) - \inner{ \nabla \wphi^{t}(\wtilp^{t+1}), \barp^{t+1} - \wtilp^{t+1} } \notag \\
& = \phi^t_U(p^t) - \wphi^{t}_U(\barp^{t+1}) - \inner{ \nabla \wphi_U^{t}(\wtilp^{t+1}), p^t_U - \barp^{t+1}_U } \label{eq:app_dee_decomp_lem_term_1} \\
& \quad + \phi^t_V(p^t) - \wphi^{t}_V(\barp^{t+1}) - \inner{ \nabla \wphi_V^{t}(\wtilp^{t+1}), p^t_V - \barp^{t+1}_V } \label{eq:app_dee_decomp_lem_term_2} \\
& \quad + D^{\widetilde{t}}(\barp^{t+1},\wtilp^{t+1}), \notag 
\end{align}
where the last term is the residual regret. 

For the regret on optimal arms, we show that the term $\phi^t_U(p^t) - \wphi^{t}_U(\barp^{t+1}) - \inner{ \nabla \wphi_U^{t}(\wtilp^{t+1}), p^t_U - \barp^{t+1}_U }$ in \pref{eq:app_dee_decomp_lem_term_1} is exactly $D^{t,t+1}_U(p^t, \barp^{t+1})$:
\begin{equation}
\begin{aligned}
& \phi^t_U(p^t) - \wphi^{t}_U(\barp^{t+1}) - \inner{ \nabla \wphi_U^{t}(\wtilp^{t+1}), p^t_U - \barp^{t+1}_U } \\
& = \phi^t_U(p^t) - \wphi^{t}_U(\barp^{t+1}) - \inner{ \nabla \phi^{t+1}_U(\barp^{t+1}) + c\cdot \one_U, p^t_U - \barp^{t+1}_U } \\
& = \phi^t_U(p^t) - \wphi^{t}_U(\barp^{t+1}) - \inner{ \nabla \phi_U^{t+1}(\barp^{t+1}), p^t_U - \barp^{t+1}_U } \\
& = D_U^{t,{t+1}}(p^t, \barp^{t+1}),
\end{aligned}
\label{eq:app_dee_decomp_bound_2}
\end{equation}
where the first step uses the KKT conditions of $\barp^{t+1}$ and $\wtilp^{t+1}$, which indicate $\nabla \wphi^{t}_U(\wtilp^{t+1}) = \nabla \phi^{t+1}_U(\barp^{t+1}) + c\cdot \one_U$ for a constant $c \in \fR$; the second step follows from the fact $\sum_{i\in U}p^t_i = \sum_{i\in U}\barp^{t+1}_i$, which guarantees  $\inner{c \cdot \one_U,p^t_U - \barp^{t+1}_U}=0$ for any $c\in \fR$; the third step follows from the definition of $\wphi^{t}$ which implies $\wphi^{t}_U(\barp^{t+1}) = \phi^{t+1}_U(\barp^{t+1})$. 

Following the same idea of handling \pref{eq:app_dee_decomp_lem_term_1}, we have:
\begin{equation}\label{eq:app_dee_decomp_bound_3}
\begin{aligned}
& \phi^t_V(p^t) - \wphi^{t}_V(\barp^{t+1}) - \inner{ \nabla \wphi_V^{t}(\wtilp^{t+1}), p^t_V - \barp^{t+1}_V } \\
& = \phi^t_V(p^t) - \phi^{t}_V(\barp^{t+1}) - \inner{ \nabla \phi_V^{t}(\barp^{t+1}), p^t_V - \barp^{t+1}_V } \\
& = D^{t}_V(p^t,\barp^{t+1}),
\end{aligned}
\end{equation}
where the first step follows from the definition of $\wphi^{t}$ that $\wphi^{t}_V(\barp^{t+1}) = \phi^{t}_V(\barp^{t+1})$ and the KKT conditions of $\barp^{t+1}$ and $\wtilp^{t+1}$, which indicate that $\nabla \wphi^{t}_V(\wtilp^{t+1}) = \nabla \phi^{t}_V(\barp^{t+1}) + c'\cdot \one_V$ for a constant $c' \in \fR$. 
Finally, combining these bounds together, we have 
\begin{align*}
& D^{t,t+1}(p^t,p^{t+1}) \\
& \leq D^{t,\widetilde{t}}(p^t,\wtilp^{t+1}) + \phi^t_V(p^{t+1}) - \phi^{t+1}_V(p^{t+1})  &&\rbr{\text{By \pref{eq:app_dee_decomp_bound_1}}} \\
& = \phi^t_V(p^{t+1}) - \phi^{t+1}_V(p^{t+1})  + \phi^t_V(p^t) - \wphi^{t}_V(\barp^{t+1}) - \inner{ \nabla \wphi_V^{t}(\wtilp^{t+1}), p^t_V - \barp^{t+1}_V } \notag \\
& \quad +  \phi^t_U(p^t) - \wphi^{t}_U(\barp^{t+1}) - \inner{ \nabla \wphi_U^{t}(\wtilp^{t+1}), p^t_U - \barp^{t+1}_U }  \notag \\
& \quad + D^{\widetilde{t}}(\barp^{t+1},\wtilp^{t+1}) \\
& = D_V^{t}(p^t, \barp^{t+1}) +  \phi^t_V(p^{t+1}) - \phi^{t+1}_V(p^{t+1}) &&\rbr{\text{By \pref{eq:app_dee_decomp_bound_3}}} \\
& \quad + D_U^{t,{t+1}}(p^t, \barp^{t+1}) + D^{\widetilde{t}}(\barp^{t+1},\wtilp^{t+1}) &&\rbr{\text{By \pref{eq:app_dee_decomp_bound_2}}}
\end{align*}
which concludes the proof. 
\end{proof}

\subsection{Regret on Sub-Optimal Arms} \label{app:decouple_reg_subopt_arm}

\begin{lemma} \label{lem:decouple_reg_subopt}
For the Decoupled-Tsallis-INF algorithm, the following holds: 
\begin{equation*}
    \begin{split}
        &\E\sbr{\sum_{t=T_0+1}^T D^{t}_V(p^t, \barp^{t+1}) + \phi^{t}_V(p^{t+1}) - \phi^{t+1}_V(p^{t+1})  } \leq  \order \rbr{\E \sbr{  \sum_{t=T_0+1}^T \sum_{i \in V} {\frac{K^{\nicefrac{1}{6}} \rbr{p^t_i}^{\nicefrac{2}{3}}}{\sqrt{t}} }  } }. 
    \end{split}
\end{equation*}
for any $T_0 \geq 16K$ and any subset $V\subseteq[K]$.
\end{lemma}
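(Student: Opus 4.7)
The plan is to handle the two summands separately and then combine.

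For the Bregman divergence $D^t_V(p^t, \bar{p}^{t+1})$, I will exploit the fact that in the DEE definition \pref{eq:app_dee_def_barp}, $\bar{p}^{t+1}_V$ is the minimizer of an FTRL-style objective regularized by $\phi^t_V$ (rather than $\phi^{t+1}_V$ as in the MAB case). A standard optimality argument together with a Taylor expansion of $\phi^t_V$ around an intermediate point $\xi$ on the segment from $p^t$ to $\bar{p}^{t+1}$ gives
\[
D^t_V(p^t, \bar{p}^{t+1}) \leq \order\bigl(\|\hat{\ell}^t_V\|^2_{\nabla^{-2}\phi^t_V(\xi)}\bigr).
\]
For $\beta = \nicefrac{2}{3}$-Tsallis entropy, $\nabla^2\phi^t_V(p)$ is diagonal with entries $\tfrac{2}{3}\gamma^t p_i^{-4/3}$, so the inverse-Hessian entries are $\Theta(p_i^{4/3}/\gamma^t)$ as long as $\xi_i = \Theta(p^t_i)$ for $i \in V$ (discussed below).

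Next, taking the conditional expectation with respect to the exploration draw $j^t \sim g^t$ with $g^t_i = (p^t_i)^{\nicefrac{2}{3}}/S^t$ and $S^t = \sum_j (p^t_j)^{\nicefrac{2}{3}}$, and using $\E^t[(\hat{\ell}^t_i)^2] \leq 1/g^t_i$, yields
\[
\E^t\bigl[\|\hat{\ell}^t_V\|^2_{\nabla^{-2}\phi^t_V(p^t)}\bigr] \leq \order\!\left(\frac{1}{\gamma^t}\sum_{i\in V}\frac{(p^t_i)^{\nicefrac{4}{3}}}{g^t_i}\right) = \order\!\left(\frac{S^t}{\gamma^t}\sum_{i\in V}(p^t_i)^{\nicefrac{2}{3}}\right).
\]
The power-mean inequality gives $S^t \leq K^{\nicefrac{1}{3}}$, and combined with $\gamma^t = K^{\nicefrac{1}{6}}\sqrt{t}$ this is precisely $\order\bigl(K^{\nicefrac{1}{6}}\sum_{i\in V}(p^t_i)^{\nicefrac{2}{3}}/\sqrt{t}\bigr)$, matching the claim. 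For the penalty $\phi^t_V(p^{t+1}) - \phi^{t+1}_V(p^{t+1})$, direct calculation gives $3(\gamma^{t+1}-\gamma^t)\sum_{i\in V}(p^{t+1}_i)^{\nicefrac{2}{3}}$, and $\gamma^{t+1}-\gamma^t \leq K^{\nicefrac{1}{6}}/(2\sqrt{t})$. This already has the right form except that it involves $p^{t+1}$ rather than $p^t$; I will convert between the two using a multiplicative stability bound $(p^{t+1}_i)^{\nicefrac{2}{3}} = \order((p^t_i)^{\nicefrac{2}{3}})$.

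The main obstacle is establishing the multiplicative stability between $p^t$, $\bar{p}^{t+1}$, and $p^{t+1}$ in the absence of the extra log-barrier that was crucial in the MAB analyses of \pref{sec:app_regsub}. The condition $T_0 \geq 16K$ is precisely tailored to this: for $t \geq 16K$, the learning rate $\gamma^t = K^{\nicefrac{1}{6}}\sqrt{t}$ grows fast enough that a single FTRL update cannot shift any coordinate by more than a constant factor, even though the estimator $\hat{\ell}^t$ built from $g^t$ has variance $1/g^t_i$ that can be large. Proving this will likely require a dedicated stability lemma in the spirit of \citep{rouyer20a}, exploiting the specific structure of the $\nicefrac{2}{3}$-Tsallis regularizer together with the fact that $g^t_i = \Theta((p^t_i)^{\nicefrac{2}{3}}/K^{\nicefrac{1}{3}})$ dampens the dangerous regime where $p^t_i$ is tiny. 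Once this stability is in place, the two direct calculations above combine immediately to give the claimed bound.
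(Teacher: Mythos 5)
There is a genuine gap, and it sits exactly where you flagged it: your whole plan rests on an entry-wise multiplicative relation between $p^t$, $\barp^{t+1}$, and $p^{t+1}$ which you do not prove, and which the paper deliberately avoids needing. The two-sided claim you make (``a single FTRL update cannot shift any coordinate by more than a constant factor'' once $t\geq 16K$) is in fact false for this algorithm: the decoupled estimator satisfies $\hatl^t_{j^t}\leq K^{\nicefrac{1}{3}}(p^t_{j^t})^{-\nicefrac{2}{3}}$, which dominates $\gamma^t (p^t_{j^t})^{-\nicefrac{1}{3}}=K^{\nicefrac{1}{6}}\sqrt{t}\,(p^t_{j^t})^{-\nicefrac{1}{3}}$ whenever $p^t_{j^t}$ is small enough (roughly below $K^{\nicefrac{1}{2}}t^{-\nicefrac{3}{2}}$), so the observed arm's weight can collapse by an arbitrarily large factor in one round regardless of $T_0$. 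What your argument actually needs is only the one-sided bound $\barp^{t+1}_i\leq \order(p^t_i)$ (to control the inverse Hessian at the intermediate point $\xi$) and $p^{t+1}_i\leq\order(p^t_i)$ (for the penalty), but even that one-sided, entry-wise statement is a nontrivial claim that is nowhere established — the paper only proves a \emph{group} multiplicative relation $\sum_{i\in\scI}(\barp^{t+1}_i)^{\nicefrac{4}{3}}\leq 8\sum_{i\in\scI}(p^t_i)^{\nicefrac{4}{3}}$ (\pref{lem:group_multiplicative_relation}, \pref{col:group_multiplicative_relation}), which does not give per-coordinate control. So as written, the proposal reduces the lemma to an unproved (and, in its stated form, incorrect) stability lemma.

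The paper's proof avoids this issue entirely, and it is worth seeing how. For the stability term it does not Taylor-expand at an intermediate point: it invokes \pref{lem:bergman_divergence_tsallis}, a Tsallis-specific inequality whose hypothesis is trivially met because $\hatl^t\geq 0$, and which bounds $\inner{p^t_V-\barp^{t+1}_V,\hatl^t_V}-D^t_V(\barp^{t+1},p^t)$ directly by $\sum_{i\in V}\rbr{p^t_i}^{\nicefrac{4}{3}}\rbr{\hatl^t_i}^2/(\beta\gamma^t)$, i.e.\ with the Hessian evaluated at $p^t$ itself — this also exploits the DEE-specific definition \pref{eq:app_dee_def_barp} in which $\barp^{t+1}_V$ is regularized by $\phi^t_V$, not $\phi^{t+1}_V$, so the quantity really is a standard (non-skewed) divergence at time $t$. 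For the penalty term the paper never converts $p^{t+1}$ back to $p^t$: it keeps $\order\rbr{\frac{K^{\nicefrac{1}{6}}}{\sqrt{t+1}}\sum_{i\in V}(p^{t+1}_i)^{\nicefrac{2}{3}}}$ and simply reindexes the sum over $t$, which already has the claimed form. Finally, note that $T_0\geq 16K$ is not what drives this particular lemma (the paper's proof works for all $t$); that condition is used later, in the residual-regret analysis, to verify the hypothesis of \pref{lem:bergman_divergence_tsallis} when the ``loss'' involves negative Lagrange-multiplier terms and to exploit the group multiplicative relation. Your conditional-expectation computation, the bound $\sum_i (p^t_i)^{\nicefrac{2}{3}}\leq K^{\nicefrac{1}{3}}$, and the final arithmetic are all correct; the missing piece is replacing the unproved entry-wise stability with an argument of the above type (or proving the one-sided entry-wise bound yourself, which would be a new lemma).
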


\begin{proof}
The stability term $D^{t}_V(p^t, \barp^{t+1})$, in conditional expectation, is bounded as:
\begin{align}
& \E^t\sbr{D^{t}_V\rbr{p^t, \barp^{t+1}}} \notag \\
& = \E^t\sbr{\inner{p^t_V - \barp^{t+1}_V, \hatl^t_V} - D^{t}_V\rbr{ \barp^{t+1},   p^t}} \notag \\
&\leq  \order \rbr{\frac{1}{\gamma^t} \sum_{i \in V} \frac{  \rbr{p^t_i}^{\nicefrac{4}{3}}   }{g^{t}_{i}}} \notag \\ &=\order \rbr{\frac{1}{\gamma^t}\rbr{ \sum_{i \in V} \rbr{p^t_i}^{\nicefrac{2}{3}}  } \rbr{ \sum_{i \in [K]} \rbr{p^t_i}^{\nicefrac{2}{3}}  } }\label{eq:decouple_suboptproof_interstep1} \\
&\leq \order \rbr{\frac{K^{\nicefrac{1}{3}}}{\gamma^t}\rbr{ \sum_{i \in V} \rbr{p^t_i}^{\nicefrac{2}{3}}  } } \notag \\ 
&=\order \rbr{\frac{K^{\nicefrac{1}{6}}}{\sqrt{t}} \sum_{i \in V} \rbr{p^t_i}^{\nicefrac{2}{3}} }  \label{eq:decouple_stability_result1} ,
\end{align}
where the first step applies \pref{lem:bergman_divergence_tsallis}; the third step uses the definition of $g^{t}_{i}$; the forth step follows from the fact  $\sum_{i \in [K]} \rbr{p^t_i}^{\nicefrac{2}{3}} \leq K^{\nicefrac{1}{3}}$; the last step uses the definition of $\gamma^t=K^{\nicefrac{1}{6}}\sqrt{t}$.

On the other hand, the penalty term can be bounded as 
\begin{align}\label{eq:decouple_penalty_result1}
&\E^t \sbr{\phi^t_V(p^{t+1}) - \phi^{t+1}_V(p^{t+1})}\notag \\
=& \order \rbr{ \rbr{\gamma^{t+1} - \gamma^t}\sum_{i \in V} \rbr{p^{t+1}_i}^{\nicefrac{2}{3}}}
\leq  \order \rbr {\frac{K^{\nicefrac{1}{6}}  }{\sqrt{t+1}}\sum_{i \in V} \rbr{p^{t+1}_i}^{\nicefrac{2}{3}} } ,
\end{align}
where the second step uses $\gamma^{t+1}-\gamma^t \leq \frac{K^{\nicefrac{1}{6}}}{\sqrt{t+1} }$. By summing up \pref{eq:decouple_stability_result1} and \pref{eq:decouple_penalty_result1} over all $t$ from $T_0+1$ to $T$, we arrive at the desired bound.
\end{proof}

\subsection{Regret on Optimal Arms} \label{app:decouple_reg_opt_arm}
Since the standard $\nicefrac{2}{3}$-Tsallis entropy regularizer is twice differentiable and its partial derivatives are concave (that is, $(-x^{\nicefrac{2}{3}})' = -\nicefrac{2}{3} \cdot x^{-\nicefrac{1}{3}}$ is concave on $\fR_{>0}$), the first condition of \pref{thm:general_mono_thm} holds. 
As the Decoupled-Tsallis-INF algorithm adopts $\gamma^t = K^{\nicefrac{1}{6}} \sqrt{t}$, one can easily verify that the second condition of the theorem also holds. 
Hence, one can apply \pref{thm:general_mono_thm} to bound the regret on optimal arms by zero, i.e., $D^{t,t+1}_U(p^t, \barp^{t+1}) - D^{t,{t+1}}_U(q^t,q^{t+1}) \leq 0$ for $\forall t$.

\subsection{Residual Regret} \label{app:decouple_residual_reg}
In this section, our goal is to prove the following lemma, which upper-bounds the residual regret by a self-bounding term.
\begin{lemma} \label{lem:decouple_reg_residual}
For the Decoupled-Tsallis-INF algorithm, the following holds 
for any $T_0 \geq 16K$ and any subset $V\subseteq[K]$:  
\begin{equation*}
       \E \sbr{ \sum_{t=T_0+1}^T D^{\widetilde{t}} \rbr{\barp^{t+1},\wtilp^{t+1}} }\leq \order \rbr{\E \sbr{  \sum_{t=T_0+1}^T \sum_{i \in V} {\frac{K^{\nicefrac{1}{6}} \rbr{p^t_i}^{\nicefrac{2}{3}}}{\sqrt{t}}  }  } }.
\end{equation*}
\end{lemma}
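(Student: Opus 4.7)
The plan is to mirror the residual-regret analysis of \pref{sec:app_resreg}, adapted to the DEE setup. First I would apply the Bregman identity
\[
D^{\widetilde{t}}(\barp^{t+1},\wtilp^{t+1})\leq \inner{\barp^{t+1}-\wtilp^{t+1},\, \nabla\wphi^t(\barp^{t+1})-\nabla\wphi^t(\wtilp^{t+1})} - D^{\widetilde{t}}(\wtilp^{t+1},\barp^{t+1}),
\]
then use the KKT conditions of the three FTRL problems defining $\barp^{t+1}_U$, $\barp^{t+1}_V$, and $\wtilp^{t+1}$ (and the identities $\wphi^t_U=\phi^{t+1}_U$, $\wphi^t_V=\phi^t_V$) to write $\nabla\wphi^t(\barp^{t+1})-\nabla\wphi^t(\wtilp^{t+1}) = (\lagU-\lagK)\one_U + (\lagV-\lagK)\one_V$. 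Combined with a Hessian-weighted quadratic bound in the spirit of \pref{lem:app_breg_analysis_opt_and_interm} (whose required multiplicative closeness of $\barp^{t+1},\wtilp^{t+1}$ to $p^t$ is guaranteed by $T_0\geq 16K$ through arguments analogous to \cite[Lemma 8]{rouyer20a}), this yields
\[
D^{\widetilde{t}}(\barp^{t+1},\wtilp^{t+1})\ \lesssim\ \frac{\rbr{\sum_{i\in U}1/w^t_i}\rbr{\sum_{i\in V}1/w^t_i}}{\sum_{i\in[K]}1/w^t_i}\,(\lagU-\lagV)^2,
\]
where $w^t_i = \tfrac{2\gamma^t}{3}(p^t_i)^{-4/3}$ is the Hessian diagonal of $\wphi^t$ at $p^t$.

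Next I would expand the KKT conditions for $\barp^{t+1}_U$ and $\barp^{t+1}_V$ around $p^t$ to first order, obtaining
\[
\lagU-\lagV\ \approx\ \frac{\sum_{i\in U}\hatl^t_i/w^t_i}{\sum_{i\in U}1/w^t_i}-\frac{\sum_{i\in V}\hatl^t_i/w^t_i}{\sum_{i\in V}1/w^t_i}+(\text{penalty}),
\]
where the penalty term stems from $\gamma^{t+1}-\gamma^t\leq \gamma^t/(2t)$. Squaring, taking conditional expectation, and using $\E^t[(\hatl^t_i)^2]\leq 1/g^t_i$ together with $\E^t[\hatl^t_i\hatl^t_j]=0$ for $i\neq j$, then substituting $1/w^t_i=\tfrac{3}{2\gamma^t}(p^t_i)^{4/3}$ and $1/(g^t_iw^t_i)=\tfrac{3Z_t}{2\gamma^t}(p^t_i)^{2/3}$ with $Z_t:=\sum_k(p^t_k)^{2/3}\leq K^{1/3}$, the main term splits into a $V$-part and a $U$-part, each of the form
\[
\sum_{i\in S}\frac{1}{g^t_iw^t_i}\cdot \frac{\sum_{j\in V}1/w^t_j}{\sum_{j\in[K]}1/w^t_j}\cdot\frac{1/w^t_i}{\sum_{j\in S}1/w^t_j},\qquad S\in\{U,V\}.
\]
The $V$-part is immediately dominated by $\sum_{j\in V}1/(g^t_j w^t_j)=\tfrac{3Z_t\sum_{j\in V}(p^t_j)^{2/3}}{2\gamma^t}\leq \tfrac{3K^{1/6}\sum_{j\in V}(p^t_j)^{2/3}}{2\sqrt{t}}$, matching the target. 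The penalty contribution aggregates to $O(\sqrt{K})$, which is absorbed by the $\sqrt{K}$ term already in \pref{thm:preliminary_dee}.

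The hard part will be controlling the $U$-part, which is indexed over $U$ and threatens a spurious $|U|$-factor. To handle it I would perform a case analysis analogous to \pref{lem:app_resreg_main_decomp}: for each $i\in U$ split on whether $(p^t_i)^{2/3}\leq \sum_{j\in V}(p^t_j)^{2/3}$ (``small'' case) or not (``large'' case). In the small case the summand is directly dominated by $\sum_{j\in V}1/(g^t_j w^t_j)$, again of the target form. In the large case where $(p^t_i)^{2/3}>\sum_{j\in V}(p^t_j)^{2/3}$, the key algebraic inequality
\[
\sum_{j\in V}(p^t_j)^{4/3}\leq \max_{j\in V}(p^t_j)^{2/3}\cdot \sum_{j\in V}(p^t_j)^{2/3}\leq (p^t_i)^{2/3}\sum_{j\in V}(p^t_j)^{2/3}
\]
cancels the problematic $(p^t_i)^{2/3}$ factor arising from $1/(g^t_iw^t_i)$, and reduces the large-case contribution to $\sum_{i\in U}\tfrac{3Z_t\sum_{j\in V}(p^t_j)^{2/3}}{2\gamma^t}\cdot\tfrac{(p^t_i)^{4/3}}{\sum_{j\in U}(p^t_j)^{4/3}}$; the remaining fraction in $i$ sums to $1$, so we again obtain $O(Z_t\sum_{j\in V}(p^t_j)^{2/3}/\gamma^t)=O(K^{1/6}\sum_{j\in V}(p^t_j)^{2/3}/\sqrt{t})$. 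Summing this per-round bound from $t=T_0+1$ to $T$ and taking total expectation then yields the claimed inequality.
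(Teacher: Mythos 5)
The central step of your plan has a genuine gap. You bound $D^{\widetilde{t}}(\barp^{t+1},\wtilp^{t+1})$ by a Hessian-weighted quadratic form \emph{evaluated at $p^t$}, justified by "multiplicative closeness of $\barp^{t+1},\wtilp^{t+1}$ to $p^t$ ... analogous to \cite[Lemma 8]{rouyer20a}". For Decoupled-Tsallis-INF there is no extra log-barrier, so entry-wise multiplicative stability is exactly what is \emph{not} available: the loss estimate can be as large as $\hatl^t_i \le K^{\nicefrac{1}{3}}(p^t_i)^{-\nicefrac{2}{3}}$, which can dominate the curvature scale $\gamma^t(p^t_i)^{-\nicefrac{1}{3}}$ when $p^t_i$ is small, so individual coordinates of $\barp^{t+1}$ or $\wtilp^{t+1}$ need not stay within a constant factor of $p^t_i$ (and \cite[Lemma 8]{rouyer20a} is a scalar inequality for optimizing $ax^{\nicefrac{2}{3}}-bx$, not a stability result). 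The paper circumvents this with two devices you would need: (i) a \emph{group} multiplicative relation, controlling only $\sum_{i\in U}(\barp^{t+1}_i)^{\nicefrac{4}{3}}$ and $\sum_{i\in V}(\barp^{t+1}_i)^{\nicefrac{4}{3}}$ against the same sums at $p^t$ (\pref{lem:group_multiplicative_relation}, \pref{col:group_multiplicative_relation}); and (ii) the Tsallis-specific local-norm bound \pref{lem:bergman_divergence_tsallis}, whose hypothesis $(\barp^{t+1}_j)^{\nicefrac{1}{3}}(\lambda-c)/\gamma^t \gtrsim -0.78$ must itself be verified by bounding $\lagU,\lagV$ and the centering constant $c$, using $t\ge T_0+1\ge 16K+1$ (see the sanity check at the end of \pref{app:gruop_multip_relation}). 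Similarly, your display "$\lagU-\lagV\approx\dots$" is a heuristic linearization; the rigorous route is the one-sided multiplier bounds of \pref{lem:bounds_for_multiplier} (via convexity of the inverse gradient map), followed by a split on the sign of $\lagU$, with the negative (penalty-driven) branch handled through the lower bound $\lagU\ge -2(\gamma^{t+1}-\gamma^t)\sum_{i\in U}\barp^{t+1}_i/\sum_{i\in U}(\barp^{t+1}_i)^{\nicefrac{4}{3}}$.

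On the parts that do go through: your expression for the leading ("loss") term is of the right form, and your case analysis on whether $(p^t_i)^{\nicefrac{2}{3}}\le\sum_{j\in V}(p^t_j)^{\nicefrac{2}{3}}$ is a legitimate alternative to the paper's treatment — the paper instead removes all $U$-dependence with $\|x\|_2\le\|x\|_{\nicefrac{4}{3}}$ and AM-GM on the denominator (possible here because $\gamma^t$ is arm-independent), which is why, unlike the MAB case, no $|U|$ factor appears; your large/small split also closes with the claimed algebra. Two smaller issues: the middle fraction in your $V$-part should be $\sum_{j\in U}1/w^t_j\big/\sum_{j\in[K]}1/w^t_j$ (harmless since you bound it by one), and absorbing the penalty contribution into an additive $\order(\sqrt{K})$ proves a statement weaker than the lemma as written — the paper keeps that contribution inside the self-bounding term by using $t\ge T_0\ge 16K$ to trade $K^{\nicefrac{2}{3}}/t^{\nicefrac{3}{2}}$ for $K^{\nicefrac{1}{6}}/\sqrt{t}$, which you would need to do as well (or restate the lemma with the extra $\sqrt{K}$).
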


To show \pref{lem:decouple_reg_residual}, we start from a decomposition of the residual regret.
Similar to the analysis of the residual regret for MAB in \pref{sec:app_resreg}, here, bounding the residual regret requires us to carefully analyze the following KKT conditions:
\begin{align*}
\nabla \phi^{t+1}_U (\barp^{t+1}) & = \nabla \phi^{t}_U(p^t) - \hatl^t_U + \lagU \cdot \one_U, \\
\nabla \phi^{t}_V (\barp^{t+1}) & = \nabla \phi^{t}_V(p^t) - \hatl^t_V + \lagV \cdot \one_V,\\
\nabla \wphi^{t}_U (\wtilp^{t+1}) = \nabla \phi^{t+1}_U (\wtilp^{t+1}) & = \nabla \phi^{t}_U(p^t) - \hatl^t_U + \lagK \cdot \one_U,\\
\nabla \wphi^{t}_V (\wtilp^{t+1}) = \nabla \phi^{t}_V (\wtilp^{t+1}) & = \nabla \phi^{t}_V(p^t) - \hatl^t_V + \lagK \cdot \one_V,
\end{align*}
where $\lagU, \lagV, \lagK$ are corresponding Lagrange multipliers for  $\barp^{t+1}$ and $\wtilp^{t+1}$. Note that, according to the definition of $\barp^{t+1}_V$ in \pref{eq:app_dee_def_barp} and the fact that  $\wphi^t(x) = \phi^t_V(x) + \phi^{t+1}_U(x)$ for any $x$, the second and the forth conditions are slightly different from those in  \pref{sec:app_resreg}.

We start from the following decomposition of $D^{\widetilde{t}} \rbr{\barp^{t+1},\wtilp^{t+1}}$. For any $c\in \fR$, we have 
\begin{align*}
&D^{\widetilde{t}} \rbr{\barp^{t+1},\wtilp^{t+1}} \\
&= \wphi^{t}(\barp^{t+1}) -\wphi^{t}(\wtilp^{t+1})  -\inner{ \nabla \wphi^{t}(\wtilp^{t+1}) ,\barp^{t+1}-\wtilp^{t+1}}\\
&= \wphi^{t}(\barp^{t+1}) -\wphi^{t}(\wtilp^{t+1})  -\inner{ \nabla \wphi^{t}(\wtilp^{t+1}) + c \cdot \one_K ,\barp^{t+1}-\wtilp^{t+1}}\\
&= \wphi^{t}(\barp^{t+1}) -\wphi^{t}(\wtilp^{t+1})  -\inner{ \nabla \wphi^{t}(\barp^{t+1}) - \lagV \cdot \one_V - \lagU \cdot \one_U +c  \cdot \one_K ,\barp^{t+1}-\wtilp^{t+1}}\\
&=\inner{  \lagV \cdot \one_V -c  \cdot \one_K ,\barp^{t+1}_V-p^{t+1}_V}+ \wphi^{t}_V(\barp^{t+1}) -\wphi^{t}_V(\wtilp^{t+1})  -\inner{ \nabla \wphi^{t}_V(\barp^{t+1})  ,\barp^{t+1}_V-\wtilp^{t+1}_V}\\
&\quad + \inner{  \lagU \cdot \one_U -c  \cdot \one_K ,\barp^{t+1}_U-\wtilp^{t+1}_U}+ \wphi^{t}_U(\barp^{t+1}) -\wphi^{t}_U(\wtilp^{t+1})  -\inner{ \nabla \wphi^{t}_U(\barp^{t+1})  ,\barp^{t+1}_U-\wtilp^{t+1}_U}\\
&=\inner{  \lagV \cdot \one_V -c  \cdot \one_K ,\barp^{t+1}_V-p^{t+1}_V}+ \phi^{t}_V(\barp^{t+1}) -\phi^{t}_V(\wtilp^{t+1})  -\inner{ \nabla \phi^{t}_V(\barp^{t+1})  ,\barp^{t+1}_V-\wtilp^{t+1}_V}\\
&\quad + \inner{  \lagU \cdot \one_U -c  \cdot \one_K ,\barp^{t+1}_U-\wtilp^{t+1}_U}+ \phi^{t+1}_U(\barp^{t+1}) -\phi^{t+1}_U(\wtilp^{t+1})  -\inner{ \nabla \phi^{t+1}_U(\barp^{t+1})  ,\barp^{t+1}_U-\wtilp^{t+1}_U}\\
&=\inner{  \lagV \cdot \one_V -c  \cdot \one_K ,\barp^{t+1}_V-\wtilp^{t+1}_V}- D_V^{t} (\wtilp^{t+1},\barp^{t+1})\\
&\quad + \inner{  \lagU \cdot \one_U -c  \cdot \one_K ,\barp^{t+1}_U-\wtilp^{t+1}_U}- D_U^{t+1} (\wtilp^{t+1},\barp^{t+1}),
\end{align*}
where the second step uses the fact $\inner{c \cdot \one_K, \barp^{t+1} - \wtilp^{t+1}} =0$ for any $c\in \fR$; the third step follows from the facts that $\nabla \wphi^{t+1}_U(\wtilp^{t+1}) = \nabla \phi^{t+1}_U(\wtilp^{t+1}) = \nabla \phi^{t+1}_U(\barp^{t+1}) - \lagU \cdot \one_U+ \lagK \cdot \one_K$ and similarly that $\nabla \wphi^{t+1}_V(\wtilp^{t+1})= \nabla \phi^{t+1}_V(\barp^{t+1}) - \lagV \cdot \one_V + \lagK \cdot \one_K$ , which are derived from the KKT conditions above, and $\inner{\lagK \cdot \one_K , \barp^{t+1}-\wtilp^{t+1}}=0$; the forth step follows from the fact that $\inner{ \lambda_U \cdot \one_U , \barp^{t+1}_V - \barp^{t+1}_V } = \inner{ \lambda_V \cdot \one_V , \barp^{t+1}_U - \barp^{t+1}_U }= 0$; the fifth step uses the definition of $\wphi^t$.




Now, we choose $c$ as
\begin{equation} \label{eq:Dee_choice4c}
    c= \frac{ \lagU \sum_{i \in U}  \rbr{\barp_i^{t+1} }^{\nicefrac{4}{3}}
+
 \lagV \sum_{i \in V}  \rbr{\barp_i^{t+1} }^{\nicefrac{4}{3}}
}{ \sum_{i \in [K]}  \rbr{\barp_i^{t+1} }^{\nicefrac{4}{3}}  },
\end{equation}
which minimizes a subsequent bound (\pref{eq:objective_wrt_c}) below.
With this choice of $c$, we apply \pref{lem:bergman_divergence_tsallis} (the condition required by the lemma is verified at the end of \pref{app:gruop_multip_relation}) to get for any $t \geq T_0+1$,
\begin{align}
    D^{\widetilde{t}}(\barp^{t+1},\wtilp^{t+1})
    &\leq   \order \rbr{ \sum_{i \in U} \frac{\rbr{\barp_i^{t+1}}^{\nicefrac{4}{3}} }{\gamma^{t+1} }\rbr{\lagU - c}^2  +  \sum_{i \in V} \frac{ \rbr{\barp_i^{t+1} }^{\nicefrac{4}{3}} }{\gamma^t}\rbr{\lagV - c}^2} \notag \\
    &\leq \order \rbr{ \sum_{i \in U} \frac{\rbr{\barp_i^{t+1}}^{\nicefrac{4}{3}} }{\gamma^t}\rbr{\lagU - c}^2  +  \sum_{i \in V} \frac{ \rbr{\barp_i^{t+1} }^{\nicefrac{4}{3}} }{\gamma^t}\rbr{\lagV - c}^2}  \label{eq:objective_wrt_c} \\
     &=   
     \order \rbr{ \frac{ \rbr{ \sum_{i \in V}   \rbr{\barp_i^{t+1} }^{\nicefrac{4}{3}}    } \rbr{ \sum_{i \in U}   \rbr{\barp_i^{t+1} }^{\nicefrac{4}{3}}   }}{\gamma^t \sum_{i\in [K]}   \rbr{\barp_i^{t+1} }^{\nicefrac{4}{3}}  }  \rbr{\lagV -\lagU }^2  } \notag  \\
     &\leq  \order \rbr{ \frac{ \rbr{ \sum_{i \in V}   \rbr{\barp_i^{t+1} }^{\nicefrac{4}{3}}    } \rbr{ \sum_{i \in U}   \rbr{\barp_i^{t+1} }^{\nicefrac{4}{3}}   }}{ \gamma^t \sum_{i\in [K]}   \rbr{\barp_i^{t+1} }^{\nicefrac{4}{3}}  }   \rbr{\lagU^2 + \lagV^2} }, \label{eq:residual_reg_init}
\end{align}
where the third step applies the choice of $c$ and the last step uses $(x-y)^2 \leq 2(x^2+y^2)$ for any $x,y \in \fR$.

To further bound the regret in \pref{eq:residual_reg_init}, we need to know the range of $\lagU$ and $\lagV$. According to \pref{lem:bounds_for_multiplier}, we use \pref{eq:multiplier_upper_bound} and \pref{eq:multiplier_lower_bound} to obtain the upper and lower bounds of $\lagU$, and apply \pref{eq:multiplier_upper_bound_nonskew} and \pref{eq:multiplier_lower_bound_nonskew} to get upper and lower bounds of $\lagV$, which are summarized as:
\begin{align*}
&\lagU   \leq  \frac{\sum_{i \in U}  \rbr{p^{t}_i}^{\nicefrac{4}{3}} \hatl^t_i }{\sum_{i \in U}  \rbr{p^{t}_i}^{\nicefrac{4}{3}} } , \quad 
\lagU  \geq  - 2 \rbr{\gamma^{t+1}-\gamma^t} \frac{ \sum_{i \in U} \barp^{t+1}_i}{  \sum_{i \in U}\rbr{ \barp^{t+1}_i}^{\nicefrac{4}{3}} } , \\
&\lagV  \leq  \frac{\sum_{i \in V}  \rbr{p^{t}_i}^{\nicefrac{4}{3}} \hatl^t_i }{\sum_{i \in V}  \rbr{p^{t}_i}^{\nicefrac{4}{3}} } , \quad 
\lagV   \geq \frac{\sum_{i \in V}  \rbr{\barp^{t+1}_i}^{\nicefrac{4}{3}} \hatl^t_i }{\sum_{i \in V}  \rbr{\barp^{t+1}_i}^{\nicefrac{4}{3}} } \geq  0,
\end{align*}
where the last step of the lower bound $\lagV \geq 0$ follows from $\barp^{t+1}_i \geq 0$ and $\hatl^t_i \geq 0$ for any $t,i$. 


In what follows, we continue to bound the $\lambda_U$-related part of \pref{eq:residual_reg_init} and the $\lambda_V$-related part respectively. As our goal is to bound $\E[ \sum_{t=T_0+1}^T D^{\widetilde{t}}(\barp^{t+1},\wtilp^{t+1})]$, it suffices to bound these terms in conditional expectation for every round $t \geq T_0+1$.

\paragraph{Bounding the $\lagU$-related term}

First, we consider the term in \pref{eq:residual_reg_init} related to $\lagU$ and decompose it as:
\begin{align*}
&\E^t \sbr{\frac{ \rbr{ \sum_{i \in V}   \rbr{\barp_i^{t+1} }^{\nicefrac{4}{3}}    } \rbr{ \sum_{i \in U}   \rbr{\barp_i^{t+1} }^{\nicefrac{4}{3}}   }}{\gamma^t \sum_{i\in [K]}   \rbr{\barp_i^{t+1} }^{\nicefrac{4}{3}}  }    \lagU^2}\\
  & = \E^t \sbr{ \frac{ \rbr{ \sum_{i \in V}   \rbr{\barp_i^{t+1} }^{\nicefrac{4}{3}}    } \rbr{ \sum_{i \in U}   \rbr{\barp_i^{t+1} }^{\nicefrac{4}{3}}   }}{\gamma^t \sum_{i\in [K]}   \rbr{\barp_i^{t+1} }^{\nicefrac{4}{3}}  }   \lagU^2 
  \rbr{\mathbb{I}\{\lagU \geq 0\}+\mathbb{I}\{\lagU < 0\}}
  }
 .
\end{align*}

For the case of $\mathbb{I}\{\lagU \geq 0\}$, we use the upper bound of $\lagU$ to show
\begin{align}
&\E^t \sbr{ \frac{ \rbr{ \sum_{i \in V}   \rbr{\barp_i^{t+1} }^{\nicefrac{4}{3}}    } \rbr{ \sum_{i \in U}   \rbr{\barp_i^{t+1} }^{\nicefrac{4}{3}}   }}{\gamma^t \sum_{i\in [K]}   \rbr{\barp_i^{t+1} }^{\nicefrac{4}{3}}  }    \lagU^2 \mathbb{I}\{\lagU \geq 0\}} \notag \\
&\leq   \order \rbr{
\E^t\sbr{\frac{ \rbr{ \sum_{i \in V}   \rbr{\barp_i^{t+1} }^{\nicefrac{4}{3}}    } \rbr{ \sum_{i \in U}   \rbr{\barp_i^{t+1} }^{\nicefrac{4}{3}}   }}{\gamma^t \sum_{i\in [K]}   \rbr{\barp_i^{t+1} }^{\nicefrac{4}{3}}  }  
 \rbr{\frac{\sum_{i \in U}  \rbr{p^{t}_i}^{\nicefrac{4}{3}} \hatl^t_i }{\sum_{i \in U}  \rbr{p^{t}_i}^{\nicefrac{4}{3}} } }^2
}  }  \label{eq:residual4U_alphaU_upper}.
\end{align}

For the case of $\mathbb{I}\{\lagU < 0\}$, we use the lower bound of $\lagU$ to show
\begin{align}
&\E^t \sbr{ \frac{ \rbr{ \sum_{i \in V}   \rbr{\barp_i^{t+1} }^{\nicefrac{4}{3}}    } \rbr{ \sum_{i \in U}   \rbr{\barp_i^{t+1} }^{\nicefrac{4}{3}}   }}{ \gamma^t \sum_{i\in [K]}   \rbr{\barp_i^{t+1} }^{\nicefrac{4}{3}}  }    \lagU^2 \mathbb{I}\{\lagU < 0\}} \notag \\
&\leq   \order \rbr{
\E^t\sbr{ \frac{\rbr{\gamma^{t+1}-\gamma^t}^2 }{\gamma^t}
 \frac{ \rbr{ \sum_{i \in V}   \rbr{\barp_i^{t+1} }^{\nicefrac{4}{3}}    }\rbr{  \sum_{i \in U} \barp^{t+1}_i }^2 }{ \rbr{\sum_{i\in [K]}   \rbr{\barp_i^{t+1} }^{\nicefrac{4}{3}}} \rbr{  \sum_{i \in U}\rbr{ \barp^{t+1}_i}^{\nicefrac{4}{3}} } }} }
   .\label{eq:residual4U_alphaU_lower}
\end{align}
Now, our goal is to bound \pref{eq:residual4U_alphaU_upper} and \pref{eq:residual4U_alphaU_lower}, which are shown below respectively.

\paragraph{Bounding \pref{eq:residual4U_alphaU_upper}}
For this part, we bound it as 
\begin{align*}
    &\E^t\sbr{\frac{ \rbr{ \sum_{i \in V}   \rbr{\barp_i^{t+1} }^{\nicefrac{4}{3}}    } \rbr{ \sum_{i \in U}   \rbr{\barp_i^{t+1} }^{\nicefrac{4}{3}}   }}{ \gamma^t \sum_{i\in [K]}   \rbr{\barp_i^{t+1} }^{\nicefrac{4}{3}}  }  
\cdot \rbr{\frac{\sum_{i \in U}  \rbr{p^{t}_i}^{\nicefrac{4}{3}} \hatl^t_i }{\sum_{i \in U}  \rbr{p^{t}_i}^{\nicefrac{4}{3}} } }^2
} \\
 &= \E^t\sbr{\frac{ \rbr{ \sum_{i \in V}   \rbr{\barp_i^{t+1} }^{\nicefrac{4}{3}}    } \rbr{ \sum_{i \in U}   \rbr{\barp_i^{t+1} }^{\nicefrac{4}{3}}   }}{\gamma^t \sum_{i\in [K]}   \rbr{\barp_i^{t+1} }^{\nicefrac{4}{3}}  }  
\cdot \frac{\sum_{i \in U}  \rbr{\rbr{p^{t}_i}^{\nicefrac{4}{3}} \hatl^t_i}^2 }{\rbr{\sum_{i \in U}  \rbr{p^{t}_i}^{\nicefrac{4}{3}} }^2 }
} \\
&\leq \E^t\sbr{\rbr{ \sum_{i \in V}   \rbr{\barp_i^{t+1} }^{\nicefrac{4}{3}}    }^{\nicefrac{1}{2}} \rbr{ \sum_{i \in U}   \rbr{\barp_i^{t+1} }^{\nicefrac{4}{3}}   }^{\nicefrac{1}{2}}  
 \frac{\sum_{i \in U}  \rbr{\rbr{p^{t}_i}^{\nicefrac{4}{3}} \hatl^t_i}^2 }{2\gamma^t \rbr{\sum_{i \in U}  \rbr{p^{t}_i}^{\nicefrac{4}{3}} }^2 }
} \\
&\leq \E^t\sbr{8\rbr{ \sum_{i \in V}   \rbr{p_i^{t} }^{\nicefrac{4}{3}}    }^{\nicefrac{1}{2}} \rbr{ \sum_{i \in U}   \rbr{p_i^{t} }^{\nicefrac{4}{3}}   }^{\nicefrac{1}{2}}  
 \frac{\sum_{i \in U}  \rbr{\rbr{p^{t}_i}^{\nicefrac{4}{3}} \hatl^t_i}^2 }{2\gamma^t \rbr{\sum_{i \in U}  \rbr{p^{t}_i}^{\nicefrac{4}{3}} }^2 }
} \\
&= 4\rbr{ \sum_{i \in V}   \rbr{p_i^{t} }^{\nicefrac{4}{3}}    }^{\nicefrac{1}{2}}  
 \frac{\E^t\sbr{\sum_{i \in U}  \rbr{\rbr{p^{t}_i}^{\nicefrac{4}{3}} \hatl^t_i}^2} }{\gamma^t \rbr{\sum_{i \in U}  \rbr{p^{t}_i}^{\nicefrac{4}{3}} }^{\nicefrac{3}{2}} 
} ,
\end{align*}
where the first step follows from the fact $\hatl^t_i \cdot \hatl^t_j = 0$ for any $i\neq j$ and the definition of $\hatl^t_i$; the second step uses
\begin{equation} \label{eq:amgm_decomp_K}
    \rbr{\sum_{ i \in [K]} \rbr{\barp_i^{t+1} }^{\nicefrac{4}{3}}} = \rbr{\sum_{ i \in U} \rbr{\barp_i^{t+1} }^{\nicefrac{4}{3}}+ \sum_{ i \in V} \rbr{\barp_i^{t+1} }^{\nicefrac{4}{3}}} \geq 2\rbr{\sum_{ i \in U} \rbr{\barp_i^{t+1} }^{\nicefrac{4}{3}}}^{\nicefrac{1}{2}} \rbr{\sum_{ i \in V} \rbr{\barp_i^{t+1} }^{\nicefrac{4}{3}}}^{\nicefrac{1}{2}};
\end{equation}
the third step applies \pref{lem:group_multiplicative_relation} and \pref{col:group_multiplicative_relation} to obtain the multiplicative relation on $U$ and $V$, respectively; and the last step holds since $p^t$ is deterministic given the history.

Furthermore, by direct calculation, we have 
\begin{align}
    &\rbr{ \sum_{i \in V}   \rbr{p_i^{t} }^{\nicefrac{4}{3}}    }^{\nicefrac{1}{2}}  
 \frac{\E^t\sbr{\sum_{i \in U}  \rbr{\rbr{p^{t}_i}^{\nicefrac{4}{3}} \hatl^t_i}^2} }{\gamma^t \rbr{\sum_{i \in U}  \rbr{p^{t}_i}^{\nicefrac{4}{3}} }^{\nicefrac{3}{2}} 
} \notag \\
     &\leq \rbr{ \sum_{i \in V}   \rbr{p_i^{t} }^{\nicefrac{4}{3}}    }^{\nicefrac{1}{2}} \frac{\rbr{\sum_{i \in U} \rbr{p^t_i}^{2 } }\rbr{\sum_{i \in [K]} \rbr{p^t_i}^{\nicefrac{2}{3}} } }{\gamma^t \rbr{\sum_{i \in U}  \rbr{p^{t}_i}^{\nicefrac{4}{3}} }^{ \frac{3}{2}} } \label{eq:dee_bound_conditional_exp} \\
    &\leq \frac{1}{\gamma^t} \sqrt{\sum_{i \in V} \rbr{p^t_i}^{\nicefrac{4}{3}} } \rbr{\sum_{i \in [K]} \rbr{p^t_i}^{\nicefrac{2}{3}} }
    \label{eq:dee_norm_inequality} \\
    & \leq  \frac{1}{\gamma^t} \rbr{\sum_{i \in V} \rbr{p^t_i}^{\nicefrac{2}{3}}} \rbr{\sum_{i \in [K]} \rbr{p^t_i}^{\nicefrac{2}{3}} } \label{eq:dee_sqrt_root_inequality} \\
    &\leq  \frac{K^{\nicefrac{1}{6}}}{\sqrt{t}} \rbr{\sum_{i \in V} \rbr{p^t_i}^{\nicefrac{2}{3}}} , \label{eq:decouple_residual_interm_bound1}
\end{align}
where \pref{eq:dee_bound_conditional_exp} follows from
\begin{align*}
    \E^t\sbr{\sum_{i \in U}  \rbr{\rbr{p^{t}_i}^{\nicefrac{4}{3}} \hatl^t_i}^2}
    &= \E^t\sbr{\sum_{i \in U}  \rbr{\rbr{p^{t}_i}^{\nicefrac{4}{3}}   \frac{\Ind{j^{t}=i} \loss^{t}_{i}}{g^{t}_{i}} }^2} \tag{by \pref{eq:main_def_dee_g_ell}}\\
    & \leq  \sum_{i \in U}    \frac{\rbr{p^{t}_i}^{\nicefrac{8}{3}}  }{g^{t}_{i}} \tag{$\loss^t_i \leq 1$ and $\E^t\sbr{\Ind{j^{t}=i}}=g^t_i$}\\
    & = \rbr{\sum_{i \in U} \rbr{p^{t}_i}^{2}} \rbr{\sum_{i \in [K]} \rbr{p^t_i}^{\nicefrac{2}{3}} } \tag{by \pref{eq:main_def_dee_g_ell}} ;
\end{align*}
 \pref{eq:dee_norm_inequality} and \pref{eq:dee_sqrt_root_inequality} use the following, respectively:
\[
\rbr{\sum_{ i \in U} \rbr{p^t_i}^{\nicefrac{4}{3}}}^{\nicefrac{3}{2}}  \geq \rbr{\sum_{ i \in U} \rbr{p^t_i}^{2}} \text{ since $\Vert x\Vert_2 \leq \Vert x\Vert_{\frac{4}{3}}$}, \quad \text{and}\quad \sqrt{\sum_{i \in V} \rbr{p^t_i}^{\nicefrac{4}{3}} } \leq \rbr{\sum_{i \in V} \rbr{p^t_i}^{\nicefrac{2}{3}}}  ;
\]
and \pref{eq:decouple_residual_interm_bound1} holds by the facts $\sum_{i \in [K]} \rbr{p^t_i}^{\nicefrac{2}{3}} \leq K^{\nicefrac{1}{3}}$ and $\gamma^t = K^{\nicefrac{1}{6}}\sqrt{t}$.
Thus, the cumulative regret of the term in \pref{eq:residual4U_alphaU_upper} from round $T_0+1$ to round $T$ can be bounded as:
\begin{align}
   &\E \sbr{\sum_{t=T_0+1}^T \frac{ \rbr{ \sum_{i \in V}   \rbr{\barp_i^{t+1} }^{\nicefrac{4}{3}}    } \rbr{ \sum_{i \in U}   \rbr{\barp_i^{t+1} }^{\nicefrac{4}{3}}   }}{\gamma^t \sum_{i\in [K]}   \rbr{\barp_i^{t+1} }^{\nicefrac{4}{3}}  }    \lagU^2 \mathbb{I}\{\lagU \geq 0\}} 
\leq \order \rbr{\E\sbr{ 
\sum_{t=T_0+1}^T\frac{K^{\nicefrac{1}{6}}}{\sqrt{t}}
\sum_{i \in V} \rbr{p^t_i}^{\nicefrac{2}{3}}} }. \label{eq:decouple_residual_bound_first_term_1}
\end{align}

\paragraph{Bounding \pref{eq:residual4U_alphaU_lower}}For this term, we have 
\begin{align*}
  &  \E^t\sbr{ \frac{\rbr{\gamma^{t+1}-\gamma^t}^2 }{\gamma^t}
 \frac{ \rbr{ \sum_{i \in V}   \rbr{\barp_i^{t+1} }^{\nicefrac{4}{3}}    }\rbr{  \sum_{i \in U} \barp^{t+1}_i }^2 }{ \rbr{\sum_{i\in [K]}   \rbr{\barp_i^{t+1} }^{\nicefrac{4}{3}}} \rbr{  \sum_{i \in U}\rbr{ \barp^{t+1}_i}^{\nicefrac{4}{3}} } } }\\
   & \leq \order \rbr{\E^t\sbr{  \frac{K^{\nicefrac{1}{6}}}{t^{\nicefrac{3}{2}}  } \frac{\sqrt{ \sum_{i \in V}  \rbr{\barp_i^{t+1}}^{\nicefrac{4}{3}} } \rbr{ \sum_{i \in U} \barp_i^{t+1} }^2}{ \rbr{ \sum_{i \in U}  \rbr{\barp_i^{t+1}}^{\nicefrac{4}{3}}  }^{\nicefrac{3}{2}} }} }\\
      & \leq \order \rbr{\E^t\sbr{  \frac{K^{\nicefrac{1}{6}}}{t^{\nicefrac{3}{2}}  } \frac{\sqrt{ \sum_{i \in V}  \rbr{p_i^{t}}^{\nicefrac{4}{3}} } \rbr{ \sum_{i \in U} \barp_i^{t+1} }^2}{ \rbr{ \sum_{i \in U}  \rbr{\barp_i^{t+1}}^{\nicefrac{4}{3}}  }^{\nicefrac{3}{2}} }} }\\
   & \leq \order \rbr{\E^t\sbr{  \frac{K^{\nicefrac{1}{6}}}{t^{\nicefrac{3}{2}}  } \frac{ \rbr{\sum_{i \in V}  \rbr{p_i^{t}}^{\nicefrac{2}{3}} }  \rbr{ \sum_{i \in U} \barp_i^{t+1} }^2}{ \rbr{ \sum_{i \in U}  \rbr{\barp_i^{t+1}}^{\nicefrac{4}{3}}  }^{\nicefrac{3}{2}} }} }\\
    &=   \order \rbr{ \E^t\sbr{ \frac{ K^{\nicefrac{1}{6}}\sum_{i \in V}  \rbr{p^t_i}^{\nicefrac{2}{3}}}{t^{\nicefrac{3}{2}} \rbr{  \sum_{i \in U}  \rbr{\frac{ \rbr{\barp_i^{t+1}} }{ \sum_{i \in U} \barp_i^{t+1} }}^{\nicefrac{4}{3}}   }^{\nicefrac{3}{2}} } } },
\end{align*}
where the first step follows from $\gamma^{t+1}-\gamma^t \leq \frac{K^{\nicefrac{1}{6}}}{\sqrt{t} }$, $\gamma^t=K^{\nicefrac{1}{6}}\sqrt{t}$, and \pref{eq:amgm_decomp_K}; the second step applies \pref{col:group_multiplicative_relation} to obtain the multiplicative relation; the third step applies $\sqrt{ \sum_{i \in V}  \rbr{p_i^{t}}^{\nicefrac{4}{3}} } \leq  \sum_{i \in V}  \rbr{p_i^{t}}^{\nicefrac{2}{3}}$; the last step divides $\rbr{ \sum_{i \in U} \barp_i^{t+1} }^2 =\rbr{ \rbr{ \sum_{i \in U} \barp_i^{t+1} }^{\nicefrac{4}{3}}  }^{\nicefrac{3}{2}}$ for both the numerator and the denominator. By using the bound above,  we have
\begin{align*}
   \text{\pref{eq:residual4U_alphaU_lower}}&\leq \order\rbr{ \E^t\sbr{ \frac{ K^{\nicefrac{1}{6}}\sum_{i \in V}  \rbr{p^t_i}^{\nicefrac{2}{3}}}{t^{\nicefrac{3}{2}} \rbr{  \sum_{i \in U}  \rbr{\frac{ \rbr{\barp_i^{t+1}} }{ \sum_{i \in U} \barp_i^{t+1} }}^{\nicefrac{4}{3}}   }^{\nicefrac{3}{2}} } }}\\
  & \leq  \order \rbr{ \E^t\sbr{  \frac{K^{\nicefrac{2}{3}}\sum_{i \in V}  \rbr{p^t_i}^{\nicefrac{2}{3}}}{t^{\nicefrac{3}{2}}} } }  \leq  \order \rbr{ \E^t\sbr{  \frac{K^{\nicefrac{1}{6}}\sum_{i \in V}  \rbr{p^t_i}^{\nicefrac{2}{3}}}{\sqrt{t}  } } },
\end{align*}
where the second step uses $\sum_{i \in U} \rbr{
\frac{ p^{t}_i }{ \sum_{j \in U} p^{t}_j } 
}^{\nicefrac{4}{3}} \geq \frac{1}{|U|^{\nicefrac{1}{3}}} \geq \frac{1}{K^{\nicefrac{1}{3}}}$ and the last step follows from $T_0 \geq K \geq \sqrt{K}$. Thus, the cumulative regret of this part, starting from $t=T_0+1$ to $t=T$, can be bounded by
\begin{equation} \label{eq:decouple_residual_bound_first_term_2}
    \order \rbr{  \E\sbr{ \sum_{t=T_0+1}^T \frac{K^{\nicefrac{1}{6}}\sum_{i \in V}  \rbr{p^t_i}^{\nicefrac{2}{3}}}{ \sqrt{t} } } }.
\end{equation}

\paragraph{Bounding the $\lagV$-related term}
We now consider the regret of \pref{eq:residual_reg_init} associated with $\lagV$. 
Note that as $\lagV \geq 0$ always holds, we do not need to consider two cases.
Repeating the argument used to bound \pref{eq:residual4U_alphaU_upper}, we have
\begin{align}
&\E^t \sbr{\frac{ \rbr{ \sum_{i \in V}   \rbr{\barp_i^{t+1} }^{\nicefrac{4}{3}}    } \rbr{ \sum_{i \in U}   \rbr{\barp_i^{t+1} }^{\nicefrac{4}{3}}   }}{\gamma^t \sum_{i\in [K]}   \rbr{\barp_i^{t+1} }^{\nicefrac{4}{3}}  }    \lagV^2  } \notag \\
 & \leq \E^t\sbr{\frac{ \rbr{ \sum_{i \in V}   \rbr{\barp_i^{t+1} }^{\nicefrac{4}{3}}    } \rbr{ \sum_{i \in U}   \rbr{\barp_i^{t+1} }^{\nicefrac{4}{3}}   }}{\gamma^t \sum_{i\in [K]}   \rbr{\barp_i^{t+1} }^{\nicefrac{4}{3}}  }  
 \frac{\sum_{i \in V}  \rbr{\rbr{p^{t}_i}^{\nicefrac{4}{3}} \hatl^t_i}^2 }{\rbr{\sum_{i \in V}  \rbr{p^{t}_i}^{\nicefrac{4}{3}} }^2 }
}\notag \\
&\leq \E^t\sbr{\rbr{ \sum_{i \in V}   \rbr{\barp_i^{t+1} }^{\nicefrac{4}{3}}    }
 \frac{\sum_{i \in V}  \rbr{\rbr{p^{t}_i}^{\nicefrac{4}{3}} \hatl^t_i}^2 }{\gamma^t\rbr{\sum_{i \in V}  \rbr{p^{t}_i}^{\nicefrac{4}{3}} }^2 }
} \leq \order \rbr{\E^t\sbr{
 \frac{\sum_{i \in V}  \rbr{\rbr{p^{t}_i}^{\nicefrac{4}{3}} \hatl^t_i}^2 }{\gamma^t \sum_{i \in V}  \rbr{p^{t}_i}^{\nicefrac{4}{3}}  }
} } \notag ,
\end{align}
where the first step applies the upper bound of $\lagV$ and uses the fact that $\hatl^t_i \cdot \hatl^t_j = 0$ for any $i\neq j$ together with the definition of $\hatl^t_i$; the second step bounds the fraction $\sum_{i \in U} \rbr{\barp^{t+1}_i}^{\nicefrac{4}{3}}/\sum_{i \in [K]} \rbr{\barp^{t+1}_i}^{\nicefrac{4}{3}}$ by one; the last step applies \pref{col:group_multiplicative_relation} to obtain the multiplicative relation.
By a similar argument used for $\lagU$, we have
\begin{align}
\E^t \sbr{ \frac{\sum_{i \in V}  \rbr{\rbr{p^{t}_i}^{\nicefrac{4}{3}} \hatl^t_i}^2 }{\gamma^t\sum_{i \in V}  \rbr{p^{t}_i}^{\nicefrac{4}{3}}  } }& \leq \order \rbr{ \frac{\rbr{\sum_{i \in V} \rbr{p^t_i}^{2 } }\rbr{\sum_{i \in [K]} \rbr{p^t_i}^{\nicefrac{2}{3}} }}{\gamma^t \sum_{i \in V}  \rbr{p^{t}_i}^{\nicefrac{4}{3}} } } \notag \\
& \leq  \order \rbr{\frac{K^{\nicefrac{1}{6}}}{\sqrt{t}} \rbr{ \sum_{i \in V}  \rbr{ p^t_i}^{\nicefrac{2}{3}}  }},  \label{eq:decouple_residual_bound_second_term}
\end{align}
where the last step uses $\sum_{i \in V}  \rbr{p^{t}_i}^{2} \leq \rbr{\sum_{i \in V}  \rbr{p^{t}_i}^{\nicefrac{2}{3}}} \rbr{\sum_{i \in V}  \rbr{p^{t}_i}^{\nicefrac{4}{3}}}$, $\sum_{i \in [K]} \rbr{p^t_i}^{\nicefrac{2}{3}} \leq K^{\nicefrac{1}{3}}$, and $\gamma^t=K^{\nicefrac{1}{6}}\sqrt{t}$.
To conclude, the cumulative regret of the term related to $\lagV$ is bounded as
\begin{equation} \label{eq:decouple_residual_bound_second_term_result}
   \E \sbr{ \sum_{t=T_0+1}^T \frac{ \rbr{ \sum_{i \in V}   \rbr{\barp_i^{t+1} }^{\nicefrac{4}{3}}    } \rbr{ \sum_{i \in U}   \rbr{\barp_i^{t+1} }^{\nicefrac{4}{3}}   }}{\gamma^t \sum_{i\in [K]}   \rbr{\barp_i^{t+1} }^{\nicefrac{4}{3}}  }    \lagV^2 }  \leq \order \rbr{\E\sbr{ \sum_{t=T_0+1}^T \frac{K^{\nicefrac{1}{6}}}{\sqrt{t}} \sum_{i \in V}   \rbr{p^t_i}^{\nicefrac{2}{3} }}}.
\end{equation}

Finally, combining \pref{eq:decouple_residual_bound_first_term_1}, \pref{eq:decouple_residual_bound_first_term_2}, and \pref{eq:decouple_residual_bound_second_term_result} yields the result of \pref{lem:decouple_reg_residual}.

\subsection{Auxiliary Lemmas for Analysis of Decoupled-Tsallis-INF} \label{app:gruop_multip_relation}
Since the regularizer of Decoupled-Tsallis-INF does not have an extra log-barrier, we cannot expect an entry-wise multiplicative relation as used in our MAB analysis. Hence, we follow Lemma 22 and Lemma 28 in \citep{ito21a} to show \textit{group multiplicative relation} for $\nicefrac{2}{3}$-Tsallis entropy.
\begin{lemma} \label{lem:group_multiplicative_relation}
For any given $t$ and index set $\scI \subseteq [K]$, if
$x,y \in [0,1]^K$ satisfies $\sum_{i \in \scI } x_i = \sum_{i \in \scI } y_i$ and for $\nicefrac{2}{3}$-Tsallis entropy $\phi$ and $c \in \fR$, we have
\begin{equation} \label{eq:multip_relat_condition}
    \phi^{t+1}_{\scI} (y) =  \phi^{t}_{\scI} (x) -\hatl^t_{\scI} + c \cdot \one_{\scI},
\end{equation}
then, the following holds.
\begin{equation} \label{eq:group_multip_relation}
 \sum_{i \in \scI} \rbr{y_i}^{\nicefrac{4}{3}}   \leq 2 \rbr{1+\frac{1}{t}}^{2}  \sum_{i \in \scI} \rbr{ x_i}^{\nicefrac{4}{3}} \leq  8  \sum_{i \in \scI} \rbr{ x_i}^{\nicefrac{4}{3}}.
\end{equation}
\end{lemma}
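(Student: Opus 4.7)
My plan is to work directly from the pointwise KKT identity that the hypothesis enforces. For the $\nicefrac{2}{3}$-Tsallis entropy, $\phi^t_i(z)=-3\gamma^t z^{2/3}$ has derivative $-2\gamma^t z^{-1/3}$, so (reading the hypothesis as the natural gradient identity implied by the surrounding FTRL analysis) the vector equation $\nabla\phi^{t+1}_{\scI}(y)=\nabla\phi^{t}_{\scI}(x)-\hatl^t_{\scI}+c\,\one_{\scI}$ becomes, componentwise on $\scI$,
\[
y_i^{-1/3} \;=\; \frac{\gamma^t}{\gamma^{t+1}}\,x_i^{-1/3} \;+\; \frac{\hatl^t_i-c}{2\gamma^{t+1}}, \qquad i\in\scI.
\]
Two structural facts drive the whole argument. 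First, $\gamma^{t+1}/\gamma^t=\sqrt{(t+1)/t}$, so a pure rescaling alone inflates $y_i^{4/3}$ by at most $(\gamma^{t+1}/\gamma^t)^4=(1+1/t)^2$. Second, $\hatl^t\ge 0$ and has at most one nonzero coordinate (the exploration arm $j^t$), so for fixed $c$ the loss term only drags $y_{j^t}$ down while leaving the other $y_i$'s controlled purely by the learning-rate rescaling plus a common Lagrange shift.

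I would then split on the sign of $\mu := -c/(2\gamma^{t+1})$. In the \emph{easy case} $\mu\ge 0$, every coordinate satisfies $y_i^{-1/3}\ge(\gamma^t/\gamma^{t+1})\,x_i^{-1/3}$, because both the Lagrange contribution and the nonnegative loss push the right-hand side up. Hence $y_i\le(\gamma^{t+1}/\gamma^t)^3 x_i$ pointwise, and raising to $4/3$ and summing yields $\sum_{i\in\scI}y_i^{4/3}\le(1+1/t)^2\sum_{i\in\scI}x_i^{4/3}$, already better than the claimed bound and without any factor of~$2$.

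The remaining case $\mu<0$ is the main obstacle: here the Lagrange multiplier has been pushed negative by a large loss at $j^t$, and the non-hit coordinates could in principle exceed the naive rescaled value. The strategy is to compare $y$ with the auxiliary no-loss minimizer $\tilde y$ defined by the same identity but with $\hatl^t$ replaced by $0$ and Lagrange multiplier $\tilde c$ chosen so that $\sum_{i\in\scI}\tilde y_i=\sum_{i\in\scI}x_i$. The easy case applies to the pair $(x,\tilde y)$ and gives $\sum_{i\in\scI}\tilde y_i^{4/3}\le(1+1/t)^2\sum_{i\in\scI}x_i^{4/3}$. It then remains to show $\sum_{i\in\scI}y_i^{4/3}\le 2\sum_{i\in\scI}\tilde y_i^{4/3}$, which I plan to handle by: (a) for the hit arm, using $y_{j^t}\le\tilde y_{j^t}$, so the hit contribution only goes down; (b) for the non-hit arms, running a monotonicity-in-$\mu$ argument on the map $\mu\mapsto\sum_{i\ne j^t}\bigl(\tfrac{\gamma^t}{\gamma^{t+1}}x_i^{-1/3}+\mu\bigr)^{-3}$, whose derivative in $\mu$ can be bounded in terms of the values themselves, yielding a controlled blow-up as $\mu$ decreases from $\tilde\mu$ to the actual $\mu$. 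The total mass that gets shifted from $j^t$ to the non-hit arms is at most $x_{j^t}\le 1$, which is what forces the ratio of non-hit sums to stay below $2$.

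The delicate step is the monotonicity estimate in (b): quantitatively, I expect to prove it by differentiating $G(\mu):=\sum_{i\ne j^t}(rx_i^{-1/3}+\mu)^{-3}$ and $H(\mu):=\sum_{i\ne j^t}(rx_i^{-1/3}+\mu)^{-4}$ in $\mu$, comparing the two derivatives, and integrating from $\tilde\mu$ to $\mu$ using the mass-conservation identity $G(\tilde\mu)-G(\mu)=y_{j^t}-\tilde y_{j^t}\le x_{j^t}$. Combining the two cases gives $\sum_{i\in\scI}y_i^{4/3}\le 2(1+1/t)^2\sum_{i\in\scI}x_i^{4/3}$, and $(1+1/t)^2\le 4$ for $t\ge 1$ yields the final $\le 8$.
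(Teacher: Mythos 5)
Your setup --- the componentwise identity $y_i^{-1/3}=\frac{\gamma^t}{\gamma^{t+1}}x_i^{-1/3}+\frac{\hatl^t_i-c}{2\gamma^{t+1}}$, the split on the sign of the Lagrange term, and the easy case ($c\le 0$) giving the factor $(1+1/t)^2$ --- is exactly the paper's first half. The gap is in your hard case, step (b): the claim that bounding the shifted mass by $x_{j^t}\le 1$ ``forces the ratio of non-hit sums to stay below $2$'' is false. A counterexample within the lemma's hypotheses: take $\scI=\{j,k\}$ with $x_j=0.9$, $x_k=0.01$, let the loss hit $j$ and be large enough (the lemma allows any $\hatl^t\ge 0$ supported on one coordinate; no upper bound on $\hatl^t$ is assumed, and the paper's proof uses none) that $y_j\approx 0$; mass conservation then forces $y_k\approx 0.91$, while the no-loss comparator has $\tilde y_k\approx x_k=0.01$. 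Then $\sum_{i\ne j}y_i^{4/3}\approx 0.88$ but $2\sum_{i\ne j}\tilde y_i^{4/3}\approx 0.004$, a ratio of order a few hundred. No monotonicity-in-$\mu$/integration argument that uses only the total shifted mass can repair this, because dumping mass onto an arbitrarily small coordinate inflates its $4/3$-power by an unbounded factor. (The lemma itself survives in this example, $\sum_i y_i^{4/3}\approx 0.88\le 8\sum_i x_i^{4/3}\approx 7$, precisely because the hit arm's own term $x_j^{4/3}\approx 0.87$ on the right absorbs the increase --- which your non-hit-block comparison discards.)

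So the missing ingredient is the mechanism that lets the exceptional (hit) coordinate's contribution pay for the growth of the others. The paper does this by normalizing: in the case $c\ge 0$ it shows $y_i/(t+1)^{3/2}\ge x_i/t^{3/2}$ for every non-hit $i$, sets $z_i=x_i/\sum_{j\in\scI}x_j$ and $\bar z_i=(t/(t+1))^{3/2}\,y_i/\sum_{j\in\scI}y_j$, so that $\bar z_i\ge z_i$ for all but one coordinate while $\sum_i\bar z_i\le\sum_i z_i=1$, and then invokes Lemma~28 of \citet{ito21a} (with exponent $\nicefrac{4}{3}$ in place of $\nicefrac{3}{2}$): increasing all but one coordinate of a normalized vector without increasing the total can at most double the $\ell_{4/3}$-mass. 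If you want to avoid citing that lemma you must prove an inequality of that aggregate type, comparing against the whole sum including the exceptional coordinate; your (a)+(b) split cannot be fixed without it. A smaller point: your step (a), $y_{j^t}\le\tilde y_{j^t}$, does not follow from the loss term alone (since $\mu<0\le\tilde\mu$ pushes the other way); it follows from mass conservation, because every non-hit coordinate is strictly larger under $y$ than under $\tilde y$.
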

\begin{proof}
From \pref{eq:multip_relat_condition}, we have    
\begin{align*}
    \frac{3\sqrt{t+1}}{ K^{-\nicefrac{1}{6}}} \rbr{y_i}^{-\nicefrac{1}{3}}  = \frac{3\sqrt{t}}{ K^{-\nicefrac{1}{6}}} \rbr{x_i}^{-\nicefrac{1}{3}} +\hatl^t_i -c ,
\end{align*}
which implies that 
\begin{align*}
\sqrt{t+1} \rbr{y_i}^{-\nicefrac{1}{3}} = \sqrt{t} \rbr{x_i}^{-\nicefrac{1}{3}} + \frac{K^{-\nicefrac{1}{6}}}{3}\rbr{\hatl^t_i -c}.
\end{align*}

If $c\leq 0$, then, we have
\[
\sqrt{t+1} \rbr{y_i}^{-\nicefrac{1}{3}} \geq \sqrt{t} \rbr{x_i}^{-\nicefrac{1}{3}}, \ \text{which implies} \ \rbr{y_i}^{\nicefrac{4}{3}} \leq \rbr{\frac{t+1}{t}}^{2} \rbr{x_i}^{\nicefrac{4}{3}}.
\]

Since the above holds for every $i \in \scI$, the desired claim is immediate. Now, we consider $c\geq 0$. For all $i \in \scI \backslash \{i^t\}$, we have $\sqrt{t+1}\rbr{y_i}^{-\nicefrac{1}{3}}  \leq \sqrt{t}\rbr{x_i}^{-\nicefrac{1}{3}} $, which gives $\frac{(x_i)^{\nicefrac{1}{3}}}{\sqrt{t}}
\leq \frac{(y_i)^{\nicefrac{1}{3}}}{\sqrt{t+1}}$. Rearranging it, we arrive at
\[
\frac{y_i}{ \rbr{t+1}^{\nicefrac{3}{2}} } \geq \frac{x_i}{\rbr{t}^{\nicefrac{3}{2}}}.
\]

Let us define \[
z^t_i =\frac{x_i}{\sum_{i \in \scI}x_i}, \quad \text{and} \quad \barz^{t+1}_i= \rbr{\frac{t}{t+1}}^{\nicefrac{3}{2}} \frac{y_i}{\sum_{i \in \scI}y_i}.  
\] 

By $\barz^{t+1}_i \geq z^t_i$ for all $i \in \scI \backslash \{i^t\}$ and $\sum_{i \in \scI} \barz^{t+1}_i \leq \sum_{i \in \scI} z^{t}_i =1$, we can show
\[
\frac{\sum_{i \in \scI} \rbr{ y_i}^{\nicefrac{4}{3}} }{\sum_{i \in \scI} \rbr{ x_i}^{\nicefrac{4}{3}}}  = \rbr{\frac{t+1}{t}}^{2} \frac{\sum_{i \in \scI} \rbr{ \barz^{t+1}_i}^{\nicefrac{4}{3}} }{\sum_{i \in \scI} \rbr{z^{t}_i }^{\nicefrac{4}{3}}} \leq 2\rbr{\frac{t+1}{t}}^{2},
\]
where the last step uses Lemma 28 in \citep{ito21a} by replacing the power $\nicefrac{3}{2}$ by $\nicefrac{4}{3}$.
\end{proof}

Recall that $\nabla \phi^{t+1}_U (\barp^{t+1})  = \nabla \phi^{t}_U(p^t) - \hatl^t_U + \lagU \cdot \one_U$, which implies that \pref{eq:multip_relat_condition} and $\sum_{i \in \scI } x_i = \sum_{i \in \scI } y_i$ in \pref{lem:group_multiplicative_relation} hold by applying $\scI=U$, $y=\barp^{t+1}$, and $x=p^t$. Hence, we have $\sum_{i \in U} \rbr{\barp^{t+1}_i}^{\nicefrac{4}{3}} \leq 8\sum_{i \in U} \rbr{p^{t}_i}^{\nicefrac{4}{3}}$. By repeating the same reasoning in \pref{lem:group_multiplicative_relation} and changing $\gamma^{t+1}$ to $\gamma^t$, the following corollary is immediate.
\begin{corollary} \label{col:group_multiplicative_relation}
For any given $t$ and index set $\scI \subseteq [K]$, if
$x,y \in [0,1]^K$ satisfies $\sum_{i \in \scI } x_i = \sum_{i \in \scI } y_i$ and for $\nicefrac{2}{3}$-Tsallis entropy $\phi$ and $c \in \fR$, we have $   \phi^{t}_{\scI} (y) =  \phi^{t}_{\scI} (x) -\hatl^t_{\scI} + c \cdot \one_{\scI}$, then, $\sum_{i \in \scI} \rbr{y_i}^{\nicefrac{4}{3}}  \leq  8  \sum_{i \in \scI} \rbr{ x_i}^{\nicefrac{4}{3}}$.
\end{corollary}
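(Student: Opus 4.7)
The plan is to mimic the proof of \pref{lem:group_multiplicative_relation} almost verbatim, exploiting the fact that in the corollary both sides use the same learning rate $\gamma^t$ (instead of $\gamma^t$ and $\gamma^{t+1}$). First I would recall that for the $\nicefrac{2}{3}$-Tsallis entropy with learning rate $\gamma^t=K^{\nicefrac{1}{6}}\sqrt{t}$, one has $(\nabla\phi^t)_i(z)=-\frac{2\gamma^t}{1-\beta}z^{\beta-1}\cdot\tfrac{1}{\beta}\cdot\text{(const)}$, which after simplification gives the identity $\frac{3\sqrt{t}}{K^{-\nicefrac{1}{6}}}(z_i)^{-\nicefrac{1}{3}}$ as the entry of $\nabla\phi^t(z)$. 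Translating the hypothesis $\nabla\phi^t_{\scI}(y)=\nabla\phi^t_{\scI}(x)-\hatl^t_{\scI}+c\cdot\one_{\scI}$ into coordinates therefore yields
\[
\sqrt{t}\,(y_i)^{-\nicefrac{1}{3}} \;=\; \sqrt{t}\,(x_i)^{-\nicefrac{1}{3}} + \frac{K^{-\nicefrac{1}{6}}}{3}\bigl(\hatl^t_i-c\bigr), \qquad \forall i\in\scI.
\]
Because $\hatl^t$ is nonnegative and supported on the single arm $i^t$, the sign of $c$ controls whether $y_i$ goes up or down relative to $x_i$ for $i\neq i^t$.

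Next I would do a two-case split on $c$. If $c\le 0$, then $\hatl^t_i-c\ge 0$ for every $i\in\scI$, so $(y_i)^{-\nicefrac{1}{3}}\ge (x_i)^{-\nicefrac{1}{3}}$, i.e.\ $y_i\le x_i$ coordinate-wise; summing $y_i^{\nicefrac{4}{3}}\le x_i^{\nicefrac{4}{3}}$ gives the conclusion trivially with constant $1$. If $c\ge 0$, then for every $i\in\scI\setminus\{i^t\}$ we have $\hatl^t_i=0$, so $(y_i)^{-\nicefrac{1}{3}}\le (x_i)^{-\nicefrac{1}{3}}$, i.e.\ $y_i\ge x_i$ for all $i\ne i^t$. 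Combined with the normalisation $\sum_{i\in\scI}y_i=\sum_{i\in\scI}x_i$, the mass at $i^t$ can only have decreased.

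The main work is then the $c\ge 0$ case. Following the proof of \pref{lem:group_multiplicative_relation}, I would introduce the normalised vectors
\[
z^t_i=\frac{x_i}{\sum_{j\in\scI}x_j}, \qquad \bar z^{t+1}_i=\frac{y_i}{\sum_{j\in\scI}y_j}=\frac{y_i}{\sum_{j\in\scI}x_j},
\]
which are both in the simplex over $\scI$. The coordinate inequality gives $\bar z^{t+1}_i\ge z^t_i$ for all $i\ne i^t$; since both vectors sum to $1$, this means mass has only been shifted away from $i^t$. The key cosmetic difference from the lemma is that the factor $\bigl(\tfrac{t}{t+1}\bigr)^{\nicefrac{3}{2}}$ disappears because the two learning rates coincide, so
\[
\frac{\sum_{i\in\scI}(y_i)^{\nicefrac{4}{3}}}{\sum_{i\in\scI}(x_i)^{\nicefrac{4}{3}}}\;=\;\frac{\sum_{i\in\scI}(\bar z^{t+1}_i)^{\nicefrac{4}{3}}}{\sum_{i\in\scI}(z^t_i)^{\nicefrac{4}{3}}}.
\]
I would then invoke Ito's Lemma~28 (in the $\nicefrac{4}{3}$-power form used in the proof of \pref{lem:group_multiplicative_relation}), which bounds the right-hand side by $2$ whenever $\bar z$ dominates $z$ except at a single coordinate. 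This yields the constant $2$, which is well below the claimed $8$.

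The main obstacle I anticipate is conceptual rather than technical: making sure that Ito's auxiliary lemma is applied in the correct direction given that here both sides are exactly normalised and no $(t+1)/t$ slack is needed. Once that is settled, the corollary follows by combining the two cases and using $\max\{1,2\}\le 8$, matching the constant stated in \pref{lem:group_multiplicative_relation} and giving a unified bound.
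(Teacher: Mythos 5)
Your proposal is correct and follows essentially the same route as the paper, which proves \pref{col:group_multiplicative_relation} simply by repeating the argument of \pref{lem:group_multiplicative_relation} with $\gamma^{t+1}$ replaced by $\gamma^t$: the same case split on the sign of $c$, the same normalized vectors, and the same invocation of Ito's Lemma~28 in the $\nicefrac{4}{3}$-power form. Your additional observation that the $\bigl(\tfrac{t+1}{t}\bigr)^{2}$ slack disappears, so the constant can be taken as $2$ (still $\leq 8$ as stated), is accurate.
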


Recall that as $\barp^{t+1}_V$ is computed via $\phi^t$ rather than $\phi^{t+1}$ and $
\nabla \phi^{t}_V (\barp^{t+1})  = \nabla \phi^{t}_V(p^t) - \hatl^t_V + \lagV \cdot \one_V$, \pref{col:group_multiplicative_relation} implies the multiplicative relation: $\sum_{i \in V} \rbr{\barp^{t+1}_i}^{\nicefrac{4}{3}} \leq 8\sum_{i \in V} \rbr{p^{t}_i}^{\nicefrac{4}{3}}$. 

Next, we present a lemma that is used to bound the regret for the first $T_0$ rounds by simply $\order(\sqrt{KT_0})$.
\begin{lemma}\label{lem:first_few_rounds_bound}
For $q^t$ defined in \pref{eq:def4q}, $p^t$ defined in \pref{alg:main_alg}, and any $t \in [T]$, we have 
\begin{equation}
    \E \sbr{ D^{t,t+1}(p^t,p^{t+1})-D^{t,t+1}_U(q^t,q^{t+1}) } \leq \order \rbr{\frac{\sqrt{K}}{\sqrt{t}}}.
\end{equation}
\end{lemma}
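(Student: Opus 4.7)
The goal is a per-round bound of order $\sqrt{K/t}$; summing over $t$ then yields the $\order(\sqrt{KT_0})$ cumulative regret used in the proof of \pref{thm:preliminary_dee}. The first step is the observation that the subtracted term $D^{t,t+1}_U(q^t,q^{t+1})$ is non-negative, so it suffices to upper bound $\E[D^{t,t+1}(p^t,p^{t+1})]$. Indeed, adding and subtracting $\phi^{t+1}_U(q^t)$ inside the skewed divergence gives
\[
D^{t,t+1}_U(q^t,q^{t+1}) \;=\; D^{t+1}_U(q^t,q^{t+1}) + \sbr{\phi^t_U(q^t)-\phi^{t+1}_U(q^t)}.
\]
The first summand is a standard (un-skewed) Bregman divergence, hence $\geq 0$. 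The second equals $3(\gamma^{t+1}-\gamma^t)\sum_{i\in U}(q^t_i)^{\nicefrac{2}{3}}$, which is also $\geq 0$ since $\gamma^t=K^{\nicefrac{1}{6}}\sqrt{t}$ is non-decreasing in $t$.

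Applying the same decomposition to $D^{t,t+1}(p^t,p^{t+1})$ splits it into a standard Bregman stability term $D^{t+1}(p^t,p^{t+1})$ and a penalty term $3(\gamma^{t+1}-\gamma^t)\sum_{i\in[K]}(p^t_i)^{\nicefrac{2}{3}}$. The penalty is easy: by the power-mean inequality $\sum_i (p^t_i)^{\nicefrac{2}{3}}\leq K^{\nicefrac{1}{3}}$, and since $\gamma^{t+1}-\gamma^t\leq K^{\nicefrac{1}{6}}/\sqrt{t}$ the penalty is $\order(\sqrt{K/t})$. For the stability, the standard dual-norm FTRL bound (via \pref{lem:app_breg_analysis_opt_and_interm}) produces an intermediate point $\xi$ on the segment from $p^t$ to $p^{t+1}$ with $D^{t+1}(p^t,p^{t+1})\leq\order(\norm{\hatl^t}_{\nabla^{-2}\phi^{t+1}(\xi)}^2)$. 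Since the Hessian of $-3\gamma^{t+1}\sum_i x_i^{\nicefrac{2}{3}}$ is diagonal with entries $\tfrac{2}{3}\gamma^{t+1}x_i^{-\nicefrac{4}{3}}$, this reduces to $\order(\xi_{j^t}^{\nicefrac{4}{3}}(\hatl^t_{j^t})^2/\gamma^{t+1})$. The group multiplicative relation (\pref{lem:group_multiplicative_relation}) applied with $\scI=[K]$ gives $\sum_i(p^{t+1}_i)^{\nicefrac{4}{3}}\leq 8\sum_i(p^t_i)^{\nicefrac{4}{3}}$, so the intermediate point $\xi$ is collectively controlled by $p^t$ via convexity of $x^{\nicefrac{4}{3}}$. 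Combining with $g^t_i\propto(p^t_i)^{\nicefrac{2}{3}}$ and taking the conditional expectation over $j^t\sim g^t$ yields
\[
\E^t\Big[\sum_i\xi_i^{\nicefrac{4}{3}}(\hatl^t_i)^2\Big]\leq\order\Big(\sum_i\tfrac{(p^t_i)^{\nicefrac{4}{3}}}{g^t_i}\Big)=\order\Big(\big(\sum_i(p^t_i)^{\nicefrac{2}{3}}\big)^2\Big)\leq\order(K^{\nicefrac{2}{3}}),
\]
and dividing by $\gamma^{t+1}=\order(K^{\nicefrac{1}{6}}\sqrt{t})$ delivers the required $\order(\sqrt{K/t})$ stability bound.

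\textbf{Main obstacle.} The delicate point is handling the intermediate Hessian point $\xi$ in the stability step. Because Decoupled-Tsallis-INF omits the extra log-barrier (i.e.~$\clog=0$), we lack the pointwise multiplicative stability $p^{t+1}_i\in[p^t_i/2,2p^t_i]$ that underpins the MAB analysis in \pref{sec:analysis}; the only available substitute is the group multiplicative relation, which merely bounds the $\ell_{\nicefrac{4}{3}}^{\nicefrac{4}{3}}$-mass in aggregate rather than coordinate-wise. Fortunately this aggregate control is precisely what is needed once $\hatl^t$ is unfolded via importance weighting with $g^t_i\propto(p^t_i)^{\nicefrac{2}{3}}$, since the resulting expression collapses to the single sum $\sum_i(p^t_i)^{\nicefrac{4}{3}}/g^t_i$, matching the collective quantity that the group bound controls.
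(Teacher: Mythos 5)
Your outer strategy is sound and in fact simpler than the paper's in one respect: dropping $D^{t,t+1}_U(q^t,q^{t+1})$ by nonnegativity is valid (the paper instead reduces the difference to $D^{t,t+1}(p^t,z^{t+1})$ via an argument in the style of Lemma~24 of \citet{ito21a}, where $z^{t+1}$ is the \emph{unconstrained} point with $\nabla\phi^{t+1}(z^{t+1})=\nabla\phi^t(p^t)-\hatl^t$). The genuine gap is in your stability step. \pref{lem:app_breg_analysis_opt_and_interm} requires both points to be FTRL minimizers with respect to the \emph{same} regularizer $\phi$ and losses differing by $\ell$; here $p^t$ is computed with $\phi^t$ and $p^{t+1}$ with $\phi^{t+1}$, so the lemma does not apply, and the claimed inequality $D^{t+1}(p^t,p^{t+1})\leq \order\rbr{\norm{\hatl^t}^2_{\nabla^{-2}\phi^{t+1}(\xi)}}$ is simply false: take $\hatl^t=0$ (e.g.\ $\ell^t_{j^t}=0$) with a non-constant cumulative loss; then the right-hand side is $0$ while $p^{t+1}\neq p^t$ because $\gamma^{t+1}>\gamma^t$ rescales the Tsallis term, so the left-hand side is strictly positive. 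The displacement caused by the learning-rate change must be extracted as a penalty evaluated at the \emph{new} point, i.e.\ one should use $D^{t,t+1}(p^t,p^{t+1})=\sbr{\inner{p^t-p^{t+1},\hatl^t}-D^{t}(p^{t+1},p^t)}+\sbr{\phi^t(p^{t+1})-\phi^{t+1}(p^{t+1})}$ (or the analogous identity at $z^{t+1}$, as the paper does); by putting the penalty at $p^t$ you leave that regularizer-change effect buried inside $D^{t+1}(p^t,p^{t+1})$, where no $\hatl^t$-based bound can capture it.

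The second gap is your treatment of the intermediate point. Even if an intermediate-point bound were available, the group multiplicative relation (\pref{lem:group_multiplicative_relation} with $\scI=[K]$) only controls the aggregate $\sum_i \xi_i^{\nicefrac{4}{3}}\leq 8\sum_i (p^t_i)^{\nicefrac{4}{3}}$. But $\hatl^t$ is supported on the single random coordinate $j^t$ with magnitude $\nicefrac{1}{g^t_{j^t}}$, and $\xi$ depends on $j^t$ through $p^{t+1}$, so computing $\E^t\sbr{\xi_{j^t}^{\nicefrac{4}{3}}(\hatl^t_{j^t})^2}$ requires the \emph{coordinate-wise} control $\xi_{j}^{\nicefrac{4}{3}}\lesssim (p^t_{j})^{\nicefrac{4}{3}}$ at the realized $j=j^t$; the aggregate bound only yields $\E^t\sbr{\cdot}\lesssim \rbr{\sum_i (p^t_i)^{\nicefrac{4}{3}}}\sum_i \nicefrac{1}{g^t_i}$, which can be arbitrarily large when some $p^t_i$ is tiny. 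The paper sidesteps both issues at once: after passing to the unconstrained $z^{t+1}$, the stability term $\inner{p^t-z^{t+1},\hatl^t}-D^t(z^{t+1},p^t)$ is bounded by \pref{lem:bergman_divergence_tsallis} evaluated directly at $p^t$ (no intermediate point at all), giving $\nicefrac{1}{\gamma^t}\rbr{\sum_i (p^t_i)^{\nicefrac{2}{3}}}^2\leq \sqrt{K}/\sqrt{t}$ in conditional expectation, and the penalty $\phi^t(z^{t+1})-\phi^{t+1}(z^{t+1})$ is handled by the coordinate-wise relation $(z^{t+1}_i)^{\nicefrac{2}{3}}\leq 2(p^t_i)^{\nicefrac{2}{3}}$, which follows from the explicit first-order condition since $\hatl^t\geq 0$ — no group multiplicative relation is needed in this lemma.
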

\begin{proof}
By similar arguments of \cite[Lemma 24]{ito21a}, we arrive at 
\[
D^{t,t+1}(p^t,p^{t+1})-D^{t,t+1}_U(q^t,q^{t+1}) \leq D^{t,t+1}(p^t,z^{t+1}),
\]
where $z^{t+1} \in \fR^K$ is the unconstrained projection such that $\nabla \phi^{t+1}(z^{t+1})=\nabla \phi^{t}(p^{t})-\hatl^t$ (see more details in \citep{ito21a}). Then, we have for any $i \in [K]$ and any $t$,
\[
 \frac{K^{\nicefrac{1}{6}}\sqrt{t+1} }{\rbr{z^{t+1}_i}^{ \nicefrac{1}{3}  } } =  \frac{K^{\nicefrac{1}{6}}\sqrt{t} }{\rbr{p^{t}_i}^{ \nicefrac{1}{3}  } } +\hatl^t_i  \geq  \frac{K^{\nicefrac{1}{6}}\sqrt{t} }{\rbr{p^{t}_i}^{ \nicefrac{1}{3}  } },
\]
which implies that for any $i \in [K]$ and any $t$,
\begin{equation} \label{eq:relation_zt1_pt}
    \rbr{z^{t+1}_i}^{ \nicefrac{2}{3}  }  \leq  \rbr{ \frac{1}{t}+1 } \rbr{p^{t}_i}^{ \nicefrac{2}{3}  }  \leq  2 \rbr{p^{t}_i}^{ \nicefrac{2}{3}  }.
\end{equation}

By using $\nabla \phi^{t+1}(z^{t+1})=\nabla \phi^{t}(p^{t})-\hatl^t$, we add and subtract $\phi^t(z^{t+1})$ to show
\begin{align*}
D^{t,t+1}(p^t,z^{t+1})&=\phi^t(p^t) - \phi^{t+1}(z^{t+1}) -\inner{\nabla  \phi^{t+1}(z^{t+1}),p^t-z^{t+1} } +\rbr{\phi^t(z^{t+1})-\phi^t(z^{t+1})} \\
&=\phi^t(p^t) - \phi^{t}(z^{t+1}) -\inner{\nabla  \phi^{t}(p^{t})-\hatl^t,p^t-z^{t+1} }  +\phi^t(z^{t+1}) - \phi^{t+1}(z^{t+1}) \\
&=\inner{p^t - z^{t+1}, \hatl^t} - D^{t}\rbr{ z^{t+1},   p^t} +\phi^t(z^{t+1}) - \phi^{t+1}(z^{t+1}).
\end{align*}

One can bound the stability term in conditional expectation by
\begin{align}
   \E^t\sbr{\inner{p^t - z^{t+1}, \hatl^t} - D^{t}\rbr{ z^{t+1},   p^t}} \leq   \frac{1}{\gamma^t} \rbr{ \sum_{i \in [K] } \rbr{p^t_i}^{\nicefrac{2}{3}}  }^2 \leq \frac{1}{\gamma^t} \rbr{K^{\nicefrac{1}{3}} }^2 
   =\frac{ K^{\nicefrac{1}{2}}}{\sqrt{t}}, \label{eq:decouple_stability_result2}
\end{align}
where the first step applies \pref{lem:bergman_divergence_tsallis} and uses a similar approach of \pref{eq:decouple_suboptproof_interstep1} (changing $V$ to $[K]$); the second step uses the fact that when $p^t_i=1/K$ for all $i \in [K]$, the square of sum is maximized.

The penalty term is bounded by 
\begin{align} \label{eq:decouple_penalty_result2}
   \phi^t(z^{t+1}) - \phi^{t+1}(z^{t+1}) \leq  \order \rbr{ \frac{K^{\nicefrac{1}{6}}  }{\sqrt{t}}\sum_{i \in [K]} \rbr{z^{t+1}_i}^{\nicefrac{2}{3}} } \leq  \order \rbr{ \frac{K^{\nicefrac{1}{6}}  }{\sqrt{t}}\sum_{i \in [K]} \rbr{p^t_i}^{\nicefrac{2}{3}} } \leq   \order \rbr{ \frac{ \sqrt{K} }{\sqrt{t}} },
\end{align} 
where the first step uses a similar argument of \pref{eq:decouple_penalty_result1}; the second step applies \pref{eq:relation_zt1_pt}; the last step follows from the fact $\sum_{i \in [K]} \rbr{p^{t}_i}^{\nicefrac{2}{3}} \leq K^{\nicefrac{1}{3}}$ since $p^{t}_i=\nicefrac{1}{K}$ maximizes the value of $\sum_{i \in [K]} \rbr{p^t_i }^{\nicefrac{2}{3}}$. Combining \pref{eq:decouple_stability_result2} and \pref{eq:decouple_penalty_result2}, we complete the proof.
\end{proof}

\paragraph{Sanity check for the condition in \pref{lem:bergman_divergence_tsallis}}
Recall that we defer the sanity check for the condition of \pref{lem:bergman_divergence_tsallis} in \pref{app:decouple_residual_reg}.
This condition requires us to check that $ \frac{ \rbr{\barp^{t+1}_j}^{\nicefrac{1}{3} } (\lagV-c)}{\gamma^t}$ for any $j \in V$ and $ \frac{ \rbr{\barp^{t+1}_j}^{\nicefrac{1}{3} } (\lagU-c)}{\gamma^t}$ for any $j \in U$ can be lower-bounded by a fixed negative constant (that is, $\frac{\beta}{1-\beta} \rbr{ e^{\frac{\beta -1 }{\beta}} - 1 } = 2 \rbr{ e^{-\frac{1}{2}} - 1 } \approx -0.78693$ when $\beta = \nicefrac{2}{3}$).

Because the lower bound of $\lagV$ is zero, which is larger than that of $\lagU$, we will verify that $ \frac{ \rbr{\barp^{t+1}_j}^{\nicefrac{1}{3} } (\lagU-c)}{\gamma^t}$ can be lower-bounded  for any $j \in U$ and $t\geq T_0+1$, and the other one can be similarly bounded. 
For any $j \in U$, we first show that $ \frac{ \rbr{\barp^{t+1}_j}^{\nicefrac{1}{3} } \lagU}{\gamma^t}$ is lower bounded as:
\begin{align*}
\frac{ \rbr{\barp^{t+1}_j}^{\nicefrac{1}{3} } \lagU}{\gamma^t} & \geq  - \frac{2 \rbr{\barp^{t+1}_j}^{\nicefrac{1}{3} } }{\gamma^t}  \frac{ \rbr{\gamma^{t+1}-\gamma^t}\sum_{i \in U} \barp^{t+1}_i}{  \sum_{i \in U}\rbr{ \barp^{t+1}_i}^{\nicefrac{4}{3}} } \\
& \geq - \frac{2\sum_{i \in U}\rbr{\barp^{t+1}_j}^{\nicefrac{1}{3}}\rbr{\barp^{t+1}_i}  }{t \sum_{i \in U}\rbr{ \barp^{t+1}_i}^{\nicefrac{4}{3}} } \\
& \geq - \frac{  2|U|\rbr{\barp^{t+1}_j}^{\nicefrac{4}{3}} +  2\sum_{i \in U}\rbr{\barp^{t+1}_i}^{\nicefrac{4}{3}} }{t \sum_{i \in U}\rbr{ \barp^{t+1}_i}^{\nicefrac{4}{3}} } \\
& \geq - \frac{2 \rbr{ \abr{U} + 1 } }{t } \geq - \frac{1}{4}, 
\end{align*}
where the first step uses the lower bound of $\lagU$; the second step follows from the definition of learning rate $\gamma^t$; the third step follows from the fact that $x^{\nicefrac{1}{3}}y \leq x^{\nicefrac{4}{3}} + y^{\nicefrac{4}{3}}$ for any $x,y\geq 0$; the last step uses $t\geq T_0+1 \geq 16 K+1$.

Then, we show that $ \frac{ \rbr{\barp^{t+1}_j}^{\nicefrac{1}{3} } c}{\gamma^t}$ is upper-bounded as:
\begin{align*}
   \frac{ \rbr{\barp^{t+1}_j}^{\nicefrac{1}{3} } c}{\gamma^t} &=   \frac{ \rbr{\barp^{t+1}_j}^{\nicefrac{1}{3} } }{\gamma^t} \frac{ \lagU \sum_{i \in U}  \rbr{\barp_i^{t+1} }^{\nicefrac{4}{3}}
+
 \lagV \sum_{i \in V}  \rbr{\barp_i^{t+1} }^{\nicefrac{4}{3}}
}{ \sum_{i \in [K]}  \rbr{\barp_i^{t+1} }^{\nicefrac{4}{3}}  }\\
 & \leq \frac{  \rbr{p^t_{i^t} }^{ \nicefrac{1}{3}}
}{\gamma^t \sum_{i \in [K]}  \rbr{\barp_i^{t+1} }^{\nicefrac{4}{3}}  }
\leq \frac{ K^{\nicefrac{1}{3}} \rbr{p^t_{i^t} }^{ \nicefrac{1}{3}}
}{\gamma^t  }
=  \frac{  K^{ \nicefrac{1}{6}}
}{ \sqrt{t} } \leq \frac{1}{4} ,
\end{align*}
where first step applies the choice of $c$ (see \pref{eq:Dee_choice4c}); the second step uses upper bounds of $\lambda_U$ and $\lambda_V$; the third step bounds $\sum_{i \in [K]}  \rbr{\barp_i^{t+1} }^{\nicefrac{4}{3}} \geq K^{-\nicefrac{1}{3}}$; the fourth step uses $\gamma^t=K^{\nicefrac{1}{6}}\sqrt{t}$ and bounds $\rbr{p^t_{i^t}}^{\nicefrac{1}{3}} \leq 1$; the last step uses $t \geq T_0 + 1 \geq 16 K + 1$.

Finally, combining the bounds above, we have
\begin{align*}
\frac{ \rbr{\barp^{t+1}_j}^{\nicefrac{1}{3} } \rbr{\lagU-c}}{\gamma^t} \geq -\frac{1}{8} - \frac{1}{4} \geq -0.5 \geq 2 \rbr{ e^{-\frac{1}{2}} - 1 },
\end{align*}
which satisfies the condition of  \pref{lem:bergman_divergence_tsallis}.


\newpage 
\section{Supplementary Lemmas}
\label{sec:app_supp}

\subsection{Proof of \pref{lem:main_decomp}}
\label{sec:app_supp_decomp}
\begin{proof}
As $D^{t,t+1}(p^t,p^{t+1})=D^{t,t+1}_V(p^t,p^{t+1})+D^{t,t+1}_U(p^t,p^{t+1})$, we first decompose $D^{t,t+1}_V(p^t,p^{t+1})$. 
By the definition of skewed Bregman divergence, we have 
\begin{align*}
   & D^{t,t+1}_V(p^t,p^{t+1})\\
   &= \phi^t_V(p^t) -\phi^{t+1}_V(p^{t+1})  -\inner{ \nabla \phi^{t+1}_V(p^{t+1}) ,p^t_V-p^{t+1}_V}\\
     &= \phi^t_V(p^t) -\phi^{t+1}_V(p^{t+1})  -\inner{ \nabla \phi^{t+1}_V(p^{t+1}) ,p^t_V-\barp^{t+1}_V}-\inner{ \nabla \phi^{t+1}_V(p^{t+1}) ,\barp^{t+1}_V-p^{t+1}_V}\\
     &= \phi^t_V(p^t) -\phi^{t+1}_V(p^{t+1})  -\inner{ \nabla \phi^{t+1}_V(\barp^{t+1}) ,p^t_V-\barp^{t+1}_V}-\inner{ \nabla \phi^{t+1}_V(p^{t+1}) ,\barp^{t+1}_V-p^{t+1}_V}\\
      &= \phi^t_V(p^t) -\phi^{t+1}_V(\barp^{t+1})  -\inner{ \nabla \phi^{t+1}_V(\barp^{t+1}) ,p^t_V-\barp^{t+1}_V}\\
      &\quad +\phi^{t+1}_V(\barp^{t+1}) -\phi^{t+1}_V(p^{t+1})  -\inner{ \nabla \phi^{t+1}_V(p^{t+1}) ,\barp^{t+1}_V-p^{t+1}_V}\\
      &=D^{t,t+1}_V \rbr{p^t,\barp^{t+1}}+D^{t+1}_V \rbr{\barp^{t+1},p^{t+1}},
\end{align*}
where the third step follows from the fact $\nabla \phi^{t+1}_V(\barp^{t+1})-\nabla \phi^{t+1}_V(p^{t+1})=c \cdot \one_V$ for a Lagrange multiplier $c \in \fR$ and the fact $\sum_{i \in V} p^t_i = \sum_{i \in V} \barp^{t+1}_i$.

By similar arguments, we have 
\[D^{t,t+1}_U(p^t,p^{t+1}) = D^{t,t+1}_U \rbr{p^t,\barp^{t+1}}+D^{t+1}_U \rbr{\barp^{t+1},p^{t+1}}.
\]

Combining these two parts together, we have
\[
D^{t,t+1}(p^t,p^{t+1}) = D^{t,t+1}_V \rbr{p^t,\barp^{t+1}}+D^{t,t+1}_U \rbr{p^t,\barp^{t+1}}+D^{t+1} \rbr{\barp^{t+1},p^{t+1}}.
\]

Subtracting $D^{t,t+1}_U(q^t,q^{t+1})$ from both sides completes the proof.
\end{proof}

\subsection{Proof of \pref{lem:main_lr_design}}
\label{sec:app_supp_lr_design}

To prove \pref{lem:main_lr_design}, we first consider a more general version that takes a weight sequence $\cbr{a_t}_{t=1}^{T}$ into consideration. Note that, by simply setting $a_t=1$ for all $t$, one recovers \pref{lem:main_lr_design}.

\begin{lemma}\label{lem:app_lr_design_gen} Let $\cbr{x_t}_{t=1}^{T}$ and $\cbr{a_t}_{t=1}^{T}$ be some sequences with that $x_t,a_t>0$ for all $t$. Then, for any $\alpha \in [0,1]$, we have 
\begin{equation}\notag 
\sum_{t=1}^{T} \frac{x_t^{1-\alpha}a_t}{  \sqrt{1 + \sum_{s=1}^{t} x_s^{1-2\alpha}}  } \leq 2{  \sqrt{  \rbr{\sum_{t=1}^{T}x_t a_t^2} {\log\rbr{ 1 + \sum_{t=1}^{T} x_t^{1-2\alpha} } }}}. 
\end{equation} 
\end{lemma}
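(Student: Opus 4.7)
}
My plan is to reduce the bound to two standard tools: the Cauchy--Schwarz inequality and the elementary telescoping estimate $\sum_t y_t/S_t \le \log S_T$ for a partial-sum sequence. The key algebraic observation is the factorization $x_t^{1-\alpha} = x_t^{1/2}\cdot x_t^{1/2-\alpha}$, which lets us split the weight in the summand into one piece that will absorb the $a_t$ and produce the $\sum_t x_t a_t^2$ factor, and another piece whose square matches the denominator and will produce the $\log$ factor.

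Concretely, I would first write
\[
\sum_{t=1}^{T} \frac{x_t^{1-\alpha}a_t}{\sqrt{1+\sum_{s=1}^{t} x_s^{1-2\alpha}}}
= \sum_{t=1}^{T} \bigl(x_t^{1/2}\,a_t\bigr)\cdot \frac{x_t^{1/2-\alpha}}{\sqrt{1+\sum_{s=1}^{t} x_s^{1-2\alpha}}},
\]
and then apply Cauchy--Schwarz to obtain
\[
\sum_{t=1}^{T} \frac{x_t^{1-\alpha}a_t}{\sqrt{1+\sum_{s=1}^{t} x_s^{1-2\alpha}}}
\le \sqrt{\sum_{t=1}^{T} x_t a_t^2}\cdot \sqrt{\sum_{t=1}^{T} \frac{x_t^{1-2\alpha}}{1+\sum_{s=1}^{t} x_s^{1-2\alpha}}}.
\]
It therefore suffices to show that the second radicand is at most $\log\bigl(1+\sum_{t=1}^{T} x_t^{1-2\alpha}\bigr)$ (the factor of $2$ in the stated bound then comes for free).

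For that last piece, I would set $y_t = x_t^{1-2\alpha}$, $S_0 = 1$, and $S_t = 1 + \sum_{s\le t} y_s$, so that $y_t = S_t - S_{t-1}$ and the sum becomes $\sum_t (S_t-S_{t-1})/S_t$. The elementary inequality $1-u \le -\log u$ for $u\in(0,1]$, applied to $u = S_{t-1}/S_t$, yields
\[
\frac{S_t - S_{t-1}}{S_t} \;=\; 1 - \frac{S_{t-1}}{S_t} \;\le\; \log\!\frac{S_t}{S_{t-1}},
\]
and summing telescopes to $\log(S_T/S_0) = \log\bigl(1+\sum_{t=1}^{T} x_t^{1-2\alpha}\bigr)$. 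Combining this with the Cauchy--Schwarz step gives the claim (in fact with constant $1$; the stated constant $2$ is a safe over-estimate).

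There is no real obstacle in this argument: the only thing to watch is the factorization of the exponent $1-\alpha = \tfrac12 + (\tfrac12-\alpha)$, which works uniformly for all $\alpha\in[0,1]$ including the boundary cases, and the positivity assumption $x_t,a_t>0$ that makes all the manipulations (in particular the $\log$ step with $S_{t-1}/S_t\in(0,1]$) well defined.
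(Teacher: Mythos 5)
Your proof is correct and follows essentially the same route as the paper's: the same factorization $x_t^{1-\alpha}=x_t^{1/2}\cdot x_t^{1/2-\alpha}$ and the same bound $\sum_t x_t^{1-2\alpha}/(1+\sum_{s\le t}x_s^{1-2\alpha})\le \log(1+\sum_t x_t^{1-2\alpha})$, only using Cauchy--Schwarz where the paper uses AM--GM with a free parameter $\eta$ optimized afterwards, and a telescoping $\log$ inequality where the paper compares to an integral of $1/u$. Your variant even yields the slightly sharper constant $1$ in place of $2$.
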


\begin{proof}
For any $\eta > 0$, we have  
\begin{align*}
 \sum_{t=1}^{T} \frac{x_t^{1-\alpha}a_t}{  \sqrt{1 + \sum_{s=1}^{t} x_s^{1-2\alpha}}  } 
& = \sum_{t=1}^{T} \frac{\rbr{a_t x_t^{\nicefrac{1}{2}}}\cdot x_t^{\nicefrac{1}{2}-\alpha}}{  \sqrt{1 + \sum_{s=1}^{t} x_s^{1-2\alpha}}  } \leq   \sum_{t=1}^{T} \eta \cdot x_t a_t^2 + \frac{1}{\eta} \cdot \frac{x_t^{1-2\alpha}}{1 + \sum_{s=1}^{t} x_s^{1-2\alpha}} ,  
\end{align*}
where the second step uses the AM-GM inequality.

Note that, we have 
\begin{align*}
\sum_{t=1}^{T} \frac{x_t^{1-2\alpha}}{1 + \sum_{s=1}^{t} x_s^{1-2\alpha}} \leq \sum_{t=1}^{T} \int_{1 + \sum_{s=1}^{t-1} x_s^{1-2\alpha}}^{1 + \sum_{s=1}^{t} x_s^{1-2\alpha}} \frac{du}{u} = \int_{1}^{1 + \sum_{s=1}^{T} x_s^{1-2\alpha}} \frac{du}{u} = \log\rbr{1 + \sum_{s=1}^{T} x_s^{1-2\alpha}},
\end{align*}
where the first step bounds the fraction with the integral of $\frac{1}{u}$ from $1 + \sum_{s=1}^{t-1} x_s^{1-2\alpha}$
 to $1 + \sum_{s=1}^{t} x_s^{1-2\alpha}$, and the last step follows from the Newton-Leibniz formula. 
 
Therefore, the following bound holds for any $\eta > 0$
\begin{align*}
\sum_{t=1}^{T} \frac{x_t^{1-\alpha}a_t }{  \sqrt{1 + \sum_{s=1}^{t} x_s^{1-2\alpha}}  } \leq \eta \rbr{\sum_{t=1}^{T} x_t a_t^2} + \frac{1}{\eta} \log\rbr{1 + \sum_{s=1}^{T} x_s^{1-2\alpha}}.
\end{align*}
Finally, picking the optimal $\eta$ finishes the proof.
\end{proof}

\begin{remark} \label{rem:app_sum_lr_conj} For any $\beta\in(0,1)$, when using the $\beta$-Tsallis entropy regularizer with $\clog\geq \clogtsa$, $\alpha=\beta$, and $\theta = \sqrt{\frac{1-\beta}{\beta}}$, \pref{alg:main_alg} ensures (ignoring some lower-order terms)
\begin{align*}
\Reg^T = \order\rbr{ \sqrt{\frac{1}{\beta\rbr{1-\beta}} }  \sum_{i\in[K]}\sum_{t=1}^T  \frac{\rbr{p^t_i}^{1-\beta}}{\sqrt{1+\sum_{s\leq t}\max\cbr{ p^s_i, \nicefrac{1}{T} }^{1-2\beta}  }}  },
\end{align*}
in the adversarial setting. Note that, when $\beta = \frac{1}{2}$, our learning rate schedule becomes $\gamma^t_i = \sqrt{t}$ which is the same as that of the Tsallis-INF algorithm, and the regret bound above can be simplified as 
\begin{align*}
\Reg^T = \order\rbr{\sum_{t=1}^T \sum_{i\in[K]}\sqrt{\frac{p^t_i}{t+1}}   } \leq \order\rbr{\sum_{t=1}^T \sqrt{\frac{K}{t+1}} } = \order\rbr{ \sqrt{KT} },
\end{align*}
where the second step follows from the Cauchy-Schwarz inequality: $\sum_{i \in [K]}\sqrt{ p^t_i } \leq \sqrt{K\sum_{i \in [K]} p^t_i} = \sqrt{K}$, and the last step uses the fact that $\sum_{t=1}^{T} \frac{1}{\sqrt{t+1}} = \order\rbr{\sqrt{T}}$. On the other hand, for the extreme case where $\beta=0$, we have 
\begin{align*}
\sum_{i\in[K]}\sum_{t=1}^T  \frac{{p^t_i}}{\sqrt{1+\sum_{s\leq t}\max\cbr{ p^s_i, \nicefrac{1}{T} }}} & \leq \sum_{i\in[K]}\sum_{t=1}^T \int_{\sum_{s=1}^{t-1} p^s_i }^{\sum_{s=1}^{t} p^s_i } \frac{du}{\sqrt{1+u}} \\
& \leq 2\sum_{i \in K}\sqrt{ 1 + \sum_{t=1}^{T} p^t_i  } = \order\rbr{\sqrt{KT}}, 
\end{align*}
where the first step bounds the fraction by the integral of $\frac{1}{\sqrt{1+u}}$ from $\sum_{s=1}^{t-1} p^s_i$ to $\sum_{s=1}^{t} p^s_i$; the second step follows from the Newton-Leibniz formula; the last step follows from the Cauchy-Schwarz inequality.  

These two bounds for $\beta=0$ and $\beta=\nicefrac{1}{2}$ inspire us to conjecture the following: for any $\beta\in(0,1)$, any sequence of distributions $\cbr{p^t}_{t=1}^T$, we have
\begin{align}
\sum_{i \in [K]}\sum_{t=1}^{T}  \frac{\rbr{p^t_i}^{1-\beta}}{\sqrt{1+\sum_{s\leq t}\max\cbr{ p^s_i, \nicefrac{1}{T} }^{1-2\beta}  }}  = \order\rbr{ \sqrt{KT} }. \label{eq:app_conj_eq}
\end{align}
Clearly, if this conjecture holds, then, the extra $\sqrt{\log T}$ factor in the regret bounds for the adversarial setting can be removed when $\beta \neq \nicefrac{1}{2}$. 

\end{remark}

\subsection{Multiplicative Relation}
\label{sec:app_supp_multi}

The added fixed amount log-barrier in the regularizer ensures multiplicative relation among $\barp^{t+1}$, $p^t$, and $p^{t+1}$. To show this, we start from the following general lemma.

\begin{lemma}
\label{lem:general_multi_relation}
Consider any constant $\clog>0$, any learning rate vectors $\gamma,\gamma'\in \fR_{>0}^d$ satisfying $\gamma'_i \geq \gamma_i$ for all $i\in[d]$, any loss vectors $L,L'\in \fR_{>0}^d$, and any convex function $\psi(x)$ which ensures  $\frac{\psi''(x)}{4}\leq \psi''(z)\leq 4\psi''(x)$ for any $x\in \fR_{>0}$ and any $z\in \sbr{\frac{x}{2},2x}$.
Define $\phi(p) = \sum_{i\in [d]} \gamma_i \psi(p_i)$ as the regularizer with learning rate $\gamma$, $\phi'(p) = \sum_{i\in [d]} \gamma'_i \psi(p_i) $ as the regularizer with learning rate $\gamma'$, $\phi_L(p) = -\clog \sum_{i\in [d]}  \log p_i$ as the log-barrier regularizer,
$F(p) = \inner{p,L} + \phi(p) + \phi_L(p)$, and $F'(p) = \inner{p,L'} + \phi'(p) + \phi_L(p)$. 
Further define $x,y \in \Omega \subseteq \fR_{>0}^d$ as: $x = \argmin_{p \in \Omega} F(p)$ and $y = \argmin_{p \in \Omega} F'(p)$.
If $\clog$ satisfies:
\begin{equation} \label{eq:lower_bound_Clog}
    \clog \geq \max\cbr{9,32\sum_{i \in [d]} \rbr{ L'_i - L_i}^2 x_i^2, 32 \sum_{i \in [d]} \rbr{\gamma'_i  - \gamma_i}^2 \psi'(x_i)^2 x_i^2 },
\end{equation}
then, for any $i\in [d]$, we have 
\begin{align}
\frac{1}{2} x_i \leq y_i \leq 2 x_i. \label{eq:multi_relation_lemma_stat}
\end{align}
\end{lemma}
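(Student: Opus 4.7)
The plan is to proceed by contradiction: suppose the conclusion fails, i.e.\ there exists some coordinate $i$ with $y_i \notin [\tfrac{x_i}{2}, 2x_i]$. Define the box $B_x = \prod_i [\tfrac{x_i}{2}, 2x_i]$ and let $z = (1-\lambda)x + \lambda y$ for the largest $\lambda \in (0,1]$ such that $z \in B_x$. By construction $z$ lies on the boundary of $B_x$ (so at least one coordinate $j$ satisfies $z_j = 2x_j$ or $z_j = x_j/2$), and $z \in \Omega$ by convexity of $\Omega$ and the line-segment. By convexity of $F'$ together with the optimality $F'(y) \leq F'(x)$, we obtain $F'(z) \leq (1-\lambda)F'(x) + \lambda F'(y) \leq F'(x)$. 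Writing $G = F' - F$, this rearranges to the key inequality
\[
F(z) - F(x) \;\leq\; G(x) - G(z).
\]

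Next I would lower-bound the left-hand side. By first-order optimality of $x$ and convexity of $\Omega$, $\inner{\nabla F(x), z-x} \geq 0$, so strong convexity of $F$ yields $F(z) - F(x) \geq D_\phi(z,x) + D_{\phi_L}(z,x) \geq D_{\phi_L}(z,x)$. Since at least one coordinate of $z$ sits on the boundary of $B_x$, a direct evaluation of $u - 1 - \log u$ at $u=2$ and $u=1/2$ gives $D_{\phi_L}(z,x) \geq c_0 \clog$ with $c_0 = \log 2 - \tfrac{1}{2} > 0.19$. I will also use the fact that on $[1/2, 2]$ the ratio $(u-1)^2/(u - 1 - \log u)$ is bounded by $4$, which translates into the quantitative estimate
\[
\sum_i \frac{(z_i - x_i)^2}{x_i^2} \;\leq\; \frac{4\, D_{\phi_L}(z,x)}{\clog}.
\]

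For the right-hand side, I would decompose
\[
G(x) - G(z) \;=\; \inner{x-z, L'-L} \,+\, \sum_i (\gamma'_i - \gamma_i)\bigl[\psi(x_i) - \psi(z_i)\bigr].
\]
A Taylor expansion gives $\psi(x_i) - \psi(z_i) = -\psi'(x_i)(z_i - x_i) - D_\psi(z_i, x_i)$, and because $\gamma'_i - \gamma_i \geq 0$ and $D_\psi \geq 0$, the Bregman contribution is non-positive and may simply be discarded (this is where the multiplicative $\psi''$ condition of the lemma serves as a convenient regularity hypothesis, ensuring cleanly that $D_\psi$ is well-behaved on $B_x$). Applying Cauchy--Schwarz to both remaining linear-in-$(z-x)$ pieces, using the split $x_i - z_i = x_i \cdot (1 - z_i/x_i)$, bounds the whole right-hand side by
\[
2\sqrt{D_{\phi_L}(z,x)/\clog}\;\Bigl(\sqrt{\textstyle\sum_i (L'_i-L_i)^2 x_i^2} \;+\; \sqrt{\textstyle\sum_i (\gamma'_i-\gamma_i)^2 \psi'(x_i)^2 x_i^2}\Bigr).
\]

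Combining both sides and dividing through by $\sqrt{D_{\phi_L}(z,x)}$, I expect to reach $\sqrt{D_{\phi_L}(z,x)} \leq 2(\sqrt{M_1} + \sqrt{M_2})/\sqrt{\clog}$ where $M_1, M_2$ denote the two sums appearing in the hypothesis. Squaring and using $(a+b)^2 \leq 2(a^2+b^2)$ together with $M_1, M_2 \leq \clog/32$ then yields $D_{\phi_L}(z,x) \leq 1/2$, which contradicts $D_{\phi_L}(z,x) \geq c_0 \clog \geq 9 c_0 > 1/2$ (the assumption $\clog \geq 9$ is used precisely here). I expect the main obstacle to be the bookkeeping of constants: balancing the factor $4$ coming from $(u-1)^2 \leq 4(u-1-\log u)$ on $[1/2, 2]$, the boundary constant $c_0 = \log 2 - 1/2$, and the factor $32$ built into the hypothesis so that everything slots together to yield the clean contradiction.
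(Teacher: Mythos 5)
Your proposal is correct, and it takes a genuinely different route from the paper. The paper works with the full local norm $\norm{\cdot}_{H}$, $H=\nabla^2 F(x)$: it shows $\norm{y-x}_H\leq 1$ by arguing that $F'(z)\geq F'(x)$ on the unit ellipsoid boundary via a second-order Taylor expansion, which is exactly where the hypothesis $\psi''(z)\geq \psi''(x)/4$ for $z\in[x/2,2x]$ is needed (to compare $\nabla^2 F(\xi)$ at the intermediate point $\xi$ with $\nabla^2 F(x)$), and then extracts the entrywise bound from the log-barrier part $H_L\preceq H$ using $\clog\geq 9$. You instead run a contradiction argument along the segment from $x$ to $y$, exiting the box $\prod_i[x_i/2,2x_i]$, and you use only the log-barrier Bregman divergence as the curvature surrogate: the two elementary facts $(u-1)^2\leq 4\rbr{u-1-\log u}$ and $u-1-\log u\geq \log 2-\nicefrac{1}{2}$ at $u\in\{\nicefrac{1}{2},2\}$ replace the Hessian-comparison step, and the skew term $G(x)-G(z)$ is handled by Cauchy--Schwarz against $\sum_i (z_i-x_i)^2/x_i^2$ exactly as the paper handles it against $\norm{\cdot}_{H_L^{-1}}$; the constants $\clog\geq 9$ and the two $32$-terms close the contradiction with room to spare. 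One small correction to your commentary: the multiplicative $\psi''$ condition plays no role in your argument at all --- the term $-(\gamma_i'-\gamma_i)D_\psi(z_i,x_i)$ is discarded purely because it is nonpositive (convexity of $\psi$ and $\gamma_i'\geq\gamma_i$), so your proof is in fact slightly more general than the paper's, needing only convexity of $\psi$. Both proofs implicitly require $\Omega$ to be convex (you for the segment/first-order optimality, the paper for its level-set argument and the same optimality condition), so you are on the same footing there.
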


\begin{proof} For simplicity, we denote $H$ as the Hessian $\nabla^2 F(x)$, and $H_L$ as the Hessian $\nabla^2\phi_L(x)$ which is a diagonal matrix with $\frac{\clog}{x_i^2}$ on its diagonal for every entry $i\in [d]$. Our goal is to show that $\norm{y - x}_{H} \leq 1$, which is enough to guarantee \pref{eq:multi_relation_lemma_stat} because $1 \geq \norm{y - x}_{H} \geq \norm{y - x}_{H_L}$, and  
\begin{align*}
\norm{y - x}_{H_L} \leq 1 \Rightarrow \clog \sum_{i\in [d]}\rbr{\frac{y_i-x_i}{x_i}}^2 \leq 1 \Rightarrow \left|\frac{y_i-x_i}{x_i}\right| \leq \frac{1}{3} , \ \text{for} \ \forall i \in [d],
\end{align*}
where the last step follows from the condition $\clog \geq 9$.

To prove $\norm{y - x}_{H} \leq 1$, it suffices to show that for any $z \in \Omega$ that ensures $\norm{ z - x }_H =1$, it holds that $F'(z) \geq F'(x)$. To see this, note that the level set $A = \cbr{ p\in \Omega: F'(p) \leq F'(x) }$ is a convex set that contains $x$ and $y$. Clearly, any intermediate point between $x$ and $y$ belongs to $A$, and does not belong to the boundary set $\partial B = \cbr{p: \norm{p-x}_H =1 }$ where $B = \cbr{p: \norm{p-x}_H < 1 }$ also contains $x$. Therefore, $y$ belongs to the set $B$ and guarantees that $\norm{y-x}_H \leq 1$.

To this end, we first bound $F'(z)$ for any $z \in \Omega$ with $\norm{ z - x }_H =1$ as 
\begin{align*}
F'(z) &= F'(x) + \nabla F'(x)^\top (z - x) + \frac{1}{2} \norm{ z - x }^2_{\nabla^2 F'(\xi)} \\
& = F'(x) + \rbr{ \nabla  F'(x) - \nabla F(x) }^\top (z - x) + \nabla F(x)^\top (z - x)  + \frac{1}{2} \norm{ z - x }^2_{\nabla^2 F'(\xi)}\\
& \geq F'(x) + \rbr{ \nabla  F'(x) - \nabla F(x) }^\top (z - x) + \frac{1}{2} \norm{ z - x }^2_{\nabla^2 F'(\xi)} \\
& \geq F'(x) + \rbr{ \nabla  F'(x) - \nabla F(x) }^\top (z - x) + \frac{1}{2} \norm{ z - x }^2_{\nabla^2 F(\xi)} \\
& \geq F'(x) + \rbr{ \nabla  F'(x) - \nabla F(x) }^\top (z - x) + \frac{1}{8} \norm{ z - x }^2_{\nabla^2 F(x)} \\
& = F'(x) + \rbr{ \nabla  F'(x) - \nabla F(x) }^\top (z - x) + \frac{1}{8} \norm{ z - x }^2_{H} \\
& = F'(x) + \rbr{ \nabla  F'(x) - \nabla F(x) }^\top (z - x) + \frac{1}{8},
\end{align*}
where the first step follows the Taylor expansion of $F'(z)$ at $x$ with $\xi$ being an intermediate point between $x$ and $z$; the third step holds due to the optimality condition of $x$, i.e., $\nabla F(x)^\top(z-x) \geq 0$; the forth step uses the fact that $\nabla^2 F'(\xi) \succeq \nabla^2 F(\xi)$; the fifth step applies the multiplicative relation between $\xi$ and $x$: $\frac{1}{2}x_i \leq \xi_i \leq 2x_i$ for any $i\in[d]$, which indicates $\psi''(\xi_i) \geq \frac{\psi''(x_i)}{4}$ according to the property of $\psi$; the last step uses the condition that $\norm{ z - x }_H =1$. 

To finish the proof, we only need to show that $\rbr{ \nabla  F'(x) - \nabla F(x) }^\top (z - x) \geq - \frac{1}{8}$. We bound $\rbr{ \nabla  F'(x) - \nabla F(x) }^\top (z - x)$ as 
\begin{align}
& \rbr{ \nabla  F'(x) - \nabla F(x) }^\top (z - x) \notag \\
& = \rbr{ L' - L + \nabla \phi'(x) - \nabla \phi(x) }^\top (z - x) \notag \\
& \geq - \norm{ L' - L + \nabla \phi'(x) - \nabla \phi(x) }_{H^{-1}} \norm{z-x}_{H} \notag \\
& = - \norm{ L' - L + \nabla \phi'(x) - \nabla \phi(x) }^2_{H^{-1}} \notag  \\
& \geq - \norm{ L' - L + \nabla \phi'(x) - \nabla \phi(x) }^2_{H_L^{-1}} \notag  \\
& = - \sum_{i \in [d]} \rbr{ L'_i - L_i + \rbr{ \gamma'_i - \gamma_i } \psi'(x_i)  }^2  \frac{x_i^2}{\clog},\label{eq:multi_relation_lemma_eq_1}
\end{align}
where the second step uses H\"older's inequality, and the forth step follows from the fact that $H^{-1} \preceq H_L^{-1}$, which ensures $\norm{p}_{H^{-1}} \leq \norm{p}_{H_L^{-1}}$ for any $p\in \fR^d$.

Note that, we have 
\begin{align*}
& \sum_{i \in [d]} \rbr{ L'_i - L_i + \rbr{ \gamma'_i - \gamma_i } \psi'(x_i)  }^2 \frac{x_i^2}{\clog}  \\
& \leq \frac{2}{\clog} \sum_{i \in [d]} \rbr{ L'_i - L_i}^2 x_i^2 +  \rbr{\gamma'_i  - \gamma_i}^2 \psi'(x_i)^2 x_i^2 \\
& \leq \frac{2}{32} + \frac{2}{32} = \frac{1}{8},
\end{align*}
where the second step follows from the fact that $\rbr{x+y}^2\leq 2\rbr{x^2+y^2}$ for any $x,y$; the second step uses \pref{eq:lower_bound_Clog}. 
Plugging this inequality back to \pref{eq:multi_relation_lemma_eq_1} finishes the proof. 
\end{proof}

In what follows, we show three applications of \pref{lem:general_multi_relation} for the $\beta$-Tsallis entropy (\pref{lem:app_multi_tsallis}), the Shannon entropy (\pref{lem:app_multi_shannon}), and the log-barrier (\pref{lem:app_multi_log}), respectively.

\begin{lemma}[Multiplicative Relation for $\beta$-Tsallis Entropy] \label{lem:app_multi_tsallis} For any $\beta\in(0,1)$, when using the $\beta$-Tsallis entropy regularizer with $\clog \geq \clogtsa$, $\alpha=\beta$, and $\theta = \sqrt{ \frac{1-\beta}{\beta} }$, \pref{alg:main_alg} guarantees
\begin{align*}
\frac{1}{2} p^t_i \leq \barp^{t+1}_i \leq 2 p^t_i, \quad { \frac{1}{2} p^t_i \leq p^{t+1}_i \leq 2 p^t_i }, 
\end{align*}
for all $t\in[T]$ and arm $i\in[K]$.
\end{lemma}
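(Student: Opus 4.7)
The plan is to invoke the general \pref{lem:general_multi_relation} twice — once with $(x,y)=(p^t,\barp^{t+1})$ and once with $(x,y)=(p^t,p^{t+1})$. Per coordinate, the $\beta$-Tsallis regularizer corresponds to $\psi(z)=-z^\beta/(1-\beta)$, so $\psi''(z)=\beta z^{\beta-2}$. For $z\in[x/2,2x]$ one has $\psi''(z)/\psi''(x)=(z/x)^{\beta-2}\in[2^{\beta-2},2^{2-\beta}]\subseteq[1/4,4]$ for any $\beta\in(0,1)$, so the shape hypothesis of \pref{lem:general_multi_relation} holds.

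I would then cast the two desired pairs of minimizers into the template of the general lemma. For the first relation, pick $\Omega=\{p\in\Omega_K : \sum_{i\in U}p_i=\sum_{i\in U}p^t_i\}$, $L=\sum_{\tau<t}\hatl^\tau$, $L'=\sum_{\tau\le t}\hatl^\tau$, $\gamma=\gamma^t$, and $\gamma'=\gamma^{t+1}$; since $\phi^{t+1}$ decomposes additively across $U$ and $V$, minimizing the joint objective on $\Omega$ is equivalent to the two separate minimizations in \pref{eq:pbar_def} and therefore recovers $\barp^{t+1}$. The key observation is that $p^t$ is the unique minimizer of $F$ over the larger set $\Omega_K$ and trivially lies in $\Omega$, so it is also the constrained minimizer over $\Omega$, matching the role of $x$ in the general lemma. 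For the second relation, take $\Omega=\Omega_K$ with the same $L$, $L'$, $\gamma$, and $\gamma'$, so that $y=p^{t+1}$ by definition.

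The main obstacle is verifying the quantitative bound \pref{eq:lower_bound_Clog} on $\clog$. The loss increment is supported on the sampled arm, so $32\sum_i(L'_i-L_i)^2x_i^2=32(\ell^t_{i^t})^2\le 32$. For the learning-rate term, I would combine the estimate \pref{eq:app_lr_prop_1}, $\gamma^{t+1}_i-\gamma^t_i\le\theta^2(\max\{p^t_i,1/T\})^{1-2\beta}/\gamma^{t+1}_i$, with the trivial lower bound $\gamma^{t+1}_i\ge\theta(\max\{p^t_i,1/T\})^{(1-2\beta)/2}$ obtained by keeping only the $t$-th summand inside the square root; with $\theta^2=(1-\beta)/\beta$ this gives $(\gamma^{t+1}_i-\gamma^t_i)^2\le\tfrac{1-\beta}{\beta}(\max\{p^t_i,1/T\})^{1-2\beta}$. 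Multiplying by $(\psi'(p^t_i))^2(p^t_i)^2=\beta^2(p^t_i)^{2\beta}/(1-\beta)^2$ leaves a per-coordinate contribution of $\tfrac{\beta}{1-\beta}(\max\{p^t_i,1/T\})^{1-2\beta}(p^t_i)^{2\beta}$.

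A short case split on whether $p^t_i\ge 1/T$ — with $(p^t_i)^{2\beta}\le(1/T)^{2\beta}$ in the $p^t_i<1/T$ branch cancelling $(1/T)^{1-2\beta}$ no matter the sign of $1-2\beta$ — bounds this quantity by $\tfrac{\beta}{1-\beta}\max\{p^t_i,1/T\}$. Summing over $i$ and using $\sum_i\max\{p^t_i,1/T\}\le 1+K/T\le 2$ (since $T\ge K$) produces $32\sum_i(\gamma^{t+1}_i-\gamma^t_i)^2(\psi'(p^t_i))^2(p^t_i)^2 = O(\beta/(1-\beta))$. All three terms appearing on the right-hand side of \pref{eq:lower_bound_Clog} are therefore dominated by the prescribed value $\clog=\clogtsa$, and invoking \pref{lem:general_multi_relation} delivers both halves of the claim. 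The analogous argument for $(p^t,p^{t+1})$ goes through verbatim since the loss shift and learning-rate increment are identical, only the feasible set changes.
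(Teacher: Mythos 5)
Your proposal is correct and takes essentially the same route as the paper: both invoke \pref{lem:general_multi_relation} with $L=\sum_{\tau<t}\hatl^{\tau}$, $L'=\sum_{\tau\le t}\hatl^{\tau}$, $\gamma=\gamma^t$, $\gamma'=\gamma^{t+1}$, and verify \pref{eq:lower_bound_Clog} through \pref{eq:app_lr_prop_1}, arriving at the same bounds ($32$ for the loss term and $\order(\beta/(1-\beta))$ for the learning-rate term, using $\sum_i \max\{p^t_i,\nicefrac{1}{T}\}\le 2$). The only, harmless, deviation is that for the $\barp^{t+1}$ relation you apply the lemma once on $\Omega=\{p\in\Omega_K:\sum_{i\in U}p_i=\sum_{i\in U}p^t_i\}$, using separability of the objective to identify $y=\barp^{t+1}$ and the fact that $p^t$ minimizes over all of $\Omega_K$ to identify $x=p^t$, whereas the paper applies the lemma block-wise to $U$ and $V$; the verification of the $\clog$ condition is identical in both versions.
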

\begin{proof} 
We first apply \pref{lem:general_multi_relation} to show $ \frac{1}{2} p^t_i \leq p^{t+1}_i \leq 2 p^t_i$. In particular, we set $\Omega$ as a probability simplex, $L =\sum_{\tau <t} \loss^{\tau}_i$, $L' =\sum_{\tau \leq t} \loss^{\tau}_i$, $\gamma =\gamma^t$, and $\gamma'=\gamma^{t+1}$ where the loss estimators and learning rates are defined in \pref{alg:main_alg}. These choices naturally give $x=p^t$ and $y=p^{t+1}$.
Then, we only need to check that choosing $\clog = \clogtsa$ can guarantee that \pref{eq:lower_bound_Clog} holds.
Clearly, we have $\clog \geq 9$. From the definition of $\hatl^t_i$, one can show 
\begin{align*}
32 \sum_{i \in [K]} \rbr{ \hatl^t_i }^2 \rbr{p^t_i}^2 \leq 32 \sum_{i \in [K]} \frac{\Ind{i^t=i} }{ \rbr{p^t_i}^2 } \rbr{p^t_i}^2 =32 \sum_{i \in [K]}\Ind{i^t=i}= 32 .
\end{align*}

Finally, by \pref{eq:app_lr_prop_1}, we have 
\begin{align*}
& 32 \sum_{i \in [K]} \rbr{\gamma^{t+1}_i  - \gamma^t_i}^2 \rbr{p^t_i}^{2} \rbr{ \psi'(p^t_i) }^2 \\ 
& = 32 \sum_{i \in [K]}  \rbr{\gamma^{t+1}_i  - \gamma^t_i}^2 \rbr{p^t_i}^{2}\rbr{ \frac{\beta\rbr{p^t_i}^{\beta-1}}{1-\beta} }^{2}  \\
& \leq 32 \sum_{i \in [K]}  \frac{\beta^2}{ \rbr{1-\beta}^2 } \cdot \frac{ \rbr{1-\beta} }{ \beta } \cdot \frac{ \rbr{\max\cbr{ p^t_i, \nicefrac{1}{T} }}^{2-4\beta}  }{  1 + \sum_{k=1}^t \rbr{\max\cbr{ p^k_i, \nicefrac{1}{T} }}^{1-2\beta} }  \rbr{p^t_i}^{2\beta} \\
& \leq 32 \sum_{i \in [K]}  \frac{\beta}{ \rbr{1-\beta} }  \rbr{\max\cbr{ p^t_i, \nicefrac{1}{T} }}^{1-2\beta}  \rbr{\max\cbr{ p^t_i, \nicefrac{1}{T} }}^{2\beta} \\
& \leq 32 \sum_{i \in [K]}  \frac{\beta}{ \rbr{1-\beta} }  \rbr{ p^t_i + \frac{1}{T} } \\
& \leq 32 \cdot \frac{\beta}{1-\beta} \rbr{ \sum_{i \in [K]} p^t_i + \frac{K}{T} } \leq 64 \cdot \frac{\beta}{1-\beta}, 
\end{align*}
where the third step uses the inequality that $\max\cbr{ p^t_i, \nicefrac{1}{T} } \leq p^t_i + \frac{1}{T}$.

To prove $\frac{1}{2} p^t_i \leq \barp^{t+1}_i \leq 2 p^t_i$ for all arms $i \in [K]$, we show  $\frac{1}{2} p^t_i \leq \barp^{t+1}_i \leq 2 p^t_i$ for all arms $i \in U$ and all arms $i \in V$, respectively. Since the proof ideas of both are the same, we only show the one related to set $U$. We set $\Omega= \{x \in \fR^K_{\geq 0}: \sum_{i \in U} x_i = \sum_{i \in U} p^t_i ,  \sum_{i \in V} x_i = 0 \}$ and maintain all settings the same as the above. Note that, \pref{eq:lower_bound_Clog} only depends on loss estimators and learning rate schedules, and thus, it holds for new decision space $\Omega$. Repeating a similar argument, we can obtain the multiplicative relation related to arms in $V$.
\end{proof}

\begin{lemma}[Multiplicative Relation for Shannon Entropy] \label{lem:app_multi_shannon} When using the Shannon entropy regularizer with $\clog \geq \clogsha$, $\alpha=1$, and $\theta = \sqrt{ \frac{1}{\log T} }$, \pref{alg:main_alg} guarantees
\begin{align*}
\frac{1}{2} p^t_i \leq \barp^{t+1}_i \leq 2 p^t_i, \quad { \frac{1}{2} p^t_i \leq p^{t+1}_i \leq 2 p^t_i },
\end{align*}
for all $t\in[T]$ and arm $i\in[K]$.
\end{lemma}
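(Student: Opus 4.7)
The plan is to mirror the proof of \pref{lem:app_multi_tsallis} by invoking \pref{lem:general_multi_relation} with $\psi(x) = x\log(x/e)$, so that $\psi'(x) = \log x$ and $\psi''(x) = 1/x$. First I would check that $\psi$ satisfies the Hessian-stability hypothesis of \pref{lem:general_multi_relation}: for $z \in [x/2, 2x]$, $\psi''(z) = 1/z \in [1/(2x), 2/x]$, which clearly lies in $[\psi''(x)/4, 4\psi''(x)]$. Then, exactly as in the Tsallis case, I would instantiate the lemma twice: once with $\Omega$ being the simplex (giving $x=p^t, y=p^{t+1}$) and once on each of the restricted domains that define $\barp^{t+1}_U$ and $\barp^{t+1}_V$ in \pref{eq:pbar_def}. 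Since these restricted problems use the same losses and learning rates as the unrestricted one, the verification of \pref{eq:lower_bound_Clog} is identical in all cases.

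The three terms in \pref{eq:lower_bound_Clog} need to be checked for $\clog = 162 \log K$. The condition $\clog \geq 9$ is immediate, and the loss-term $32\sum_i (\hatl^t_i)^2 (p^t_i)^2 \leq 32$ is bounded verbatim as in the Tsallis case (the estimator is supported on a single arm). The real work is controlling
\[
32 \sum_{i\in[K]} \rbr{\gamma^{t+1}_i - \gamma^t_i}^2 \rbr{\log p^t_i}^2 \rbr{p^t_i}^2.
\]
Using the telescoping identity $(\gamma^{t+1}_i)^2 - (\gamma^t_i)^2 = \theta^2 \max\{p^t_i,\nicefrac{1}{T}\}^{-1}$ together with $(a-b)^2 \leq a^2 - b^2$ for $0\leq b\leq a$ (or equivalently \pref{eq:app_lr_prop_1} combined with $\gamma^{t+1}_i \geq \theta\max\{p^t_i,\nicefrac{1}{T}\}^{-1/2}$), I get $(\gamma^{t+1}_i - \gamma^t_i)^2 \leq \theta^2 \max\{p^t_i,\nicefrac{1}{T}\}^{-1}$. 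Since $(p^t_i)^2/\max\{p^t_i,\nicefrac{1}{T}\} \leq p^t_i$ in both cases (noting $T p^t_i \leq 1$ when $p^t_i < 1/T$), the sum is bounded by $32\theta^2 \sum_i p^t_i (\log p^t_i)^2$.

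The main obstacle is therefore to bound $\sum_i p^t_i (\log p^t_i)^2$, since unlike the Tsallis case there is no built-in simplification. I would handle this by splitting indices based on the size of $p^t_i$: for $i$ with $p^t_i \geq 1/K$, use $(\log p^t_i)^2 \leq \log^2 K$ and $\sum p^t_i \leq 1$, giving a contribution of at most $\log^2 K$; for $i$ with $p^t_i < 1/K$, use that $g(p) := p(\log p)^2$ is increasing on $(0, e^{-2}]$ (an easy derivative computation), so $g(p^t_i) \leq g(1/K) = (\log K)^2/K$, and summing over at most $K$ such indices yields another $\log^2 K$. Thus $\sum_i p^t_i (\log p^t_i)^2 \leq 2\log^2 K$. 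Plugging in $\theta^2 = 1/\log T$ and using $T \geq K$ to get $\log^2 K/\log T \leq \log K$, the whole quantity is bounded by $64 \log K \leq 162 \log K = \clog$, completing the verification. Applying \pref{lem:general_multi_relation} then yields \pref{eq:multi_relation_lemma_stat} for all three choices of $y$.
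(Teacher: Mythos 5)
Your proposal is correct and follows essentially the same route as the paper: it invokes \pref{lem:general_multi_relation} with $\psi(x)=x\log(x/e)$ on the simplex and on the restricted domains defining $\barp^{t+1}$, and the only substantive work is the term $32\sum_i(\gamma^{t+1}_i-\gamma^t_i)^2(p^t_i\log p^t_i)^2$, which the paper bounds by the same $64\log K$ via trading one logarithmic factor against $\theta^2=1/\log T$ and applying Jensen to $-x\log x$, while you instead bound $\sum_i p^t_i(\log p^t_i)^2$ by $O(\log^2 K)$ directly and then divide by $\log T$ --- a cosmetic difference. One minor slip: the monotonicity of $g(p)=p\log^2 p$ only holds on $(0,e^{-2}]$, so the step $g(p^t_i)\le g(1/K)$ for $p^t_i<1/K$ requires $K\ge e^{2}$; for the handful of smaller $K$ you should instead use the global bound $g(p)\le 4e^{-2}$ (or treat those few arms directly), which still leaves the required inequality $32\theta^2\sum_i p^t_i(\log p^t_i)^2\le \clogsha$ with plenty of room, so the conclusion is unaffected.
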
 

\begin{proof} We consider applying \pref{lem:general_multi_relation} to prove the multiplicative relation between $p^t$ and $p^{t+1}$ with similar setups in the proof \pref{lem:app_multi_tsallis}. For the Shannon entropy regularizer in \pref{alg:main_alg}, we have $\psi(x) = x\log \rbr{\frac{x}{e}}$ and $\psi'(x) = \log(x)$.

By the same argument, we know that $\clogsha \geq 32 \geq 32 \sum_{i \in [K]} \rbr{ \hatl^t_i }^2 \rbr{p^t_i}^2$ and $\clogsha \geq 9$. Then, we only need to verify that $\clogsha \geq 32 \sum_{i \in [K]} \rbr{\gamma^{t+1}_i  - \gamma^{t}_i}^2 \psi'(p^t_i)^2 \rbr{p^t_i}^2 $ as:
\begin{align*}
& 32 \sum_{i \in [K]}  \rbr{\gamma^{t+1}_i  - \gamma^t_i}^2 \rbr{p^t_i \log p^t_i }^{2}  \\
& \leq 32 \sum_{i \in [K]}  \frac{1}{{\log T}}\frac{ \rbr{\max\cbr{ p^t_i, \nicefrac{1}{T} }}^{-2}  }{  1 + \sum_{k=1}^t \rbr{\max\cbr{ p^k_i, \nicefrac{1}{T} }}^{-1} }  \rbr{p^t_i \log p^t_i }^{2} \\
& \leq 32 \sum_{i \in [K]} \frac{1}{{\log T}} \rbr{\max\cbr{ p^t_i, \nicefrac{1}{T} }}^{-1}  \rbr{{\max\cbr{ p^t_i, \nicefrac{1}{T} }} \log \rbr{\max\cbr{ p^t_i, \nicefrac{1}{T} }} }^{2} \\
& \leq 32 \sum_{i \in [K]} \frac{1}{{\log T}} \max\cbr{ p^t_i, \nicefrac{1}{T} } \rbr{ \log\rbr{\max\cbr{ p^t_i, \nicefrac{1}{T} }} }^2 \\
& \leq 32 \sum_{i \in [K]} -\rbr{\max\cbr{ p^t_i, \nicefrac{1}{T} }} \log \rbr{\max\cbr{ p^t_i, \nicefrac{1}{T} }}\\
& \leq 32 K \cdot \rbr{ - \frac{ \sum_{i \in [K] } \max\cbr{ p^t_i, \nicefrac{1}{T} }}{K}\log \rbr{ \frac{ \sum_{i \in [K] }\max\cbr{ p^t_i, \nicefrac{1}{T} } }{K}  }   } \\
& \leq 64\log K,
\end{align*}
where the first step follows from \pref{eq:app_lr_prop_1} that $\gamma^{t+1}_i - \gamma^t_i \leq \frac{1}{\log T} \frac{\max\cbr{ p^t_i, \nicefrac{1}{T} }^{-1}}{\gamma^{t+1}_i}$; the second step follwos from the fact that $0\geq p^t_i \log \rbr{p^t_i} \geq {\max\cbr{ p^t_i, \nicefrac{1}{T} }} \log \rbr{\max\cbr{ p^t_i, \nicefrac{1}{T} }} $ as $x \log x$ is monotonically decreasing in $[0,\nicefrac{1}{T}]$; the forth step follows from the fact that $ 0\leq -\log \rbr{ \max\cbr{ p^t_i, \nicefrac{1}{T} }  } \leq \log T$; the fifth step utilizes the concavity of $-x\log x$.

Finally, following the same steps in \pref{lem:app_multi_tsallis} finishes the proof of the multiplicative relation between $\barp^{t+1}$ and $p^t$. 
\end{proof}

\begin{lemma}[Multiplicative Relation for Log-barrier] \label{lem:app_multi_log} When using the log-barrier regularizer with $\clog \geq \cloglog$, $\alpha=0$ and $\theta = \sqrt{ \frac{1}{\log T} }$, \pref{alg:main_alg} guarantees
\begin{align*}
\frac{1}{2} p^t_i \leq \barp^{t+1}_i \leq 2 p^t_i, \quad { \frac{1}{2} p^t_i \leq p^{t+1}_i \leq 2 p^t_i }, 
\end{align*}
for all $t\in[T]$ and arm $i\in[K]$.
\end{lemma}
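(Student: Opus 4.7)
The plan is to mirror the structure of the proofs of \pref{lem:app_multi_tsallis} and \pref{lem:app_multi_shannon} by applying the general result of \pref{lem:general_multi_relation}. First I would prove the multiplicative relation between $p^t$ and $p^{t+1}$ by instantiating \pref{lem:general_multi_relation} with $\Omega$ equal to the $K$-simplex, $L=\sum_{\tau<t}\hatl^\tau$, $L'=\sum_{\tau\leq t}\hatl^\tau$, $\gamma=\gamma^t$, $\gamma'=\gamma^{t+1}$, and $\psi(x)=-\log x$ (so $\psi'(x)=-1/x$ and $\psi''(x)=1/x^2$), which gives $x=p^t$ and $y=p^{t+1}$. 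I would begin by noting that $\psi$ satisfies the required Hessian stability $\psi''(z)/4\leq\psi''(x)\leq 4\psi''(z)$ on $z\in[x/2,2x]$, since $1/z^2\in[1/(4x^2),4/x^2]$.

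Then the task reduces to verifying the three conditions in \pref{eq:lower_bound_Clog} with $\clog=\cloglog$. The bound $\clog\geq 9$ is immediate. For the second, the unbiased importance-weighted estimator satisfies $\sum_i(\hatl^t_i)^2(p^t_i)^2=\sum_i\Ind{i^t=i}(\loss^t_i)^2\leq 1$, so $32\sum_i(\hatl^t_i)^2(p^t_i)^2\leq 32\leq\clog$, exactly as in the Tsallis case. The third condition is the only one needing a small calculation specific to log-barrier. Since $\psi'(p^t_i)^2(p^t_i)^2=1$, and since \pref{eq:app_lr_prop_1} with $\alpha=0$ yields $\gamma^{t+1}_i-\gamma^t_i\leq \theta^2\max\{p^t_i,\nicefrac{1}{T}\}/\gamma^{t+1}_i\leq \theta\max\{p^t_i,\nicefrac{1}{T}\}$ (because $\gamma^{t+1}_i\geq\theta$), I would bound
\[
32\sum_{i\in[K]}(\gamma^{t+1}_i-\gamma^t_i)^2
\leq 32\sum_{i\in[K]}\theta^2\max\{p^t_i,\nicefrac{1}{T}\}^2
\leq\frac{32}{\log T}\rbr{\sum_{i\in[K]}p^t_i+\frac{K}{T}}
\leq\frac{64}{\log T}\leq\clog.
\]
All three requirements of \pref{lem:general_multi_relation} are therefore met, and one concludes $\tfrac{1}{2}p^t_i\leq p^{t+1}_i\leq 2p^t_i$ for every $i\in[K]$.

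For the relation $\tfrac{1}{2}p^t_i\leq\barp^{t+1}_i\leq 2p^t_i$, I would rerun the same invocation of \pref{lem:general_multi_relation} twice, once with $\Omega=\{x\in\fR^K_{\geq 0}:\sum_{i\in U}x_i=\sum_{i\in U}p^t_i,\;\sum_{i\in V}x_i=0\}$ so that the minimizer is $\barp^{t+1}_U$, and once with the roles of $U$ and $V$ swapped so that the minimizer is $\barp^{t+1}_V$. Since the three conditions in \pref{eq:lower_bound_Clog} depend only on $L'-L=\hatl^t$, $\gamma'-\gamma$, and $\psi'(p^t_i)p^t_i$ and not on the feasible set, the calculations above go through verbatim, giving the desired multiplicative relation on each block, and hence on $[K]$.

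The only nontrivial step is the third condition, and the proposal above dispatches it cleanly through \pref{eq:app_lr_prop_1}; no real obstacle is expected, and the choice $\clog=\cloglog$ is designed precisely so that $\cloglog\geq\max\{9,32,\nicefrac{64}{\log T}\}$ regardless of $T$.
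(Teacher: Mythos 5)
Your proposal is correct and follows essentially the same route as the paper: invoke \pref{lem:general_multi_relation} and verify condition \eqref{eq:lower_bound_Clog}, where the first two requirements are identical to the Tsallis case and the third reduces (since $\psi'(p^t_i)^2(p^t_i)^2=1$ for log-barrier) to bounding $32\sum_i(\gamma^{t+1}_i-\gamma^t_i)^2\leq \order(1/\log T)\leq\clog$ via \pref{eq:app_lr_prop_1}, with $\barp^{t+1}$ handled by rerunning the lemma on the $U$- and $V$-restricted feasible sets exactly as in \pref{lem:app_multi_tsallis}. The only cosmetic difference is that you drop the denominator via $\gamma^{t+1}_i\geq\theta$ while the paper keeps $1+\sum_{k\leq t}\max\{p^k_i,\nicefrac{1}{T}\}$ and bounds it below by $1$; both yield the same $\nicefrac{64}{\log T}$ bound.
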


\begin{proof} Similarly, we only need to verify that $\clogsha \geq 32 \sum_{i \in [K]} \rbr{\gamma^{t+1}_i  - \gamma^t_i}^2 \psi'(p^t_i)^2 (p^t_i)^2 $ as:
\begin{align*}
&  32 \sum_{i \in [K]}  \rbr{\gamma^{t+1}_i  - \gamma^t_i}^2 \rbr{p^t_i }^{2}  \psi'(p^t_i)^{2}  \\
& = 32 \sum_{i \in [K]}  \rbr{\gamma^{t+1}_i  - \gamma^t_i}^2 \rbr{p^t_i }^{2} \rbr{ \frac{1}{p^t_i} }^{2}  \\
& \leq 32 \sum_{i \in [K]}  \frac{1}{\rbr{\log T}}\frac{ \rbr{\max\cbr{ p^t_i, \nicefrac{1}{T} }}  }{  1 + \sum_{k=1}^t \rbr{\max\cbr{ p^k_i, \nicefrac{1}{T} }} } \\
& \leq \frac{32}{\log T} \sum_{i \in [K]} \rbr{\max\cbr{ p^t_i, \nicefrac{1}{T} }}\\
& \leq \frac{32}{\log T} \rbr{ \sum_{i \in [K]} p^t_i + \frac{K}{T} } \leq 64, 
\end{align*}
where the first step follows from the definition of the log-barrier regularizer that $\psi'(p^t_i) = -\frac{1}{p^t_i}$; the second step applies \pref{eq:app_lr_prop_1}.
\end{proof}

\begin{lemma}\label{lem:app_multi_nolrchange} With $\clog\geq 162$, \pref{alg:main_alg} guarantees $\frac{1}{2} p^t_i \leq z_i \leq 2 p^t_i$ for any arm $i\in[K]$ and round $t\in[T]$
where $z\in \fR^{K}_{\geq 0}$ is defined as 
\begin{align*}
z = \argmin_{\substack{ x_i = 0, \forall i\in U \\ x_i \geq 0, \forall i\in V \\ \sum_{i\in V} x_i = \sum_{i \in V}p^t_i  }} \inner{\sum_{\tau\leq t} \hatl^\tau_V, x } + \phi^t_V(x).
\end{align*}
\end{lemma}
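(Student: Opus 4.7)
The plan is to reduce this claim to the already-established general multiplicative relation in \pref{lem:general_multi_relation}, via an intermediate point. Specifically, I would introduce
\[
\tilde p = \argmin_{\substack{ x_i = 0,\ \forall i\in U \\ x_i \geq 0,\ \forall i\in V \\ \sum_{i\in V} x_i = \sum_{i \in V}p^t_i }} \inner{\sum_{\tau<t} \hatl^\tau_V,\, x } + \phi^t_V(x),
\]
which has the same form as $z$ except that the loss estimator $\hatl^t$ is removed. The hope is that $\tilde p$ coincides with $p^t$ on the coordinates in $V$, after which \pref{lem:general_multi_relation} handles the remaining difference between $\tilde p$ and $z$ (which is only the added loss $\hatl^t$).

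First, I would argue that $\tilde p_V = p^t_V$ via KKT. The optimality condition for $p^t$ gives a Lagrange multiplier $\lambda \in \mathbb{R}$ with $[\nabla \phi^t(p^t)]_i + \sum_{\tau<t} \hatl^\tau_i = \lambda$ for every $i \in [K]$. Since $[\nabla \phi^t_V(x)]_i = [\nabla \phi^t(x)]_i$ for $i \in V$ (the $V$-restricted regularizer differs from $\phi^t$ only in $U$-coordinates), the same $\lambda$ satisfies the KKT condition of the $V$-restricted problem, and $p^t_V$ trivially meets the sum constraint $\sum_{i\in V}x_i = \sum_{i\in V}p^t_i$. By strict convexity (enforced by the log-barrier term in $\phi^t_V$), this minimizer is unique, giving $\tilde p_V = p^t_V$.

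Second, I would apply \pref{lem:general_multi_relation} in the effectively $|V|$-dimensional setting (with the $U$-coordinates frozen at $0$) to $\tilde p$ and $z$, with the assignment $L = \sum_{\tau<t}\hatl^\tau_V$, $L' = \sum_{\tau\leq t}\hatl^\tau_V$, and $\gamma = \gamma' = \gamma^t$. The conditions in \pref{eq:lower_bound_Clog} reduce to three checks: $\clog \geq 9$ (immediate from $\clog \geq 162$); the learning-rate term vanishes since $\gamma = \gamma'$; and the loss-difference term
\[
32 \sum_{i \in V} (L'_i - L_i)^2 \tilde p_i^2 \;=\; 32 \sum_{i \in V} (\hatl^t_i)^2 (p^t_i)^2 \;=\; 32 \sum_{i \in V} \Ind{i^t = i}(\ell^t_i)^2 \;\leq\; 32,
\]
using $\ell^t_i \in [0,1]$ and that at most one arm is pulled. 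Since $32 \leq 162 \leq \clog$, the hypotheses hold, and we conclude $\tfrac{1}{2}\tilde p_i \leq z_i \leq 2\tilde p_i$ for $i \in V$, i.e., $\tfrac{1}{2} p^t_i \leq z_i \leq 2 p^t_i$ for $i \in V$.

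The only nontrivial step is the minor adaptation of \pref{lem:general_multi_relation}, which is stated for a domain $\Omega \subseteq \mathbb{R}_{>0}^d$, to our equality-constrained domain with some coordinates pinned to $0$. However, restricting attention to $\mathbb{R}^{|V|}_{>0}$ (positivity is automatic from the log-barrier), the domain becomes $\{x_V : \sum_{i\in V} x_i = \sum_{i \in V} p^t_i\} \cap \mathbb{R}_{>0}^{|V|}$, which is convex, and the proof of \pref{lem:general_multi_relation} uses only the KKT conditions and the uniform quadratic-form domination by the log-barrier Hessian; both carry over verbatim. The conditions on $\psi$ required by that lemma ($\psi''(z) \in [\psi''(x)/4, 4\psi''(x)]$ for $z \in [x/2, 2x]$) are already verified for log-barrier, Shannon, and $\beta$-Tsallis in the earlier multiplicative-relation lemmas.
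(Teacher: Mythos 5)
Your proposal is correct and follows essentially the same route as the paper: the paper also observes (via KKT) that $p^t_V$ is itself the minimizer of the $V$-restricted problem with cumulative loss $\sum_{\tau<t}\hatl^\tau_V$, and then applies \pref{lem:general_multi_relation} with $L=\sum_{\tau<t}\hatl^\tau_V$, $L'=\sum_{\tau\leq t}\hatl^\tau_V$, $\gamma=\gamma'=\gamma^t$, verifying only $\clog\geq 9$ and $32\sum_{i\in V}(\hatl^t_i)^2(p^t_i)^2\leq 32$ exactly as you do. Your extra remarks on adapting the domain of \pref{lem:general_multi_relation} to the $V$-restricted simplex are a harmless elaboration of what the paper passes over silently.
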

\begin{proof} Clearly, $p^t_V$ is the solution of the following optimization problem:
\begin{align*}
p^t_V = \argmin_{\substack{ x_i = 0, \forall i\in U \\ x_i \geq 0, \forall i\in V \\ \sum_{i\in V} x_i = \sum_{i \in V}p^t_i  }} \inner{\sum_{\tau< t} \hatl^\tau_V, x } + \phi^t_V(x),
\end{align*}
since $p^t_V$ satisfies all the KKT conditions. 

Therefore, we are able to apply \pref{lem:general_multi_relation} by setting $L = \sum_{\tau< t} \hatl^\tau_V$, $L'=\sum_{\tau\leq t} \hatl^\tau_V$,  $\Omega$ being the corresponding simplex, $\gamma = \gamma' =\gamma^t$ where the loss estimators and learning rates are defined in \pref{alg:main_alg}. 
As $\gamma' = \gamma$ and $\clog=162>9$, we only need to verify that $\clog \geq 32\sum_{i \in V} \rbr{\hatl^t_i}^2 \rbr{p^t_i}^2$. By the definition of the importance weighted loss estimator $\hatl^t$, we have 
\begin{align*}
32\sum_{i \in V} \rbr{\hatl^t_i}^2 \rbr{p^t_i}^2 \leq 32\sum_{i \in V} \Ind{i^t = i} \leq 32 \leq 162,
\end{align*}
which finishes the proof.
\end{proof}

\subsection{Generalization of \cite[Lemma 21]{ito21a}}

In this section, we greatly generalize \cite[Lemma 21]{ito21a} which is critical to analyze the residual regret. The approach of \cite{ito21a} requires a closed form solution of the optimization problem from the $\ftrl$ framework, which is not always guaranteed. Our approach removes this constraint, and thus, it can be applied to $\beta$-Tsallis entropy regularizers when $\beta \neq \nicefrac{1}{2}$, and even more complicated regularizers such as those hybrid ones.

\begin{lemma} \label{lem:lemma_21_raw} 
Given $\distri \subseteq \fR$ and twice-differentiable functions $f,g:\distri \to \fR$, if their corresponding first-order derivatives $f',g'$ are invertible and the inverse functions $(f')^{-1},(g')^{-1}$ are differentiable and convex, then, for any $p,q \in \distri$ and $z \in \fR$ satisfying $g'(q) = f'(p) - z$, we have
\begin{align}
& q \geq p + \frac{1}{g''(p)} \cdot \rbr{f'(p) - g'(p)  - z}, \label{eq:onedim_multiplier_upper_bound} \\
& q \leq p + \frac{1}{g''(q)} \cdot \rbr{f'(p) - g'(p)  - z},\label{eq:onedim_multiplier_lower_bound} \\
& q \geq p + \frac{1}{f''(p)} \cdot \rbr{f'(q) - g'(q)  - z}, \label{eq:onedim_multiplier_upper_bound_ext} \\
& q \leq p + \frac{1}{f''(q)} \cdot \rbr{f'(q) - g'(q)  - z}.  
\label{eq:onedim_multiplier_lower_bound_ext} 
\end{align}
\end{lemma}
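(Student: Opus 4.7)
The plan is to use the defining relation $g'(q) = f'(p) - z$ in two symmetric ways: first, to view $q$ as the image of $f'(p) - z$ under $h \defeq (g')^{-1}$, and second, to view $p$ as the image of $g'(q) + z$ under $k \defeq (f')^{-1}$. In each case, two applications of the standard first-order convexity inequality (once anchored at the $p$-side point, once at the $q$-side point) will sandwich the difference $q - p$ between two affine estimates in the quantity $f'(p) - g'(p) - z$ (respectively $f'(q) - g'(q) - z$), and the inverse-function identity $((\cdot)')^{-1}{}'(w) = 1/(\cdot)''((\cdot)'^{-1}(w))$ will convert the slopes into $1/g''$ (respectively $1/f''$) evaluated at either $p$ or $q$.

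In detail, for \pref{eq:onedim_multiplier_upper_bound} and \pref{eq:onedim_multiplier_lower_bound} I would set $y \defeq f'(p) - z$ and $x \defeq g'(p)$, so that $h(y) = q$, $h(x) = p$, and $y - x = f'(p) - g'(p) - z$. By convexity of $h$,
\[
  h(y) \geq h(x) + h'(x)(y - x) \quad \text{and} \quad h(x) \geq h(y) + h'(y)(x - y),
\]
which after using $h'(g'(p)) = 1/g''(p)$ and $h'(f'(p) - z) = h'(g'(q)) = 1/g''(q)$ yields \pref{eq:onedim_multiplier_upper_bound} and \pref{eq:onedim_multiplier_lower_bound} respectively. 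For \pref{eq:onedim_multiplier_upper_bound_ext} and \pref{eq:onedim_multiplier_lower_bound_ext}, I would instead rewrite the hypothesis as $f'(p) = g'(q) + z$, giving $p = k(g'(q) + z)$ and $q = k(f'(q))$. Now with $y' \defeq g'(q) + z$ and $x' \defeq f'(q)$, the difference $y' - x' = -(f'(q) - g'(q) - z)$ carries the opposite sign, but the two convexity applications of $k$ at $x'$ and at $y'$, together with $k'(f'(u)) = 1/f''(u)$ for $u \in \cbr{p,q}$, again sandwich $p - q$ and rearrange to the two desired bounds.

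The argument is essentially mechanical and I do not anticipate any hard step. The only item that requires a bit of care is tracking the sign flip between $y - x$ in the first half and $y' - x'$ in the second half, which is exactly what determines whether we obtain $1/g''$ or $1/f''$ on the right-hand side of each inequality, and which anchor point ($p$ versus $q$) the denominator is evaluated at.
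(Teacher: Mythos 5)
Your proposal is correct and follows essentially the same route as the paper: both rest on the first-order convexity inequality for the inverse derivatives $(g')^{-1}$ and $(f')^{-1}$, anchored once at the $p$-side point and once at the $q$-side point, combined with the inverse function theorem to turn the slopes into $1/g''$ and $1/f''$. The only cosmetic difference is that the paper obtains \pref{eq:onedim_multiplier_upper_bound_ext} and \pref{eq:onedim_multiplier_lower_bound_ext} by a swap symmetry ($p\leftrightarrow q$, $f\leftrightarrow g$, $z\to -z$) rather than applying convexity of $(f')^{-1}$ directly as you do, which is an equivalent presentation.
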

\begin{proof}
We first show \pref{eq:onedim_multiplier_upper_bound}: 
\begin{align}
q & = \rbr{ g' }^{-1}   \rbr{ f'(p) - z } \notag \\
& = \rbr{  g' }^{-1}   \rbr{ g'(p) + \rbr{f'(p) - g'(p)}  - z } \notag \\
& \geq p +  \left. \sbr{ \rbr{\rbr{g'}^{-1}}'(x) }\right\rvert_{x=g'(p)} \cdot  \rbr{f'(p) - g'(p)  - z }  \notag \\
& = p + \frac{1}{g''(p)}\cdot \rbr{f'(p) - g'(p)  - z}, \notag 
\end{align}
where the third step follows from the convexity of $\rbr{g'}^{-1}$, which ensures $\rbr{g'}^{-1}(u) \geq \rbr{g'}^{-1}(v) + \rbr{\rbr{g'}^{-1}}'(v) \cdot (u-v)$ for any $u,v$ in the domain of $\rbr{g'}^{-1}$; the last step uses the inverse function theorem. 

On the other hand, we have 
\begin{align*}
p & = \rbr{ g' }^{-1}   \rbr{ g'(p)  } \notag \\
& = \rbr{  g' }^{-1}   \rbr{ g'(q) + g'(p) - g'(q) } \notag \\
& = \rbr{  g' }^{-1}   \rbr{ g'(q) + g'(p) - f'(p) + f'(p) - g'(q) } \notag \\
& = \rbr{  g' }^{-1}   \rbr{ g'(q) - \rbr{  f'(p) - g'(p) - z }} \notag \\
& \geq q -  \left. \sbr{ \rbr{\rbr{g'}^{-1}}'(x) }\right\rvert_{x=g'(q)} \cdot  \rbr{f'(p) - g'(p)  - z } \notag \\
& = q - \frac{1}{g''(q)}\cdot \rbr{f'(p) - g'(p) -z },
\end{align*}
where the forth step uses $g'(q) = f'(p)-z$; the fifth step follows from the convexity of $\rbr{g'}^{-1}$; the last step uses the inverse function theorem. Rearranging this inequality yields \pref{eq:onedim_multiplier_lower_bound}.


By swapping $p$ with $q$, $f$ with $g$, and flipping $z$ to $-z$, repeating the steps above yields that  
\begin{align*}
p & \geq q + \frac{1}{f''(q)}\cdot\rbr{g'(q) - f'(q) +z }, \\
p & \leq q + \frac{1}{f''(p)}\cdot \rbr{g'(q) - f'(q) + z}.
\end{align*}
Rearranging these inequalities finishes the proof of \pref{eq:onedim_multiplier_lower_bound_ext} and \pref{eq:onedim_multiplier_upper_bound_ext}, respectively. 
\end{proof}

In the following, we show an application of \pref{lem:lemma_21_raw} in the FTRL framework.

\begin{lemma} 
\label{lem:bounds_for_multiplier}
Let $\scI \subseteq [K]$ be an index set. Suppose $\nabla \phi^{t+1}_{\scI}(q) = \nabla \phi^{t}_{\scI}(p) - \hatl_\scI^{t} + c \cdot \one_{\scI}$ where $\phi^t$ is the regularizer with $\phi^t(x) = \sum_{i\in S}\gamma^t_i \psi(x_i)$, $\psi: \Omega \to \fR$ for $\Omega \subseteq \fR$ is a strictly convex function such that $\gamma^t_i \psi$ satisfies all conditions in \pref{lem:lemma_21_raw} for all $t,i$, $\hatl_{\scI}^{t}$ is the loss estimator, and $c \in \fR$. If $p,q$ satisfy $\sum_{i \in \scI} p_i = \sum_{i \in \scI}q_i$, then we have:
\begin{align}
& c \leq \rbr{ \sum_{i \in \scI} \frac{1}{\gamma^{t+1}_i\psi''(p_i)}  }^{-1}   \rbr{ \sum_{i \in \scI} \frac{  \rbr{\gamma^{t+1}_i - \gamma^{t}_i} \psi'(p_i) + \hatl^t_i }{\gamma^{t+1}_i\psi''(p_i)} },  \label{eq:multiplier_upper_bound} \\
& c \geq \rbr{ \sum_{i \in \scI} \frac{1}{\gamma^{t}_i\psi''(q_i)}  }^{-1}   \rbr{ \sum_{i \in \scI} \frac{  \rbr{\gamma^{t+1}_i - \gamma^{t}_i} \psi'(q_i) + \hatl^t_i }{\gamma^{t}_i\psi''(q_i)} }. \label{eq:multiplier_lower_bound} 
\end{align}

Consequently, for $\nabla \phi^{t}_{\scI}(q) = \nabla \phi^{t}_{\scI}(p) - \hatl_{\scI}^{t} + c \cdot \one_{\scI}$ where all definitions remain the same as above, we have 
\begin{align}
& c \leq \rbr{ \sum_{i \in \scI} \frac{1}{\gamma^{t}_i\psi''(p_i)}  }^{-1}   \rbr{ \sum_{i \in \scI} \frac{  \hatl^t_i }{\gamma^{t}_i\psi''(p_i)} },  \label{eq:multiplier_upper_bound_nonskew} \\
& c \geq \rbr{ \sum_{i \in \scI} \frac{1}{\gamma^{t}_i\psi''(q_i)}  }^{-1}   \rbr{ \sum_{i \in \scI} \frac{  \hatl^t_i }{\gamma^{t}_i\psi''(q_i)} }. \label{eq:multiplier_lower_bound_nonskew} 
\end{align}
\end{lemma}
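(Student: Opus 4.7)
The plan is to reduce the multidimensional statement to coordinate-wise applications of \pref{lem:lemma_21_raw}, then sum and use the constraint $\sum_{i \in \scI} p_i = \sum_{i \in \scI} q_i$ to isolate $c$. Since $\phi^t$ and $\phi^{t+1}$ are separable, the given identity $\nabla \phi^{t+1}_\scI(q) = \nabla \phi^t_\scI(p) - \hatl^t_\scI + c \cdot \one_\scI$ reads, per coordinate $i \in \scI$, as $(\gamma^{t+1}_i \psi)'(q_i) = (\gamma^t_i \psi)'(p_i) - (\hatl^t_i - c)$. This matches the hypothesis of \pref{lem:lemma_21_raw} with $f = \gamma^t_i \psi$, $g = \gamma^{t+1}_i \psi$, and the role of $z$ played by $\hatl^t_i - c$.

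For the upper bound \pref{eq:multiplier_upper_bound}, I would invoke \pref{eq:onedim_multiplier_upper_bound} coordinate-wise, which yields
\[
q_i \geq p_i - \frac{(\gamma^{t+1}_i - \gamma^t_i)\psi'(p_i) + \hatl^t_i - c}{\gamma^{t+1}_i \psi''(p_i)}.
\]
Summing over $i \in \scI$ and using $\sum_{i \in \scI}(q_i - p_i) = 0$ gives
\[
c \sum_{i \in \scI} \frac{1}{\gamma^{t+1}_i \psi''(p_i)} \leq \sum_{i \in \scI} \frac{(\gamma^{t+1}_i - \gamma^t_i)\psi'(p_i) + \hatl^t_i}{\gamma^{t+1}_i \psi''(p_i)}.
\]
Dividing through (which is legal since $\psi'' > 0$ by strict convexity and $\gamma^{t+1}_i > 0$) produces \pref{eq:multiplier_upper_bound}.

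For the lower bound \pref{eq:multiplier_lower_bound}, I would instead invoke \pref{eq:onedim_multiplier_lower_bound_ext} coordinate-wise. That inequality translates, via the same identification of $f, g, z$, into
\[
q_i \leq p_i - \frac{(\gamma^{t+1}_i - \gamma^t_i)\psi'(q_i) + \hatl^t_i - c}{\gamma^{t}_i \psi''(q_i)},
\]
and summing and rearranging as before reverses the direction of the inequality and yields \pref{eq:multiplier_lower_bound}. Finally, the non-skewed conclusions \pref{eq:multiplier_upper_bound_nonskew} and \pref{eq:multiplier_lower_bound_nonskew} drop out by specializing $\gamma^{t+1}_i = \gamma^t_i$ in the two displayed bounds, which kills the $(\gamma^{t+1}_i - \gamma^t_i)\psi'(\cdot)$ terms.

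There is essentially no ``hard part'' here: the work is algebraic bookkeeping, and the only subtlety is verifying that the hypotheses of \pref{lem:lemma_21_raw} (invertibility of $(\gamma^t_i \psi)'$, convexity and differentiability of its inverse) transfer from $\psi$ to $\gamma^t_i \psi$, which is immediate since $\gamma^t_i > 0$ is a positive scaling. The only place care is needed is confirming the normalization step (dividing by $\sum_i 1/(\gamma^{t+1}_i \psi''(p_i))$ or its analogue with $q$) is valid, which follows from strict convexity of $\psi$.
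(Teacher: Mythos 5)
Your proposal is correct and follows essentially the same route as the paper: apply \pref{eq:onedim_multiplier_upper_bound} and \pref{eq:onedim_multiplier_lower_bound_ext} of \pref{lem:lemma_21_raw} coordinate-wise with $f=\gamma^t_i\psi$, $g=\gamma^{t+1}_i\psi$, $z=\hatl^t_i-c$, sum over $i\in\scI$ using $\sum_{i\in\scI}p_i=\sum_{i\in\scI}q_i$, and rearrange, with the non-skewed bounds obtained by taking the two regularizers equal. Your intermediate inequality for the lower bound, with $(\gamma^t_i-\gamma^{t+1}_i)\psi'(q_i)$ in the numerator, is in fact the sign-correct version (the paper's displayed \pref{eq:diffbetween_q_p_smaller} has an apparent sign typo), and it leads to exactly \pref{eq:multiplier_lower_bound}.
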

\begin{proof}
By directly applying \pref{eq:onedim_multiplier_upper_bound} in \pref{lem:lemma_21_raw} with $f(\cdot)=\gamma^{t}_i \psi(\cdot)$, $g(\cdot)=\gamma^{t+1}_i \psi(\cdot)$, $p=p_i$, $q=q_i$, and $z=\hatl^t_i-c$, we have
\begin{align} \label{eq:diffbetween_q_p_larger}
    q_i - p_i \geq \frac{ \rbr{\gamma^{t}_i -\gamma^{t+1}_i } \psi'(p_i)   - \hatl^t_i + c }{\gamma^{t+1}_i \psi''(p_i)  } .
\end{align}

Since $\sum_{i \in \scI} p_i = \sum_{i \in \scI}q_i$, we sum \pref{eq:diffbetween_q_p_larger} over all $i \in \scI$ and then rearrange to get \pref{eq:multiplier_upper_bound}. On the other hand, by applying \pref{eq:onedim_multiplier_lower_bound_ext} in \pref{lem:lemma_21_raw} with the same choices, we have
\begin{align} \label{eq:diffbetween_q_p_smaller}
    q_i - p_i \leq \frac{ \rbr{\gamma^{t+1}_i -\gamma^{t}_i } \psi'(q_i)   - \hatl^t_i + c }{\gamma^{t}_i \psi''(q_i)  } .
\end{align}

We again sum \pref{eq:diffbetween_q_p_smaller} over all $i \in \scI$ and then rearrange to get \pref{eq:multiplier_lower_bound}. 
\end{proof}


\pref{lem:bounds_for_multiplier} is applicable to the analysis of the Decoupled-Tsallis-INF algorithm.
For the analysis of the MAB algorithms, since we add some extra log-barrier to the regularizers, we will need the following similar results.

\begin{lemma} 
Under the same setup of \pref{lem:bounds_for_multiplier} and let $\theta \in \fR_{>0}$ be given, if the regularizer is $\phi^t(x) = \sum_{i \in S} \gamma^t_i\psi(x_i) + \theta r(x_i)$ where $\psi\rbr{\cdot }, r\rbr{\cdot}$ are strictly convex and twice differentiable, $\gamma^t_i\psi\rbr{\cdot }+\theta r\rbr{\cdot}$ satisfies all conditions in \pref{lem:lemma_21_raw} for all $t,i$, we have 
\begin{align*}
& c \leq \rbr{ \sum_{i \in \scI} \frac{1}{\gamma^{t+1}_i\psi''(p_i) + \theta r''(p_i)}  }^{-1}   \rbr{ \sum_{i \in \scI} \frac{  \rbr{\gamma^{t+1} - \gamma^{t}} \psi'(p_i) + \hatl^t_i }{\gamma^{t+1}_i\psi''(p_i) + \theta r''(p_i) } }, \\
& c \geq \rbr{ \sum_{i \in \scI} \frac{1}{\gamma^{t}_i\psi''(q_i) + \theta r''(q_i)}  }^{-1}   \rbr{ \sum_{i \in \scI} \frac{  \rbr{\gamma^{t+1}_i - \gamma^{t}_i} \psi'(q_i) + \hatl^t_i }{\gamma^{t}_i\psi''(q_i)+ \theta r''(q_i)} }, \\
& c \leq \rbr{ \sum_{i \in \scI} \frac{1}{\gamma^{t}_i\psi''(p_i) + \theta r''(p_i)}  }^{-1}   \rbr{ \sum_{i \in \scI} \frac{  \rbr{\gamma^{t+1} - \gamma^{t}} \psi'(q_i) + \hatl^t_i }{\gamma^{t}_i\psi''(p_i) + \theta r''(p_i) } }, \\
& c \geq \rbr{ \sum_{i \in \scI} \frac{1}{\gamma^{t+1}_i\psi''(q_i) + \theta r''(q_i)}  }^{-1}   \rbr{ \sum_{i \in \scI} \frac{  \rbr{\gamma^{t+1}_i - \gamma^{t}_i} \psi'(p_i) + \hatl^t_i }{\gamma^{t+1}_i\psi''(q_i)+ \theta r''(q_i)} }. 
\end{align*}
\label{lem:lemma_21_gen_ext}
\end{lemma}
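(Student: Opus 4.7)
The plan is to mimic the proof of \pref{lem:bounds_for_multiplier} almost verbatim, but with the single-variable functions in \pref{lem:lemma_21_raw} chosen to absorb the extra $\theta r(\cdot)$ term on both sides. Specifically, I would set $f(\cdot) = \gamma^t_i \psi(\cdot) + \theta r(\cdot)$ and $g(\cdot) = \gamma^{t+1}_i \psi(\cdot) + \theta r(\cdot)$ for each coordinate $i \in \scI$. Then $f'(p) - g'(p) = (\gamma^t_i - \gamma^{t+1}_i)\psi'(p)$ because the $\theta r'$ contributions cancel, while $g''(p) = \gamma^{t+1}_i \psi''(p) + \theta r''(p)$ and $f''(p) = \gamma^{t}_i \psi''(p) + \theta r''(p)$, which exactly matches the denominators appearing in the four claimed inequalities.

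Given the KKT-style identity $\nabla \phi^{t+1}_{\scI}(q) = \nabla \phi^t_{\scI}(p) - \hatl^t_{\scI} + c \cdot \one_{\scI}$, the per-coordinate relation $g'(q_i) = f'(p_i) - z_i$ holds with $z_i = \hatl^t_i - c$. Plugging these choices into \pref{eq:onedim_multiplier_upper_bound} of \pref{lem:lemma_21_raw} gives
\[
q_i - p_i \geq \frac{(\gamma^t_i - \gamma^{t+1}_i)\psi'(p_i) - \hatl^t_i + c}{\gamma^{t+1}_i \psi''(p_i) + \theta r''(p_i)},
\]
and summing over $i \in \scI$, using the constraint $\sum_{i \in \scI} p_i = \sum_{i \in \scI} q_i$ (so the left-hand side is zero), then solving for $c$ yields the first claimed upper bound. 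The second claimed lower bound follows identically from \pref{eq:onedim_multiplier_lower_bound} (which evaluates denominators at $q_i$ rather than $p_i$), while the third and fourth bounds follow from \pref{eq:onedim_multiplier_upper_bound_ext} and \pref{eq:onedim_multiplier_lower_bound_ext}, where the numerators switch to $(\gamma^{t+1}_i - \gamma^t_i)\psi'(q_i)$ and $(\gamma^{t+1}_i - \gamma^t_i)\psi'(p_i)$ respectively and the denominators shift between $f''$ and $g''$.

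The only place one might worry is verifying the hypotheses of \pref{lem:lemma_21_raw} for the combined function $\gamma_i^t \psi + \theta r$. The statement of \pref{lem:lemma_21_gen_ext} explicitly assumes this, so no additional work is required beyond noting that strict convexity plus invertibility of the derivatives and convexity of the inverse-derivative map are all postulated. There is no obstacle worth flagging: the proof is a mechanical lift of \pref{lem:bounds_for_multiplier}, with the clean observation that the $\theta r$ piece contributes to the second derivatives (producing the additive $\theta r''$ in each denominator) while dropping out of the first-order difference $f' - g'$ (so the numerators only see the main regularizer's $\psi'$ term).
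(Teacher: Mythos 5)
Your proposal is correct and is essentially the paper's own proof: choose $f=\gamma^t_i\psi(\cdot)+\theta r(\cdot)$, $g=\gamma^{t+1}_i\psi(\cdot)+\theta r(\cdot)$ per coordinate (so the $\theta r'$ terms cancel in $f'-g'$ while $\theta r''$ enters each denominator), apply \pref{lem:lemma_21_raw} with $z=\hatl^t_i-c$, sum over $i\in\scI$ using $\sum_{i\in\scI}p_i=\sum_{i\in\scI}q_i$, and rearrange, exactly as in \pref{lem:bounds_for_multiplier}. One small bookkeeping slip: with your (correct) assignment the relation $g'(q_i)=f'(p_i)-z_i$ forces $g$ to be the $\gamma^{t+1}_i$-regularizer, so \pref{eq:onedim_multiplier_lower_bound} (denominator $g''(q_i)=\gamma^{t+1}_i\psi''(q_i)+\theta r''(q_i)$, numerator $\psi'(p_i)$) yields the \emph{fourth} displayed inequality, while \pref{eq:onedim_multiplier_lower_bound_ext} (denominator $f''(q_i)=\gamma^{t}_i\psi''(q_i)+\theta r''(q_i)$, numerator $\psi'(q_i)$) yields the \emph{second}; you stated this pairing the other way around, but since all four one-dimensional bounds are used, the set of conclusions is unaffected.
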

\begin{proof} 
For any $i \in \scI$, applying \pref{eq:onedim_multiplier_upper_bound} in \pref{lem:lemma_21_raw} with $f=\gamma_i^{t+1}\psi(\cdot) + \theta r(\cdot)$, $g=\gamma_i^{t}\psi(\cdot) + \theta r(\cdot)$, $p=p_i$, $q=q_i$, and $z=\hatl^t_i - c$ yields that 
\begin{align}
q_i - p_i \geq \frac{ \gamma_i^{t}\psi'(p_i) + \theta r'(p_i) - \gamma_i^{t+1}\psi'(p_i) - \theta r'(p_i) - \hatl^t_i + c  }{\gamma_i^{t+1} \psi''(p_i) + \theta r''(p_i)} = \frac{ \rbr{\gamma_i^{t} - \gamma_i^{t+1}}\psi'(p_i) - \hatl^t_i + c  }{\gamma_i^{t+1} \psi''(p_i) + \theta r''(p_i)} . \notag 
\end{align}
By the condition that $\sum_{i \in \scI} p_i = \sum_{i \in \scI}q_i$, taking the summation on both sides gives us: 
\begin{align}
 0 & \geq \sum_{ i \in \scI} \frac{ \rbr{\gamma_i^{t} - \gamma_i^{t+1}}\psi'(p_i) - \hatl^t_i + c  }{\gamma_i^{t+1} \psi''(p_i) + \theta r''(p_i)}  \notag  \\ 
\xRightarrow{\text{Rearranging}}  c\cdot \sum_{ i \in \scI} \frac{1 }{\gamma_i^{t+1} \psi''(p_i) + \theta r''(p_i)} & \leq \sum_{ i \in \scI} \frac{ \rbr{\gamma_i^{t+1} - \gamma_i^{t}} \psi'(p_i) + \hatl^t_i}{\gamma_i^{t+1} \psi''(p_i) + \theta r''(p_i)}, \notag
\end{align}
which proves the first inequality. Similarly, the other inequalities in the statement can be obtained by applying  \pref{eq:onedim_multiplier_lower_bound}, \pref{eq:onedim_multiplier_upper_bound_ext}, and \pref{eq:onedim_multiplier_lower_bound_ext} in \pref{lem:lemma_21_raw}. 
\end{proof}

\subsection{Analysis of $\ftrl$ Framework and Bregman Divergence}

\begin{lemma} [Lemma 17 in \cite{ito21a}] For any regularizers $\phi,\phi'$, and $L,L'\in \fR^K$, denote $z,z'$ as 
\begin{align*}
z = \argmin_{x\in \Omega} \inner{L,x} + \phi(x), \quad z' = \argmin_{x\in \Omega} \inner{L',x} + \phi'(x) ,
\end{align*}
where $\Omega = \cbr{x \in \calD \mid A x = b}$ for some convex set $\calD$ and $A\in \fR^{K\times K}, b\in \fR^K$. Define the divergences $F^1$ and $F^2$ as:
\begin{align*}
F^1(x,y) = \phi(x) - \phi'(y) - \inner{ \nabla \phi'(y), x-y}, \quad F^2(y,x) = \phi'(y) - \phi(x) - \inner{ \nabla \phi(x), y - x }.
\end{align*}
We then have $F^1(z,z') + F^2(z',z) = - \inner{L' - L , z' - z }$.
\label{lem:opt_breg_property}
\end{lemma}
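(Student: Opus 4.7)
The plan is to verify the identity by a direct algebraic expansion: the key observation is that the two ``symmetric'' Bregman-like terms were designed precisely so that the $\phi(z)$ and $\phi'(z')$ pieces cancel, leaving only gradient inner products that can be resolved via the first-order optimality conditions of the two FTRL problems.

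First I would expand directly from the definitions, noting that $\phi(z)$ appears with opposite signs in $F^1(z,z')$ and $F^2(z',z)$ (and similarly for $\phi'(z')$), so
\[
F^1(z,z') + F^2(z',z) \;=\; -\langle \nabla\phi'(z'),\, z-z'\rangle - \langle \nabla\phi(z),\, z'-z\rangle \;=\; \langle \nabla\phi'(z') - \nabla\phi(z),\, z'-z\rangle.
\]
At this point I have reduced the claim to an identity about the difference of the two regularizer gradients evaluated at the respective optima, paired against $z'-z$.

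Next I would bring in the KKT/first-order conditions for the constrained minimizations defining $z$ and $z'$. Since $\Omega = \{x \in \calD : Ax = b\}$ and the regularizers under consideration throughout the paper force the minimizers into the relative interior of $\calD$, there exist Lagrange multipliers $\mu,\mu' \in \mathbb{R}^K$ with $\nabla\phi(z) + L + A^\top\mu = 0$ and $\nabla\phi'(z') + L' + A^\top\mu' = 0$. Subtracting yields $\nabla\phi'(z') - \nabla\phi(z) = -(L'-L) - A^\top(\mu'-\mu)$. Substituting into the expression from the previous paragraph gives
\[
F^1(z,z') + F^2(z',z) \;=\; -\langle L'-L,\, z'-z\rangle - \langle \mu'-\mu,\, A(z'-z)\rangle.
\]

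Finally I would eliminate the Lagrange-multiplier term by using the feasibility of both $z$ and $z'$: since $Az = Az' = b$, we have $A(z'-z) = 0$, so $\langle \mu'-\mu, A(z'-z)\rangle = 0$, which proves the identity. The only thing one has to be careful about is justifying existence of the multipliers, i.e.\ the first-order stationarity of $z$ and $z'$ in directions $z'-z \in \ker A$; this is not actually an obstacle here because every regularizer used in the paper (log-barrier, Shannon, $\beta$-Tsallis, and the extra log-barrier perturbation) is a barrier that forces the FTRL optimum to lie in the relative interior of $\Omega_K$, so one may equivalently derive the identity by writing the directional-derivative conditions $\langle \nabla\phi(z)+L,\, z'-z\rangle = 0$ and $\langle \nabla\phi'(z')+L',\, z-z'\rangle = 0$ directly and adding them.
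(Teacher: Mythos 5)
Your proof is correct and follows essentially the same route as the source the paper relies on: the paper does not reprove this lemma but cites Lemma~17 of \citet{ito21a}, whose argument is exactly your expansion of $F^1(z,z')+F^2(z',z)$ into $\inner{\nabla\phi'(z')-\nabla\phi(z),\,z'-z}$ followed by the first-order optimality conditions of the two FTRL problems, with the multiplier term annihilated because $z'-z$ lies in $\ker A$. Your caveat about interiority (so that stationarity holds with equality along directions in $\ker A$) is the right one and is satisfied by all regularizers used in the paper.
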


Since the following lemma only uses standard Bregman divergence and the analysis is for a given round $t$, we drop $t$ for $D^t(x,y)$ and use $D(x,y)$.

\begin{lemma}\label{lem:app_breg_analysis_opt_and_interm} For any $L,\ell\in \fR^{K}$ and a Legendre regularizer $\phi$, let $p,q$ be the vectors such that 
\begin{align*}
p = \argmin_{x\in \distri_K} \inner{x,L} + \phi(x), \text{ and } q = \argmin_{x \in \distri_K} \inner{x,L+\ell} + \phi(x).
\end{align*}
We then have 
\begin{align*}
\inner{p-z,\ell} - D(z,p) \leq \inner{p-q,\ell} - D(q,p), \forall z \in \distri_K.
\end{align*}
Moreover, there exists an intermediate point $\xi = \eta \cdot p + (1-\eta)\cdot q$ for some $\eta \in[0,1]$, such that 
\begin{align*}
\inner{p-q,\ell} - D(q,p) \leq 2 \norm{\ell}^2_{\nabla^{-2} \phi(\xi)}.
\end{align*}
\end{lemma}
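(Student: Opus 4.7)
The first inequality expresses that $q$ maximizes the concave function $F(z) := \inner{p-z,\ell} - D(z,p)$ over $\distri_K$. I would prove this by expanding $D(z,p) = \phi(z) - \phi(p) - \inner{\nabla\phi(p), z-p}$ and discarding all terms constant in $z$: maximizing $F(z)$ over $\distri_K$ is equivalent to minimizing
\[
g(z) \;:=\; \inner{z,\ell} + \phi(z) - \inner{\nabla\phi(p), z}
\]
over $\distri_K$. The KKT optimality condition for $p$ gives $\nabla\phi(p) + L = c\cdot\one_K$ for some $c\in\fR$ (since $\phi$ is Legendre, $p$ lies in the interior of $\distri_K$ and the simplex constraint has a single Lagrange multiplier). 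Because $\inner{\one_K,z}=1$ on $\distri_K$, this turns $g(z)$ into $\inner{z, L+\ell} + \phi(z) - c$, whose unique minimizer over $\distri_K$ is exactly $q$ by definition. This establishes $F(z)\leq F(q)$ for all $z\in\distri_K$.

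For the second inequality, I would use Taylor's theorem with Lagrange remainder for the one-variable function $h(t) = \phi(p+t(q-p))$ on $[0,1]$: this produces $\eta\in[0,1]$ such that
\[
D(q,p) \;=\; h(1)-h(0)-h'(0) \;=\; \tfrac{1}{2}\,(q-p)^\top \nabla^2\phi(\xi)\,(q-p) \;=\; \tfrac{1}{2}\,\|q-p\|^2_{\nabla^2\phi(\xi)},
\]
where $\xi = \eta p+(1-\eta)q$. Then by the generalized Cauchy--Schwarz inequality for the dual norm pair $(\nabla^2\phi(\xi),\,\nabla^{-2}\phi(\xi))$, followed by AM--GM,
\[
\inner{p-q,\ell} \;\leq\; \|p-q\|_{\nabla^2\phi(\xi)}\,\|\ell\|_{\nabla^{-2}\phi(\xi)} \;\leq\; \tfrac{1}{2}\|q-p\|^2_{\nabla^2\phi(\xi)} + \tfrac{1}{2}\|\ell\|^2_{\nabla^{-2}\phi(\xi)}.
\]
Subtracting $D(q,p)$ from both sides cancels the first quadratic term and yields $\inner{p-q,\ell} - D(q,p) \leq \tfrac{1}{2}\|\ell\|^2_{\nabla^{-2}\phi(\xi)}$, which is in fact stronger than the stated bound (with constant $2$).

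The only mild subtleties I anticipate are regularity-type: to apply Taylor with Lagrange remainder I need $\phi$ to be $C^2$ along the segment $[p,q]$ and $\nabla^2\phi(\xi)$ to be positive definite so that $\nabla^{-2}\phi(\xi)$ is well-defined. Both follow from $\phi$ being a Legendre regularizer (strictly convex and essentially smooth on the relative interior of $\distri_K$), and the fact that $p,q$ lie in that interior, so the whole segment does too. No other obstacle is expected; the argument is a clean combination of first-order optimality and a single quadratic Taylor expansion.
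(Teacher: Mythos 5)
Your proof is correct. The first inequality is handled exactly as in the paper: both you and the authors use the Lagrange/KKT condition $\nabla\phi(p)=-L+c\cdot\one_K$ to rewrite $\inner{p-z,\ell}-D(z,p)$ as $\inner{p,L+\ell}+\phi(p)-\bigl(\inner{z,L+\ell}+\phi(z)\bigr)$ up to a constant, and then invoke the definition of $q$ as the minimizer of $\inner{z,L+\ell}+\phi(z)$.

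For the second inequality you take a genuinely different and simpler route. The paper argues two-sidedly: writing $F(x)=\inner{x,L}+\phi(x)$ and $G(x)=\inner{x,L+\ell}+\phi(x)$, it shows $\inner{p-q,\ell}-D(q,p)=G(p)-G(q)\geq\tfrac12\norm{p-q}^2_{\nabla^2\phi(\xi)}$ using the first-order optimality of $q$, and separately $\inner{p-q,\ell}-D(q,p)\leq\inner{p-q,\ell}\leq\norm{\ell}_{\nabla^{-2}\phi(\xi)}\norm{p-q}_{\nabla^{2}\phi(\xi)}$ using the optimality of $p$ and H\"older; combining the two bounds on $\norm{p-q}_{\nabla^2\phi(\xi)}$ yields the constant $2$. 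You instead expand $D(q,p)$ itself by Taylor with Lagrange remainder, getting $D(q,p)=\tfrac12\norm{q-p}^2_{\nabla^2\phi(\xi)}$ at a single $\xi$ on the segment, and then apply generalized Cauchy--Schwarz plus AM--GM at that same $\xi$, so the quadratic terms cancel and you obtain the constant $\tfrac12$, which implies the stated bound with constant $2$ a fortiori. Your argument uses no optimality property of $p$ or $q$ in this step, so it actually proves the more general fact that $\inner{u-v,\ell}-D(v,u)\leq\tfrac12\norm{\ell}^2_{\nabla^{-2}\phi(\xi)}$ for arbitrary interior points $u,v$; what the paper's two-sided argument buys in exchange is the auxiliary distance estimate $\norm{p-q}_{\nabla^2\phi(\xi)}\leq 2\norm{\ell}_{\nabla^{-2}\phi(\xi)}$, which is of independent use in stability arguments but is not needed for the lemma as stated. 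One small caveat, shared with the paper's own proof: ``Legendre'' alone does not formally give twice differentiability or invertibility of the Hessian along the segment, so both arguments implicitly use that the concrete regularizers are $C^2$ with positive definite Hessian on the relative interior, where $p$ and $q$ (and hence $\xi$) lie; your remark to this effect is adequate.
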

\begin{proof}
Since $\phi$ is Legendre, we have $\nabla \phi(p)=-L+c\cdot \one_K$ for some $c\in \fR$.
Thus, it holds for any $z\in \distri_K$ that 
\begin{align*} 
& \inner{p-z,\ell} - D(z,p) \\
& = \inner{p-z,\ell} - \phi(z) + \phi(p) + \inner{\nabla \phi(p), z - p} \\
& = \inner{p-z,\ell} - \phi(z) + \phi(p) + \inner{-L, z - p} \\
& = \inner{p, L + \ell} + \phi(p) - \inner{z, L + \ell} - \phi(z) \\
& = \inner{p,L} + \phi(p) + \inner{p,\ell} - \inner{z,L+\ell} - \phi(z).
\end{align*}
Since $q$ maximizes the last two terms, we have for any $z \in \distri_K$,
\begin{align*}
\inner{p-z,\ell} - D(z,p) \leq \inner{ p - q, \ell} - D(q,p).
\end{align*}

Let $F(x) = \inner{x,L} + \phi(x)$ and $G(x) = \inner{x,L+\ell} + \phi(x)$ so that $p$ is the minimizer of $F(x)$ and $q$ is the minimizer of $G(x)$.
By $\nabla \phi(p)=-L+c\cdot \one_K$ and direct calculation, we have
\begin{align*}
\inner{p-q,\ell} - D(q,p) & = F(p) + \inner{p,\ell} - G(q)  = G(p) - G(q) \\
& = \inner{p-q,\nabla G(q)} + \frac{1}{2}\norm{p-q}^2_{\nabla^2 G(\xi) } \\
& \geq \frac{1}{2}\norm{p-q}^2_{\nabla^2 \phi(\xi) },
\end{align*}
where we apply Taylor’s expansion with $\xi = \eta \cdot p + (1-\eta) \cdot q$ for some $\eta \in [0,1]$ being an intermediate point between $p$ and $q$, followed by the first-order optimality condition.
On the other hand, we have 
\begin{align*}
\inner{p-q,\ell} - D(q,p) & = \inner{p-q,\ell} + F(p) - F(q) \\
&\leq \inner{p-q,\ell} \\
& \leq \norm{\ell}_{\nabla^{-2} \phi(\xi)} \norm{p-q}_{\nabla^{-2} \phi(\xi)},
\end{align*}
where the second step follows from the optimality of $p$, and the last step applies  H\"older's inequality. 
Combining these  two inequalities gives us that 
\begin{align*}
\inner{p-q,\ell} - D(q,p) \leq 2 \norm{\ell}^2_{\nabla^{-2} \phi(\xi)},
\end{align*}
for some $\eta \in [0,1]$ such that $\xi = \eta \cdot p + (1-\eta) \cdot q$.
\end{proof}

\begin{lemma} \label{lem:bergman_divergence_tsallis}
Consider the following regularizer for any $\beta \in (0,1)$ and learning rate $\gamma_i>0$: 
\begin{align*}
\phi(x) = - \frac{1}{1-\beta} \sum_{i\in [K]}\gamma_i x_i^\beta.
\end{align*}
For any $q,x\in \fR^K_{\geq 0}$ and $\ell \in \fR^K$ satisfying $\frac{\rbr{q_i}^{1-\beta}\ell_i}{\gamma_i}\geq \frac{\beta}{1-\beta}\rbr{e^{\frac{\beta-1}{\beta}} - 1}$ for all arm $i$,
we have
\begin{align*}
\inner{q - x, \ell} - D(x,q) \leq \sum_{i \in [K]} \frac{\rbr{q_i}^{2-\beta}}{\beta \gamma_i}\rbr{\ell_i}^2.
\end{align*}
\end{lemma}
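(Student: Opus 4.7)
The plan is to exploit the coordinate-wise separability of $\phi(x) = -\sum_{i}\frac{\gamma_i}{1-\beta} x_i^{\beta}$ in order to reduce the claim to a single-variable inequality, which I would then verify by calculus on the range of $\ell_i$ allowed by the hypothesis. Because both $\inner{q-x,\ell}$ and $D(x,q)$ split into sums over coordinates, it suffices to prove, for each $i$ separately,
\begin{equation*}
(q_i - x_i)\ell_i - \bigl[f(x_i) - f(q_i) - f'(q_i)(x_i - q_i)\bigr] \le \frac{q_i^{2-\beta}}{\beta\gamma_i}\ell_i^2,
\end{equation*}
with $f(z) = -\frac{\gamma_i}{1-\beta} z^{\beta}$. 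Summing over $i$ then delivers the full statement.

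Since this one-dimensional LHS is strictly concave in $x_i$ on $[0,\infty)$, I would upper bound it by its supremum, attained at the unique $x_i^{\star}$ satisfying $f'(x_i^{\star}) = f'(q_i) - \ell_i$, i.e.\ $(x_i^{\star})^{\beta-1} = q_i^{\beta-1} + \frac{(1-\beta)\ell_i}{\beta\gamma_i}$. Equivalently this supremum equals the Bregman divergence $D^{*}\rbr{f'(q_i)-\ell_i,f'(q_i)}$ of the conjugate $f^{*}$; either viewpoint leads to the same calculation. Introducing the dimensionless variables $u = x_i^{\star}/q_i$ and $s = \ell_i q_i^{1-\beta}/\gamma_i$, related by $u^{\beta-1} = 1 + (1-\beta)s/\beta$, a direct computation turns the supremum into $\gamma_i q_i^{\beta}(u^{\beta} + s - 1)$ and the RHS into $\gamma_i q_i^{\beta} s^2/\beta$. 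After cancelling common positive factors, the target reduces to the scalar inequality
\begin{equation*}
\beta(u^{\beta} + s - 1) \le s^2,
\end{equation*}
and the hypothesis translates to the bound $v := u^{\beta-1} \le e^{(\beta-1)/\beta}$; the complementary case $s \ge 0$ corresponds to $v \ge 1$ and will be handled by straightforward monotonicity.

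The remaining step, and the main technical obstacle, is verifying this scalar inequality over the full admissible range. Setting $h(v) := s^2 - \beta(u^{\beta} + s - 1)$ with $u^{\beta} = v^{-\beta/(1-\beta)}$ and $s = \beta(v-1)/(1-\beta)$, I would first record the boundary identities $h(1) = h'(1) = 0$ and $h''(1) > 0$, which witness the tight quadratic behaviour near $s = 0$, and then control the sign of $h(v)$ on $(0,1]$ and on $[1,\infty)$ by studying $h'(v)$ together with the strict convexity of $v \mapsto v^{-\beta/(1-\beta)}$. The specific constant $e^{(\beta-1)/\beta}$ appearing in the hypothesis is precisely what allows $h$ to remain nonnegative all the way out to that endpoint; identifying this calibration is the delicate part of the argument, since a plain second-order Taylor bound is too crude to reach the boundary of the admissible range.
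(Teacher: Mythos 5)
Your reduction coincides with the paper's own proof: both split coordinate-wise, pass to the unconstrained stationary point $w_i$ determined by $w_i^{\beta-1}=q_i^{\beta-1}+\frac{(1-\beta)\ell_i}{\beta\gamma_i}$, and, after normalizing with $u=w_i/q_i$ and $s=\ell_i q_i^{1-\beta}/\gamma_i$, arrive at the scalar target $\beta(u^{\beta}+s-1)\le s^2$ (your computation of the supremum value $\gamma_i q_i^{\beta}(u^{\beta}+s-1)$ and of the right-hand side $\gamma_i q_i^{\beta}s^2/\beta$ is correct). The paper finishes by quoting the inequality $(1+z)^{\alpha}\le 1+\alpha z+\alpha(\alpha-1)z^2$ for $\alpha<0$ and $z\ge e^{1/\alpha}-1$, applied with $\alpha=\beta/(\beta-1)$ and $z=(1-\beta)s/\beta$, which is literally your scalar inequality; so your route is the same, except that you propose to verify this fact by calculus on $h(v)$ instead of citing it.

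That last step is exactly where your proposal has a genuine gap: you never carry out the verification, and it is the entire technical content of the lemma beyond bookkeeping. Two concrete issues. First, a sign slip: the hypothesis is a lower bound on $\ell_i$, hence on $v=1+(1-\beta)s/\beta$, so the admissible range is $v\ge e^{(\beta-1)/\beta}$ (a number in $(0,1)$), not $v\le e^{(\beta-1)/\beta}$. Second, and more seriously, the assertion that the constant $e^{(\beta-1)/\beta}$ ``is precisely what allows $h$ to remain nonnegative all the way out to that endpoint'' does not survive checking: for $\beta=\frac{1}{2}$ one has $\alpha=-1$, and multiplying $(1+z)^{-1}\le 1-z+2z^2$ by $1+z>0$ shows it is equivalent to $z^2(1+2z)\ge 0$, i.e.\ $s\ge-\frac{1}{2}$, whereas the hypothesis only enforces $s\ge e^{-1}-1\approx-0.63$. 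Concretely, with $K=1$, $\gamma_1=1$, $q_1=1$, $\ell_1=-0.6$ (admissible) and $x_1=6.25$ (the maximizer), the left-hand side of the lemma equals $0.9$ while the right-hand side is $0.72$. So the monotonicity/convexity argument you sketch cannot close on the full hypothesized range; nonnegativity of $h$ stops at $s=-\frac{1}{2}$, strictly before the endpoint $e^{-1}-1$. (To be fair, the paper's proof leans on the same threshold $z\ge e^{1/\alpha}-1$ for its quoted scalar fact, so your reduction is faithful to it; but the step you defer as ``delicate'' is where the difficulty lies, and as stated it would fail --- a correct write-up needs a stricter admissible range for $s$ than the calibration you assert.)
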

\begin{proof} Let $w = \argmin_{x\in \fR_{>0}^K} \inner{q - x, \ell} - D(x,q)$ be the maximizer. By setting the gradient to zero, we have $-\ell - \nabla \phi(w) + \nabla \phi(q) = 0$, which equivalently implies that for any arm $i\in [K]$, 
\begin{align}
\frac{1}{\rbr{w_i}^{1-\beta}} = \frac{1}{\rbr{q_i}^{1-\beta}} + \frac{1-\beta}{\gamma_i \beta}  \ell_i. \label{eq:tsallis_entropy_analysis_eq1}
\end{align}

By direct calculation, we have 
\begin{align}
& \inner{q - w, \ell} - D(w,q) \notag \\
& = \phi(q) - \phi(w) - \inner{ \nabla \phi(w), q - w  } \notag\\
& = \frac{1}{1 - \beta}\sum_{i\in[K]}\gamma_i\rbr{ \rbr{w_i}^{\beta} - \rbr{q_i}^{\beta}  +  \beta \rbr{w_i}^{\beta-1}\rbr{ q_i  - w_i}}\notag \\
& = \frac{1}{1 - \beta}\sum_{i\in[K]}\gamma_i\rbr{ \rbr{1 - \beta}\rbr{w_i}^{\beta} - \rbr{1 - \beta}\rbr{q_i}^{\beta}  +  \beta \rbr{ \rbr{w_i}^{\beta-1} - \rbr{q_i}^{\beta-1}} q_i } \notag\\
& = \frac{1}{1 - \beta}\sum_{i\in[K]}\gamma_i\rbr{ \rbr{1 - \beta}\rbr{w_i}^{\beta} - \rbr{1 - \beta}\rbr{q_i}^{\beta}  +   \frac{1-\beta}{\gamma_i }  \ell_i  q_i } \notag\\
& = \sum_{i\in[K]}\gamma_i\rbr{ \rbr{w_i}^{\beta} - \rbr{q_i}^{\beta}  +   \frac{\ell_i  q_i}{\gamma_i }   }, \label{eq:app_sup_tsallis_entropy_analysis_eq2}
\end{align}
where the first step uses $-\ell - \nabla \phi(w) + \nabla \phi(q) = 0$, and the fifth step uses \pref{eq:tsallis_entropy_analysis_eq1}.

Moreover, we have for any arm $i$
\begin{align}
\rbr{w_i}^\beta & = \rbr{q_i}^\beta \rbr{\frac{\rbr{q_i}^{1-\beta} }{\rbr{w_i}^{1-\beta} }}^{\frac{\beta}{\beta-1}} \notag\\
& = \rbr{q_i}^\beta \rbr{ 1 + \frac{1-\beta}{\gamma_i \beta}\rbr{q_i}^{1-\beta}  \ell_i }^{\frac{\beta}{\beta-1}}\notag \\
& \leq \rbr{q_i}^\beta \rbr{ 1 - \frac{1}{\gamma_i}\rbr{q_i}^{1-\beta}  \ell_i + \frac{1}{\beta\rbr{\gamma_i}^2}\rbr{q_i}^{2-2\beta}  \rbr{\ell_i}^2 } \notag \\
& = \rbr{q_i}^\beta -  \frac{1}{\gamma_i}q_i\ell_i + \frac{1}{\beta\rbr{\gamma_i}^2}\rbr{q_i}^{2-\beta}  \rbr{\ell_i}^2,\label{eq:app_sup_tsallis_entropy_analysis_eq3}
\end{align}
where the second step uses \pref{eq:tsallis_entropy_analysis_eq1}, and the third step follows from the fact that $(1+z)^\alpha < 1 + \alpha z + \alpha\rbr{\alpha-1}z^2$ for $\alpha<0$ and $z \geq \exp\rbr{\frac{1}{\alpha}}-1$ with $\alpha = \frac{\beta}{\beta-1} = 1 - \frac{1}{1-\beta}<0$ for $\beta \in(0,1)$ and $z =  \frac{1-\beta}{\gamma_i \beta}\rbr{q_i}^{1-\beta}  \ell_i \geq e^{\frac{\beta-1}{\beta}} - 1 = \exp\rbr{\frac{1}{\alpha}}-1$. 

Plugging \pref{eq:app_sup_tsallis_entropy_analysis_eq3} into \pref{eq:app_sup_tsallis_entropy_analysis_eq2} yields  
\begin{align*}  
 \inner{q - w, \ell} - D(w,q) \leq \sum_{i\in[K]}\gamma_i \rbr{  \frac{\ell_i  q_i}{\gamma_i }   -  \frac{1}{\gamma_i}q_i\ell_i + \frac{1}{\beta\rbr{\gamma_i}^2}\rbr{q_i}^{2-\beta}\rbr{\ell_i}^2 } = \sum_{i \in [K]} \frac{\rbr{q_i}^{2-\beta}}{\beta \gamma_i}\rbr{\ell_i}^2,
\end{align*}
which concludes the proof.
\end{proof}

\end{document}